\newcommand{\C}{G}
\newcommand{\cmark}{\ding{51}}%
\newcommand{\xmark}{\ding{55}}%
\DeclareMathAlphabet\mathbb{U}{msb}{m}{n}
\definecolor{Gray}{gray}{0.85}
\newcolumntype{g}{>{\columncolor{Gray}}c}
\def\Rset{\mathbb{R}}
\def\Nset{\mathbb{N}}
\DeclareMathOperator*{\E}{\mathbb{E}}
\DeclareMathOperator*{\argmin}{\rm argmin}
\DeclareMathOperator{\sgn}{sgn}
\newcommand{\nrm}[1]{{\left\vert\kern-0.25ex\left\vert\kern-0.25ex\left\vert #1 
    \right\vert\kern-0.25ex\right\vert\kern-0.25ex\right\vert}}
\DeclarePairedDelimiter{\bracket}{[}{]}
\DeclarePairedDelimiter{\curl}{\{}{\}}
\DeclarePairedDelimiter{\paren}{(}{)}
\newcommand{\cA}{\mathcal{A}}
\newcommand{\cC}{\mathcal{C}}
\newcommand{\cF}{\mathcal{F}}
\newcommand{\cL}{\mathcal{L}}
\newcommand{\cR}{\mathcal{R}}
\newcommand{\cU}{\mathcal{U}}
\newcommand{\sA}{{\mathscr A}}
\newcommand{\sH}{{\mathscr H}}
\newcommand{\sP}{{\mathscr P}}
\newcommand{\sX}{{\mathscr X}}
\newcommand{\sY}{{\mathscr Y}}
\newcommand{\bs}{{\mathbf s}}
\newcommand{\bu}{{\mathbf u}}
\newcommand{\bw}{{\mathbf w}}
\newcommand{\bx}{{\mathbf x}}
\newcommand{\bz}{{\mathbf z}}
\newcommand{\e}{\epsilon}
\newcommand{\ignore}[1]{}
\newcommand{\Cpae}{\mathcal{C}_{\phi}(t,\eta)}
\title[Calibration and Consistency of Adversarial Surrogate Losses]
{Calibration and Consistency of Adversarial Surrogate Losses}
\begin{document}

\maketitle

\begin{abstract}%
  Adversarial robustness is an increasingly critical property of
  classifiers in applications. The design of robust algorithms relies
  on surrogate losses since the optimization of the adversarial loss
  with most hypothesis sets is NP-hard. But which surrogate losses
  should be used and when do they benefit from theoretical guarantees?
  We present an extensive study of this question, including a detailed
  analysis of the \emph{$\sH$-calibration} and
  \emph{$\sH$-consistency} of adversarial surrogate losses. We show
  that, under some general assumptions, convex loss functions, or the
  supremum-based convex losses often used in applications, are not
  \emph{$\sH$-calibrated} for important hypothesis sets such as
  generalized linear models or one-layer neural networks. We then give
  a characterization of \emph{$\sH$-calibration} and prove that some
  surrogate losses are indeed \emph{$\sH$-calibrated} for the
  adversarial loss, with these hypothesis sets. Next, we show that
  \emph{$\sH$-calibration} is not sufficient to guarantee consistency
  and prove that, in the absence of any distributional assumption, no
  continuous surrogate loss is consistent in the adversarial
  setting. This, in particular, proves that a claim presented in a COLT 2020
  publication is inaccurate.\footnote{Calibration results there are correct 
  modulo subtle definition differences, but the consistency claim does not hold.} 
  Next, we identify natural conditions under which some
  surrogate losses that we describe in detail are
  \emph{$\sH$-consistent} for hypothesis sets such as generalized
  linear models and one-layer neural networks.
  We also report a series of empirical results with simulated data,
  which show that many \emph{$\sH$-calibrated} surrogate losses are
  indeed not \emph{$\sH$-consistent}, and validate our theoretical
  assumptions.
\end{abstract}

\begin{keywords}%
  Adversarial Robustness, Surrogate Losses, Calibration, Consistency.
\end{keywords}

\section{Introduction}
\label{sec:introduction}

  Complex multi-layer artificial neural networks trained on large
  datasets have been shown to form accurate learning models which have
  achieved a remarkable performance in several applications in recent
  years, in particular in speech and visual recognition tasks
  \citep{SutskeverVinyalsLe2014,KrizhevskySutskeverHinton2012}.
  However, these rich models are susceptible to imperceptible
  perturbations \citep{szegedy2013intriguing}. A complex neural
  network may, for example, misclassify a traffic sign, as a result of
  a minor variation, which may be the presence of a small
  advertisement sticker on the sign. Such misclassifications can have
  dramatic consequences in practice, for example, for self-driving
  cars.
  These concerns have motivated the study of \emph{adversarial
  robustness}, that is the design of classifiers that are robust to
  small $\ell_p$ norm input perturbations
  \citep{goodfellow2014explaining, madry2017towards,
    tsipras2018robustness, carlini2017towards}. The standard $0/1$
  loss is then replaced with a more stringent \emph{adversarial loss},
  which requires a predictor to correctly classify an input point
  $\bx$ and also to maintain the same classification for all points at
  a small $\ell_p$ distance of $\bx$. But, can we devise efficient
  learning algorithms with theoretical guarantees for the adversarial
  loss?

  Designing such robust algorithms requires resorting to appropriate
  surrogate losses as optimizing the adversarial loss is NP-hard for
  most hypothesis sets. A key property for surrogate adversarial
  losses is their consistency, that is, that exact or near optimal
  minimizers of the surrogate loss are also exact or near optimal
  minimizers of the original adversarial loss.  The notion of
  consistency has been extensively studied in the case of the standard
  $0/1$ loss or the multi-class setting
  \citep{Zhang2003,bartlett2006convexity,tewari2007consistency,
    steinwart2007compare}. However, those results or proof techniques
  cannot be used to establish or characterize consistency in
  adversarial settings. This is because the adversarial loss of a
  predictor $f$ at point $\bx$ is inherently not just a function of
  $f(\bx)$ but also of its values around a neighborhood of $\bx$. As
  we shall see, the study of consistency is significantly more complex
  in the adversarial setting, with subtleties that have in fact led to
  some inaccurate claims made in prior work that we discuss later.
  
  Consistency requires a property of the surrogate and the original
  losses to hold true for the family of all measurable functions. As
  argued by \citet{long2013consistency}, the notion of
  \emph{$\sH$-consistency} which requires a similar property for the
  surrogate and original losses, but with the near or optimal
  minimizers considered on the restricted hypothesis set $\sH$, is a
  more relevant and desirable property for learning.
  \citet{long2013consistency} gave examples of surrogate losses that
  are not \emph{$\sH$-consistent} when $\sH$ is the class of all
  measurable functions but satisfy a condition namely,
  \emph{realizable $\sH$-consistency} when $\sH$ is the class of
  linear functions. More recently, \citet{zhang2020bayes} studied the
  notion of \emph{improper realizable $\sH$-consistency} of linear
  classes where the surrogate $\phi$ can be optimized over a larger
  class such as that of piecewise linear functions. Note that these
  works concern the standard $0/1$ classification loss.

  This motivates our main objective: an extensive study of the
  \emph{$\sH$-consistency} of adversarial surrogate losses, which is
  critical to the design of robust algorithms with guarantees in this
  setting. A more convenient notion in the study of
  \emph{$\sH$-consistency} is that of \emph{$\sH$-calibration}, which
  is a related notion that involves conditioning on the input
  point. \emph{$\sH$-calibration} often is a sufficient condition for
  \emph{$\sH$-consistency} in the standard classification settings
  \citep{steinwart2007compare}.
  However, the adversarial loss presents new challenges and requires
  carefully distinguishing among these notions to avoid drawing false
  conclusions. As an example, the recent COLT 2020 paper of
  \citet{pmlr-v125-bao20a}, which presents a study of
  \emph{$\sH$-calibration} for the adversarial loss in the special
  case where $\sH$ is the class of linear functions, concludes that
  the \emph{$\sH$-calibrated} surrogates they propose are
  \emph{$\sH$-consistent}. This is falsified as a by-product of our
  results, which further suggests that the adversarial setting is more
  complex and requires a more delicate analysis. At the same time, our
  work is inspired by the work of \citet{pmlr-v125-bao20a} where the
  author propose a natural robust loss function and studied
  calibration and consistency of surrogates for optimizing
  it. However, the proposed loss function corresponds to the
  adversarial $0/1$ loss only when the class $\sH$ of functions
  comprises of linear classifiers. We on the other hand directly study
  the adversarial $0/1$ loss and for hypothesis sets beyond linear
  classifiers.

%
  In Section~\ref{sec:calibration}, we give a detailed analysis of the
  \emph{$\sH$-calibration} properties of several natural surrogate
  losses.  We present a series of new negative results showing that,
  under some general assumptions, convex loss functions and
  \emph{supremum-based convex losses}, that is losses defined as the
  supremum over a ball of a convex function, which are those commonly
  used in applications, are not \emph{$\sH$-calibrated} for several
  important hypothesis sets such as that of ReLU functions,
  generalized linear models and one-layer neural networks. Next, we
  give a characterization of calibration and prove that a family of
  \emph{quasi-concave even} surrogates are
  \emph{$\sH$-calibrated}. These significantly extend existing results
  of \citet{pmlr-v125-bao20a} given for the special case where $\sH$
  is the class of linear functions.

  In Section~\ref{sec:consistency}, we study the
  \emph{$\sH$-consistency} of surrogate loss functions. We prove that,
  in the absence of distributional assumptions, many surrogate losses
  shown to be \emph{$\sH$-calibrated} in Section~\ref{sec:calibration}
  are in fact not \emph{$\sH$-consistent}.
  Next, in contrast, we show that when the minimum of the surrogate
  loss is achieved within $\sH$, under some general conditions, the
  $\rho$-margin ramp loss (see, for example,
  \citet{MohriRostamizadehTalwalkar2018}) is \emph{$\sH$-consistent}
  for $\sH$ being the linear hypothesis set, any non-decreasing and
  continuous $g$-based hypothesis set, or the ReLU-based function
  class.
  We then give similar \emph{$\sH$-consistency} guarantees for
  supremum-based surrogate losses based on a non-increasing auxiliary
  function, including the calibrated supremum-based $\rho$-margin ramp
  loss when $\sH$ is the family of one-layer neural networks.

  In Section~\ref{sec:experiments}, we report a series of empirical
  results on simulated data, which show that many
  \emph{$\sH$-calibrated} surrogate losses are indeed not
  \emph{$\sH$-consistent}, and justify our realizability assumptions.
  Overall, our analysis suggests that surrogate losses typically used
  in practice do not benefit from any guarantee and that minimizing
  such losses may not in fact lead to a more favorable adversarial
  loss.  They also provide alternative surrogate losses with
  theoretical guarantees that can be useful to the design of
  algorithms in this setting.

We give a more detailed discussion of related work in
Appendix~\ref{app:related}. We start with an introduction of some
notation and key definitions (Section~\ref{sec:preliminaries}).

\ignore{ Our results have two important consequences: a) they cast
  doubt on the current empirical practice of using convex surrogates
  for optimizing the robust $0/1$ loss, and b) they highlight the
  importance of carefully distinguishing the notions of calibration
  and consistency and the need for assumptions on the data
  distribution in order to achieve $\sH$-consistency.  }

\ignore{
\begin{itemize}
\item 
   
\item When does $\sH$-calibration imply $\sH$-consistency? Since
  calibration is a necessary for consistency, the above mentioned
  results rule out $\sH$-consistency of convex surrogates for natural
  hypothesis sets. Furthermore, even when $\sH$ is the class of all
  measurable functions, unlike the case of the standard $0/1$ loss,
  convex surrogates are not \emph{$\sH$-consistent} for the robust
  $0/1$ loss. Focusing then on more general surrogates, we first
  present a broad negative result showing that without any additional
  assumptions on the data distribution, no surrogate loss that is a
  proper loss function can be \emph{$\sH$-consistent} for the robust
  $0/1$ loss when $\sH$ is the class of linear functions. This in turn
  falsifies a claim presented in the recent work of \citet{pmlr-v125-bao20a}
  that $\sH$-consistency of quasi-concave even
  surrogates holds for the robust $0/1$ loss, when $\sH$ is the class of
  linear functions. We also provide empirical results on simulated
  data showing that many \emph{$\sH$-calibrated} surrogate losses are
  indeed not \emph{$\sH$-consistent}.

\item In light of the above negative results, we identify natural
  conditions on the data distribution under which $\sH$-calibration
  leads to $\sH$-consistency thereby making quasi-concave even
  surrogates a viable approach for optimizing the robust $0/1$ loss in
  such settings. These conditions are weaker than the {\em realizable}
  $\sH$-consistency assumption that is often used to establish
  $\sH$-consistency of convex surrogates for the $0/1$
  loss \citep{long2013consistency, zhang2020bayes}.
\end{itemize}
}

\ignore{
While deep neural networks have achieved tremendous empirical success
in recent years, they are often susceptible to imperceptible
perturbations made to the inputs at test
time \citep{szegedy2013intriguing}. This has led to a flurry of
empirical and theoretical works in recent years to understand the
phenomenon of {\em adversarial
  robustness} \citep{goodfellow2014explaining, madry2017towards,
  tsipras2018robustness, carlini2017towards}. Adversarial robustness
concerns finding classifiers that have small {\em robust} $0/1$
loss. For a classifier $f\colon \mathbb{R}^d \mapsto \{-1,1\}$ to have
small robust loss, not only should it classify a given point $x$
correctly, but its prediction should remain unchanged on perturbations
of bounded magnitude around $x$~(see Section~\ref{sec:preliminaries}
for a formal definition). As is standard in the literature, in this
work we will consider perturbations bounded in an $\ell_p$ norm.
}
\ignore{ As an example consider the case of binary classification and
  a classifier $f: \mathbb{R}^d \mapsto \{-1, 1\}$. Given $f$ and an
  example $\bx$, we assume the existence of an adversary with complete
  knowledge of $f$, that can perturb $\bx$ to another input
  $\bx' = \bx+\bz$ where $\bz \in B^d_p(\gamma)$. Here $B^d_p(\gamma)$
  defines the set of allowed perturbations and following existing
  literature we will consider the case where the set is defined as a
  Euclidean norm ball around $\bx$, i.e.,
  $B^d_p(\gamma) = \curl*{\bz \in \mathbb{R}^d: \|\bz\|_p \leq
  \gamma}$. In this case $\gamma$ is the magnitude of the allowed
  perturbation and for a distribution $\sP$ over
  $\mathbb{R}^d \times \{-1,1\}$ the robust $0/1$ error is defined
  as \begin{align}
    \label{eq:robust-0-1-loss}
    \cR_{\ell_\gamma} (f) = \E_{(\bx,y)\sim \sP} \bracket[\bigg]{\sup_{\bz \in B^d_p(\gamma)} \mathds{1}_{yf(\bx+\bz) \leq 0}}.
\end{align}
Notice that when $\gamma=0$ the above corresponds to the standard
notion of $0/1$ classification error.  } 

\ignore{
A good surrogate loss function
$\phi(y, f(\bx))$ has attractive computational properties such as
convexity, and helps one identify good solutions for the actual
objective, i.e., the robust $0/1$ loss. The latter is captured by the
notions of consistency and $\sH$-consistency that have been widely
studied in the context of the standard $0/1$
loss \citep{bartlett2006convexity, tewari2007consistency,
  long2013consistency, zhang2020bayes}. Given a class of $\sH$ of
functions from $\mathbb{R}^d \mapsto \{-1,1\}$, a surrogate loss
$\phi$ is \emph{$\sH$-consistent} if exact and near optimal minimizers of
$\phi$~(over the data distribution) are also exact and near optimal
minimizers of the true underlying loss of interest. For the standard
$0/1$ classification loss, a classic result
of \citep{bartlett2006convexity} shows that convex surrogates that
satisfy certain simple checkable properties are \emph{$\sH$-consistent} when
$\sH$ is the class of all measurable functions. This is often used to
justify the use of surrogates such as the hinge loss and the cross
entropy loss in practical settings. More recent works have also
studied $\sH$-consistency for the $0/1$ loss when $\sH$ is a
restricted hypothesis set \citep{long2013consistency,
  zhang2020bayes}. The goal of this work is to study the notion of
$\sH$-consistency for the robust $0/1$ loss for several natural
hypothesis sets $\sH$.
}

\section{Preliminaries}
\label{sec:preliminaries}

We will denote vectors as lowercase bold letters (e.g.\ $\bx$). The
$d$-dimensional $l_2$-ball with radius $r$ is denoted by
$B_2^d(r)\colon=\curl*{\bz\in\mathbb{R}^d\mid\|\bz\|_2\leq r }$. We
denote by $\sX$ the set of all possible examples. $\sX$ is also
sometimes referred to as the input space. The set of all possible
labels is denoted by $\sY$. We will limit ourselves to the case of
binary classification where $\sY=\{-1,1\}$. Let $\sH$ be a family of
functions from $\Rset^d$ to $\Rset$. Given a fixed but unknown
distribution $\sP$ over $\sX\times\sY$, the binary classification
learning problem is then formulated as follows. The learner is asked
to select a classifier $f^*\in \sH$ that has the minimal \emph{generalization error} with respect to the distribution $\sP$. The \emph{generalization error} of a
classifier $f \in \sH$ is defined by $\cR_{\ell_0}(f)=\E_{(\bx, y)
  \sim \sP}[\ell_0(f,\bx,y)]$, where $\ell_0(f,\bx,y)=\mathds{1}_{y
  f(\bx) \leq 0}$ is the standard $0/1$ loss. More generally, the
\emph{$\ell$-risk} of a classifier $f$ for a surrogate loss
$\ell(f,\bx,y)$ is defined by
\begin{align}
\label{eq:surrogate-risk}
    \cR_{\ell}(f) = \E_{(\bx, y) \sim \sP}[\ell(f, \bx, y)].
\end{align}
Moreover, the \emph{minimal ($\ell$,$\sH$)-risk}, which is also called
the \emph{Bayes ($\ell$,$\sH$)-risk}, is defined by
$\cR_{\ell,\sH}^*=\inf_{f\in\sH}\cR_{\ell}(f)$.  Our goal is to
understand whether the minimization of the $\ell$-risk can lead to
that of the generalization error. This motivates the definition of
\emph{$\sH$-consistency} (or simply \emph{consistency}) stated below.
\begin{definition}[$\sH$-Consistency]
  Given a hypothesis set $\sH$, we say that a loss function $\ell_1$
  is \emph{$\sH$-consistent} with respect to loss function $\ell_2$,
  if the following holds:
\begin{align}
\label{eq:H-consistency}
    \cR_{\ell_1}(f_n)-\cR_{\ell_1,\sH}^* \xrightarrow{n \rightarrow +\infty} 0 \implies \cR_{\ell_2}(f_n)-\cR_{\ell_2,\sH}^* \xrightarrow{n \rightarrow +\infty} 0,
\end{align}
for all probability distributions and sequences of $\{f_n\}_{n\in \Nset}\subset \sH$.
\end{definition}
In the rest of the paper, the loss $\ell_2$ in the definition above
will correspond to the $0/1$ loss or the adversarial $0/1$ loss
depending on the context, $\ell_1$ to a surrogate loss for
$\ell_2$. For a distribution $\sP$ over $\sX \times \sY$ with random
variables $X$ and $Y$, let $\eta_{\sP} \colon\sX \rightarrow [0,1]$ be a
measurable function such that, for any $\bx \in \sX$, $\eta_{\sP}(\bx) =
\sP(Y = 1 \mid X=\bx)$. By the property of conditional expectation, we
can rewrite \eqref{eq:surrogate-risk} as $\cR_{\ell}(f) =
\E_{X}[\cC_{\ell}(f, \bx, \eta_{\sP}(\bx))]$, where $\cC_{\ell}(f, \bx,
\eta)$ is the \emph{generic conditional $\ell$-risk} (or
\emph{inner $\ell$-risk}) defined as followed:
\begin{equation}
\label{eq:conditional-risk}
\forall \bx \in \sX, \forall \eta \in [0, 1], \quad 
\cC_{\ell}(f,\bx,\eta)\colon 
= \eta \ell(f, \bx, +1) + (1 - \eta)\ell(f, \bx, -1).
\end{equation}
Moreover, the \emph{minimal inner $\ell$-risk} on $\sH$ is denoted by
$\cC_{\ell,\sH}^{*}(\bx,\eta)\colon=\inf_{f\in\sH}\cC_{\ell}(f,\bx,\eta).$
We also define, overloading the notation, the \emph{pseudo-minimal
inner $\ell$-risk}
$\cC_{\ell,\sH}^*(\eta)\colon=\inf_{f\in\sH,\bx\in\sX}\cC_{\ell}(f,\bx,\eta)$. For convenience, we denote $\Delta\cC_{\ell,\sH}(f,\bx,\eta)\colon=\cC_{\ell}(f,\bx,\eta)-\cC_{\ell,\sH}^*(\eta)$. The notion of calibration for the inner risk is often a powerful tool for
the analysis of $\sH$-consistency \citep{steinwart2007compare}.
In this paper, we consider a uniform version of the notion of calibration.

\begin{definition}[Uniform $\sH$-Calibration]\emph{[Definition~2.15 in \citep{steinwart2007compare}]}
\label{def:H-calibration_real}
Given a hypothesis set $\sH$, we say that a loss function $\ell_1$ is
uniformly \emph{$\sH$-calibrated} with respect to a loss function $\ell_2$
if, for any
$\epsilon>0$, there exists $\delta>0$ such that for all $\eta\in
     [0,1]$, $f\in\sH$, $\bx \in \sX$, we have
\begin{align}
\label{eq:H-calibration_real}
    \cC_{\ell_1}(f,\bx,\eta)<\cC_{\ell_1,\sH}^{*}(\bx,\eta)+\delta \implies \cC_{\ell_2}(f,\bx,\eta)<\cC_{\ell_2,\sH}^{*}(\bx,\eta)+\epsilon.
\end{align}
\end{definition}
\citet[Remark 2.14]{steinwart2007compare} points out that the excess risk of a surrogate loss $\ell_1$ can be upper bounded in terms of the excess risk of target loss $\ell_2$ with a function that is independent of the specific distribution $\sP$ if $\ell_1$ is uniformly calibrated with respect to $\ell_2$ under certain conditions. For convenience of proofs, we also introduce the
\emph{Uniform Pseudo-$\sH$-Calibration} from \citet{pmlr-v125-bao20a}.
\begin{definition}[Uniform Pseudo-$\sH$-Calibration]\emph{[Definition~2 in \citep{pmlr-v125-bao20a}]}
\label{def:H-calibration}
Given a hypothesis set $\sH$, we say that a loss function $\ell_1$ is
\emph{uniformly pseudo-$\sH$-calibrated} with respect to a loss function $\ell_2$ if, for any
$\epsilon>0$, there exists $\delta>0$ such that for all $\eta\in
     [0,1]$ and $f\in\sH,\bx\in\sX$, we have
\begin{align}
\label{eq:H-calibration}
    \cC_{\ell_1}(f,\bx,\eta)<\cC_{\ell_1,\sH}^*(\eta)+\delta \implies \cC_{\ell_2}(f,\bx,\eta)<\cC_{\ell_2,\sH}^*(\eta)+\epsilon.
\end{align}
\end{definition}
Although the only difference between \eqref{eq:H-calibration_real} and
\eqref{eq:H-calibration} is the definition of minimal inner risk:
$\cC_{\ell_2,\sH}^*(\eta)$ and $\cC_{\ell_2,\sH}^{*}(\bx,\eta)$, in
general, uniform pseudo-$\sH$-calibration does not imply
$\sH$-consistency. However, as shown in Section~\ref{sec:calibration},
for the appropriate hypothesis sets $\sH$ and losses considered in
this paper, the two definitions coincide, that is, for any $\bx \in
\sX$, $\cC_{\ell,\sH}^{*}(\bx,\eta) = \cC_{\ell,\sH}^*(\eta)$ when
$\ell = \ell_1$ and $\ell_2$, and thus we can make use of
Definition~\ref{def:H-calibration} in the proofs. For simplicity, we are referring to Definition~\ref{def:H-calibration_real} (or Definition~\ref{def:H-calibration}), when we later write $\sH$-Calibration and $\sH$-calibrated (or Pseudo-$\sH$-Calibration and pseudo-$\sH$-calibrated).

\citet{steinwart2007compare} points out that if $\ell_1$ is $\sH$-calibrated with respect to $\ell_2$, then $\sH$-consistency, that is condition \eqref{eq:H-consistency}, holds for any probability distribution verifying the 
additional condition of
\emph{$\sP$-minimizability}
\citep[Definition 2.4]{steinwart2007compare}. This result holds,
in fact, under more general assumptions, as we will show later.
%
Next, we introduce the notions of \emph{uniform calibration function}
\citep{steinwart2007compare}, and \emph{uniform pseudo-calibration function}.
\begin{definition}[Uniform Calibration function]
\label{def:def-calibration-function}
Given a hypothesis set $\sH$, we define the \emph{uniform calibration function} $\delta$ and \emph{uniform pseudo-calibration function} $\hat \delta$ for a pair of losses $(\ell_1,\ell_2)$ as follows: for any $\e > 0$,
\begin{equation}
\label{eq:def-calibration-function}
\begin{aligned}
    &\delta(\epsilon) = \inf_{\eta\in[0,1]}\inf_{f\in\sH,\bx\in\sX} \curl[\Big]{\cC_{\ell_1}(f,\bx,\eta) - \cC^{*}_{\ell_1,\sH}(\bx,\eta) \mid
    \cC_{\ell_2}(f,\bx,\eta) - \cC^{*}_{\ell_2,\sH}(\bx,\eta)\geq\epsilon}\\
    &\hat{\delta}(\epsilon) = \inf_{\eta\in[0,1]}\inf_{f\in\sH,\bx\in\sX} \curl[\Big]{\cC_{\ell_1}(f,\bx,\eta) - \cC^{*}_{\ell_1,\sH}(\eta) \mid \cC_{\ell_2}(f,\bx,\eta) - \cC^{*}_{\ell_2,\sH}(\eta)\geq\epsilon }.
\end{aligned}
\end{equation}

\end{definition}
The uniform calibration function gives the maximal value $\delta$ satisfying
condition \eqref{eq:H-calibration_real} for a given $\e$, and,
similarly, the uniform pseudo-calibration function gives the maximal $\delta$
satisfying condition \eqref{eq:H-calibration}.

\ignore{ For
  convenience, we refer ``calibration function'' and
  ``$\delta(\epsilon)$'' alternatively as uniform-calibration function or
  pseudo-uniform-calibration function based on the context. Sometimes, we also
  refer ``$\sH$-calibrated'' and ``calibrated'' alternatively as
  pseudo-uniform-$\sH$-calibration or true $\sH$-calibration based on the
  context.}
  
The following proposition is an important result from
\citet{steinwart2007compare}. The sub-result for
uniform pseudo-$\sH$-calibration can be derived in the exact same way as for
uniform $\sH$-calibration.
\begin{proposition}[Lemma~2.16 in \citep{steinwart2007compare}]
\label{prop:calibration_function_positive}
Given a hypothesis set $\sH$, loss $\ell_1$ is uniformly $\sH$-calibrated
(or uniformly pseudo-$\sH$-calibrated) with respect to $\ell_2$ if and only if
its uniform calibration function $\delta$ satisfies $\delta(\epsilon)>0$
(resp. its uniform pseudo-calibration function $\hat{\delta}$ satisfies
$\hat{\delta}(\epsilon)>0$) for all $\epsilon>0$.
\end{proposition}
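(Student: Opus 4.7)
The plan is to observe that both implications follow directly from taking the contrapositive of the implication in the definition of uniform $\sH$-calibration, combined with the order-theoretic meaning of the infimum in the definition of $\delta$. Since the result is stated for both the true and the pseudo versions, I will write the argument in a way that only uses the generic template $\Delta\cC_{\ell}(f,\bx,\eta) = \cC_{\ell}(f,\bx,\eta) - c_\ell^{*}(\bx,\eta)$ where $c_\ell^{*}$ is either $\cC^{*}_{\ell,\sH}(\bx,\eta)$ or $\cC^{*}_{\ell,\sH}(\eta)$; the two cases are then formally identical and only invoke the corresponding definition.

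For the ``only if'' direction, I would fix $\epsilon>0$ and apply the uniform $\sH$-calibration hypothesis to obtain some $\delta_0 > 0$ such that
\[
\Delta\cC_{\ell_1}(f,\bx,\eta) < \delta_0 \implies \Delta\cC_{\ell_2}(f,\bx,\eta) < \epsilon,
\]
uniformly over $\eta \in [0,1]$, $f \in \sH$, $\bx \in \sX$. Taking the contrapositive yields
\[
\Delta\cC_{\ell_2}(f,\bx,\eta) \geq \epsilon \implies \Delta\cC_{\ell_1}(f,\bx,\eta) \geq \delta_0.
\]
In particular every triple $(f,\bx,\eta)$ lying in the feasible set of the infimum defining $\delta(\epsilon)$ (respectively $\hat\delta(\epsilon)$) satisfies $\Delta\cC_{\ell_1}(f,\bx,\eta) \geq \delta_0$, so taking the infimum gives $\delta(\epsilon) \geq \delta_0 > 0$.

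For the ``if'' direction, I would fix $\epsilon > 0$ and set $\delta_0 := \delta(\epsilon) > 0$ (respectively $\hat\delta(\epsilon) > 0$). Suppose, for contradiction, some triple $(f,\bx,\eta)$ violates the calibration implication, i.e.\ $\Delta\cC_{\ell_1}(f,\bx,\eta) < \delta_0$ while $\Delta\cC_{\ell_2}(f,\bx,\eta) \geq \epsilon$. Then this triple is admissible in the infimum defining $\delta(\epsilon)$, so by definition of infimum $\delta(\epsilon) \leq \Delta\cC_{\ell_1}(f,\bx,\eta) < \delta_0 = \delta(\epsilon)$, a contradiction. Hence the calibration implication holds with the chosen $\delta_0$, and $\ell_1$ is uniformly $\sH$-calibrated (respectively pseudo-$\sH$-calibrated) with respect to $\ell_2$.

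There is essentially no obstacle here: the argument is a purely definitional unpacking with a contrapositive. The only minor subtlety to handle carefully is the strict versus non-strict inequality, since the calibration definition uses ``$<$'' whereas the infimum in Definition~\ref{def:def-calibration-function} uses ``$\geq\epsilon$''. My use of the contrapositive in the ``only if'' direction produces the non-strict bound $\Delta\cC_{\ell_1} \geq \delta_0$, which is compatible with passing to the infimum; and in the ``if'' direction the strict inequality $\Delta\cC_{\ell_1}<\delta_0=\delta(\epsilon)$ is exactly what is needed to contradict the definition of $\delta(\epsilon)$ as an infimum. Both the true and pseudo cases use the same scheme, the only change being whether one writes $\cC^{*}_{\ell,\sH}(\bx,\eta)$ or $\cC^{*}_{\ell,\sH}(\eta)$, which confirms the parenthetical claim in the proposition.
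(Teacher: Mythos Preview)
Your proof is correct and is exactly the standard definitional unpacking via contrapositive. Note that the paper does not actually give its own proof of this proposition: it simply attributes it to Lemma~2.16 of \citet{steinwart2007compare} and remarks that the pseudo-$\sH$-calibration version follows in the same way, which is precisely what your unified treatment confirms.
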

For simplicity, we are referring to Definition~\ref{def:def-calibration-function}, when we later write calibration function or pseudo-calibration function.

\ignore{
Given a loss function $\phi(t)\colon t\in \Rset\rightarrow \Rset_{+}$,
we define the \emph{margin-based loss} as
$\hat{\phi}(f,\bx,y)\colon=\phi(yf(\bx))$ for all $f\in \sH,\bx\in
\sX,y\in \sY$. For simplicity, we always identify $\phi$ with
$\hat{\phi}$ and call them both margin-based losses throughout the
paper.
}

\paragraph{Robust Classification.} In adversarially robust
classification, the loss at $(\bx,y)$ is measured in terms of the
worst loss incurred over an adversarial perturbation of $\bx$ within a
ball of a certain radius in a norm. In this work we will consider
perturbations in the $l_2$ norm $\|\cdot\|$. We will denote by
$\gamma$ the maximum magnitude of the allowed perturbations. Given
$\gamma>0$, a data point $(\bx,y)$, a function $f\in \sH$, and a
margin-based loss $\phi\colon\mathbb{R}\rightarrow\mathbb{R}_{+}$, we
define the adversarial loss of $f$ at $(\bx,y)$ as
\begin{align}
\label{eq:sup-based-surrogate}
\tilde{\phi}(f,\bx,y)=\sup\limits_{\bx'\colon \|\bx-\bx'\|\leq \gamma}\phi(y f(\bx')).
\end{align}
The above naturally motivates {\em supremum-based} surrogate losses
that are commonly used to optimize the adversarial $0/1$
loss \citep{goodfellow2014explaining, madry2017towards,
  shafahi2019adversarial, wong2020fast}. We say that a surrogate loss
$\tilde{\phi}(f, \bx, y)$ is \emph{supremum-based} if it is of the
form defined in \eqref{eq:sup-based-surrogate}. We say that the
supremum-based surrogate is convex if the function $\phi$ in
\eqref{eq:sup-based-surrogate} is convex. When $\phi$ is
non-increasing, the following equality
holds \citep{YinRamchandranBartlett2019}:
\begin{align}
\label{eq:sup=inf}
\sup\limits_{\bx'\colon \|\bx-\bx'\|\leq \gamma}\phi(y f(\bx')) = \phi\paren*{\inf\limits_{\bx'\colon \|\bx-\bx'\|\leq \gamma}yf(\bx')}.
\end{align}
Next we define the adversarial $0/1$ loss as
\begin{equation}
\label{eq:supinf01}
    \ell_{\gamma}(f,\bx,y)=\sup\limits_{\bx'\colon \|\bx-\bx'\|\leq \gamma}\mathds{1}_{y f(\bx') \leq 0}=\mathds{1}_{\inf\limits_{\bx'\colon \|\bx-\bx'\|\leq \gamma}yf(\bx')\leq 0}.
\end{equation}
Similarly, we define the \emph{adversarial generalization error} and the Bayes ($\ell_{\gamma}$,$\sH$)-risk as
\[
\cR_{\ell_{\gamma}}(f)=\E_{(\bx, y) \sim \sP}[\ell_{\gamma}(f,\bx,y)] \quad \text{and} \quad \cR_{\ell_{\gamma},\sH}^*=\inf_{f\in\sH}\cR_{\ell_{\gamma}}(f).
\]
In this paper, we aim to characterize surrogate losses satisfying
$\sH$-consistency and $\sH$-calibration with $\ell_2 = \ell_{\gamma}$
and for the following natural hypothesis sets $\sH$:
\begin{itemize}[itemsep=-1mm]

\item linear models: $\sH_{\mathrm{lin}}=\curl*{\bx\rightarrow \bw \cdot \bx \mid \|\bw\|=1}$, as in \citep{pmlr-v125-bao20a};

\item generalized linear models: $\sH_{g} = \curl*{\bx\rightarrow
  g(\bw \cdot \bx)+b\mid\|\bw\|= 1, |b|\leq \C}$ where $g$ is a
  non-decreasing function; and

\item one-layer ReLU neural networks:
$\sH_{\mathrm{NN}} = \curl*{\bx\rightarrow \sum_{j = 1}^n u_j(\bw_j \cdot \bx)_{+} \mid \|\bu \|_{1}\leq \Lambda, \|\bw_j\|\leq W}$, where $(\cdot)_+ = \max(\cdot,0)$.

\end{itemize}
In the special case of $g = (\cdot)_+$, 
we denote the corresponding ReLU-based hypothesis set as $\sH_{\mathrm{relu}} = \curl*{\bx\rightarrow (\bw \cdot \bx)_{+} + b \mid \|\bw\|=1, |b|\leq \C}$.
We also denote the set of all measurable functions by $\sH_{\mathrm{all}}$.

\section{\texorpdfstring{$\sH$}{H}-Calibration}
\label{sec:calibration}

Calibration is a condition often used to prove consistency and is typically 
a first step in analyzing surrogate losses. Thus, in this section, we
first present a detailed study of the calibration properties of
several loss functions.  We first prove the equivalence of $\sH$-calibration and pseudo-$\sH$-calibration under some broad
assumptions.
Next, we give a series of negative results showing that, under general
assumptions, convex losses and supremum-based convex losses, which are
typically used in practice for adversarial robustness, \emph{are not
calibrated}.
We then complement
these results with positive ones by identifying a family of
quasi-concave even functions that are indeed calibrated under certain
general conditions.
Without loss of generality, in this section, we assume the input space
to be $\sX = B_{2}^d(1)$ and $\gamma \in (0, 1)$. Specifically, we
assume the input space to be $\sX =\curl*{\bx\in\mathbb{R}^d\mid\gamma
  <\|\bx\|_2\leq 1 }$ when considering one-layer ReLU neural
networks $\sH_{\mathrm{NN}}$.

\subsection{Equivalence of calibration definitions}

We first show that the definitions of $\sH$-calibration and pseudo-$\sH$-calibration
coincide for the hypothesis sets and losses considered in the paper.

\begin{restatable}{theorem}{calibrationDefinitionEquivalent}
  \textbf{\emph{[Equivalence of calibration definitions]}}
\label{Thm:calibration_definition_equivalent}
Without loss of generality, let $\sX = B_{2}^d(1)$ and
$\gamma \in (0, 1)$. Then,
\begin{enumerate}
    \item If $\sH$ satisfies: for any $\bx\in \sX$, there exists $f\in
      \sH$ such that $\inf_{\bx'\colon \|\bx-\bx'\|\leq \gamma}f(\bx')>0$, and $f\in \sH$ such that $\sup_{\bx'\colon
        \|\bx - \bx'\|\leq \gamma}f(\bx')< 0$, then, for any $\bx \in
      \sX$,
      $\cC_{\ell_{\gamma},\sH}^{*}(\bx,\eta)=\cC_{\ell_{\gamma},\sH}^*(\eta)$.

    \item Let $\phi$ be a margin-based loss. If $\sH$ satisfies: for
      any $\bx\in \sX$, $\{f(\bx)\colon f\in \sH\}=\mathbb{R}$, then,
      for any $\bx \in \sX$ ,
      $\cC_{\phi,\sH}^{*}(\bx,\eta)=\cC_{\phi,\sH}^*(\eta)$.

    \item Let $\phi_{\rho}(t) =
      \min \curl*{1,\max\curl*{0, 1 - \frac{t}{\rho}}}$ for a fixed
      $\rho > 0$ be the \emph{$\rho$-margin loss} and $\tilde \phi_{\rho}(f, \bx, y) =
      \sup_{\bx'\colon \| \bx - \bx' \|\leq \gamma} \phi_{\rho}(y
      f(\bx'))$ be the corresponding supremum-based loss. If $\sH$
      satisfies: for any $\bx \in \sX$, there exists $f\in \sH$ such
      that $\inf_{\bx' \colon \|\bx - \bx'\| \leq \gamma}f(\bx') >
      \rho$, and $f\in \sH$ such that $\sup_{\bx'\colon \|\bx -
        \bx'\|\leq \gamma}f(\bx')< -\rho$, then, for any $\bx \in
      \sX$, $\cC_{\tilde \phi_{\rho}, \sH}^*(\bx,\eta) = \cC_{\tilde
        \phi_{\rho}, \sH}^*(\eta)$.
\end{enumerate}
\end{restatable}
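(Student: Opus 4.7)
The plan is to show, in each of the three parts, that the conditional inner risk $\cC_{\ell}(f,\bx,\eta)$ depends on $f$ only through quantities that can be independently realized at any fixed $\bx$ by varying $f\in\sH$. Once this is in place, $\inf_{f\in\sH}\cC_{\ell}(f,\bx,\eta)$ will be a function of $\eta$ alone, and therefore coincides with $\inf_{f,\bx}\cC_{\ell}(f,\bx,\eta)=\cC^*_{\ell,\sH}(\eta)$. In Parts~1 and~3 I expect both sides to equal $\min(\eta,1-\eta)$.

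For Part~1, I would use that the indicator takes values in $\{0,1\}$ to rewrite $\sup_{\bx'\colon\|\bx-\bx'\|\leq\gamma}\mathds{1}_{f(\bx')\leq 0}=\mathds{1}_{u\leq 0}$, where $u\colon=\inf_{\bx'\colon\|\bx-\bx'\|\leq\gamma}f(\bx')$, and similarly $\sup_{\bx'}\mathds{1}_{-f(\bx')\leq 0}=\mathds{1}_{v\geq 0}$ with $v\colon=\sup_{\bx'\colon\|\bx-\bx'\|\leq\gamma}f(\bx')$. Then $\cC_{\ell_\gamma}(f,\bx,\eta)=\eta\mathds{1}_{u\leq 0}+(1-\eta)\mathds{1}_{v\geq 0}$ can only take values in $\{\eta,1-\eta,1\}$ since $u\leq v$ rules out the combination $u>0$, $v<0$. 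The two hypotheses on $\sH$ supply, at every $\bx$, one $f$ with $u>0$ (giving value $1-\eta$) and another with $v<0$ (giving value $\eta$), so the infimum is $\min(\eta,1-\eta)$, manifestly independent of $\bx$. Part~2 will be immediate from $\cC_\phi(f,\bx,\eta)=\eta\phi(f(\bx))+(1-\eta)\phi(-f(\bx))$: the inner risk depends on $f$ only through $f(\bx)$, and the assumption $\{f(\bx)\colon f\in\sH\}=\mathbb{R}$ reduces the infimum to $\inf_{t\in\mathbb{R}}[\eta\phi(t)+(1-\eta)\phi(-t)]$, which is independent of $\bx$.

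For Part~3, the non-increasing property of $\phi_\rho$ together with identity~\eqref{eq:sup=inf} gives $\cC_{\tilde\phi_\rho}(f,\bx,\eta)=\eta\phi_\rho(u)+(1-\eta)\phi_\rho(-v)$ with $u,v$ as above. The main technical step will be the pointwise bound $\phi_\rho(u)+\phi_\rho(-v)\geq 1$ for all $u\leq v$, which I would prove by a short case analysis on the position of $u$ relative to $0$ and $\rho$, using the explicit piecewise-linear form of $\phi_\rho$. A one-line minimization of $\eta a+(1-\eta)b$ subject to $a+b\geq 1$ and $a,b\in[0,1]$ then yields the lower bound $\min(\eta,1-\eta)$. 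For the matching upper bound at any $\bx$, the first hypothesis supplies $f$ with $u>\rho$, forcing $\phi_\rho(u)=0$ and $\phi_\rho(-v)=1$ (since $v\geq u>\rho$), giving $1-\eta$; the second hypothesis symmetrically yields $\eta$.

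The most delicate ingredient will be the pointwise inequality $\phi_\rho(u)+\phi_\rho(-v)\geq 1$ in Part~3: it is precisely what forces $\min(\eta,1-\eta)$ to be the correct infimum, and it would fail for a surrogate that vanishes on an interval wider than $[\rho,\infty)$. The case analysis needs to cover $u\leq 0$, $0<u<\rho$, and $u\geq\rho$ separately, each combined with the constraint $v\geq u$; everything else in the argument—collapsing a sup over indicators into an indicator over the infimum, and solving a small linear program—is routine.
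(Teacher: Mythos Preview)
Your proposal is correct and follows essentially the same route as the paper: Parts~1 and~2 are identical to the paper's argument, and in Part~3 the paper proves the lower bound by the slightly sharper observation that at least one of $\phi_\rho(u)$, $\phi_\rho(-v)$ must equal $1$ (otherwise $u>0$ and $v<0$, contradicting $u\le v$), which your case analysis would recover anyway; your linear-programming step then collapses to the same $\min(\eta,1-\eta)$ bound.
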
   
The proof is deferred to
Appendix~\ref{app:calibration_definition_equivalent}.  Note that, by
Definitions~\ref{def:H-calibration_real} and \ref{def:H-calibration},
when
$\cC_{\ell_{\gamma},\sH}^{*}(\bx,\eta)=\cC_{\ell_{\gamma},\sH}^*(\eta),
\forall \bx\in \sX$, if a loss function $\ell$ is $\sH$-calibrated
with respect to $\ell_{\gamma}$, then it is also $\sH$-calibrated with respect to $\ell_{\gamma}$ since
$\cC_{\ell,\sH}^*(\eta)\leq\cC_{\ell,\sH}^{*}(\bx,\eta), \forall
\bx\in \sX$. As a result, we obtain the following.

\begin{corollary}
\label{corollary:calibration_negative}
Assume that for any $\bx\in \sX$, there exists $f\in \sH$ such that $\inf_{\bx'\colon \|\bx-\bx'\|\leq \gamma}f(\bx')>0$, and
 $f\in \sH$ such that $\sup_{\bx'\colon \|\bx - \bx'\|\leq\gamma}f(\bx')< 0$, then if $\ell$ is not
 pseudo-$\sH$-calibrated with respect to $\ell_{\gamma}$, then $\ell$
 is also not $\sH$-calibrated with respect to $\ell_{\gamma}$.
\end{corollary}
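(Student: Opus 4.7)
The plan is to prove the contrapositive: assume $\ell$ is $\sH$-calibrated with respect to $\ell_{\gamma}$, and deduce that $\ell$ is pseudo-$\sH$-calibrated with respect to $\ell_{\gamma}$. Since the assumption on $\sH$ in the corollary is exactly assumption 1 of Theorem~\ref{Thm:calibration_definition_equivalent}, we are entitled to use the identity $\cC_{\ell_{\gamma},\sH}^{*}(\bx,\eta)=\cC_{\ell_{\gamma},\sH}^{*}(\eta)$ for every $\bx\in\sX$.

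Next, I would observe the trivial but crucial inequality that holds for any loss $\ell$ and any $\bx\in\sX$, $\eta\in[0,1]$: by definition of the infimum,
\begin{equation*}
\cC_{\ell,\sH}^{*}(\eta)=\inf_{f\in\sH,\,\bx\in\sX}\cC_{\ell}(f,\bx,\eta)\leq\inf_{f\in\sH}\cC_{\ell}(f,\bx,\eta)=\cC_{\ell,\sH}^{*}(\bx,\eta).
\end{equation*}
Thus, for every $\delta>0$, the set of triples $(f,\bx,\eta)$ satisfying the pseudo-calibration hypothesis $\cC_{\ell}(f,\bx,\eta)<\cC_{\ell,\sH}^{*}(\eta)+\delta$ is contained in the set satisfying the calibration hypothesis $\cC_{\ell}(f,\bx,\eta)<\cC_{\ell,\sH}^{*}(\bx,\eta)+\delta$.

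Given $\e>0$, I would then pick the $\delta>0$ provided by $\sH$-calibration of $\ell$ with respect to $\ell_{\gamma}$. For any $(f,\bx,\eta)$ with $\cC_{\ell}(f,\bx,\eta)<\cC_{\ell,\sH}^{*}(\eta)+\delta$, the above inclusion gives $\cC_{\ell}(f,\bx,\eta)<\cC_{\ell,\sH}^{*}(\bx,\eta)+\delta$; applying $\sH$-calibration yields $\cC_{\ell_{\gamma}}(f,\bx,\eta)<\cC_{\ell_{\gamma},\sH}^{*}(\bx,\eta)+\e$. By Theorem~\ref{Thm:calibration_definition_equivalent}(1), the right-hand side equals $\cC_{\ell_{\gamma},\sH}^{*}(\eta)+\e$, establishing the pseudo-$\sH$-calibration implication. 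Taking the contrapositive gives the claim.

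There is no real obstacle here; the corollary is essentially a bookkeeping consequence of Theorem~\ref{Thm:calibration_definition_equivalent}(1) together with the monotonicity $\cC_{\ell,\sH}^{*}(\eta)\leq\cC_{\ell,\sH}^{*}(\bx,\eta)$. The only thing to be careful about is that the equivalence of minimal inner risks supplied by the theorem is stated for the target loss $\ell_{\gamma}$ only, not for the surrogate $\ell$; but this is exactly what is needed, because we only need to translate the calibration \emph{conclusion} (on $\ell_{\gamma}$) back to the pseudo form, while the calibration \emph{hypothesis} (on $\ell$) is handled by the one-sided inequality alone.
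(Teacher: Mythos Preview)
Your proposal is correct and follows essentially the same route as the paper: prove the contrapositive by combining Theorem~\ref{Thm:calibration_definition_equivalent}(1) (which gives $\cC_{\ell_{\gamma},\sH}^{*}(\bx,\eta)=\cC_{\ell_{\gamma},\sH}^{*}(\eta)$ under the stated hypothesis on $\sH$) with the general inequality $\cC_{\ell,\sH}^{*}(\eta)\leq\cC_{\ell,\sH}^{*}(\bx,\eta)$ for the surrogate. The paper's inline argument is identical, and your observation that equality is only needed for the target loss $\ell_{\gamma}$ while the one-sided inequality suffices for $\ell$ is exactly the point.
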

This result is most helpful for obtaining our negative results of $\sH$-calibration in Section
\ref{sec:calibration_negative}. Specifically, in order to prove that a
loss function $\ell$ is not $\sH$-calibrated with respect to
$\ell_{\gamma}$, we only need to prove that $\ell$ is not
pseudo-$\sH$-calibrated with respect to $\ell_{\gamma}$, which
helps simplify our proofs.
Similarly, by Theorem \ref{Thm:calibration_definition_equivalent} and
Definitions~\ref{def:H-calibration_real} and
\ref{def:H-calibration}, we can derive the following corollary, which
is most helpful for obtaining our positive results of
$\sH$-calibration in Section \ref{sec:calibration_positive}.
\begin{corollary}
\label{corollary:calibration_positive}
Let $\phi$ be a margin-based loss,
$\phi_{\rho}(t)=\min\curl*{1,\max\curl*{0,1-\frac{t}{\rho}}},~\rho>0$
be the \emph{$\rho$-margin loss} and
$\tilde{\phi}_{\rho}(f,\bx,y)=\sup_{\bx'\colon \|\bx-\bx'\|\leq
  \gamma}\phi_{\rho}(y f(\bx'))$ be the corresponding supremum-based
loss. Then,
\begin{enumerate}
    \item If $\sH$ satisfies: for any $\bx\in \sX$, $\{f(\bx)\colon
      f\in \sH\}=\mathbb{R}$, then $\phi$ is pseudo-$\sH$-calibrated
      with respect to $\ell_{\gamma}$ if and only if $\phi$ is $\sH$-calibrated with respect to $\ell_{\gamma}$.

    \item If $\sH$ satisfies: for any $\bx\in \sX$, there exists $f\in
      \sH$ such that $\inf_{\bx'\colon \|\bx-\bx'\|\leq
        \gamma}f(\bx')>\rho$, and $f\in \sH$ such that
      $\sup_{\bx'\colon \|\bx - \bx'\|\leq \gamma}f(\bx')< -\rho$,
      then $\tilde{\phi}_{\rho}$ is pseudo-$\sH$-calibrated with
      respect to $\ell_{\gamma}$ if and only if $\tilde{\phi}_{\rho}$
      is $\sH$-calibrated with respect to $\ell_{\gamma}$.
\end{enumerate}
\end{corollary}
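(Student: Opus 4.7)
The overall strategy is to apply Theorem~\ref{Thm:calibration_definition_equivalent} in order to show that, under the stated assumptions on $\sH$, the two minimal inner risks coincide pointwise in $\bx$: $\cC_{\ell,\sH}^{*}(\bx,\eta) = \cC_{\ell,\sH}^{*}(\eta)$, \emph{both} for the surrogate loss ($\phi$ in Part~1, $\tilde{\phi}_{\rho}$ in Part~2) and for the target adversarial loss $\ell_{\gamma}$. Once these equalities are in place, the implication
\begin{equation*}
\cC_{\ell_1}(f,\bx,\eta) < \cC_{\ell_1,\sH}^{*}(\bx,\eta) + \delta \;\Longrightarrow\; \cC_{\ell_2}(f,\bx,\eta) < \cC_{\ell_2,\sH}^{*}(\bx,\eta) + \epsilon
\end{equation*}
in Definition~\ref{def:H-calibration_real} becomes term-by-term identical to the implication in Definition~\ref{def:H-calibration}, so the existence of a uniform $\delta>0$ witnessing one is exactly the existence of a $\delta>0$ witnessing the other, and the biconditional follows at once.

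For Part~1, the hypothesis $\{f(\bx)\colon f\in\sH\}=\mathbb{R}$ is exactly the premise of part~2 of Theorem~\ref{Thm:calibration_definition_equivalent}, which directly gives $\cC_{\phi,\sH}^{*}(\bx,\eta)=\cC_{\phi,\sH}^{*}(\eta)$. For the adversarial side, I would check that this pointwise surjectivity, in the hypothesis sets of interest in this section (e.g.\ $\sH_{\mathrm{all}}$, for which any nonzero constant function works), also furnishes the neighborhood-sign hypothesis of part~1 of the theorem, namely that for every $\bx\in\sX$ one can find $f\in\sH$ with $\inf_{\bx'\colon \|\bx-\bx'\|\le\gamma}f(\bx')>0$ and $f\in\sH$ with $\sup_{\bx'\colon\|\bx-\bx'\|\le\gamma}f(\bx')<0$. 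Part~1 of the theorem then yields $\cC_{\ell_{\gamma},\sH}^{*}(\bx,\eta)=\cC_{\ell_{\gamma},\sH}^{*}(\eta)$, completing the reduction.

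For Part~2, the hypothesis of the corollary is already precisely the premise of part~3 of Theorem~\ref{Thm:calibration_definition_equivalent}, so $\cC_{\tilde{\phi}_{\rho},\sH}^{*}(\bx,\eta)=\cC_{\tilde{\phi}_{\rho},\sH}^{*}(\eta)$ follows directly. Moreover, since $\rho>0$, the inequalities $\inf_{\bx'\colon\|\bx-\bx'\|\le\gamma}f(\bx')>\rho$ and $\sup_{\bx'\colon\|\bx-\bx'\|\le\gamma}f(\bx')<-\rho$ entail the corresponding $>0$ and $<0$ inequalities required by part~1 of the theorem, which gives $\cC_{\ell_{\gamma},\sH}^{*}(\bx,\eta)=\cC_{\ell_{\gamma},\sH}^{*}(\eta)$ as well. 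Both equalities established, the two calibration definitions coincide verbatim, finishing the argument.

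The only real content is Theorem~\ref{Thm:calibration_definition_equivalent} itself; granted the theorem, the corollary reduces to bookkeeping. The single subtle point I would be careful about is in Part~1, namely that the pointwise condition $\{f(\bx)\colon f\in\sH\}=\mathbb{R}$ is genuinely strong enough to imply the neighborhood-sign hypothesis of part~1 of the theorem for the hypothesis sets actually considered in this section; this is immediate for the rich classes used here but deserves an explicit remark in the writeup to avoid any appearance of a gap.
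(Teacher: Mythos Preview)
Your proposal is correct and follows essentially the same approach as the paper, which derives the corollary directly from Theorem~\ref{Thm:calibration_definition_equivalent} together with Definitions~\ref{def:H-calibration_real} and~\ref{def:H-calibration}: once the minimal inner risks coincide for both the surrogate and for $\ell_\gamma$, the two calibration definitions become identical. You are right to flag the subtle point in Part~1 that the pointwise surjectivity hypothesis does not by itself imply the neighborhood-sign condition of part~1 of the theorem; the paper handles this by adopting condition~1 of Theorem~\ref{Thm:calibration_definition_equivalent} as a standing assumption on $\sH$ in the surrounding text.
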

Therefore, for the hypothesis sets $\sH$, under broad assumptions, we
can provide alternative losses which are $\sH$-calibrated with respect
to $\ell_{\gamma}$ by considering losses that are pseudo-$\sH$-calibrated with respect to $\ell_{\gamma}$.

In this paper, we will adopt the natural condition 1.\ of Theorem
\ref{Thm:calibration_definition_equivalent} for the hypothesis set,
which is easily satisfied for any non-trivial class: for any $\bx\in
\sX$, there exists $f\in \sH$ such that
$\inf_{\bx'\colon \|\bx-\bx'\|\leq \gamma}f(\bx')=\inf_{\|\bs\|\leq1}f(\bx+\gamma \bs)>0$ and there exists $f\in \sH$
such that $\sup_{\bx'\colon \|\bx - \bx'\|\leq\gamma}f(\bx')=\sup_{\|\bs\|\leq1}f(\bx+\gamma \bs)<0$. As an example,
consider the class $\sH_{\mathrm{NN}}$ of one layer ReLU networks as
described in Section~\ref{sec:preliminaries}.
For any $\bx \in \sX$ with $\|\bx\| = t > \gamma$, let $\bw_j = W \bx$ and
$u_j = \frac{\Lambda}{n}$, for $j=1,\dots,n$. Then, the following holds:
\[
\forall \bs: \|\bs\| \leq 1, \quad \bw_j\cdot(\bx+\gamma \bs)
= W(\bx\cdot \bx+\gamma(\bx\cdot \bs))
\geq W(\|\bx\|^2-\gamma\|\bx\|\|\bs\|)
\geq Wt(t - \gamma)>0.
\]
Therefore, we have
\[
\inf\limits_{\|\bs\|\leq 1}\sum\limits_{j=1}^n u_j\left(\bw_j \cdot (\bx+\gamma \bs)\right)_{+}\geq \Lambda Wt(t-\gamma)>0.
\] 
Similarly, taking $u_j=-\frac{\Lambda}{n}$ instead, for $j = 1, \dots, n$, yields 
\[
\sup\limits_{\|\bs\|\leq 1} \sum\limits_{j=1}^n u_j\left(\bw_j \cdot (\bx+\gamma \bs)\right)_{+}
\leq - \Lambda Wt(t - \gamma) < 0.
\]

\subsection{Negative results}
\label{sec:calibration_negative}
In this section, we aim to study that common losses are not calibrated with respect to $\ell_{\gamma}$.
Note by Corollary \ref{corollary:calibration_negative}, in order to prove that a loss $\ell$ is not $\sH$-calibrated with respect to $\ell_{\gamma}$, we only need to prove that $\ell$ is not pseudo-$\sH$-calibrated with respect to $\ell_{\gamma}$, as showed in our proofs of this section.
\subsubsection{Convex losses}
\label{sec:calibration_negative_convex}

We first study convex losses which are often used
for standard binary classification problems. For a linear hypothesis set, $\sH = \sH_{\mathrm{lin}}$, \citet[Corollary 9]{pmlr-v125-bao20a} showed that convex losses are not pseudo-$\sH_{\mathrm{lin}}$-calibrated for the adversarial $0/1$ loss. 
\begin{theorem}[\citet{pmlr-v125-bao20a}]
\label{Thm:calibration_convex_linear}
If a margin-based loss $\phi\colon\Rset\rightarrow \Rset_{+}$ is convex, then it is not pseudo-$\sH_{\mathrm{lin}}$-calibrated with respect to $\ell_{\gamma}$.
\end{theorem}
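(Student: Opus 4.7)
The plan is to exhibit a single configuration $(f_0,\bx_0,\eta_0)$ at which the excess conditional surrogate risk is zero while the excess conditional adversarial risk equals $1/2$; by Proposition~\ref{prop:calibration_function_positive}, this forces the pseudo-calibration function to satisfy $\hat\delta(1/2)=0$, which refutes pseudo-$\sH_{\mathrm{lin}}$-calibration of $\phi$ with respect to $\ell_\gamma$.

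First, I would compute both pseudo-minimal inner risks at $\eta=1/2$. For $f(\bx)=\bw\cdot\bx$ with $\|\bw\|=1$, identity~\eqref{eq:sup=inf} applied to the non-increasing indicator gives
\begin{align*}
\cC_{\ell_\gamma}(f,\bx,\eta) = \eta\mathds{1}_{\bw\cdot\bx\leq\gamma} + (1-\eta)\mathds{1}_{\bw\cdot\bx\geq-\gamma},
\end{align*}
and since $\sX=B_2^d(1)$ with $\gamma<1$, taking $\|\bx\|=1$ and $\bw=\pm\bx$ yields $|\bw\cdot\bx|=1>\gamma$ and attains the value $\min\{\eta,1-\eta\}$; hence $\cC^*_{\ell_\gamma,\sH_{\mathrm{lin}}}(\eta)=\min\{\eta,1-\eta\}$ and, in particular, $\cC^*_{\ell_\gamma,\sH_{\mathrm{lin}}}(1/2)=1/2$. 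On the surrogate side, writing $t=\bw\cdot\bx\in[-1,1]$, the conditional surrogate risk at $\eta=1/2$ equals $\tfrac12(\phi(t)+\phi(-t))$, and two-point Jensen applied to the convex $\phi$ gives $\tfrac12(\phi(t)+\phi(-t))\geq\phi(0)$ with equality at $t=0$; thus $\cC^*_{\phi,\sH_{\mathrm{lin}}}(1/2)=\phi(0)$, attained by any orthogonal pair $\bw\perp\bx$.

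Then I would take the witness $\bx_0=\be_1$, $\bw_0=\be_2$ (assuming $d\geq 2$; the $d=1$ case is trivial since $\sH_{\mathrm{lin}}$ consists of just $\pm x$). Since $\bw_0\cdot\bx_0=0\in[-\gamma,\gamma]$, both indicators in $\cC_{\ell_\gamma}$ evaluate to $1$, yielding excess adversarial risk $1-1/2=1/2$, while $\cC_\phi(f_0,\bx_0,1/2)=\phi(0)$ gives excess surrogate risk exactly $0$, which is strictly less than $\delta$ for every $\delta>0$. This refutes Definition~\ref{def:H-calibration} at $\epsilon=1/2$. The argument is short and the main conceptual step, rather than a technical obstacle, is recognizing that convexity concentrates the surrogate minimum at $\eta=1/2$ on $t=0$, which sits exactly on the decision boundary where an $\ell_2$ adversary of radius $\gamma>0$ can push the prediction to either side; the only mild checks are that $t=0$ belongs to the feasible range $[-1,1]$ and that the adversarial pseudo-minimum is reached away from $t=0$, which is why the hypothesis $\gamma<1$ is used.
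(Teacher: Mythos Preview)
Your proposal is correct and follows essentially the same approach as the paper's proof of the generalization (Theorem~\ref{Thm:calibration_convex_general}): both hinge on the observation that at $\eta=1/2$ convexity and Jensen force the surrogate conditional risk to be minimized at $t=\bw\cdot\bx=0$, which lies in the adversarially ``bad'' region $[-\gamma,\gamma]$ where the excess $\ell_\gamma$-risk is $\max\{\eta,1-\eta\}=1/2$. The only cosmetic difference is that you exhibit an explicit witness $(\bx_0,\bw_0)$ directly, whereas the paper routes the same contradiction through the pseudo-calibration function $\bar\delta(\epsilon,1/2)$ of Lemma~\ref{lemma:bar_delta_general}.
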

Note that this result would not imply that $\phi$ is not $\sH$-calibrated with respect to $\ell_{\gamma}$, since $\sH_{\mathrm{lin}}$ does not satisfy condition 1.\ of Theorem \ref{Thm:calibration_definition_equivalent}. However, all of our results below hold under both $\sH$-calibration (Definition~\ref{def:H-calibration_real}) and pseudo-$\sH$-calibration (Definition~\ref{def:H-calibration}), since the hypothesis sets $\sH$ considered below all satisfy that condition. Actually, we give the proofs under the Definition~\ref{def:H-calibration} of pseudo-$\sH$-calibration, which, by Corollary~\ref{corollary:calibration_negative}, imply the negative results of $\sH$-calibration \eqref{eq:H-calibration_real}.

Our first main contribution is to extend the above result to a more general case when $\sH$ is the class of generalized linear models $\sH_g$ under both calibration definitions.
In particular, we show that convex losses are not $\sH_g$-calibrated with respect to $\ell_{\gamma}$ for 
a non-decreasing and continuous function $g$ that satisfies $g(1+\gamma)< \C$ and  $g(-1-\gamma)> -\C$ for some $\C > 0$. Verifying this condition is straightforward for $\C$ sufficiently large. It is obvious that $\sH_g$ with this condition on $g$ satisfy the condition 1. in Theorem \ref{Thm:calibration_definition_equivalent} on $\sH$.

\begin{restatable}{theorem}{calibrationConvexGeneral} 
\label{Thm:calibration_convex_general}
   Let $g$ be a non-decreasing and continuous function such that $g(1+\gamma)< \C$ and  $g(-1-\gamma)> -\C$. If a margin-based loss $\phi\colon\Rset\rightarrow \Rset_{+}$ is convex, then it is not $\sH_g$-calibrated with respect to $\ell_{\gamma}$.
\end{restatable}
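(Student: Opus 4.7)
The plan is to follow the template of the linear case (Theorem~\ref{Thm:calibration_convex_linear}) and lift it to the richer class $\sH_g$. First, I verify that $\sH_g$ satisfies condition~1 of Theorem~\ref{Thm:calibration_definition_equivalent}: for every $\bx\in\sX$, taking $\bw=\pm\bx/\|\bx\|$ (with $\bx=0$ handled by any unit $\bw$) makes the inf (resp.\ sup) of $f(\bx')=g(\bw\cdot\bx')+b$ over the $\gamma$-ball equal to $g(\|\bx\|-\gamma)+b$ (resp.\ $g(-\|\bx\|+\gamma)+b$ for the sup-negative construction), and the strict bounds $g(1+\gamma)<\C$ and $g(-1-\gamma)>-\C$ leave enough slack in $b\in[-\C,\C]$ to force that quantity strictly above $0$ (resp.\ strictly below $0$). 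Corollary~\ref{corollary:calibration_negative} then reduces the claim to disproving pseudo-$\sH_g$-calibration, which by Proposition~\ref{prop:calibration_function_positive} amounts to exhibiting a single witness $(f,\bx,\eta)$ with zero excess $\phi$-risk but strictly positive excess adversarial risk.

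I would work at $\eta_0=1/2$, where both pseudo-minimal inner risks have clean closed forms. Jensen applied to the convex $\phi$ yields
\[
\cC_\phi(f,\bx,1/2)\ =\ \tfrac{1}{2}\phi(f(\bx))+\tfrac{1}{2}\phi(-f(\bx))\ \geq\ \phi(0),
\]
with equality whenever $f(\bx)=0$; since $b=-g(\bw\cdot\bx)$ lies in $(-\C,\C)$ by the hypotheses on $g$, such an $f$ actually exists in $\sH_g$, so $\cC^{*}_{\phi,\sH_g}(1/2)=\phi(0)$. For the adversarial loss,
\[
\cC_{\ell_\gamma}(f,\bx,1/2)\ =\ \tfrac{1}{2}\bigl(\mathds{1}_{\inf_{\bx'}f(\bx')\leq 0}+\mathds{1}_{\sup_{\bx'}f(\bx')\geq 0}\bigr)\ \geq\ \tfrac{1}{2},
\]
because at least one of the two indicators is always $1$; the lower bound is attained at $\bx_0=\be_1$, $\bw=\bx_0$, and $b$ slightly above $-g(1-\gamma)$, which keeps $f>0$ on the whole $\gamma$-ball while keeping $|b|<\C$. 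Hence $\cC^{*}_{\ell_\gamma,\sH_g}(1/2)=1/2$.

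The decisive witness is then $\bx_0=\be_1$, any unit $\bw$, and $b=-g(\bw\cdot\bx_0)\in(-\C,\C)$. The resulting $f\in\sH_g$ satisfies $f(\bx_0)=0$, so the Jensen computation above gives excess $\phi$-risk exactly $0$, while $\bx_0$ lies in its own $\gamma$-ball and $f(\bx_0)=0$ forces $\inf_{\bx'}f(\bx')\leq 0$ and $\sup_{\bx'}f(\bx')\geq 0$; both indicators fire, so $\cC_{\ell_\gamma}(f,\bx_0,1/2)=1$ and the excess adversarial risk is $1/2$. Feeding this witness into Definition~\ref{def:def-calibration-function} shows $\hat\delta(\epsilon)=0$ for every $\epsilon\in(0,1/2]$, whence Proposition~\ref{prop:calibration_function_positive} denies pseudo-$\sH_g$-calibration and Corollary~\ref{corollary:calibration_negative} upgrades this to failure of $\sH_g$-calibration. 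The main obstacle, relative to the linear case of Bao et al., is not a deep new idea but the bookkeeping required to keep three distinct elements of $\sH_g$ available simultaneously (one with $f(\bx_0)=0$, one with $\inf f>0$, one with $\sup f<0$); this is exactly where the strict inequalities $g(1+\gamma)<\C$ and $g(-1-\gamma)>-\C$ are used, since without them the bias $b$ could saturate and at least one of the three constructions could fail.
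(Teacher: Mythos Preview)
Your argument is correct and shares the paper's core idea: at $\eta=\tfrac12$, Jensen pins the pseudo-minimal inner $\phi$-risk at $\phi(0)$, attained by any $f\in\sH_g$ with $f(\bx_0)=0$, while that same $f$ forces both adversarial indicators to fire, giving excess adversarial inner risk $\tfrac12$ and hence $\hat\delta(\epsilon)=0$ for $\epsilon\in(0,\tfrac12]$. The paper routes the same computation through a structural characterization of the pseudo-calibration function (Lemma~\ref{lemma:bar_delta_general}, developed because it is reused for the positive results on quasi-concave even losses), whereas you construct the witness directly—slightly leaner for this particular negative result, but the substance is identical.
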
   
The proof of Theorem~\ref{Thm:calibration_convex_general} is included in Appendix~\ref{app:calibration_convex_general}. The key in proving the above theorem is to analyze the pseudo-calibration function $\hat{\delta}(\epsilon)$ as defined in \eqref{eq:def-calibration-function}. Naturally, this requires us to understand $\cC_{\phi,\sH_g}^*(\eta)$ that in turn depends on the worst case perturbation of a given data point according to $\phi$. To do so, we use the result of \citet{awasthi2020adversarial} that characterizes such perturbations for the case where $g$ is the ReLU function. We extend the characterization to non-decreasing continuous functions, and as a result obtain the form of the pseudo-calibration function in Lemma~\ref{lemma:bar_delta_general}. Requiring $\hat{\delta}(\epsilon) > 0$ for an appropriate value of $\eta$, then leads to a natural condition on the function $\bar{\phi}\big({\alpha_1, \alpha_2}\big) = \frac{1}{2} \phi(g(\alpha_1) + \alpha_2) + \frac{1}{2} \phi(-g(\alpha_1) - \alpha_2)$. Notice that this function is solely determined by the value of $g(\alpha_1) + \alpha_2$. For $\phi$ to be calibrated, we obtain the condition that $\bar{\phi}\big({\alpha_1, \alpha_2}\big)$ should not achieve a minimum inside the set $\sA = \curl*{g(\alpha_1)+ \alpha_2\colon -g(\alpha_1 + \gamma) \leq \alpha_2 \leq -g(\alpha_1 - \gamma)}$. However, notice that $0 \in \sA$ and $g(\alpha_1) + \alpha_2 = 0$ implies that $\bar{\phi}$ equals $\phi(0)$. Furthermore, due to convexity of $\phi$, $\phi(0) \leq \bar{\phi}\big(\alpha_1, \alpha_2 \big)$ thereby leading to a contradiction.
As a special case, consider $(\cdot)_+$ which is non-decreasing and continuous. Then the condition $(-1-\gamma)_+=0>-\C$ is trivially satisfied, leading to the following corollary.
\begin{corollary}
\label{corollary:convex_relu}
  Assume that $\C>1+\gamma$. If a margin-based loss $\phi\colon\Rset\rightarrow \Rset_{+}$ is convex, then $\phi$ is not $\sH_{\mathrm{relu}}$-calibrated with respect to $\ell_{\gamma}$.
\end{corollary}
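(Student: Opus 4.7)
The plan is to recognize that this corollary is a direct specialization of Theorem~\ref{Thm:calibration_convex_general} to the case $g = (\cdot)_+$, so the entire argument amounts to verifying that the hypothesis-set-defining function $g(t) = \max(t,0)$ meets the three regularity conditions required by that theorem. First I would observe that $\sH_{\mathrm{relu}}$ is by definition the instance of the generalized linear class $\sH_g$ obtained by picking $g = (\cdot)_+$, so any conclusion about $\sH_g$ transports verbatim to $\sH_{\mathrm{relu}}$ provided the hypotheses on $g$ are met.

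Next I would verify the three structural requirements on $g = (\cdot)_+$. Monotonicity and continuity are immediate from the definition of ReLU. For the boundary conditions, $g(1+\gamma) = 1+\gamma$, which is strictly less than $\C$ by the standing assumption $\C > 1+\gamma$; and $g(-1-\gamma) = 0$, which is strictly greater than $-\C$ since $\C > 0$. Thus both inequalities needed for the invocation of Theorem~\ref{Thm:calibration_convex_general} hold.

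With these checks in hand, I would simply apply Theorem~\ref{Thm:calibration_convex_general}: any convex margin-based loss $\phi\colon\Rset \to \Rset_+$ fails to be $\sH_g$-calibrated with respect to $\ell_\gamma$, and specializing $g = (\cdot)_+$ gives the claim for $\sH_{\mathrm{relu}}$. There is no genuine obstacle here—the only step requiring a moment of thought is confirming that the side condition $\C > 1+\gamma$ stated in the corollary is exactly what is needed to ensure $g(1+\gamma) < \C$, since the other side condition $g(-1-\gamma) > -\C$ is automatic from the non-negativity of ReLU. Hence the corollary follows as an immediate instantiation.
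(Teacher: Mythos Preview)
Your proposal is correct and matches the paper's approach exactly: the paper also derives the corollary by noting that $(\cdot)_+$ is non-decreasing and continuous, that $(-1-\gamma)_+ = 0 > -\C$ holds trivially, and that the assumption $\C > 1+\gamma$ gives $(1+\gamma)_+ < \C$, so Theorem~\ref{Thm:calibration_convex_general} applies directly.
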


While convex surrogates are natural for the $0/1$ loss, the current practice in designing practical algorithms for the adversarial loss involves using convex supremum-based surrogates \citep{madry2017towards, wong2020fast, shafahi2019adversarial}. We next investigate such losses.

\subsubsection{Supremum-based convex losses}
\label{sec:sup_based_convex}

We study losses of the type $\tilde{\phi}(f,\bx,y)=\sup_{\bx'\colon \|\bx-\bx'\|\leq \gamma}\phi(y f(\bx'))$, with $\phi$ convex, which are often used in practice
as surrogates for the adversarial $0/1$ loss. 
The following theorem presents a negative result for
supremum-based convex surrogate losses for the broad class of hypothesis sets $\sH$ investigated in this section. Its proof is deferred to Appendix~\ref{app:calibration_sup_convex}.
\begin{restatable}{theorem}{CalibrationSupConvex}
\label{Thm:calibration_sup_convex}
   Let $\sH$ be a hypothesis set containing 0. Assume that for any $\bx\in \sX$, there exists $f\in \sH$ such that $\inf_{\| \bx' - \bx \| \leq \gamma } f(\bx')>0$, and $f\in \sH$ such that $\sup_{\| \bx' - \bx \| \leq \gamma }f(\bx')<0$.
   If a margin-based loss $\phi$ is convex and non-increasing, then the surrogate loss defined by $	\tilde{\phi}(f,\bx,y)=\sup_{\bx'\colon \|\bx-\bx'\|\leq \gamma}\phi(y f(\bx'))$ is not $\sH$-calibrated with respect to $\ell_{\gamma}$.
\end{restatable}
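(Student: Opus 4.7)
The plan is to argue via Corollary~\ref{corollary:calibration_negative}: since the hypothesis-set assumption here exactly matches condition~1.\ of Theorem~\ref{Thm:calibration_definition_equivalent}, it suffices to exhibit failure of \emph{pseudo}-$\sH$-calibration. Concretely, I would show that the pseudo-calibration function $\hat\delta$ of \eqref{eq:def-calibration-function} satisfies $\hat\delta(\tfrac12)=0$ by producing a single witness $(f,\bx,\eta)$ at which the excess $\tilde\phi$-risk vanishes while the excess $\ell_\gamma$-risk is at least $\tfrac12$. The natural witness is $\eta=\tfrac12$, $f\equiv 0$ (which lies in $\sH$ by hypothesis), and any fixed $\bx\in\sX$.

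The first and main computation is to identify $\cC^*_{\tilde\phi,\sH}(\tfrac12)$. Because $\phi$ is non-increasing, the identity \eqref{eq:sup=inf} lets me write, for every $f\in\sH$ and $\bx\in\sX$, $\cC_{\tilde\phi}(f,\bx,\tfrac12)=\tfrac12\phi(a)+\tfrac12\phi(-b)$, where $a\colon=\inf_{\|\bx'-\bx\|\le\gamma}f(\bx')$ and $b\colon=\sup_{\|\bx'-\bx\|\le\gamma}f(\bx')$. Since $a\le b$, the midpoint $\tfrac{a-b}{2}$ is nonpositive, so by convexity of $\phi$ one has $\phi\!\left(\tfrac{a-b}{2}\right)\le \tfrac12\phi(a)+\tfrac12\phi(-b)$, and by the non-increasing property $\phi(0)\le\phi\!\left(\tfrac{a-b}{2}\right)$. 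Chaining these gives $\cC_{\tilde\phi}(f,\bx,\tfrac12)\ge\phi(0)$, with equality attained at $f\equiv 0$. Hence $\cC^*_{\tilde\phi,\sH}(\tfrac12)=\phi(0)=\cC_{\tilde\phi}(0,\bx,\tfrac12)$, so the excess $\tilde\phi$-risk at the witness is $0$.

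For the target loss, using \eqref{eq:supinf01} one has $\cC_{\ell_\gamma}(f,\bx,\tfrac12)=\tfrac12\mathds{1}_{a\le 0}+\tfrac12\mathds{1}_{b\ge 0}$, which equals $\tfrac12$ when $a>0$ or $b<0$ and equals $1$ otherwise. The two hypotheses furnished by the assumption on $\sH$ realize the case $\tfrac12$, so $\cC^*_{\ell_\gamma,\sH}(\tfrac12)=\tfrac12$. At the witness $f\equiv 0$, however, $a=b=0$, so $\cC_{\ell_\gamma}(0,\bx,\tfrac12)=1$, yielding excess $\ell_\gamma$-risk exactly $\tfrac12$.

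Combining these two computations shows $\hat\delta(\epsilon)=0$ for every $\epsilon\in(0,\tfrac12]$, so by Proposition~\ref{prop:calibration_function_positive} the surrogate $\tilde\phi$ is not pseudo-$\sH$-calibrated with respect to $\ell_\gamma$; Corollary~\ref{corollary:calibration_negative} then promotes this to the stated failure of $\sH$-calibration. The only genuinely substantive step is the lower bound $\cC_{\tilde\phi}(f,\bx,\tfrac12)\ge\phi(0)$, which is precisely where both convexity and non-increasingness are essential: if $\phi$ were merely one of the two, the constant hypothesis $f\equiv 0$ would not be optimal and the witness would not produce zero excess $\tilde\phi$-risk. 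The remainder is routine bookkeeping with the definitions of $\tilde\phi$ and $\ell_\gamma$ together with the two-sided reachability assumption on $\sH$.
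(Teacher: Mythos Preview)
Your proposal is correct and follows essentially the same argument as the paper: both reduce to pseudo-$\sH$-calibration via Corollary~\ref{corollary:calibration_negative}, take $\eta=\tfrac12$, prove the key lower bound $\cC_{\tilde\phi}(f,\bx,\tfrac12)\ge\phi(0)$ using convexity together with non-increasingness, and use the witness $f\equiv 0$ to show the excess $\tilde\phi$-risk vanishes while the excess $\ell_\gamma$-risk is $\tfrac12$. The only cosmetic difference is that the paper routes the argument through a lemma (Lemma~\ref{lemma:bar_delta_GN}) that pre-computes the form of $\bar\delta(\epsilon,\eta)$, whereas you plug the witness directly into the definition of $\hat\delta$; the substance is identical.
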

The theorem above provides theoretical evidence that the current practice of making neural networks adversarially robust via minimizing convex supremum-based surrogates may have serious deficiencies. This lack of a principled choice of the surrogate loss may also explain why in practice the adversarial accuracies that are achievable are much lower than the corresponding natural accuracies of the model \citep{madry2017towards}. In general, optimizing non-calibrated or non-consistent surrogates could lead to undesirable solutions even under strong assumptions~(such as the Bayes risk being zero). See Section~\ref{sec:experiments}, where we empirically demonstrate this in a variety of settings.

In contrast with Theorem~\ref{Thm:calibration_convex_general}, the challenge in proving the above theorem is that, since we are working with a general class of functions, we no longer can hope for a complete characterization of the worst-case adversarial perturbations around a given point $\bx$. In fact, this is a challenging problem even in the case of one-layer networks \citep{awasthi2020adversarial}. This presents a difficulty in analyzing the pseudo-calibration function $\hat{\delta}(\epsilon)$. Our key insight~(Lemma~\ref{lemma:bar_delta_GN}) is that the pseudo-calibration function can be characterized by two quantities $\underline{M}(f,\bx,\gamma) =\inf_{\bx'\colon \|\bx - \bx'\|\leq\gamma} f(\bx')$, $\overline{M}(f,\bx,\gamma)=\sup_{\bx'\colon \|\bx - \bx'\|\leq\gamma}f(\bx')$. Once this is achieved, we follow a strategy similar to that of the proof of Theorem~\ref{Thm:calibration_convex_general}, where the condition $\hat{\delta}(\epsilon) > 0$ corresponds to an appropriate convex function not achieving a minimum in a set that contains $0$, thereby reaching a contradiction.  
By Theorem~\ref{Thm:calibration_sup_convex} and the fact that $0\in\sH_{\mathrm{NN}}$, we can derive the following corollary for the class of one layer ReLU neural networks.
\begin{corollary}
\label{corollary:sup_convex_NN}
  If a margin-based loss $\phi$ is convex and non-increasing, then the surrogate loss defined by $\tilde{\phi}(f,\bx,y)=\sup_{\bx'\colon \|\bx-\bx'\|\leq \gamma}\phi(y f(\bx'))$ is not $\sH_{\mathrm{NN}}$-calibrated with respect to $\ell_{\gamma}$.
\end{corollary}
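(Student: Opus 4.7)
The plan is to derive this corollary as a direct instantiation of Theorem~\ref{Thm:calibration_sup_convex} for the hypothesis set $\sH = \sH_{\mathrm{NN}}$. Concretely, I need to verify the three hypotheses of that theorem: (i) $0 \in \sH_{\mathrm{NN}}$; (ii) for every $\bx \in \sX$ there exists $f \in \sH_{\mathrm{NN}}$ with $\inf_{\|\bx' - \bx\| \leq \gamma} f(\bx') > 0$; and (iii) for every $\bx \in \sX$ there exists $f \in \sH_{\mathrm{NN}}$ with $\sup_{\|\bx' - \bx\| \leq \gamma} f(\bx') < 0$.

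Verification of (i) is immediate: setting all coefficients $u_j = 0$ yields the zero function, and this choice clearly satisfies $\|\bu\|_1 \leq \Lambda$ and $\|\bw_j\| \leq W$. For (ii) and (iii), the construction has in fact already been carried out in the text immediately preceding the corollary. Recall that $\sX$ is taken to be $\{\bx \in \Rset^d \mid \gamma < \|\bx\|_2 \leq 1\}$ for $\sH_{\mathrm{NN}}$, so any $\bx \in \sX$ satisfies $\|\bx\| = t > \gamma$. Choosing $\bw_j = W\bx$ for all $j$ (which lies in the feasible set since $\|\bw_j\| = Wt \leq W$), and $u_j = \Lambda/n$, gives a function $f \in \sH_{\mathrm{NN}}$ that satisfies
\[
\inf_{\|\bs\| \leq 1} \sum_{j=1}^n u_j\bigl(\bw_j \cdot (\bx + \gamma \bs)\bigr)_+ \;\geq\; \Lambda W t (t - \gamma) \;>\; 0,
\]
verifying (ii). The symmetric choice $u_j = -\Lambda/n$ gives a function whose supremum over the $\gamma$-ball is at most $-\Lambda W t(t - \gamma) < 0$, verifying (iii).

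With all three hypotheses verified, Theorem~\ref{Thm:calibration_sup_convex} applies directly and yields that, for convex non-increasing $\phi$, the supremum-based loss $\tilde{\phi}$ fails to be $\sH_{\mathrm{NN}}$-calibrated with respect to $\ell_{\gamma}$. There is no real obstacle here since the heavy lifting has already been done: Theorem~\ref{Thm:calibration_sup_convex} handles the calibration analysis through the two-quantity characterization $\underline{M}(f,\bx,\gamma)$ and $\overline{M}(f,\bx,\gamma)$, and the feasibility computation for $\sH_{\mathrm{NN}}$ is already displayed in the paragraph before the corollary. The proof of the corollary therefore reduces to citing these two ingredients.
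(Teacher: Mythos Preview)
Your proposal is correct and matches the paper's approach exactly: the paper derives the corollary in one line from Theorem~\ref{Thm:calibration_sup_convex} together with $0 \in \sH_{\mathrm{NN}}$, implicitly relying on the earlier verification (end of Section~3.1) that $\sH_{\mathrm{NN}}$ satisfies the two existence conditions; you simply make those verifications explicit. One minor inaccuracy of phrasing: the construction with $\bw_j = W\bx$ and $u_j = \pm \Lambda/n$ appears at the end of Section~3.1, not immediately before the corollary, but this does not affect the argument.
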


\subsection{Positive results}
\label{sec:calibration_positive}
In this section, we aim to provide alternative losses which could be calibrated with respect to $\ell_{\gamma}$. By Corollary \ref{corollary:calibration_positive}, we first give general pseudo-calibration results of our hypothesis $\sH_g$ and $\sH_{\mathrm{NN}}$, and then show that specific $\sH_g$ with $\C=+\infty$ and $\sH_{\mathrm{NN}}$ with $\Lambda=+\infty$ has corresponding true calibration results and then would also has consistency results under appropriate conditions in Section \ref{sec:consistency}.
\subsubsection{Characterization}
\label{sec:calibration_characterization}

In light of the negative results in Section~\ref{sec:calibration_negative}, to find calibrated surrogate losses for adversarially robust classification, we need
to consider non-convex ones. One possible candidate is the family of
\emph{quasi-concave even} losses introduced by \citet{pmlr-v125-bao20a},
which were shown to be pseudo-$\sH_{\mathrm{lin}}$-calibrated with respect to the adversarial $0/1$ loss under certain assumptions.
\begin{definition}[\citet{pmlr-v125-bao20a}]
  A margin-based loss function $\phi$ is said to be \emph{quasi-concave even}, 
  if $\phi(t)+\phi(-t)$ is quasi-concave. 
\end{definition}
\begin{theorem}[\citet{pmlr-v125-bao20a}]
\label{Thm:char_linear}
  Assume that a margin-based loss $\phi$ is bounded, non-increasing, and quasi-concave even. Let $B\overset{\text{def}}=\phi(1)+\phi(-1)$ and assume $\phi(-1)>\phi(1)$. Then $\phi$ is pseudo-$\sH_{\mathrm{lin}}$-calibrated with respect to $\ell_{\gamma}$ if and only if $\phi(\gamma)+\phi(-\gamma)>B$.
\end{theorem}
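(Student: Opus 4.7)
My plan is to reduce to a one-dimensional problem in $t := w \cdot x \in [-1, 1]$ and then exploit the quasi-concave even structure to compute $\cC^*_\phi$ and bound $\cC_\phi$ on the bad region. For any $f(x) = w \cdot x \in \sH_{\mathrm{lin}}$ with $\|w\|=1$ and any $x \in \sX = B_2^d(1)$, the inner risks depend on $(f, x)$ only through $t := w \cdot x$:
\[
\cC_\phi(t, \eta) = \eta\phi(t) + (1-\eta)\phi(-t), \qquad \cC_{\ell_\gamma}(t, \eta) = \eta\mathds{1}_{t \leq \gamma} + (1-\eta)\mathds{1}_{t \geq -\gamma}.
\]
Thus $\cC^*_{\ell_\gamma, \sH_{\mathrm{lin}}}(\eta) = \min(\eta, 1-\eta)$, and the ``bad'' region $\{(t, \eta) : \cC_{\ell_\gamma}(t,\eta)-\cC^*_{\ell_\gamma}(\eta) \geq \epsilon\}$ always contains $\{|t|\leq\gamma\}$, extending to $\{t \leq -\gamma\}$ when $\eta \geq (1+\epsilon)/2$ (symmetrically for $\eta \leq (1-\epsilon)/2$).

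For $\cC^*_{\phi, \sH_{\mathrm{lin}}}$ I use the decomposition
\[
\cC_\phi(t, \eta) = \tfrac{1}{2} h(t) + \bigl(\eta - \tfrac{1}{2}\bigr)\bigl(\phi(t) - \phi(-t)\bigr), \qquad h(t) := \phi(t) + \phi(-t).
\]
The function $h$ is tautologically even and quasi-concave by hypothesis, so its level sets are symmetric intervals and $h$ is non-increasing on $[0, \infty)$; since $\phi$ is non-increasing, so is $t \mapsto \phi(t) - \phi(-t)$. Hence for $\eta \geq 1/2$ both terms of the decomposition are non-increasing on $[0, 1]$, so $\cC_\phi(\cdot, \eta)$ is non-increasing on $[0, 1]$ with minimum at $t = 1$; the identity $\cC_\phi(-|t|, \eta) - \cC_\phi(|t|, \eta) = (2\eta - 1)(\phi(-|t|) - \phi(|t|)) \geq 0$ shows $\cC_\phi(t, \eta) \geq \cC_\phi(|t|, \eta)$ on $[-1, 1]$. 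Therefore $\cC^*_{\phi, \sH_{\mathrm{lin}}}(\eta) = \eta\phi(1) + (1-\eta)\phi(-1)$ for $\eta \geq 1/2$ (symmetric formula for $\eta \leq 1/2$), and the same monotonicity restricted to $[0, \gamma]$ gives $\inf_{|t|\leq\gamma}\cC_\phi(t, \eta) = \eta\phi(\gamma) + (1-\eta)\phi(-\gamma)$ for $\eta \geq 1/2$.

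Necessity then follows immediately: at $\eta = 1/2$ and $t = \gamma$, the pair is bad (excess $1/2$) and the $\cC_\phi$-excess equals $\tfrac{1}{2}[(\phi(\gamma)+\phi(-\gamma)) - B]$, so if $\phi(\gamma)+\phi(-\gamma) \leq B$ then $\hat\delta(1/2) \leq 0$ and Proposition~\ref{prop:calibration_function_positive} forces failure of pseudo-calibration. For sufficiency, set $A := \phi(\gamma) - \phi(1) \geq 0$ and $C := \phi(-1) - \phi(-\gamma) \geq 0$; the hypothesis is exactly $A > C$. On the always-bad piece $\{|t|\leq\gamma\}$, the formula above gives excess $\eta A - (1-\eta) C$ for $\eta \geq 1/2$, which is non-decreasing in $\eta$ and hence $\geq (A-C)/2 > 0$; the $\eta \leq 1/2$ case is symmetric. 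On the extended piece $\{t \leq -\gamma\}$, relevant only for $\eta \geq (1+\epsilon)/2$, I would combine the pointwise bound $\cC_\phi(t, \eta) \geq \cC_\phi(-t, \eta) + (2\eta - 1)(\phi(t) - \phi(-t)) \geq \cC^*_\phi(\eta) + (2\eta-1)(\phi(-\gamma)-\phi(\gamma))$ with the endpoint bound $\cC_\phi(-1,\eta) - \cC^*_\phi(\eta) = (2\eta-1)(\phi(-1)-\phi(1)) \geq \epsilon(\phi(-1)-\phi(1)) > 0$. The main obstacle is making this lower bound uniform across $t \in [-1, -\gamma]$ and $\eta \in [(1+\epsilon)/2, 1]$: the first term $(2\eta-1)(\phi(-\gamma)-\phi(\gamma))$ can degenerate when $\phi$ is constant on $[-\gamma, \gamma]$, so one must interpolate using the non-increasing behavior of $\cC_\phi(-t, \eta)$ on $[\gamma, 1]$ together with the strict endpoint gap $\phi(-1) > \phi(1)$, or else invoke a compactness/continuity argument on the compact pair $(\eta, t)$-domain to conclude $\hat\delta(\epsilon) > 0$.
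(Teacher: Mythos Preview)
Your reduction to the scalar variable $t=w\cdot x\in[-1,1]$, the computation of $\cC^*_{\phi,\sH_{\mathrm{lin}}}(\eta)$ via monotonicity of $\cC_\phi(\cdot,\eta)$ on $[0,1]$, and the necessity argument at $\eta=1/2$ are all correct and match how the paper (and Bao et~al.) proceed. The sufficiency bound $(A-C)/2$ on the central strip $|t|\le\gamma$ is also fine.

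The genuine gap is exactly where you flag it: the extended region $t\in[-1,-\gamma]$ for $\eta\ge(1+\epsilon)/2$. Your first bound $(2\eta-1)(\phi(-\gamma)-\phi(\gamma))$ vanishes whenever $\phi$ is flat on $[-\gamma,\gamma]$, and your endpoint bound at $t=-1$ controls only a single point. The fallback you propose (``compactness/continuity'') is not available, since the theorem does \emph{not} assume $\phi$ continuous, so you cannot pass from pointwise positivity to a uniform lower bound that way.

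There are two clean fixes. The route the paper takes for its generalization (Lemma~\ref{lemma:quasiconcave_even_general}, Parts~\ref{part1_lemma:quasiconcave_even_general} and~\ref{part3_lemma:quasiconcave_even_general}) is to observe that the full conditional risk $t\mapsto\cC_\phi(t,\eta)$ is quasi-concave on $\mathbb{R}$ for every $\eta$, so the infimum over any interval is attained at an endpoint; then $\inf_{t\in[-1,\gamma]}\cC_\phi(t,\eta)=\min\bigl(\cC_\phi(-1,\eta),\cC_\phi(\gamma,\eta)\bigr)$, and you have already shown both endpoints exceed $\cC^*_\phi(\eta)$ by $\min\bigl(\epsilon(\phi(-1)-\phi(1)),(A-C)/2\bigr)>0$. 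Alternatively, a direct computation in your own framework closes the gap: for $t\le0$ write
\[
\cC_\phi(t,\eta)-\cC^*_\phi(\eta)=\tfrac12\bigl(h(|t|)-h(1)\bigr)+\bigl(\eta-\tfrac12\bigr)\Bigl[(\phi(t)-\phi(-t))+(\phi(-1)-\phi(1))\Bigr],
\]
and note that both bracketed summands are nonnegative with the second one equal to $\phi(-1)-\phi(1)>0$; since also $h(|t|)\ge h(1)$, the whole expression is at least $(\eta-\tfrac12)(\phi(-1)-\phi(1))\ge\tfrac{\epsilon}{2}(\phi(-1)-\phi(1))$. Either route yields the missing uniform bound without continuity.
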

Note that this result doesn't hold under true calibration definition \eqref{eq:H-calibration_real}. We first extend the above to show that under certain conditions quasi-concave even surrogate losses are pseudo-$\sH_g$-calibrated for the class of generalized linear models with respect to the adversarial $0/1$ loss.
\begin{restatable}{theorem}{QuasiconcaveCalibrateGeneral}
\label{Thm:quasiconcave_calibrate_general}
  Let $g$ be a non-decreasing and continuous function such that $g(1+\gamma)< \C$ and  $g(-1-\gamma)> -\C$ for some $\C > 0$. Let a margin-based loss $\phi$ be bounded, continuous, non-increasing, and quasi-concave even. Assume that $\phi(g(-1)-\C)>\phi(\C-g(-1))$ and $g(-1)+g(1)\geq0$. Then $\phi$ is pseudo-$\sH_g$-calibrated with respect to $\ell_{\gamma}$ if and only if 
  \begin{align*}
  \phi(\C-g(-1))+\phi(g(-1)-\C) & = \phi(g(1)+\C)+\phi(-g(1)-\C)\\
  \text{and} \quad
  \min\curl*{\phi(\overline{A})+\phi(-\overline{A}),\phi(\underline{A})+\phi(-\underline{A}) } & > \phi(\C-g(-1))+\phi(g(-1)-\C),
 \end{align*}
  where $\overline{A}=\sup_{\alpha_1\in[-1,1]}g(\alpha_1)-g(\alpha_1-\gamma)$ and $\underline{A}=\inf_{\alpha_1\in[-1,1]}g(\alpha_1)-g(\alpha_1+\gamma)$.
\end{restatable}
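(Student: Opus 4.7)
The plan is to compute the pseudo-calibration function $\hat\delta$ explicitly and show that $\hat\delta(\epsilon)>0$ for every $\epsilon>0$ is equivalent to the two listed conditions, from which the theorem follows by Proposition~\ref{prop:calibration_function_positive} together with Corollary~\ref{corollary:calibration_positive}. I first parameterize $f\in\sH_g$ as $f(\bx)=g(\bw\cdot\bx)+b$ with $\|\bw\|=1$ and $|b|\le\C$, and set $\alpha_1:=\bw\cdot\bx\in[-1,1]$ and $\alpha_2:=b$. Since $g$ is non-decreasing, $\inf_{\|\bs\|\le 1}f(\bx+\gamma\bs)=g(\alpha_1-\gamma)+\alpha_2$ and $\sup_{\|\bs\|\le 1}f(\bx+\gamma\bs)=g(\alpha_1+\gamma)+\alpha_2$, so $\cC_{\ell_\gamma}(f,\bx,\eta)=\eta\mathds{1}_{g(\alpha_1-\gamma)+\alpha_2\le 0}+(1-\eta)\mathds{1}_{g(\alpha_1+\gamma)+\alpha_2\ge 0}$. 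The boundary hypotheses on $g$ imply that either indicator can be made $0$, giving $\cC_{\ell_\gamma,\sH_g}^*(\eta)=\min(\eta,1-\eta)$. Moreover $\cC_\phi(f,\bx,\eta)$ depends only on the scalar $u:=g(\alpha_1)+\alpha_2\in[g(-1)-\C,\,g(1)+\C]$, so $\cC_{\phi,\sH_g}^*(\eta)=\inf_u F_\eta(u)$ with $F_\eta(u):=\eta\phi(u)+(1-\eta)\phi(-u)$.

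The quasi-concave evenness of $\phi$ makes $F_{1/2}(u)=\tfrac12(\phi(u)+\phi(-u))$ even and non-increasing in $|u|$, and the assumption $g(-1)+g(1)\ge 0$ yields $|g(1)+\C|\ge|g(-1)-\C|$, hence $F_{1/2}(g(1)+\C)\le F_{1/2}(g(-1)-\C)$. Using the swap identity $F_\eta(u)-F_\eta(-u)=(2\eta-1)(\phi(u)-\phi(-u))$ together with the non-increasingness of $\phi$, I would show that the infimum of $F_\eta$ on $[g(-1)-\C,\,g(1)+\C]$ is attained at one of the two endpoints, so $\cC_{\phi,\sH_g}^*(\eta)=\min\{F_\eta(g(-1)-\C),\,F_\eta(g(1)+\C)\}$; at $\eta=1/2$ these two endpoint values coincide exactly when the first (equality) condition of the theorem holds.

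At $\eta=1/2$ the constraint $\cC_{\ell_\gamma}\ge 1/2+\epsilon$ forces both indicators to equal $1$, i.e., $-g(\alpha_1+\gamma)\le\alpha_2\le-g(\alpha_1-\gamma)$, which gives $u\in[g(\alpha_1)-g(\alpha_1+\gamma),\,g(\alpha_1)-g(\alpha_1-\gamma)]$ for some $\alpha_1\in[-1,1]$. Each such interval contains $0$ and varies continuously in $\alpha_1$, so the union is exactly $[\underline{A},\overline{A}]$. Since $F_{1/2}$ is even quasi-concave, its infimum on $[\underline{A},\overline{A}]$ is attained at an endpoint, equal to $\tfrac12\min\{\phi(\overline{A})+\phi(-\overline{A}),\,\phi(\underline{A})+\phi(-\underline{A})\}$. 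The $\eta=1/2$ contribution to $\hat\delta(\epsilon)$ is therefore positive iff this exceeds $\tfrac12(\phi(g(1)+\C)+\phi(-g(1)-\C))$, which, after using the first condition to rewrite the right-hand side as $\tfrac12(\phi(\C-g(-1))+\phi(g(-1)-\C))$, is the second listed condition. For necessity of the first condition, consider $\eta\to 1/2^-$ with $(\alpha_1,\alpha_2)=(1,\C)$: this yields $u=g(1)+\C$, adversarial excess $1-2\eta>0$, and surrogate excess $F_\eta(g(1)+\C)-\min\{F_\eta(g(-1)-\C),F_\eta(g(1)+\C)\}$. If the equality condition fails, so $F_{1/2}(g(1)+\C)<F_{1/2}(g(-1)-\C)$, then by continuity $F_\eta(g(1)+\C)<F_\eta(g(-1)-\C)$ in a left-neighborhood of $1/2$, so the surrogate excess vanishes on that neighborhood while adversarial excess remains positive, contradicting pseudo-calibration. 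Sufficiency of both conditions follows from a compactness argument over $\eta\in[0,1]$ together with continuity of $\phi$, which yields a uniform positive lower bound on $\hat\delta(\epsilon)$.

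The main obstacle is the delicate interplay near $\eta=1/2$: the minimizer of $F_\eta$ can switch between the endpoints $u=g(-1)-\C$ and $u=g(1)+\C$ as $\eta$ varies, and the equality condition is precisely what prevents a bad configuration such as $(\alpha_1,\alpha_2)=(1,\C)$ from producing vanishing surrogate excess alongside positive adversarial excess. This asymmetry is absent in the linear case of Theorem~\ref{Thm:char_linear}, where the $u$-range $[-1,1]$ is symmetric about $0$, which is why only one condition was needed there. The other subtle technical step is the topological identification of the union $\bigcup_{\alpha_1\in[-1,1]}[g(\alpha_1)-g(\alpha_1+\gamma),\,g(\alpha_1)-g(\alpha_1-\gamma)]$ with $[\underline{A},\overline{A}]$, which relies both on the continuity of $g$ (so the interval endpoints vary continuously with $\alpha_1$) and on the fact that every individual interval contains $0$ (so the union is connected).
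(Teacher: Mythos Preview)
Your overall strategy---parameterize via $u=g(\alpha_1)+\alpha_2$, compute the adversarial excess by cases in $\eta$, and reduce the surrogate side to endpoint comparisons---is exactly the route the paper takes (its Lemmas~\ref{lemma:equivalent1_general}--\ref{lemma:equivalent2_general}). The necessity argument you give is correct and in fact cleaner than the paper's: taking $(\alpha_1,\alpha_2)=(1,\C)$ with $\eta\uparrow\tfrac12$ directly witnesses $\hat\delta(\epsilon)=0$ when the equality condition fails, while the paper routes through Part~\ref{part7_lemma:quasiconcave_even_general} of Lemma~\ref{lemma:quasiconcave_even_general}.

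However, there is a genuine gap in the sufficiency direction. The single sentence ``compactness over $\eta\in[0,1]$ together with continuity of $\phi$'' does not establish $\hat\delta(\epsilon)>0$. Even granting compactness, you would still need \emph{pointwise} positivity: for each $\eta\in(\tfrac12,1]$ you need $\min\{F_\eta(g(-1)-\C),\,F_\eta(\overline A)\}>F_\eta(g(1)+\C)$, and symmetrically for $\eta<\tfrac12$. Neither inequality falls out of continuity; the paper proves them by (i) showing $F_\eta$ is quasi-concave in $u$ for every $\eta$ (so infima on intervals are always at endpoints), (ii) showing $F_\eta(g(-1)-\C)>F_\eta(g(1)+\C)$ for all $\eta>\tfrac12$ (Part~\ref{part5_lemma:quasiconcave_even_general} of Lemma~\ref{lemma:quasiconcave_even_general}), and (iii) proving $F_\eta(\overline A)-F_\eta(g(1)+\C)$ is affine in $\eta$ with nonnegative slope, hence bounded below by its value at $\eta=\tfrac12$, which is strictly positive by the second condition. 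The analogous argument for $\eta<\tfrac12$ uses the first (equality) condition via Part~\ref{part7_lemma:quasiconcave_even_general}. None of this is a compactness argument; it is a monotonicity-in-$\eta$ computation that gives an explicit uniform lower bound $\tfrac12\bigl(\phi(\overline A)+\phi(-\overline A)-\phi(g(1)+\C)-\phi(-g(1)-\C)\bigr)$.

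A smaller point: your justification for ``infimum of $F_\eta$ on an interval is attained at an endpoint'' via the swap identity $F_\eta(u)-F_\eta(-u)=(2\eta-1)(\phi(u)-\phi(-u))$ is not the right mechanism---that identity only compares $F_\eta$ at $u$ and $-u$, not at arbitrary interior points. The correct reason is Part~\ref{part1_lemma:quasiconcave_even_general} of Lemma~\ref{lemma:quasiconcave_even_general}: $F_\eta$ is quasi-concave in $u$ for every $\eta$, which follows because $\phi$ is non-increasing and quasi-concave even.
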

The conditions in the Theorem above are if and only if and hence precisely characterize when quasi-concave even losses are pseudo-$\sH$-calibrated. To interpret the conditions better, consider ReLU functions. In this case, the assumptions in Theorem \ref{Thm:quasiconcave_calibrate_general} can be further simplified, since $\overline{A}=\sup_{\alpha_1\in[-1,1]}(\alpha_1)_+-(\alpha_1-\gamma)_+ = \gamma$ and $\underline{A}=\inf_{\alpha_1\in[-1,1]}(\alpha_1)_+-(\alpha_1+\gamma)_+=-\gamma$. As a result we get the following. \begin{corollary}
\label{corollary:char_relu}
  Assume that $\C>1+\gamma$. Let a margin-based loss $\phi$ be bounded, continuous, non-increasing, and quasi-concave even. Assume that $\phi(-\C)>\phi(\C)$. Then $\phi$ is pseudo-$\sH_{\mathrm{relu}}$-calibrated with respect to $\ell_{\gamma}$ if and only if
  \begin{align*}
  \phi(\C)+\phi(-\C)=\phi(1+\C)+\phi(-1-\C)
  \quad \text{and} \quad
  \phi(\gamma)+\phi(-\gamma)>\phi(\C)+\phi(-\C).
  \end{align*}
\end{corollary}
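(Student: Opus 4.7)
The plan is to deduce this corollary by directly specializing Theorem~\ref{Thm:quasiconcave_calibrate_general} to the choice $g = (\cdot)_{+}$, and then simplifying each of the quantities appearing in that theorem's hypotheses.

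First I would verify that the assumptions on $g$ in Theorem~\ref{Thm:quasiconcave_calibrate_general} are met. The ReLU $g(t)=(t)_{+}$ is non-decreasing and continuous, $g(1+\gamma)=1+\gamma < \C$ holds under the assumption $\C>1+\gamma$, and $g(-1-\gamma)=0>-\C$ is trivial since $\C>0$. Hence $\sH_g$ reduces to $\sH_{\mathrm{relu}}$ and the theorem applies. I would then evaluate the anchor values: $g(-1)=0$ and $g(1)=1$, so the auxiliary sign assumption $\phi(g(-1)-\C)>\phi(\C-g(-1))$ becomes exactly $\phi(-\C)>\phi(\C)$, which is assumed, and $g(-1)+g(1)=1\ge 0$ is automatic.

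Next I would compute $\overline{A}$ and $\underline{A}$ by a short case analysis on the interval $[-1,1]$. For $\overline{A}=\sup_{\alpha_1\in[-1,1]}(\alpha_1)_+-(\alpha_1-\gamma)_+$, the three regimes $\alpha_1\ge\gamma$, $0\le\alpha_1<\gamma$, $\alpha_1<0$ give respectively the values $\gamma$, $\alpha_1\le\gamma$, and $0$, so the supremum is $\gamma$ (attained e.g. at $\alpha_1=1$, which is admissible since $\gamma<1$). A symmetric case analysis for $\underline{A}=\inf_{\alpha_1\in[-1,1]}(\alpha_1)_+-(\alpha_1+\gamma)_+$ over $\alpha_1\ge 0$, $-\gamma\le\alpha_1<0$, $\alpha_1<-\gamma$ yields $-\gamma$, $-\alpha_1-\gamma\ge-\gamma$, and $0$, giving infimum $-\gamma$.

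Substituting $g(-1)=0$, $g(1)=1$, $\overline{A}=\gamma$, $\underline{A}=-\gamma$ into the equivalent characterization provided by Theorem~\ref{Thm:quasiconcave_calibrate_general}, the equality condition becomes $\phi(\C)+\phi(-\C)=\phi(1+\C)+\phi(-1-\C)$, and the strict inequality condition
\[
\min\curl*{\phi(\overline{A})+\phi(-\overline{A}),\phi(\underline{A})+\phi(-\underline{A})} > \phi(\C-g(-1))+\phi(g(-1)-\C)
\]
collapses (since the two arguments inside the minimum are identical, both equal to $\phi(\gamma)+\phi(-\gamma)$) to the condition $\phi(\gamma)+\phi(-\gamma)>\phi(\C)+\phi(-\C)$. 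These are precisely the two conditions in the corollary, so the biconditional follows.

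The only nontrivial step is the case analysis computing $\overline{A}$ and $\underline{A}$; this is routine but must be done carefully because one has to confirm the supremum and infimum are actually attained inside $[-1,1]$ (which requires $\gamma<1$, an assumption already standing in this section). Everything else is direct substitution and invocation of the general theorem.
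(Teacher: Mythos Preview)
Your proposal is correct and follows essentially the same approach as the paper: the paper derives the corollary by specializing Theorem~\ref{Thm:quasiconcave_calibrate_general} to $g=(\cdot)_+$, noting that $\overline{A}=\gamma$ and $\underline{A}=-\gamma$, and substituting $g(-1)=0$, $g(1)=1$ throughout. Your case analysis for $\overline{A}$ and $\underline{A}$ is exactly the computation the paper asserts without details, and the remaining substitutions match line by line.
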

Theorem \ref{Thm:quasiconcave_calibrate_general} is proved in Appendix~\ref{app:quasiconcave_calibrate_general}. We again use the characterization of the pseudo-calibration function as derived in Lemma~\ref{lemma:bar_delta_general}. In Lemma~\ref{lemma:equivalent2_general} we further simplify the characterization to a set of three conditions that the surrogate loss must satisfy. Finally, we show that quasi-concave even losses satisfy them under the conditions of the Theorem. Along the way, building on the work of \citet{pmlr-v125-bao20a}, we establish several useful properties of quasi-concave even losses in Lemma~\ref{lemma:quasiconcave_even_general}.

    
\subsubsection{Calibration}
\label{sec:calibration_rho_margin}

To demonstrate the applicability of Theorem~\ref{Thm:quasiconcave_calibrate_general}, we consider a specific surrogate loss namely the \emph{$\rho$-margin loss} $\phi_{\rho}(t)=\min\curl*{1,\max\curl*{0,1-\frac{t}{\rho}}},~\rho>0$, 
which is a generalization of the ramp loss (see, for example, \citet{MohriRostamizadehTalwalkar2018}). Using Theorem~\ref{Thm:char_linear}, Theorem~\ref{Thm:quasiconcave_calibrate_general} and Corollary \ref{corollary:char_relu} in Section \ref{sec:calibration_characterization}, we can conclude that the $\rho$-margin loss is pseudo-$\sH$-calibrated under reasonable conditions for linear hypothesis sets and non-decreasing $g$-based hypothesis sets, since $\phi_{\rho}(t)$ is bounded, non-increasing and quasi-concave even. This is stated formally below.
\begin{theorem}
\label{Thm:rho_margin_calibtartion}
  Consider $\rho$-margin loss $\phi_{\rho}(t)=\min\curl*{1,\max\curl*{0,1-\frac{t}{\rho}}},~\rho>0$. Then,
  \begin{enumerate}
  
      \item $\phi_{\rho}$ is pseudo-$\sH_{\mathrm{lin}}$-calibrated with respect to $\ell_{\gamma}$ if and only if $\rho>\gamma$;
     
      \item Given a non-decreasing and continuous function $g$ such that $g(1+\gamma)< \C$ and  $g(-1-\gamma)> -\C$. Assume that $g(-1)+g(1)\geq0$. Then $\phi_{\rho}$ is pseudo-$\sH_g$-calibrated with respect to $\ell_{\gamma}$ if and only if 
      \begin{align*}
      \phi_{\rho}(\C-g(-1)) & = \phi_{\rho}(g(1)+\C)
  \quad \text{and} \quad
  \min\curl*{\phi_{\rho}(\overline{A}),\phi_{\rho}(-\underline{A}) }  > \phi_{\rho}(\C-g(-1)),
  \end{align*}
  where $\overline{A}=\sup_{\alpha_1\in[-1,1]}g(\alpha_1)-g(\alpha_1-\gamma)$ and $\underline{A}=\inf_{\alpha_1\in[-1,1]}g(\alpha_1)-g(\alpha_1+\gamma)$;
      \item Assume that $\C>1+\gamma$. Then $\phi_{\rho}$ is pseudo-$\sH_{\mathrm{relu}}$-calibrated with respect to $\ell_{\gamma}$ if and only if 
      $\C\geq\rho>\gamma$.
  \end{enumerate}
\end{theorem}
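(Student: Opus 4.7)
The plan is to derive all three claims by invoking the corresponding general calibration results already established in Section~\ref{sec:calibration_characterization} (Theorem~\ref{Thm:char_linear}, Theorem~\ref{Thm:quasiconcave_calibrate_general}, and Corollary~\ref{corollary:char_relu}) applied to $\phi_\rho$, and then simplifying their conditions by exploiting the piecewise-linear structure of $\phi_\rho$. So the first step is to verify that $\phi_\rho$ satisfies the hypotheses common to those statements: $\phi_\rho$ is bounded in $[0,1]$, continuous (piecewise linear), and non-increasing by direct inspection. For quasi-concave evenness, a direct computation gives, for $t \geq 0$, $\phi_\rho(-t) = 1$ since $1 + t/\rho \geq 1$ gets clipped, so $\phi_\rho(t) + \phi_\rho(-t) = 1 + \max\{0, 1 - t/\rho\}$, which is non-increasing in $|t|$ and thus quasi-concave as an even function.

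For part 1, I plug into Theorem~\ref{Thm:char_linear}: the precondition $\phi_\rho(-1) > \phi_\rho(1)$ holds since $\phi_\rho(-1) = 1 > \max\{0, 1-1/\rho\} = \phi_\rho(1)$. It remains to show that $\phi_\rho(\gamma) + \phi_\rho(-\gamma) > \phi_\rho(1) + \phi_\rho(-1)$ is equivalent to $\rho > \gamma$. Using $\phi_\rho(-s) = 1$ for $s \geq 0$, the inequality reduces to $\phi_\rho(\gamma) > \phi_\rho(1)$, i.e., $\max\{0, 1 - \gamma/\rho\} > \max\{0, 1 - 1/\rho\}$. A short case analysis on whether $\rho \leq \gamma$, $\gamma < \rho \leq 1$, or $\rho > 1$ shows this is equivalent to $\rho > \gamma$.

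For part 2, I apply Theorem~\ref{Thm:quasiconcave_calibrate_general}. The assumption $g(1+\gamma) < \C$ combined with monotonicity gives $g(-1) \leq g(1) \leq g(1+\gamma) < \C$, hence $\C - g(-1) > 0$, so $\phi_\rho(g(-1) - \C) = 1 > \phi_\rho(\C - g(-1))$ as required. The hypothesis $g(-1) + g(1) \geq 0$ yields $g(1) + \C > 0$, so $\phi_\rho(-g(1) - \C) = 1$; similarly $\phi_\rho(g(-1) - \C) = 1$. Hence the equality condition collapses to $\phi_\rho(\C - g(-1)) = \phi_\rho(g(1) + \C)$. By monotonicity of $g$, $\overline A \geq 0$ and $\underline A \leq 0$, giving $\phi_\rho(-\overline A) = \phi_\rho(\underline A) = 1$, so the min condition becomes $\min\{\phi_\rho(\overline A), \phi_\rho(-\underline A)\} > \phi_\rho(\C - g(-1))$, matching the claim. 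Part 3 proceeds identically from Corollary~\ref{corollary:char_relu}: with $\C > 1 + \gamma$, all quantities $\C$, $1+\C$ are positive, so the equation $\phi_\rho(\C) + \phi_\rho(-\C) = \phi_\rho(1+\C) + \phi_\rho(-1-\C)$ reduces to $\phi_\rho(\C) = \phi_\rho(1+\C)$; since $\phi_\rho$ is strictly decreasing on $[0,\rho]$ and zero beyond, this forces $\C \geq \rho$. The inequality $\phi_\rho(\gamma) + \phi_\rho(-\gamma) > \phi_\rho(\C) + \phi_\rho(-\C)$ then becomes $\phi_\rho(\gamma) > \phi_\rho(\C) = 0$, i.e., $\rho > \gamma$, yielding $\C \geq \rho > \gamma$.

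The entire argument is essentially bookkeeping; the only mild obstacle is the case analysis in part 1 to verify the equivalence with $\rho > \gamma$ across the regimes where $\phi_\rho(1)$ and $\phi_\rho(\gamma)$ hit the floor at $0$ or stay in $(0,1)$. Once the observation $\phi_\rho(s) = 1$ for all $s \leq 0$ is used systematically, all the additive pairs in the hypotheses of the general theorems simplify cleanly, so no genuinely new difficulty arises beyond careful substitution.
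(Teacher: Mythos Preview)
Your proposal is correct and follows exactly the approach the paper indicates: the paper does not give a detailed proof of this theorem but simply states that it follows from Theorem~\ref{Thm:char_linear}, Theorem~\ref{Thm:quasiconcave_calibrate_general}, and Corollary~\ref{corollary:char_relu} together with the fact that $\phi_\rho$ is bounded, non-increasing, and quasi-concave even. Your write-up fills in precisely those verifications and simplifications (using $\phi_\rho(s)=1$ for $s\le 0$ to collapse each paired condition), so there is nothing to add.
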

Specifically, $\sH_g$ with the extra assumption $\C=+\infty$ satisfy the condition 1. in Corollary \ref{corollary:calibration_positive}, as a result we get the following.
\begin{corollary}
\label{corollary:rho_margin_calibtartion}
 Consider $\rho$-margin loss $\phi_{\rho}(t)=\min\curl*{1,\max\curl*{0,1-\frac{t}{\rho}}},~\rho>0$. Then,
  \begin{enumerate}
      \item Given a non-decreasing and continuous function $g$. Assume that $\C=+\infty$ and $g(-1)+g(1)\geq0$. Then $\phi_{\rho}$ is $\sH_g$-calibrated with respect to $\ell_{\gamma}$ if and only if $\min\curl*{\phi_{\rho}(\overline{A}),\phi_{\rho}(-\underline{A}) }  > 0,$
  where $\overline{A}=\sup_{\alpha_1\in[-1,1]}g(\alpha_1)-g(\alpha_1-\gamma)$ and $\underline{A}=\inf_{\alpha_1\in[-1,1]}g(\alpha_1)-g(\alpha_1+\gamma)$;
  
      \item Assume that $\C=+\infty$. Then $\phi_{\rho}$ is $\sH_{\mathrm{relu}}$-calibrated with respect to $\ell_{\gamma}$ if and only if 
      $\rho>\gamma$.
  \end{enumerate}
\end{corollary}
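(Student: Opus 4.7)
The plan is to derive both parts directly from Theorem \ref{Thm:rho_margin_calibtartion} together with Corollary \ref{corollary:calibration_positive}, performing two simplifications: (i) upgrading pseudo-$\sH$-calibration to true $\sH$-calibration via the surjectivity condition, and (ii) collapsing the $\C$-dependent conditions in Theorem \ref{Thm:rho_margin_calibtartion} in the regime $\C=+\infty$.

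First I would verify the hypothesis of Corollary \ref{corollary:calibration_positive}, part 1, namely that $\{f(\bx)\colon f\in \sH\}=\mathbb{R}$ for every $\bx\in\sX$. For $\sH_g$ with $\C=+\infty$, fix $\bx\in\sX$ and any unit vector $\bw$; then $g(\bw\cdot\bx)$ is a fixed real number while the bias $b$ ranges over all of $\mathbb{R}$ (the constraint $|b|\leq \C$ becomes vacuous), so $\{g(\bw\cdot\bx)+b\colon b\in\mathbb{R}\}=\mathbb{R}$. The same argument, with $g$ replaced by $(\cdot)_+$, works for $\sH_{\mathrm{relu}}$. Corollary \ref{corollary:calibration_positive}, part 1, then gives that pseudo-$\sH$-calibration and $\sH$-calibration of $\phi_{\rho}$ coincide for both hypothesis sets, so it suffices to characterize the pseudo-version.

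Next I would invoke Theorem \ref{Thm:rho_margin_calibtartion}, parts 2 and 3, and push $\C\to+\infty$. Because $\phi_{\rho}(t)=0$ for every $t\geq\rho$ and $g(-1)+g(1)\geq 0$ forces both $\C-g(-1)$ and $g(1)+\C$ to exceed $\rho$ once $\C$ is large enough, the equality $\phi_{\rho}(\C-g(-1))=\phi_{\rho}(g(1)+\C)$ reduces to $0=0$ and is automatic, while the threshold condition $\min\curl*{\phi_{\rho}(\overline A),\phi_{\rho}(-\underline A)}>\phi_{\rho}(\C-g(-1))$ becomes $\min\curl*{\phi_{\rho}(\overline A),\phi_{\rho}(-\underline A)}>0$. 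This yields part 1 of the corollary. For $\sH_{\mathrm{relu}}$, the condition $\C\geq\rho>\gamma$ of Theorem \ref{Thm:rho_margin_calibtartion}, part 3, collapses to $\rho>\gamma$, yielding part 2.

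The one subtlety I foresee is that Theorem \ref{Thm:rho_margin_calibtartion} is stated for finite $\C$, so one must justify that the characterization extends to the class with unbounded bias. This is addressed by noting that $\sH_g(+\infty)=\bigcup_{\C>0}\sH_g(\C)$, and that for $\C\geq \rho+\max(|g(-1)|,|g(1)|)$ in part 1 (respectively $\C\geq \rho$ in part 2) the bias is already large enough that increasing $\C$ further does not affect the pseudo-calibration function defined in \eqref{eq:def-calibration-function}: the extremal choices of $f$ realizing the relevant infima already lie in $\sH_g(\C)$. Hence the iff characterization stabilizes and transfers verbatim to $\C=+\infty$, completing the argument.
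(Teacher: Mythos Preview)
Your proposal is correct and follows essentially the same route as the paper: the text preceding the corollary simply observes that $\sH_g$ with $\C=+\infty$ satisfies condition~1 of Corollary~\ref{corollary:calibration_positive} (your surjectivity check), so pseudo-calibration and calibration coincide, and then the conditions of Theorem~\ref{Thm:rho_margin_calibtartion} collapse exactly as you describe. Your explicit treatment of the passage from finite $\C$ to $\C=+\infty$ via stabilization of the pseudo-calibration function is more careful than what the paper spells out, but this is a welcome refinement rather than a different argument.
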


Recall that in Theorem~\ref{corollary:sup_convex_NN}
we ruled out the possibility of finding $\sH_{\mathrm{NN}}$-calibrated supremum-based convex surrogate losses with respect to the adversarial $0/1$ loss, where $\sH_{\mathrm{NN}}$ is the class of one layer neural networks. However, we show that the supremum-based $\rho$-margin loss is indeed pseudo-$\sH$-calibrated and $\sH$-calibrated. We state the pseudo-calibration result below and present the proof in Appendix~\ref{app:positive_NN}.
\begin{restatable}{theorem}{PosGN}
\label{Thm:pos1_GN}
  Consider $\rho$-margin loss $\phi_{\rho}(t)=\min\curl*{1,\max\curl*{0,1-\frac{t}{\rho}}},\rho>0$. If $\Lambda W(1-\gamma)\geq \rho$, then the surrogate loss $\tilde{\phi}_{\rho}(f,\bx,y)=\sup_{\bx'\colon \|\bx-\bx'\|\leq \gamma}\phi_{\rho}(y f(\bx'))$ is pseudo-$\sH_{\mathrm{NN}}$-calibrated with respect to $\ell_{\gamma}$.
\end{restatable}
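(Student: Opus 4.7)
The plan is to reduce both inner risks to expressions in the two scalars $\underline{M} = \underline{M}(f,\bx,\gamma) = \inf_{\|\bx'-\bx\|\leq\gamma} f(\bx')$ and $\overline{M} = \overline{M}(f,\bx,\gamma) = \sup_{\|\bx'-\bx\|\leq\gamma} f(\bx')$, as in Lemma~\ref{lemma:bar_delta_GN}. Since $\phi_{\rho}$ is non-increasing, equation~\eqref{eq:sup=inf} gives $\tilde{\phi}_{\rho}(f,\bx,+1) = \phi_{\rho}(\underline{M})$ and $\tilde{\phi}_{\rho}(f,\bx,-1) = \phi_{\rho}(-\overline{M})$, so $\cC_{\tilde{\phi}_{\rho}}(f,\bx,\eta) = \eta\,\phi_{\rho}(\underline{M}) + (1-\eta)\,\phi_{\rho}(-\overline{M})$, while $\cC_{\ell_{\gamma}}(f,\bx,\eta) = \eta\,\mathds{1}_{\underline{M}\leq 0} + (1-\eta)\,\mathds{1}_{\overline{M}\geq 0}$. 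By Proposition~\ref{prop:calibration_function_positive} it then suffices to show that $\hat{\delta}(\epsilon) > 0$ for every $\epsilon > 0$.

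The first step is to establish $\cC^{*}_{\tilde{\phi}_{\rho},\sH_{\mathrm{NN}}}(\eta) = \cC^{*}_{\ell_{\gamma},\sH_{\mathrm{NN}}}(\eta) = \min(\eta,1-\eta)$. For the upper bound I would invoke the explicit construction at the end of Section~\ref{sec:calibration}: at any $\bx$ with $\|\bx\|=1$, taking $\bw_j = W\bx$ and $u_j = \Lambda/n$ forces $\underline{M} \geq \Lambda W(1-\gamma) \geq \rho$ by hypothesis, while the opposite sign choice $u_j = -\Lambda/n$ forces $\overline{M} \leq -\rho$; these two configurations produce inner risks equal to $1-\eta$ and $\eta$ in \emph{both} losses. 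For the matching lower bound, no structure of $\sH_{\mathrm{NN}}$ is needed: $\underline{M} \leq \overline{M}$ forces at least one of the two indicators to equal $1$, giving $\cC_{\ell_{\gamma}} \geq \min(\eta,1-\eta)$; and the pointwise inequality $\phi_{\rho}(t)+\phi_{\rho}(-t) \geq 1$, combined with monotonicity of $\phi_{\rho}$, gives $\cC_{\tilde{\phi}_{\rho}} \geq \eta\,\phi_{\rho}(\underline{M}) + (1-\eta)\,\phi_{\rho}(-\underline{M}) \geq \min(\eta,1-\eta)$.

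The second step is a short case analysis on the sign pattern of $(\underline{M},\overline{M})$. By the symmetry $\eta \leftrightarrow 1-\eta$, $(\underline{M},\overline{M}) \leftrightarrow (-\overline{M},-\underline{M})$, assume $\eta \in [0,1/2]$, so $\min(\eta,1-\eta) = \eta$. The three exhaustive possibilities, given $\underline{M} \leq \overline{M}$, are: (a) $\overline{M} < 0$, with target excess $0$; (b) $\underline{M} > 0$, with target excess $1-2\eta$ and surrogate excess $\eta\,\phi_{\rho}(\underline{M}) + 1 - 2\eta \geq 1-2\eta$; and (c) $\underline{M} \leq 0 \leq \overline{M}$, with target excess $1-\eta$ and surrogate excess $1-\eta$. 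The feasibility constraint $\Delta\cC_{\ell_{\gamma}} \geq \epsilon$ excludes case (a), forces $1-2\eta \geq \epsilon$ in (b), and forces $1-\eta \geq \epsilon$ in (c); in both feasible cases the surrogate excess is at least $\epsilon$, hence $\hat{\delta}(\epsilon) \geq \epsilon > 0$. The crux of the argument lies in step 1: only the hypothesis $\Lambda W(1-\gamma) \geq \rho$ lets us realize $\underline{M} \geq \rho$ within $\sH_{\mathrm{NN}}$ at a unit-norm point, thereby pinning both pseudo-minimal risks at $\min(\eta,1-\eta)$; without this alignment, the pseudo-minimal surrogate risk would lie strictly above the target one and the clean case bounds in step 2 would no longer dominate uniformly.
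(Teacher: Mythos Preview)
Your argument is correct and follows essentially the same route as the paper: both reduce the inner risks to functions of $\underline{M}(f,\bx,\gamma)$ and $\overline{M}(f,\bx,\gamma)$, use the explicit configuration $\bw_j=W\bx$, $u_j=\pm\Lambda/n$ at a unit-norm $\bx$ together with the hypothesis $\Lambda W(1-\gamma)\geq\rho$ to realize $\underline{M}\geq\rho$ (resp.\ $\overline{M}\leq-\rho$), and then carry out a sign-pattern case analysis. The only organizational difference is that the paper proceeds via Lemma~\ref{lemma:equivalent1_GN}, verifying three separate strict inequalities for $\eta=\tfrac12$, $\eta>\tfrac12$, $\eta<\tfrac12$, whereas you first pin both pseudo-minimal inner risks at $\min(\eta,1-\eta)$ and then bound $\hat\delta(\epsilon)$ directly from the definition; this buys you the explicit quantitative conclusion $\hat\delta(\epsilon)\geq\epsilon$, which the paper only establishes implicitly as $\hat\delta(\epsilon)>0$.
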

Specifically, $\sH_{\mathrm{NN}}$ with the extra assumption $\Lambda=+\infty$ satisfy the condition 2. in Corollary \ref{corollary:calibration_positive}, as a result we get the following.
\begin{corollary}
\label{corollary:pos1_GN}
 Consider $\rho$-margin loss $\phi_{\rho}(t)=\min\curl*{1,\max\curl*{0,1-\frac{t}{\rho}}},\rho>0$. If $\Lambda=+\infty$, then the surrogate loss $\tilde{\phi}_{\rho}(f,\bx,y)=\sup_{\bx'\colon \|\bx-\bx'\|\leq \gamma}\phi_{\rho}(y f(\bx'))$ is $\sH_{\mathrm{NN}}$-calibrated with respect to $\ell_{\gamma}$.
\end{corollary}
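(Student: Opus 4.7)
The plan is to derive Corollary~\ref{corollary:pos1_GN} as a direct consequence of Theorem~\ref{Thm:pos1_GN} combined with part~2 of Corollary~\ref{corollary:calibration_positive}, which asserts that pseudo-$\sH$-calibration and $\sH$-calibration of the supremum-based $\rho$-margin loss coincide whenever the hypothesis set $\sH$ is ``rich enough around every $\bx$'' in the sense that for every $\bx \in \sX$ one can find $f \in \sH$ with $\inf_{\|\bx-\bx'\|\leq \gamma} f(\bx') > \rho$ and some (possibly other) $f \in \sH$ with $\sup_{\|\bx-\bx'\|\leq \gamma} f(\bx') < -\rho$. So the proof reduces to two essentially mechanical steps.

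First, I would verify the richness condition on $\sH_{\mathrm{NN}}$ under the assumption $\Lambda = +\infty$. Fix $\bx \in \sX$ with $\|\bx\| = t > \gamma$. Reusing the construction already given at the end of Section~3.1, take $\bw_j = W\bx$ and $u_j = \Lambda/n$ for $j=1,\dots,n$; then $\|\bw_j\| \leq W$ and $\|\bu\|_1 \leq \Lambda$, and the calculation there yields
\begin{equation*}
\inf_{\|\bs\|\leq 1} \sum_{j=1}^n u_j \bigl(\bw_j \cdot (\bx + \gamma \bs)\bigr)_+ \;\geq\; \Lambda W t(t-\gamma).
\end{equation*}
Since $\Lambda = +\infty$, we may take $\Lambda$ arbitrarily large, so the right-hand side exceeds $\rho$, which gives the required $f \in \sH_{\mathrm{NN}}$. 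Choosing $u_j = -\Lambda/n$ instead yields the analogous $f$ with supremum strictly less than $-\rho$.

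Second, I would invoke Theorem~\ref{Thm:pos1_GN} itself. With $\Lambda = +\infty$, the hypothesis $\Lambda W(1-\gamma) \geq \rho$ holds trivially (for any fixed $W>0$ and $\gamma \in (0,1)$), so Theorem~\ref{Thm:pos1_GN} yields pseudo-$\sH_{\mathrm{NN}}$-calibration of $\tilde{\phi}_\rho$ with respect to $\ell_\gamma$. Combining this with part~2 of Corollary~\ref{corollary:calibration_positive} — whose hypothesis we have just verified — gives $\sH_{\mathrm{NN}}$-calibration in the sense of Definition~\ref{def:H-calibration_real}, completing the proof.

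There is essentially no technical obstacle here: the substantive work is done by Theorem~\ref{Thm:pos1_GN} (constructing the pseudo-calibration function for the supremum-based $\rho$-margin loss on one-layer ReLU networks) and by Theorem~\ref{Thm:calibration_definition_equivalent} (which underlies Corollary~\ref{corollary:calibration_positive}). The only thing to be careful about is making sure the richness condition in Corollary~\ref{corollary:calibration_positive} uses the strict bounds $>\rho$ and $<-\rho$ (not merely $>0$ and $<0$), which is why we need $\Lambda$ unbounded rather than merely large; this is exactly the strengthening provided by sending $\Lambda \to +\infty$.
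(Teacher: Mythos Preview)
Your proposal is correct and follows exactly the paper's approach: the paper derives the corollary by noting that $\sH_{\mathrm{NN}}$ with $\Lambda=+\infty$ satisfies condition~2 of Corollary~\ref{corollary:calibration_positive} (via the same construction from the end of Section~3.1 that you reuse), and then combines this with Theorem~\ref{Thm:pos1_GN}. Your expansion of the argument is accurate, including the observation that the strict bounds $>\rho$ and $<-\rho$ are precisely why $\Lambda=+\infty$ is needed rather than a fixed finite $\Lambda$.
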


The results of this section suggest that the ramp loss and more generally quasi-concave even losses may be good surrogates for the adversarial $0/1$ loss. However, calibration, in general, is not equivalent to consistency, our eventual goal. In the next section we study conditions under which we can expect these surrogates losses to be $\sH$-consistent as well.

\section{\texorpdfstring{$\sH$}{H}-Consistency}
\label{sec:consistency}
In this section, we study the $\sH$-consistency of surrogate loss functions. The results of the previous section 
suggest that convex losses or supremum-based convex losses would not be $\sH$-consistent. 
However, $\sH$-calibrated quasi-concave even losses, such as the ramp loss present an intriguing possibility. In fact, the recent work of \citet{pmlr-v125-bao20a} made a claim that since quasi-concave even losses are $\sH_{\mathrm{lin}}$-calibrated they are also $\sH_{\mathrm{lin}}$-consistent. We first present a result that falsifies this claim. In fact, our result stated below shows that without assumptions on the data distribution, no continuous margin based loss or a continuous supremum-based surrogate could be $\sH_{\mathrm{lin}}$-consistent.

%

\subsection{Negative results}
\label{sec:consistency_negative}
\begin{restatable}{theorem}{ConsistentLinear}
\label{Thm:consistent_linear}
No continuous margin-based loss function $\phi$ is  $\sH_{\mathrm{lin}}$-consistent with respect to $\ell_{\gamma}$.
\end{restatable}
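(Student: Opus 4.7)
}
The plan is to show, for any continuous margin-based loss $\phi$, the existence of a distribution $\sP$ and a sequence $\{\bw_n\}\subset\sH_{\mathrm{lin}}$ along which $\cR_\phi(\bw_n)\to\cR^*_{\phi,\sH_{\mathrm{lin}}}$ while $\cR_{\ell_\gamma}(\bw_n)$ stays bounded strictly above $\cR^*_{\ell_\gamma,\sH_{\mathrm{lin}}}$. I would work in dimension $d\geq 2$ and use a one-parameter family of distributions concentrated on a single unit vector $\bx_0\in\sX$ with $\sP_\eta(Y=+1\mid X=\bx_0)=\eta$. The risks of $\bw\in\sH_{\mathrm{lin}}$ depend only on $t:=\bw\cdot\bx_0\in[-1,1]$, namely
\[
\cR_\phi(\bw)=\psi_\eta(t):=\eta\phi(t)+(1-\eta)\phi(-t),\qquad
\cR_{\ell_\gamma}(\bw)=\eta\mathds{1}_{t\leq\gamma}+(1-\eta)\mathds{1}_{t\geq-\gamma}.
\]
The adversarial risk equals $1$ on the ``dead zone'' $[-\gamma,\gamma]$, and its Bayes value $\min(\eta,1-\eta)$ is attained only for $|t|>\gamma$. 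Because $d\geq 2$, every interior $t$ is realized by many $\bw$ on the unit sphere, so any candidate scalar value can be installed as a sequence in $\sH_{\mathrm{lin}}$.

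Next I would split on the location of $S:=\argmin_{t\in[-1,1]}\phi(t)$. If $S\cap[-\gamma,\gamma]\neq\emptyset$, take $\eta=1$ and any constant sequence $\bw_n=\bw^*$ with $\bw^*\cdot\bx_0\in S\cap[-\gamma,\gamma]$: then $\cR_\phi(\bw_n)=\cR^*_{\phi,\sH_{\mathrm{lin}}}$ but $\cR_{\ell_\gamma}(\bw_n)=1>0=\cR^*_{\ell_\gamma,\sH_{\mathrm{lin}}}$. If $S\subset[-1,-\gamma)$, the same $\eta=1$ distribution already shows that the $\phi$-minimizer misclassifies every perturbation while the Bayes value is $0$. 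Thus, possibly after replacing $\phi$ by $\phi(-\cdot)$, we may assume the ``hard'' case $S\subset(\gamma,1]$, so that $T^*(1):=\argmin_{t}\psi_1(t)\subset(\gamma,1]$ and symmetrically $T^*(0)\subset[-1,-\gamma)$.

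I would then exploit upper hemicontinuity of the correspondence $\eta\mapsto T^*(\eta):=\argmin_{t\in[-1,1]}\psi_\eta(t)$, which follows from Berge's Maximum Theorem applied to the jointly continuous function $\psi_\eta(t)$ on $[0,1]\times[-1,1]$. Define the transition value $\eta^*:=\inf\{\eta\in[0,1]:T^*(\eta)\cap(\gamma,1]\neq\emptyset\}$ and assume throughout that $T^*(\eta)\cap[-\gamma,\gamma]=\emptyset$ for every $\eta$ (otherwise we are back in the first case). Passing to limits from below and above using upper hemicontinuity then forces $T^*(\eta^*)$ to contain both some $t_-<-\gamma$ (the limit of minimizers from below $\eta^*$, which cannot be $-\gamma$ by our assumption) and some $t_+>\gamma$ (the limit from above $\eta^*$, similarly excluded from $\gamma$). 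Whenever $\eta^*\neq 1/2$, the adversarial risks at the two $\phi$-minimizers are $\eta^*$ and $1-\eta^*$, and one of them strictly exceeds $\min(\eta^*,1-\eta^*)$; the corresponding constant sequence $\{\bw_n\}$ contradicts $\sH_{\mathrm{lin}}$-consistency. The remaining degenerate case is $\eta^*=1/2$ with symmetric minimizers $\{-t_0,t_0\}\subset T^*(1/2)$, $t_0>\gamma$, in which the one-point distribution happens to be consistent; to defeat $\phi$ here I would enrich it with a small mass $\epsilon>0$ at a second input $\bx_1$ with $\bx_0\cdot\bx_1$ chosen to break the $\pm$ symmetry of $\psi_{1/2}$. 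Continuity of the $\phi$-risk in $(\epsilon,\bx_1)$ together with upper hemicontinuity of argmin either tilts the pair $\{-t_0,t_0\}$ into unequal $\psi$-values (reducing to the asymmetric subcase) or drags one minimizer across $\pm\gamma$ into the dead zone (reducing to the first case). The main obstacle is precisely this symmetric degenerate case: making the two-parameter perturbation argument rigorous requires a quantitative local analysis of $\psi_{1/2}$ near its symmetric minimizers, combining Berge's theorem with a careful estimate of how the argmin set deforms with $\epsilon$, and this is where the proof is most delicate.
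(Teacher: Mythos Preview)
Your one-point construction together with Berge's Maximum Theorem is a nice idea, and it does dispatch most cases cleanly. However, the ``degenerate case'' $\eta^\ast=1/2$ with $T^\ast(1/2)\cap[-\gamma,\gamma]=\emptyset$ is not an edge case at all: it is precisely where all the content of the theorem lives. A one-point distribution tests whether every exact conditional $\phi$-minimizer is also a conditional $\ell_\gamma$-minimizer, and this holds exactly for the (pseudo-)calibrated losses that Theorem~\ref{Thm:consistent_linear} is designed to defeat. Concretely, take $\phi=\phi_\rho$ with $\rho>\gamma$: for every $\eta\in[0,1]$ the set $T^\ast(\eta)$ is $[\rho,1]$, $[-1,-\rho]$, or their union, always disjoint from $[-\gamma,\gamma]$; the transition happens at $\eta^\ast=1/2$; and both branches achieve the Bayes $\ell_\gamma$-risk. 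So your non-degenerate arguments never fire, and the entire proof reduces to the degenerate subcase.

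Your two-point perturbation sketch does not close this gap. For $\phi_\rho$ one can check directly that adding a second mass at $\bx_1$ with deterministic label (e.g.\ $\bx_1=\pm\bx_0$ or $\bx_1\perp\bx_0$) still leaves an arc of $\phi$-minimizers lying entirely in the $\ell_\gamma$-optimal region, so neither the ``tilt'' nor the ``drag into the dead zone'' alternative occurs. The reduction you invoke (``reducing to the asymmetric subcase'') is to a one-point $\eta^\ast\neq 1/2$ argument, but after adding a second point the risk is no longer of the form $\psi_\eta(t)$ for a single scalar $t$, so that machinery is unavailable. What is actually needed is a distribution that forces the continuous $\phi$-risk and the discontinuous $\ell_\gamma$-risk to be minimized at \emph{different} $\bw$, and two deterministic point masses are not enough.

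The paper's proof takes a genuinely different route: it places a \emph{continuous} distribution on the unit circle with \emph{asymmetric} label noise (flip probability $3/4$ on one arc), computes the $\ell_\gamma$-Bayes classifier explicitly, and then differentiates the $\phi$-risk at the Bayes direction (Leibniz rule) to obtain the functional constraint $\phi(-\tau)=\phi(\tau)$ for all $\tau\in(0,1)$. Evenness of $\phi$ is then shown to make the $\phi$-risk constant over $\sH_{\mathrm{lin}}$ on a second, symmetric distribution, which breaks consistency. The key mechanism---asymmetric noise producing a first-order condition that no nontrivial continuous $\phi$ can satisfy for all parameters---is absent from your plan.
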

This theorem is proved in Appendix~\ref{app:consistent_linear}. In order to establish the theorem, we carefully design a distribution on the unit disk where the label of each example $\bx$ is first generated as $\text{sgn}(\bw^* \cdot \bx)$ and then flipped independently with a carefully chosen probability. It is crucial that this flipping probability is asymmetric thereby ensuring that for the resulting joint distribution, $\bw^*$ remains the optimal linear classifier according to $\ell_\gamma$, but any continuous surrogate is led astray to a classifier that is far from $\bw^*$. 
In particular, Theorem~\ref{Thm:consistent_linear} contradicts the $\sH$-consistency 
claim of \citet{pmlr-v125-bao20a} for quasi-concave even
losses when $\sH$ is the family of linear functions. Furthermore, the theorem can be easily extended to rule out $\sH$-consistency of supremum-based surrogates as well.
\begin{restatable}{theorem}{ConsistentSupLinear}
\label{Thm:consistent_sup_linear}
For continuous and non-increasing margin-based loss $\phi$,
surrogates of the form \[\tilde{\phi}(f,\bx,y)=\sup\limits_{\bx'\colon \|\bx-\bx'\|\leq \gamma}\phi(y f(\bx'))\]
are not $\sH_{\mathrm{lin}}$-consistent with respect to $\ell_{\gamma}$.
\end{restatable}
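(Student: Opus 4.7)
The plan is to reduce the claim for the supremum-based surrogate $\tilde{\phi}$ to the already-proven Theorem~\ref{Thm:consistent_linear} by showing that, when restricted to $\sH_{\mathrm{lin}}$, the surrogate $\tilde{\phi}$ behaves exactly like a (shifted) continuous margin-based loss, so that it inherits the non-consistency counterexample.

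First, I would exploit the special structure of $\sH_{\mathrm{lin}}$. For any $f \in \sH_{\mathrm{lin}}$ with $f(\bx) = \bw\cdot\bx$, $\|\bw\|=1$, and any $y \in \{-1,+1\}$, the non-increasing assumption on $\phi$ together with \eqref{eq:sup=inf} yields
\begin{equation*}
\tilde{\phi}(f,\bx,y)
= \phi\Bigl(\inf_{\|\bx'-\bx\|\leq \gamma} y\,\bw\cdot\bx'\Bigr)
= \phi\bigl(y\,\bw\cdot\bx - \gamma\bigr)
= \phi_{\gamma}\bigl(y f(\bx)\bigr),
\end{equation*}
where $\phi_{\gamma}(t) := \phi(t - \gamma)$. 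Here I used that $\inf_{\|\bs\|\leq \gamma} y\,\bw\cdot\bs = -\gamma\|\bw\| = -\gamma$. Thus, on $\sH_{\mathrm{lin}}$, the surrogate $\tilde{\phi}$ coincides with the margin-based loss induced by $\phi_{\gamma}$.

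Next, since $\phi$ is continuous, so is $\phi_{\gamma}$, so $\phi_{\gamma}$ is itself a continuous margin-based loss. Applying Theorem~\ref{Thm:consistent_linear} to $\phi_{\gamma}$ produces a distribution $\sP$ and a sequence $\{f_n\} \subset \sH_{\mathrm{lin}}$ such that
\begin{equation*}
\cR_{\phi_{\gamma}}(f_n) - \cR^{*}_{\phi_{\gamma},\sH_{\mathrm{lin}}} \xrightarrow{n\to\infty} 0,
\quad \text{while} \quad
\cR_{\ell_{\gamma}}(f_n) - \cR^{*}_{\ell_{\gamma},\sH_{\mathrm{lin}}} \not\to 0.
\end{equation*}
Because the pointwise identity $\tilde{\phi}(f,\bx,y) = \phi_{\gamma}(y f(\bx))$ holds for every $f \in \sH_{\mathrm{lin}}$, integrating gives $\cR_{\tilde{\phi}}(f) = \cR_{\phi_{\gamma}}(f)$ for all $f \in \sH_{\mathrm{lin}}$, and consequently $\cR^{*}_{\tilde{\phi},\sH_{\mathrm{lin}}} = \cR^{*}_{\phi_{\gamma},\sH_{\mathrm{lin}}}$. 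Transporting the excess-risk convergence, the same pair $(\sP, \{f_n\})$ witnesses $\cR_{\tilde{\phi}}(f_n) - \cR^{*}_{\tilde{\phi},\sH_{\mathrm{lin}}} \to 0$ without $\cR_{\ell_{\gamma}}(f_n) - \cR^{*}_{\ell_{\gamma},\sH_{\mathrm{lin}}} \to 0$, which is precisely the failure of $\sH_{\mathrm{lin}}$-consistency.

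The main obstacle is really only the first step: recognizing that the non-increasing monotonicity of $\phi$ collapses the supremum into a simple shift of the argument on linear classifiers. Once this reduction is in place, the non-consistency is immediate from Theorem~\ref{Thm:consistent_linear}, with no additional construction of adversarial distributions needed. I would also briefly remark that the shift $\phi_{\gamma}$ is not merely continuous but preserves whatever regularity (e.g., quasi-concave even, bounded) $\phi$ has, so the same reduction explains why even the specific calibrated surrogates from Section~\ref{sec:calibration_positive} are not $\sH_{\mathrm{lin}}$-consistent in the supremum-based form.
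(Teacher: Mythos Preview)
Your proposal is correct and follows essentially the same argument as the paper: reduce $\tilde{\phi}$ on $\sH_{\mathrm{lin}}$ to the shifted margin-based loss $\psi(t)=\phi(t-\gamma)$ (your $\phi_\gamma$) via the explicit computation of the supremum, then invoke Theorem~\ref{Thm:consistent_linear} for this continuous $\psi$. The paper's proof is identical in structure, only slightly terser in transferring the counterexample.
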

\begin{proof}
As shown by \citet{awasthi2020adversarial}, for a continuous and non-increasing margin-based loss $\phi$, when $f\in \sH_{\mathrm{lin}}$, the supremum-based surrogate loss
can be expressed as follows:
\begin{equation*}
	\tilde{\phi}(f,\bx,y)=\sup\limits_{\bx'\colon \|\bx-\bx'\|\leq \gamma}\phi(y f(\bx'))=\phi\left(\inf\limits_{\|\bs\|\leq 1}\left(yf(\bx+\gamma \bs)\right)\right)=\phi(y(\bw \cdot \bx)-\gamma)=\psi(y(\bw \cdot\bx))\,,
\end{equation*}
where $\psi(t)=\phi(t-\gamma)$ is also a continuous margin-based loss. In view of Theorem~\ref{Thm:consistent_linear}, 
we conclude that the supremum-based surrogate loss $\tilde{\phi}$ is also not $\sH_{\mathrm{lin}}$-consistent with respect to $\ell_{\gamma}$.
\end{proof}

\subsection{Positive results}
\label{sec:consistency_positive}

In this section, we investigate the nature of the assumptions on the data distributions that may lead to $\sH$-consistency of surrogate losses. We take inspiration from the work of \citet{long2013consistency} and \citet{zhang2020bayes} who study 
$\sH$-consistency for the standard $0/1$ loss. These studies establish consistency under a realizability assumption on the data distribution stated below that requires the Bayes ($\ell_0$,$\sH$)-risk to be zero. 
\begin{definition}[$\sH$-realizability]
A distribution $\sP$ over $\sX\times\sY$ is $\sH$-realizable if it labels points according to a deterministic model in $\sH$, i.e., if $\exists f\in \sH$ such that $\mathbb{P}_{(\bx,y)\sim \sP}(\sgn(f(\bx))=y)=1$. 
\end{definition}
Similar to $\sH$-realizability, we will assume that, under the data distribution, the Bayes ($\ell_{\gamma}$,$\sH$)-risk is zero. 
We show that the $\sH$-calibrated losses studied in previous sections are $\sH$-consistent under natural conditions along with the realizability assumption.

\subsubsection{Non-supremum-based surrogates}


\begin{restatable}{theorem}{CalibrateConsistentNonsup}
\label{Thm:calibrate_consistent_nonsup}
Let $\sP$ be a distribution over $\sX \times \sY$ and $\sH$ a hypothesis set for which $\cR^*_{\ell_{\gamma}, \sH}=0$. Let $\phi$ be a margin-based loss. If for $\eta \geq 0$, there exists $f^* \in \sH \subset \sH_{\mathrm{all}}$ such that $\cR_{\phi}(f^*)\leq \cR^*_{\phi, \sH_{\mathrm{all}}} + \eta< +\infty$ and $\phi$ is $\sH$-calibrated\footnote{The theorem still holds if uniform $\sH$-calibration is replaced by weaker non-uniform $\sH$-calibration \citep[Definition 2.7]{steinwart2007compare}, since the proof only makes use of the weaker non-uniform property.} with respect to $\ell_{\gamma}$, then for all $\epsilon > 0$ there exists $\delta > 0$ such that for all $f\in\sH$ we have
    \[
        \cR_{\phi}(f)+\eta <\cR_{\phi,\sH}^*+\delta \implies \cR_{\ell_{\gamma}}(f)<\cR_{\ell_{\gamma},\sH}^*+\epsilon.
    \]
\end{restatable}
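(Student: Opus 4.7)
The plan is to translate the excess $\phi$-risk hypothesis into an excess adversarial risk bound in four steps: (i) use the $\sH$-realizability $\cR_{\ell_{\gamma},\sH}^{*}=0$ to eliminate the conditional Bayes $\ell_\gamma$-risk, (ii) use the near-minimizer $f^{*}$ to swap $\cR_{\phi,\sH}^{*}$ for $\cR_{\phi,\sH_{\mathrm{all}}}^{*}$ (which in turn coincides with the expectation of pointwise Bayes $\phi$-risks by the standard $\sP$-minimizability of $\sH_{\mathrm{all}}$, i.e.\ measurable selection), (iii) apply the uniform calibration function pointwise, and (iv) control the measure of the resulting ``bad set'' via Markov's inequality, exploiting the fact that $\cC_{\ell_\gamma}\in[0,1]$.

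Concretely, because $\cC_{\ell_\gamma}\ge 0$ and $\cR_{\ell_\gamma,\sH}^{*}=0$, one has $\E[\cC_{\ell_\gamma,\sH}^{*}(X,\eta_\sP(X))]\le \cR_{\ell_\gamma,\sH}^{*}=0$, so $\cC_{\ell_\gamma,\sH}^{*}(X,\eta_\sP(X))=0$ almost surely and the target quantity reduces to $\cR_{\ell_\gamma}(f)=\E[\cC_{\ell_\gamma}(f,X,\eta_\sP(X))]$. Next, chaining $\cR_{\phi,\sH}^{*}\le \cR_\phi(f^{*})\le \cR_{\phi,\sH_{\mathrm{all}}}^{*}+\eta$ with the hypothesis $\cR_\phi(f)+\eta<\cR_{\phi,\sH}^{*}+\delta$ yields $\cR_\phi(f)<\cR_{\phi,\sH_{\mathrm{all}}}^{*}+\delta$. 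Invoking $\cR_{\phi,\sH_{\mathrm{all}}}^{*}=\E[\cC_{\phi,\sH_{\mathrm{all}}}^{*}(X,\eta_\sP(X))]$ delivers the nonnegative $L^{1}$ bound
\[
\E\bigl[\cC_\phi(f,X,\eta_\sP(X))-\cC_{\phi,\sH_{\mathrm{all}}}^{*}(X,\eta_\sP(X))\bigr]<\delta.
\]

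Third, I invoke the uniform $\sH$-calibration function $\delta(\cdot)$ associated with the pair $(\phi,\ell_\gamma)$: by Proposition~\ref{prop:calibration_function_positive}, $\delta(\epsilon/2)>0$. Since $\sH\subset\sH_{\mathrm{all}}$ gives $\cC_{\phi,\sH_{\mathrm{all}}}^{*}\le \cC_{\phi,\sH}^{*}$, at every $\bx$ where $\cC_\phi(f,\bx,\eta_\sP(\bx))-\cC_{\phi,\sH_{\mathrm{all}}}^{*}(\bx,\eta_\sP(\bx))<\delta(\epsilon/2)$ the smaller excess $\cC_\phi(f,\bx,\eta_\sP(\bx))-\cC_{\phi,\sH}^{*}(\bx,\eta_\sP(\bx))$ is also strictly below $\delta(\epsilon/2)$, and calibration then forces $\cC_{\ell_\gamma}(f,\bx,\eta_\sP(\bx))-\cC_{\ell_\gamma,\sH}^{*}(\bx,\eta_\sP(\bx))<\epsilon/2$, which by step (i) is just $\cC_{\ell_\gamma}(f,\bx,\eta_\sP(\bx))<\epsilon/2$. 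Letting $B$ denote the complementary ``bad'' set, Markov's inequality applied to the $L^{1}$ bound gives $\Pr(B)\le \delta/\delta(\epsilon/2)$. Splitting the expectation and using $\cC_{\ell_\gamma}\le 1$,
\[
\cR_{\ell_\gamma}(f)-\cR_{\ell_\gamma,\sH}^{*}=\cR_{\ell_\gamma}(f)\le \Pr(B)+\tfrac{\epsilon}{2}\le \frac{\delta}{\delta(\epsilon/2)}+\frac{\epsilon}{2},
\]
and the choice $\delta:=(\epsilon/2)\,\delta(\epsilon/2)>0$ closes the argument.

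The main obstacle is the mismatch between the two conditional Bayes risks at play: the calibration hypothesis controls excess relative to $\cC_{\phi,\sH}^{*}$, while the near-minimizer $f^{*}$ naturally produces control relative to $\cC_{\phi,\sH_{\mathrm{all}}}^{*}$. The monotonicity $\cC_{\phi,\sH_{\mathrm{all}}}^{*}\le \cC_{\phi,\sH}^{*}$ makes this interplay work in our favor on the pointwise side, and the slack $\eta$ in the hypothesis is precisely what absorbs the possible gap between $\cR_{\phi,\sH}^{*}$ and $\cR_{\phi,\sH_{\mathrm{all}}}^{*}$ on the integrated side; this is also the reason the conclusion must be stated with the additive $\eta$ appearing on the left-hand side.
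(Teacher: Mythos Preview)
Your argument is correct and follows the same high-level plan as the paper's proof: reduce the $\ell_\gamma$ side to zero pointwise via $\cR^*_{\ell_\gamma,\sH}=0$, push the $\phi$ side from $\cR^*_{\phi,\sH}$ down to the integrated pointwise Bayes $\phi$-risk using the near-minimizer $f^*$ and $\sP$-minimizability of $\sH_{\mathrm{all}}$, and then bound the measure of the ``bad'' set where calibration fails to kick in. The difference lies in this last step. The paper invokes a modified version of Steinwart's Theorem~2.8 (the general Theorem~\ref{theorem:general consistency } in the appendix): it defines $\mu=h\,\sP_X$ with $h(\bx)=\delta(\epsilon,\bx)$ and $\nu=b\,\sP_X$ with $b\equiv 1$, notes that $\nu\ll\mu$, and uses the $\epsilon$--$\delta$ characterization of absolute continuity to control $\nu(A)$ from $\mu(A)$. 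Your route is to use the \emph{uniform} calibration constant $\delta(\epsilon/2)$ directly together with Markov's inequality, which is more elementary and even yields the explicit choice $\delta=(\epsilon/2)\,\delta(\epsilon/2)$. What you lose is precisely the footnoted generalization: the absolute-continuity argument works verbatim when the calibration $\delta(\epsilon,\bx)$ depends on $\bx$ (non-uniform $\sH$-calibration in the sense of Steinwart), whereas your Markov step needs a uniform lower bound on $h$ to go through. Under the theorem's stated hypothesis of uniform calibration, both proofs are valid.

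One cosmetic point: in your displayed chain you write $\Pr(B)\le \delta/\delta(\epsilon/2)$ and then conclude $\cR_{\ell_\gamma}(f)\le \epsilon$, but the statement asks for a strict inequality. This is harmless, since the $L^1$ bound $\E[\cdot]<\delta$ is strict and Markov therefore gives $\Pr(B)<\delta/\delta(\epsilon/2)$; just carry the strict inequality through (or take $\delta$ marginally smaller).
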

The proof of Theorem~\ref{Thm:calibrate_consistent_nonsup} is presented in Appendix~\ref{app:calibrate_consistent_nonsup}. Using Corollary~\ref{corollary:rho_margin_calibtartion} in Section~\ref{sec:calibration_rho_margin} and Theorem~\ref{Thm:calibrate_consistent_nonsup} above, we immediately conclude that the calibrated $\rho$-margin loss in Section~\ref{sec:calibration_rho_margin} is consistent with respect to $\ell_{\gamma}$ for all distributions that satisfy our realizability assumptions.
\begin{theorem}
\label{Thm:rho_margin_consistent}
  Consider the $\rho$-margin loss $\phi_{\rho}(t)=\min\curl*{1,\max\curl*{0,1-\frac{t}{\rho}}},~\rho>0$. Then,
  \begin{enumerate}
      \item 
      Let $g$ be a non-decreasing and continuous function. Assume $\C=+\infty$ and $g(-1)+g(1)\geq0$. Let $\overline{A}=\sup_{\alpha_1\in[-1,1]}g(\alpha_1)-g(\alpha_1-\gamma)$ and $\underline{A}=\inf_{\alpha_1\in[-1,1]}g(\alpha_1)-g(\alpha_1+\gamma)$. If
      $\min\curl*{\phi_{\rho}(\overline{A}),\phi_{\rho}(-\underline{A}) }>0$,
      then $\phi_{\rho}$ is $\sH_g$-consistent with respect to $\ell_{\gamma}$ for all distribution $\sP$ over $\sX\times\sY$ that satisfies $\cR^*_{\ell_{\gamma},\sH_{g}}=0$ and there exists $f^*\in\sH_{g}$ such that $\cR_{\phi}(f^*)= \cR^*_{\phi,\sH_{\mathrm{all}}}<\infty$.
      
      \item If $\rho>\gamma$, then $\phi_{\rho}$ is $\sH_{\mathrm{relu}}$-consistent with respect to $\ell_{\gamma}$ for all distribution $\sP$ over $\sX\times\sY$ that satisfies $\cR^*_{\ell_{\gamma},\sH_{\mathrm{relu}}}=0$ and there exists $f^*\in\sH_{\mathrm{relu}}$ such that $\cR_{\phi}(f^*)= \cR^*_{\phi,\sH_{\mathrm{all}}}<\infty$.
      
  \end{enumerate}
\end{theorem}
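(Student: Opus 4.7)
The theorem is a clean composition of two results already established in the paper: the $\sH$-calibration of the $\rho$-margin loss given by Corollary~\ref{corollary:rho_margin_calibtartion}, and the calibration-to-consistency transfer of Theorem~\ref{Thm:calibrate_consistent_nonsup}. The plan is to verify the hypotheses of each prerequisite, chain them, and then translate the resulting pointwise bound into the sequential formulation of $\sH$-consistency.

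\textbf{Step 1: obtain $\sH$-calibration.} For part~(1), the stated assumptions on $g$ (non-decreasing, continuous, $\C=+\infty$, $g(-1)+g(1)\geq 0$) together with $\min\{\phi_{\rho}(\overline{A}),\phi_{\rho}(-\underline{A})\}>0$ are exactly those of Corollary~\ref{corollary:rho_margin_calibtartion}(1), so $\phi_{\rho}$ is $\sH_g$-calibrated with respect to $\ell_{\gamma}$. For part~(2), with $\C=+\infty$ understood for $\sH_{\mathrm{relu}}$ as in Corollary~\ref{corollary:rho_margin_calibtartion}(2), the sole hypothesis $\rho>\gamma$ yields $\sH_{\mathrm{relu}}$-calibration of $\phi_{\rho}$ with respect to $\ell_{\gamma}$.

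\textbf{Step 2: apply the calibration-to-consistency bound, then pass to sequences.} In both parts the distribution $\sP$ satisfies $\cR^*_{\ell_{\gamma},\sH}=0$ and admits $f^{*}\in\sH$ with $\cR_{\phi}(f^{*})=\cR^*_{\phi,\sH_{\mathrm{all}}}<\infty$; this is exactly the $\eta=0$ instance of the hypothesis of Theorem~\ref{Thm:calibrate_consistent_nonsup}. Combined with the calibration from Step~1, that theorem yields: for every $\epsilon>0$ there exists $\delta>0$ such that
\[
\cR_{\phi}(f)<\cR^{*}_{\phi,\sH}+\delta \;\Longrightarrow\; \cR_{\ell_{\gamma}}(f)<\cR^{*}_{\ell_{\gamma},\sH}+\epsilon,\qquad \forall f\in\sH.
\]
Given any sequence $\{f_{n}\}\subset\sH$ with $\cR_{\phi}(f_{n})-\cR^{*}_{\phi,\sH}\to 0$, fix $\epsilon>0$ and choose the corresponding $\delta>0$. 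For all sufficiently large $n$ the premise holds, hence $\cR_{\ell_{\gamma}}(f_{n})-\cR^{*}_{\ell_{\gamma},\sH}<\epsilon$. Since $\epsilon>0$ is arbitrary, $\cR_{\ell_{\gamma}}(f_{n})-\cR^{*}_{\ell_{\gamma},\sH}\to 0$, which is precisely the $\sH$-consistency required by the definition.

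\textbf{Main obstacle.} No genuine technical obstacle should arise; the heavy lifting is absorbed by Corollary~\ref{corollary:rho_margin_calibtartion} and Theorem~\ref{Thm:calibrate_consistent_nonsup}. The only bookkeeping subtlety is that the realizability condition $\cR^{*}_{\ell_{\gamma},\sH}=0$ and the attainment condition $\cR_{\phi}(f^{*})=\cR^{*}_{\phi,\sH_{\mathrm{all}}}$ play distinct roles: the former is consumed by Theorem~\ref{Thm:calibrate_consistent_nonsup} directly, while the latter forces the slack $\eta$ to vanish, which is what allows the pointwise implication from Step~2 to translate verbatim, with no residual gap, into the sequential definition of $\sH$-consistency in Step~3.
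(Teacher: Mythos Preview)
Your proposal is correct and matches the paper's approach essentially verbatim: the paper states that Theorem~\ref{Thm:rho_margin_consistent} follows immediately from Corollary~\ref{corollary:rho_margin_calibtartion} combined with Theorem~\ref{Thm:calibrate_consistent_nonsup}, and your Steps~1--2 spell out precisely this chaining (with $\eta=0$) plus the routine passage from the pointwise $\epsilon$--$\delta$ implication to the sequential definition of $\sH$-consistency.
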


\subsubsection{Supremum-based surrogates}
We can also extend the above to obtain $\sH$-consistency of supremum-based convex surrogates. However we need the stronger condition that $\cR_\phi$ is minimized exactly inside $\sH$.

\begin{restatable}{theorem}{CalibrateConsistentSup}
\label{Thm:calibrate_consistent_sup}
Given a distribution $\sP$ over $\sX\times\sY$ and a hypothesis set $\sH$ such that $\cR^*_{\ell_{\gamma},\sH}=0$. Let $\phi$ be a non-increasing margin-based loss. If there exists $f^*\in\sH\subset\sH_{\mathrm{all}}$ such that $\cR_{\phi}(f^*)=\cR^*_{\phi,\sH_{\mathrm{all}}}<\infty$ and $\tilde{\phi}(f,\bx,y)=\sup_{\bx'\colon \|\bx-\bx'\|\leq \gamma}\phi(y f(\bx'))$ is $\sH$-calibrated\footnote{The theorem still holds if uniform $\sH$-calibration is replaced by weaker non-uniform $\sH$-calibration \citep[Definition 2.7]{steinwart2007compare}, since the proof only makes use of the weaker non-uniform property.} with respect to $\ell_{\gamma}$, then for all $\epsilon>0$ there exists $\delta>0$ such that for all $f\in\sH$ we have
    \[
        \cR_{\tilde{\phi}}(f) <\cR_{\tilde{\phi},\sH}^*+\delta \implies \cR_{\ell_{\gamma}}(f)<\cR_{\ell_{\gamma},\sH}^*+\epsilon.
    \]
\end{restatable}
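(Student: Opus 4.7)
The plan is to mirror the strategy of Theorem~\ref{Thm:calibrate_consistent_nonsup}, replacing the margin-based loss $\phi$ by its supremum-based counterpart $\tilde{\phi}$ throughout. The skeleton has four steps: realizability collapses the target-side minimum risk and pins $\eta_\sP(\bx) \in \{0,1\}$ almost surely; the assumption on $f^*$ supplies a pointwise minimizer in $\sH$ so that the excess $\tilde{\phi}$-risk decomposes over $\bx$; Markov's inequality converts an average bound into a high-probability pointwise bound; finally, uniform $\sH$-calibration converts pointwise excess $\tilde{\phi}$-risk into pointwise excess $\ell_\gamma$-risk.

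First, since $\ell_\gamma \geq 0$, the inequality $\cR^*_{\ell_\gamma,\sH} \geq \E[\cC^*_{\ell_\gamma,\sH}(\bx,\eta_\sP(\bx))]$ combined with $\cR^*_{\ell_\gamma,\sH}=0$ forces $\cC^*_{\ell_\gamma,\sH}(\bx,\eta_\sP(\bx))=0$ almost surely, reducing the goal to bounding $\cR_{\ell_\gamma}(f)$. Moreover, $\ell_0 \leq \ell_\gamma$ yields $\cR^*_{\ell_0,\sH_{\mathrm{all}}} \leq \cR^*_{\ell_\gamma,\sH}=0$, which implies $\eta_\sP(\bx)\in\{0,1\}$ almost surely.

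Second, I establish $\cR^*_{\tilde{\phi},\sH} = \E[\cC^*_{\tilde{\phi},\sH}(\bx,\eta_\sP(\bx))]$; one direction is automatic. For the reverse, I show $f^*$ is a pointwise minimizer of $\cC_{\tilde{\phi}}$ in $\sH$. Since $\phi$ is non-increasing, testing with constant-on-$\gamma$-ball functions gives $\cC^*_{\tilde{\phi},\sH_{\mathrm{all}}}(\bx,\eta)=\cC^*_{\phi,\sH_{\mathrm{all}}}(\bx,\eta)$. With $\eta_\sP(\bx)\in\{0,1\}$, both coincide with the infimum $\phi(+\infty)$ of $\phi$. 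The hypothesis $\cR_\phi(f^*)=\cR^*_{\phi,\sH_{\mathrm{all}}}<\infty$ then forces $\phi(yf^*(\bx))=\phi(+\infty)$ almost surely, so $yf^*(\bx)$ lies in the saturation plateau of $\phi$; coupled with the $\ell_\gamma$-realizable classifier from Step~1 whose positive margin propagates the saturation across whole $\gamma$-balls, this yields $\tilde{\phi}(f^*,\bx,y)=\phi(yf^*(\bx))$ almost surely. Hence $\cR_{\tilde{\phi}}(f^*)=\cR^*_{\tilde{\phi},\sH_{\mathrm{all}}}\leq\cR^*_{\tilde{\phi},\sH}$, and since $f^*\in\sH$ the reverse inequality is trivial, so the pointwise integrand identity $\cC_{\tilde{\phi}}(f^*,\bx,\eta_\sP)=\cC^*_{\tilde{\phi},\sH}(\bx,\eta_\sP)$ a.s.\ follows by pinching between $\cC^*_{\tilde{\phi},\sH_{\mathrm{all}}}$ and $\cC_{\tilde{\phi}}(f^*,\cdot,\cdot)$.

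Third, from $\cR_{\tilde{\phi}}(f) < \cR^*_{\tilde{\phi},\sH}+\delta$, Step~2 rewrites the excess as $\E[\cC_{\tilde{\phi}}(f,\bx,\eta_\sP)-\cC^*_{\tilde{\phi},\sH}(\bx,\eta_\sP)]<\delta$, with nonnegative integrand. Markov's inequality produces a set of probability at least $1-\sqrt{\delta}$ on which the pointwise excess $\tilde{\phi}$-risk is $<\sqrt{\delta}$. Uniform $\sH$-calibration of $\tilde{\phi}$ with respect to $\ell_\gamma$ (Definition~\ref{def:H-calibration_real}) then ensures, for $\delta$ small enough relative to $\epsilon$, pointwise excess $\ell_\gamma$-risk $<\epsilon/2$ on this set. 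Using Step~1 ($\cC^*_{\ell_\gamma,\sH}=0$ almost surely) and $\ell_\gamma \leq 1$ to bound the complement, $\cR_{\ell_\gamma}(f)\leq \epsilon/2 + \sqrt{\delta}<\epsilon$ once $\delta$ is chosen sufficiently small. The main obstacle is Step~2: bridging from $\phi$-optimality of $f^*$ at the single point $\bx$ to $\tilde{\phi}$-optimality involving the entire $\gamma$-ball around $\bx$. This is precisely why the hypothesis must be exact equality (no slack) and why $\phi$ must be non-increasing, in contrast to Theorem~\ref{Thm:calibrate_consistent_nonsup} which tolerates an $\eta$-slack; without these stronger assumptions the identity $\cR^*_{\tilde{\phi},\sH}=\E[\cC^*_{\tilde{\phi},\sH}(\bx,\eta_\sP)]$ need not hold, and the excess-risk decomposition driving the whole argument breaks down.
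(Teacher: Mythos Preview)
Your proposal is correct and largely parallels the paper's argument. Both proofs hinge on the same two ingredients: realizability $\cR^*_{\ell_\gamma,\sH}=0$ forces $\eta_\sP\in\{0,1\}$ a.s.\ and (via a robust classifier) makes labels constant on $\gamma$-balls, and the exact-minimizer hypothesis on $f^*$ then transfers pointwise $\phi$-optimality to pointwise $\tilde\phi$-optimality, yielding the key interchange $\cR^*_{\tilde\phi,\sH}=\E_X[\cC^*_{\tilde\phi,\sH}(\bx,\eta_\sP)]$. The paper packages these as Lemmas~\ref{lemma:correctly classify} and~\ref{lemma:sup-P mini}, arguing at the single ball point $m_{f^*,\bx}$ where $f^*$ attains its minimum; your saturation-plateau phrasing is the same idea in different clothing, though the sentence ``positive margin propagates the saturation across whole $\gamma$-balls'' is doing a lot of work---it tacitly relies on label constancy on $\gamma$-balls (exactly the content of Lemma~\ref{lemma:correctly classify}), which you should state and justify explicitly, and you should also note that your Step~1 does not in fact exhibit an $\ell_\gamma$-realizing classifier, only that the pointwise minimal inner risk vanishes. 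The one genuine methodological difference is the endgame: the paper invokes its general Theorem~\ref{theorem:general consistency } (an absolute-continuity argument adapted from \citet{steinwart2007compare} that, as the footnote notes, also covers non-uniform calibration), whereas your Markov-inequality finish is more elementary and fully adequate under the stated uniform-calibration hypothesis, at the cost of losing the footnote's non-uniform extension.
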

The proof of Theorem~\ref{Thm:calibrate_consistent_sup} is presented in Appendix~\ref{app:calibrate_consistent_nonsup}. Again, when combined with Corollary~\ref{corollary:pos1_GN} in Section~\ref{sec:calibration_rho_margin} we conclude that the $\sH_{\mathrm{NN}}$-calibrated supremum-based $\rho$-margin loss is also $\sH_{\mathrm{NN}}$-consistent with respect to $\ell_{\gamma}$ for all distributions that satisfy our realizability assumptions.
\begin{theorem}
\label{Thm:sup_rho_margin_consistent}
  Consider the $\rho$-margin loss $\phi_{\rho}(t)=\min \curl*{1, \max\curl*{0, 1 - \frac{t}{\rho}}},\rho>0$. If $\Lambda=+\infty$, then $\tilde{\phi}_{\rho}(f,\bx,y)=\sup_{\bx'\colon \|\bx - \bx'\|\leq \gamma}\phi_{\rho}(y f(\bx'))$ is $\sH_{\mathrm{NN}}$-consistent with respect to $\ell_{\gamma}$ for all distributions $\sP$ over $\sX\times\sY$ that satisfy: $\cR^*_{\ell_{\gamma},\sH_{\mathrm{NN}}}=0$ and there exists $f^*\in\sH_{\mathrm{NN}}$ such that $\cR_{\phi}(f^*)= \cR^*_{\phi,\sH_{\mathrm{all}}}<\infty$.
\end{theorem}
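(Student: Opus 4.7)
The plan is to derive this as a direct corollary of two previously established results: the $\sH_{\mathrm{NN}}$-calibration of $\tilde{\phi}_{\rho}$ (Corollary~\ref{corollary:pos1_GN}) and the calibration-to-consistency transfer for supremum-based surrogates (Theorem~\ref{Thm:calibrate_consistent_sup}). First I would verify the hypotheses of these two results are met. The function $\phi_{\rho}(t)=\min\{1,\max\{0,1-t/\rho\}\}$ is clearly non-increasing and continuous (piecewise linear), so it qualifies as a non-increasing margin-based loss, which is precisely what Theorem~\ref{Thm:calibrate_consistent_sup} requires of $\phi$.

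Next I would invoke Corollary~\ref{corollary:pos1_GN} with $\Lambda=+\infty$: this directly yields that the supremum-based surrogate $\tilde{\phi}_{\rho}(f,\bx,y)=\sup_{\bx'\colon \|\bx-\bx'\|\leq \gamma}\phi_{\rho}(y f(\bx'))$ is $\sH_{\mathrm{NN}}$-calibrated with respect to $\ell_{\gamma}$. The distributional hypotheses of our theorem match exactly those of Theorem~\ref{Thm:calibrate_consistent_sup}: we assume $\cR^*_{\ell_{\gamma},\sH_{\mathrm{NN}}}=0$ and that there exists $f^*\in\sH_{\mathrm{NN}}\subset\sH_{\mathrm{all}}$ with $\cR_{\phi}(f^*)=\cR^*_{\phi,\sH_{\mathrm{all}}}<\infty$.

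Applying Theorem~\ref{Thm:calibrate_consistent_sup} with $\sH=\sH_{\mathrm{NN}}$ and the calibrated surrogate $\tilde{\phi}_{\rho}$ then yields that for every $\epsilon>0$ there exists $\delta>0$ such that for every $f\in\sH_{\mathrm{NN}}$,
\[
\cR_{\tilde{\phi}_{\rho}}(f)<\cR^*_{\tilde{\phi}_{\rho},\sH_{\mathrm{NN}}}+\delta \implies \cR_{\ell_{\gamma}}(f)<\cR^*_{\ell_{\gamma},\sH_{\mathrm{NN}}}+\epsilon.
\]
The conclusion $\sH_{\mathrm{NN}}$-consistency (in the sense of Definition~\ref{def:H-consistency}) follows immediately: given any sequence $\{f_n\}\subset\sH_{\mathrm{NN}}$ with $\cR_{\tilde{\phi}_{\rho}}(f_n)-\cR^*_{\tilde{\phi}_{\rho},\sH_{\mathrm{NN}}}\to 0$, fix any $\epsilon>0$, take the corresponding $\delta$, and note that eventually the excess $\tilde{\phi}_{\rho}$-risk is below $\delta$, forcing the excess adversarial risk to be below $\epsilon$.

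Because the heavy machinery has already been developed (Corollary~\ref{corollary:pos1_GN} handles the calibration analysis under $\Lambda=+\infty$, and Theorem~\ref{Thm:calibrate_consistent_sup} handles the general calibration-to-consistency transfer for supremum-based losses using the minimizer hypothesis), there is no real obstacle here. The only thing one must be careful about is making sure that the $\sH_{\mathrm{NN}}$-calibration provided by Corollary~\ref{corollary:pos1_GN} is of the right (uniform or non-uniform) form required by Theorem~\ref{Thm:calibrate_consistent_sup}; since Theorem~\ref{Thm:calibrate_consistent_sup} explicitly notes in its footnote that non-uniform $\sH$-calibration already suffices, this matching is automatic, and the proof is a clean composition.
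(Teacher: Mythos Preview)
Your proposal is correct and matches the paper's approach exactly: the paper derives this theorem as an immediate consequence of combining Corollary~\ref{corollary:pos1_GN} (which gives $\sH_{\mathrm{NN}}$-calibration of $\tilde{\phi}_{\rho}$ when $\Lambda=+\infty$) with Theorem~\ref{Thm:calibrate_consistent_sup} (the calibration-to-consistency transfer for supremum-based surrogates), precisely as you outline.
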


We prove the above theorems by building upon the framework of \citet{steinwart2007compare}. The goal is to show that under the assumptions of the theorems, the adversarial $0/1$ loss and the surrogate loss become $\sP$-minimizable which is enough to show consistency. In Theorem~\ref{Thm:calibrate_consistent_nonsup} we use the fact that the adversarial $0/1$ Bayes risk is zero to obtain $\sP$-minimizability of the adversarial $0/1$ loss. We also show that the framework of \citet{steinwart2007compare} can be extended to only require $\sP$-minimizability of the surrogate up to an error of $\eta$, and the consistency guarantees degrade smoothly with $\eta$. To prove Theorem~\ref{Thm:calibrate_consistent_sup}, we prove a general result (Lemma~\ref{lemma:sup-P mini}) that if the adversarial $0/1$ Bayes risk is zero then $\sP$-minimizability of a surrogate implies $\sP$-minimizability of its supremum based counterpart.

\section{Experiments}
\label{sec:experiments}

\begin{table}[t]
\centering
\begin{tabular}{@{\hspace{-2.5cm}}c@{\hspace{-.3cm}}c@{\hspace{-0cm}}}
 \begin{minipage}{0.7\columnwidth}
 \centering
 \subfigure[sample 1000][c]{
    \label{fig:sample1000}
    \includegraphics[width=0.4\columnwidth]{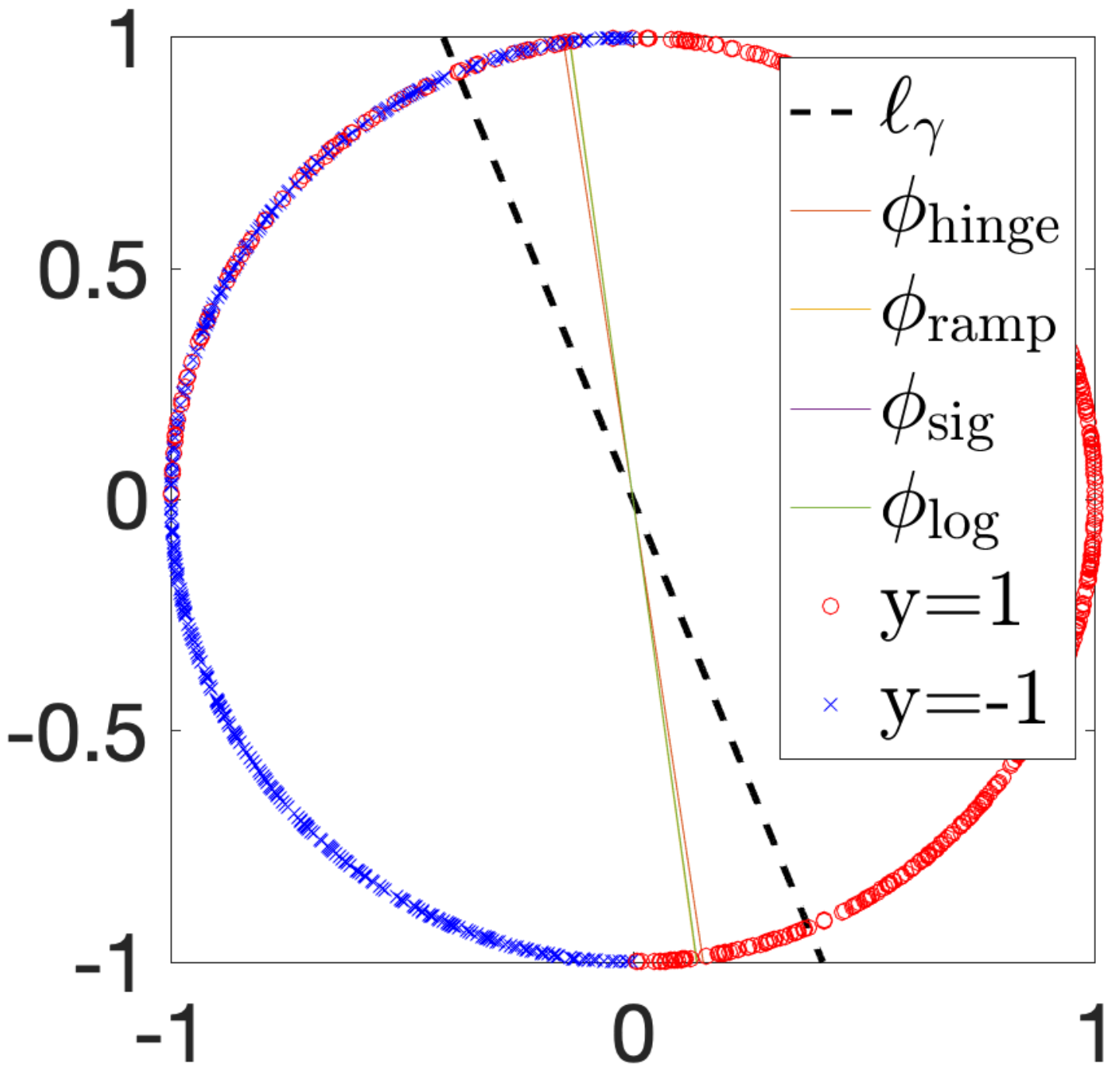}
 }
 \subfigure[sample 2000][c]{
    \label{fig:sample2000}
    \includegraphics[width=0.4\columnwidth]{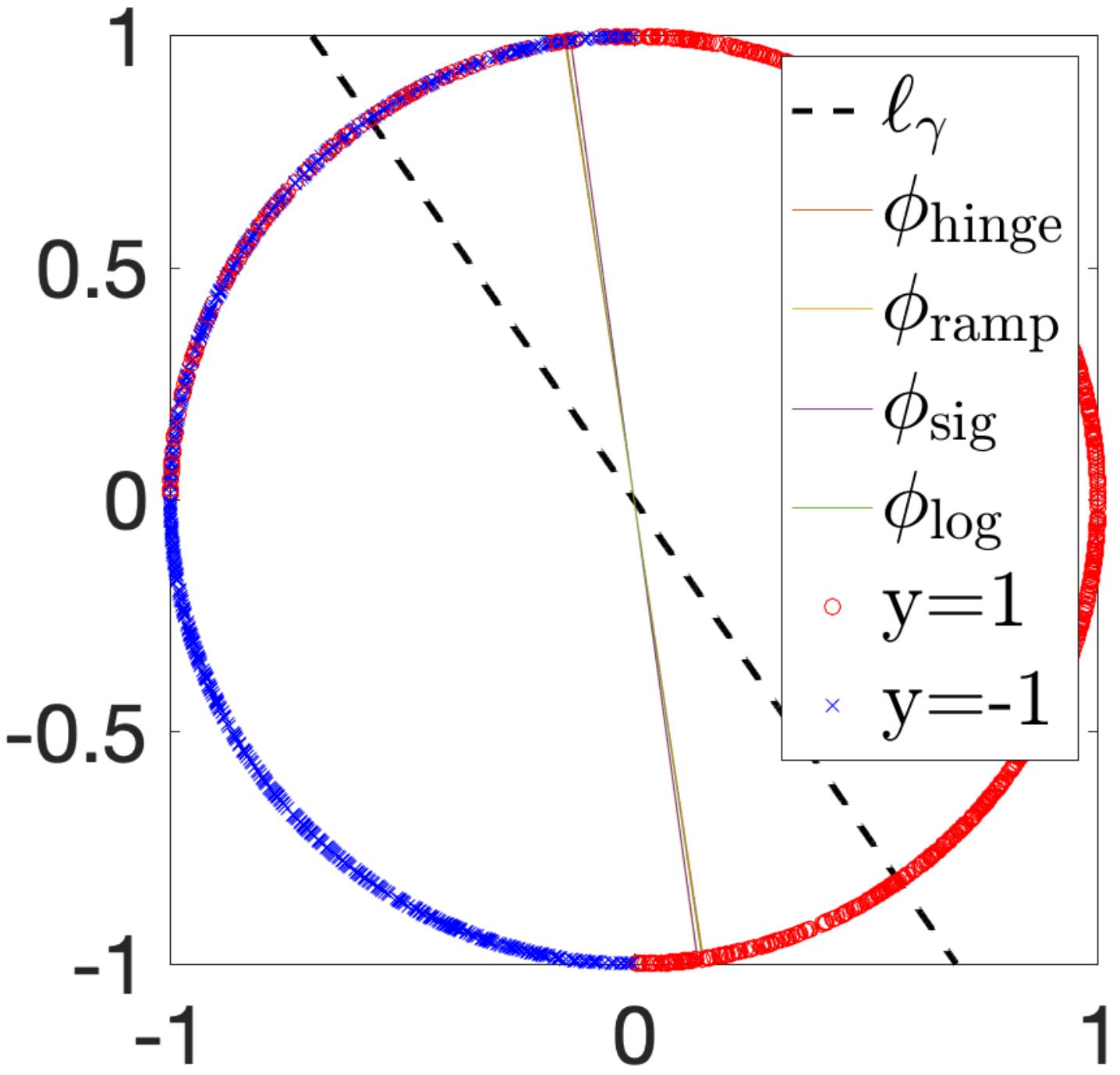}
 }
 \captionof{figure}{Unit Circle}
 \label{fig:unit_circle}
  \end{minipage}
&
\raisebox{-.55cm}{
  \begin{minipage}{0.35\columnwidth}
    \centering
    \includegraphics[width=1.2\columnwidth]{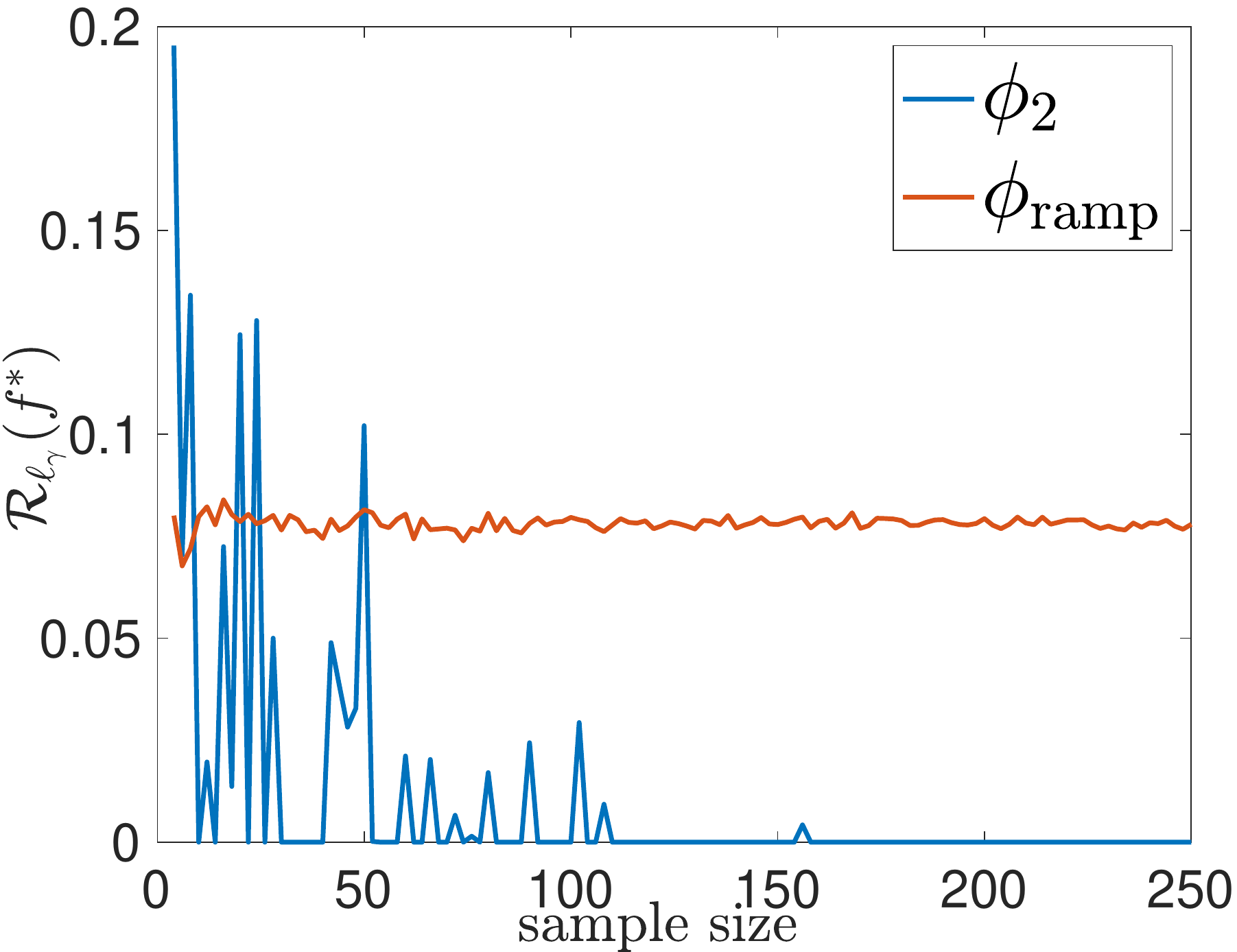}
    \captionof{figure}{Adversarial generalization error of consistent loss and calibrated inconsistent loss against sample size}
 \label{fig:compare}
  \end{minipage}
  }
  \end{tabular}
  \vskip -.15in
\end{table}

We present experiments on simulated data to support our theoretical findings. The goal is two fold. First, we empirically demonstrate that indeed calibrated surrogates in \citep{pmlr-v125-bao20a} may not be $\sH$-consistent unless assumptions on the data distribution are made, even when $\sH$ is the class of linear functions. This is consistent with our negative result in Theorem \ref{Thm:consistent_linear} and provides an empirical counterexample to the claim made in \citep{pmlr-v125-bao20a}. Secondly, we study the necessity of the realizability assumptions that we make in Section \ref{sec:consistency_positive} to establish $\sH$-consistency of quasi-concave even surrogates. We generate data points $\bx \in \mathbb{R}^2$ on the unit circle and consider $\sH$ to be linear models $\sH_{\mathrm{lin}}$. We denote $f(\bx)=\bw\cdot \bx$, $\bw=(\cos(t),\sin(t))^{\top},t\in [0,2\pi), f\in\sH_{\mathrm{lin}}$. 
All risks in the experiments are approximated by their empirical counterparts computed over $10^7$ i.i.d. samples from the distribution. 

To demonstrate the need for assumptions on the data distribution for $\sH$-consistency, we construct a scenario we call the \textbf{Unit Circle} case. 
We consider four surrogates: $\phi_{\mathrm{hinge}}$, $\phi_{\mathrm{ramp}}$, $\phi_{\mathrm{sig}}$ and $\phi_{\mathrm{log}}$ defined in Appendix~\ref{app:details of experiments}. In general, we refer all of these surrogates as $\phi_{\mathrm{sur}}$. We generate data points $\bx$ from the uniform distribution on the unit circle. Denote $\bx=(\cos(\theta),\sin(\theta))^{\top},\theta\in [0,2\pi)$. Set the label of a point $\bx$ as follows: if $\theta \in \left(\frac{\pi}{2},\pi\right)$, then $y=-1$ with probability $\frac34$ and $y=1$ with probability $\frac14$; if $\theta \in \left(0,\frac{\pi}{2}\right) \text{ or } \left(\frac{3\pi}{2},2\pi\right)$, then $y=1$; if $\theta \in \left(\pi,\frac{3\pi}{2}\right)$, then $y=-1$. Set $\gamma=\frac{\sqrt{2}}{2}$. 

In this case the optimal Bayes ($\ell_{\gamma}$,$\sH_{\mathrm{lin}}$)-risk  $\cR^*_{\ell_{\gamma},\sH_{\mathrm{lin}}}\approx 0.5000\neq 0$ and is achieved by $w_{\ell_{\gamma}}=(\cos(\theta), \sin(\theta))^{\top}$ with $\theta\approx 0.7855$. The results obtained by optimizing the different surrogate losses are in Table~\ref{tab:circle}(a) and the plots for $1000$ samples and $2000$ samples are shown in Figure~\ref{fig:unit_circle}. Table~\ref{tab:circle}(a) shows that neither calibrated nor non-calibrated (convex) surrogates are $\sH_{\mathrm{lin}}$-consistent with respect to $\ell_{\gamma}$ for this distribution. Figure~\ref{fig:unit_circle} shows that the classifiers obtained by optimizing the four surrogates are almost the same but deviate a lot from the optimal Bayes classifier for $\ell_{\gamma}$. This shows that indeed calibrated surrogates may not be consistent, and contradicts Figure $12$ of \citet{pmlr-v125-bao20a}. The discrepancy results from the incorrect calculation of the adversarial Bayes risk in \citep{pmlr-v125-bao20a}.\footnote{Private Communication.}

Next, we justify the realizability assumptions made in Section \ref{sec:consistency_positive} for obtaining $\sH$-consistency of surrogate losses. In order to do this we construct a scenario that we call as the \textbf{Segments} case. Here, we consider six surrogates, the four studied above and two more surrogates $\phi_1$ and $\phi_2$ defined in Appendix~\ref{app:details of experiments}. The loss $\phi_1$ is a convex loss and $\phi_2$ is the $\rho$-margin ramp loss for some $\rho > \gamma$. In general, we refer all of these surrogates to $\phi_{\mathrm{sur}}$. We show in Appendix~\ref{app:theoretical analysis of experiments}, $\phi_{\mathrm{hinge}}$, $\phi_{\mathrm{log}}$ and $\phi_1$ are not calibrated while $\phi_{\mathrm{ramp}}$, $\phi_{\mathrm{sig}}$ and $\phi_2$ are calibrated with respect to $\ell_{\gamma}$. 



\begin{table}[t]
\vskip -.2in
\centering
\begin{tabular}{cc}
\raisebox{.33cm}{
\resizebox{.42\linewidth}{!}{
    \begin{tabular}{@{\hspace{0.05cm}} @{\hspace{0.05cm}}l@{\hspace{0.05cm}}|@{\hspace{0.05cm}}c@{\hspace{0.05cm}}|@{\hspace{0.05cm}}c@{\hspace{0.05cm}}|@{\hspace{0.05cm}}c@{\hspace{0.05cm}}|@{\hspace{0.05cm}}c@{\hspace{0.05cm}}}
	$\phi_{\mathrm{sur}}$ & $\cR_{\ell_{\gamma}}(f^*)$ & $\theta_{\phi_{\mathrm{sur}}}$ & $\sH_{\mathrm{lin}}$-cal. & $\sH_{\mathrm{lin}}$-cons.\\
	\hline
	$\phi_{\mathrm{hinge}}$ & 0.5257 & 0.1420 & \xmark & \xmark \\
	$\phi_{\mathrm{ramp}}$ & 0.5263 & 0.1288 & \cmark & \xmark\\
	$\phi_{\mathrm{sig}}$ & 0.5261 & 0.1320 & \cmark & \xmark\\
	$\phi_{\mathrm{log}}$ & 0.5258 & 0.1414 & \xmark & \xmark \\
	\end{tabular}
}} &
\resizebox{.48\linewidth}{!}{
    \begin{tabular}{@{\hspace{0.05cm}}l@{\hspace{0.05cm}}|@{\hspace{0.05cm}}c@{\hspace{0.05cm}}|@{\hspace{0.05cm}}c@{\hspace{0.05cm}}|@{\hspace{0.05cm}}c@{\hspace{0.05cm}}|@{\hspace{0.05cm}}c@{\hspace{0.05cm}}|@{\hspace{0.05cm}}c@{\hspace{0.05cm}}}
	$\phi_{\mathrm{sur}}$  & $\cR_{\ell_{\gamma}}(f^*)$ & $\cR_{\phi_{\mathrm{sur}}}(f^*)$ & $\theta_{\phi_{\mathrm{sur}}}$ & $\sH_{\mathrm{lin}}$-cal. & $\sH_{\mathrm{lin}}$-cons.\\
	\hline
	$\phi_{\mathrm{hinge}}$ & 0.0781 & 0.6907 & 1.3548 & \xmark & \xmark \\
	$\phi_{\mathrm{ramp}}$ & 0.0781 & 0.3454 & 1.3548 & \cmark & \xmark\\
	$\phi_{\mathrm{sig}}$ & 0.0777 & 0.4247 & 1.3498 & \cmark & \xmark\\
	$\phi_{\mathrm{log}}$ & 0.0763 & 0.8078 & 1.3341 & \xmark & \xmark \\
	$\phi_1$ & 0.0111 & 0 & $\frac{\pi}{6}$ & \xmark & \xmark \\
	$\phi_2$ & 0 & 0 & 0 & \cmark & \cmark\\
	\end{tabular}
}\\
(a) & (b)
\end{tabular}
\vskip -.15in
\caption{(a) Unit Circle; (b) Segments.}
\label{tab:circle}
\vskip -.2in
\end{table}
We consider the following data distribution: $\mathbb P(Y=1)=\mathbb P(Y=-1)=\frac12$, and $X\mid Y=1$ is the uniform distribution on the line segment $\curl*{(\hat{\gamma},z)\mid z\in[0,\sqrt{1-\hat{\gamma}^2}]}$ and $X\mid Y=-1$ is the uniform distribution on the line segment $\curl*{(-\hat{\gamma},z)\mid z\in[-\sqrt{1-\hat{\gamma}^2},0]}$ where $\hat{\gamma}=\gamma+\frac{1-\gamma}{100} = \frac{1+99\gamma}{100}$, $\gamma\in(0,1)$.
Finally, we set $\gamma=0.1$. Let $\bw^* = (1,0)^{\top}$. It is easy to check that $\bw^*$ achieves the optimal adversarial Bayes risk ($\ell_{\gamma}$,$\sH_{\mathrm{lin}}$)-risk $\cR^*_{\ell_{\gamma},\sH_{\mathrm{lin}}}=0$.


The results for six different surrogate losses are in Table~\ref{tab:circle}(b).
For $\phi_{\mathrm{hinge}}$, $\phi_{\mathrm{ramp}}$, $\phi_{\mathrm{sig}}$ and $\phi_{\mathrm{log}}$, the Bayes ($\phi_{\mathrm{sur}}$,$\sH_{\mathrm{lin}}$)-risk $\cR^*_{\phi_{\mathrm{sur}},\sH_{\mathrm{lin}}}\neq0$. Table~\ref{tab:circle}(b) shows that they are not $\sH_{\mathrm{lin}}$-consistent with respect to $\ell_{\gamma}$. For $\phi_1$ and $\phi_2$, the Bayes ($\phi_{\mathrm{sur}}$,$\sH_{\mathrm{lin}}$)-risk $\cR^*_{\phi_{\mathrm{sur}},\sH_{\mathrm{lin}}}=0$. Table~\ref{tab:circle}(b) shows that $\phi_1$ is not $\sH_{\mathrm{lin}}$-consistent~(recall that $\phi_1$ is not calibrated) but $\phi_2$ is $\sH_{\mathrm{lin}}$-consistent for this distribution.

Hence even when $\cR^*_{\ell_{\gamma},\sH_{\mathrm{lin}}}=0$, unless a condition is also imposed on $\cR^*_{\phi_{\mathrm{sur}},\sH_{\mathrm{lin}}}$, one cannot expect consistency, thus justifying our realizability assumption.
Note that $\cR^*_{\phi_{\mathrm{sur}},\sH_{\mathrm{lin}}}=\cR^*_{\ell_{\gamma},\sH_{\mathrm{lin}}}=0$ is a special case satisfying the conditions in Theorem~\ref{Thm:calibrate_consistent_nonsup} when $\eta=0$. 
For this distribution, $\phi_{\mathrm{ramp}}$ is not $\sH_{\mathrm{lin}}$-consistent while $\phi_2$ is $\sH_{\mathrm{lin}}$-consistent, although both are calibrated. We compare them in Figure~\ref{fig:compare}, showing that minimizing $\sH_{\mathrm{lin}}$-consistent surrogate $\phi_2$ minimizes the generalization error for large sample sizes but the same does not hold for $\sH_{\mathrm{lin}}$ non-consistent surrogate $\phi_{\mathrm{ramp}}$.
\section{Conclusion}
\label{sec:conclusion}
We presented a detailed study of calibration and
consistency for adversarial robustness. These
results can help guide the design of algorithms
for learning robust predictors, an increasingly important
problem in applications. Our theoretical results show in particular
that many of the surrogate losses typically used
in practice do not benefit from any guarantee.
Our empirical results further illustrate that in
the context of a general example.
Our results also show that some of the calibration results 
presented in previous work do not bear any 
significance, since we prove that in fact they
do not guarantee consistency. Instead, we give a 
series of positive calibration and
consistency results for several families of 
surrogate functions, under some realizability
assumptions.

\acks{We are grateful to the COLT reviewers for their comments.}

\bibliography{advcolt}

\newpage
\appendix

\renewcommand{\contentsname}{Contents of Appendix}
\tableofcontents
\addtocontents{toc}{\protect\setcounter{tocdepth}{3}} 
\clearpage

\section{Related Work}
\label{app:related}

The notions of calibration and consistency with respect to the $0/1$
loss have been widely studied in the statistical learning theory
literature to analyze the properties of surrogate losses
\citep{Zhang2003,bartlett2006convexity}.
\citet{bartlett2006convexity} showed that margin-based convex
surrogates, that is mappings of the form
$(f, \bx, y) \mapsto \phi(yf(\bx))$, where $f$ is a real-valued predictor
and $\phi\colon \Rset \to \Rset_{+}$ a function differentiable at $0$ with
$\phi'(0) < 0$, are calibrated with respect to the class of all measurable functions. 
Extensions of calibration and consistency to multi-class settings have
also been studied \citep{tewari2007consistency}.
In the special case of the $0/1$ loss and margin-based convex
surrogates, calibration immediately implies consistency for the class of all measurable functions.
One can then even derive quantitative bounds relating the excess
$\phi$-risk to the excess $0/1$ loss of any function $f$
\citep{Zhang2003,bartlett2006convexity}.

The case of adversarial loss is more complex. This is because, in
particular, the loss of a predictor $f$ at point $\bx$ does not just
depend on its value $f(\bx)$ at that point but also on its values in a
neighborhood of $\bx$.
\citet{steinwart2007compare} proposed a general framework to study and
characterize calibration and consistency, in particular via a
\emph{calibration function}.  He also defined a
\emph{$\sP$-minimizability} condition under which calibration implies
consistency. But, while $\sP$-minimizability holds for 
the $0/1$ loss and margin-based convex surrogates
 over the class of all measurable
functions, the condition does not hold in general for the adversarial
loss.
Our work borrows tools from the work of
\citet{steinwart2007compare}. However, to establish $\sH$-consistency
in the context of the adversarial loss, additional insights are needed
and often stronger assumptions on the data distribution are
required. These assumptions are captured in the notion of
\emph{realizable $\sH$-consistency} that requires that the optimal
risk of both the $0/1$ loss and the surrogate loss being achieved
inside the class $\sH$. Our positive results for $\sH$-consistency
rely on similar but weaker assumptions. \citet{long2013consistency}
gave examples of surrogate losses that are not $\sH$-consistent when
$\sH$ is the class of all measurable functions but satisfy realizable
$\sH$-consistency when $\sH$ is the class of linear functions.
\citet{zhang2020bayes} studied the notion of \emph{improper realizable
  $\sH$-consistency} of linear classes where the surrogate $\phi$ can
be optimized over a larger class such as that of piecewise linear
functions.

These notions of calibration and consistency are relatively unexplored
for the robust $0/1$ loss.  \citet{pmlr-v125-bao20a} recently
initiated the study of these notions for the robust loss. In
particular, the authors studied the $\gamma$-margin loss defined by:
$\bar{\ell}_\gamma(f, \bx, y) = \mathds{1}_{yf(\bx) \leq \gamma}$.
This loss function coincides with the adversarial loss (only) in the
special case where linear classifiers with adversarial perturbations
measured in $\ell_2$ norm are considered.
The authors showed that, when $\sH$ is linear,
convex surrogates are not $\sH$-calibrated and proposed a class of
\emph{quasi-concave even} $\sH$-calibrated surrogates. 

Our positive results for $\sH$-calibration significantly extend those
beyond linear hypothesis sets. More importantly,
\citet{pmlr-v125-bao20a} incorrectly concluded that $\sH$-calibration
of quasi-concave even surrogate losses implies their
$\sH$-consistency. Our negative results falsify this claim and in fact
rule out the $\sH$-consistency of a large class of surrogates, unless
assumptions on the data distribution are imposed. Finally, while the
results of \citet{pmlr-v125-bao20a} do imply that quasi-concave even
surrogates are $\sH$-consistent with respect to the
$\bar{\ell}_\gamma$ loss over the set of all measurable functions,
this does not provide insights into the adversarial loss since the two
losses only coincide for linear hypothesis sets.

There has also been recent works on theoretically understanding different aspects of adversarial robustness. \citet{tsipras2018robustness} give constructions under which every classifier with small $0/1$ loss has a large adversarial $0/1$ loss thereby pointing to a tension between the two criteria. This has been tradeoff has been explored in subsequent work \citep{zhang2019theoretically, carmon2019unlabeled}. \citet{bubeck2018adversarial}, \citet{bubeck2018adversarial2} and \citet{awasthi2019robustness} quantify computational bottlenecks in learning classifiers with small adversarial loss. There has also been a line of work analyzing the sample complexity of optimizing adversarial surrogate losses using notions of VC-dimension and Rademacher complexity appropriately extended to the adversarial case \citep{YinRamchandranBartlett2019, khim2018adversarial, awasthi2020adversarial, montasser2019vc, cullina2018pac}. Another recent line of concerns constructing computationally efficient adversarially robust classifiers for linear classifiers \citep{diakonikolas2020complexity} and exploring the connections between adversarial learning and agnostic PAC learning \citep{montasser2020reducing}. 
Finally, an alternative adversarial setting 
has been theoretically studied in 
\citep{feige2015learning, feige2018robust, attias2018improved}, where
the adversary has at his disposal a finite set of perturbations 
for each input.

\ignore{
\section{Related Work}
\label{app:related}
The notions of calibration and consistency have been widely studied in statistical learning theory to design surrogate losses for the $0/1$ loss in classification settings. The work of \citet{bartlett2006convexity} showed that margin based convex surrogates, i.e., surrogates of the form $\phi(yf(\bx))$ where $\phi$ is differentiable at $0$ and $\phi'(0) < 0$ are $\sH$-calibrated when $\sH$ is the class of all measurable functions. Since the $0/1$ loss and margin based convex surrogates satisfy a natural condition called $\sP$-minimizability over the class of all measurable functions, calibration immediately implies consistency in this case. In this case one can also obtain quantitative bounds relating the excess $\phi$ risk to the excess $0/1$ loss of any function $f$ \citep{bartlett2006convexity}. The work of \citet{steinwart2007compare} proposed a general framework to study and characterize calibration and consistency of general $\sP$-minimizable loss functions. Our work borrows tools from the work of \citet{steinwart2007compare}. However, to establish $\sH$-consistency additional insights are needed as the robust $0/1$ loss does not satisfy $\sP$-minimizability. Extensions of calibration and consistency to multi-class settings have also been studied \citep{tewari2007consistency}. 

In practical settings, one often optimizes a surrogate loss over a restricted set of functions such as linear models or a fixed depth neural networks. To capture these settings the work of \citet{long2013consistency} proposed to study $\sH$-consistency when $\sH$ is much smaller than the class of all measurable functions. Establishing $\sH$-consistency is much harder and often strong assumptions on the data distribution are needed. These assumptions are captured in the notion of {\em realizable} $\sH$-consistency that requires that the optimal risk of both the $0/1$ loss and the surrogate loss being achieved inside the class $\sH$. Our positive results for $\sH$-consistency will rely on similar but weaker assumptions. The work of \citet{long2013consistency} gave examples of surrogate losses that are not $\sH$-consistent when $\sH$ is the class of all measurable functions but satisfy realizable $\sH$-consistency when $\sH$ is the class of linear functions. The recent work of \citet{zhang2020bayes} studies the notion of improper realizable $\sH$-consistency of linear classes where the surrogate $\phi$ can be optimized over a larger class such as piece-wise linear functions.

The above notions of calibration and consistency are relatively unexplored for the robust $0/1$ loss. The recent work of \citet{pmlr-v125-bao20a} initiated the study of these notions for the robust loss. In particular, the authors proposed to the study the $\gamma$-margin loss defined as: $\bar{\ell}_\gamma(f,\bx,y) = \mathds{1}_{yf(\bx) \leq \gamma}$. For the class of linear classifiers with adversarial perturbations measured in $\ell_2$ norm the $\bar{\ell}_\gamma$ loss exactly corresponds to the robust $0/1$ loss. The authors showed that when $\sH$ is linear convex surrogates are not $\sH$-calibrated and proposed a class of quasi-concave even $\sH$-calibrated surrogates. Our positive results on $\sH$-calibration extend these to beyond linear hypothesis sets. More importantly, the authors in \citep{pmlr-v125-bao20a} incorrectly concluded that $\sH$-calibration of quasi-concave even surrogates implies their $\sH$-consistency. Our negative results falsify this claim and in fact rule out $\sH$-consistency of a large class of surrogates unless assumptions on the data distribution are imposed. Finally, while the results of \citet{pmlr-v125-bao20a} do imply that quasi-concave even surrogates are $\sH$-consistent with respect to the $\bar{\ell}_\gamma$ loss over the set of all measurable functions, this does not provide insights into the robust $0/1$ loss since the two only coincide for linear hypothesis sets. 

}

\newpage
\section{Details of Experiments}
\label{app:details of experiments}

As shown by \citet{pmlr-v125-bao20a}, the adversarial $0/1$ loss $\ell_{\gamma}=\mathds{1}_{yf(\bx) \leq \gamma}$ when $f\in \sH_{\mathrm{lin}}$.
In this experiment, we approximate $\cR^*_{\ell_{\gamma},\sH_{\mathrm{lin}}}$ over a grid. For surrogate losses, we  approximate 
$f^*=\argmin_{f\in\sH_{\mathrm{lin}}} \cR_{\phi_{\mathrm{sur}}}(f)$ over the same grid.
\subsection{Definition of Surrogates}
\label{app:definition of surrogates}
\begin{itemize}[itemsep=-1mm]
\item Shifted Hinge loss: $\phi_{\mathrm{hinge}}=\max\curl*{0,1-t+0.2}$;
\item Shifted Ramp loss: 
$\phi_{\mathrm{ramp}} =\min\curl*{1,\max\curl*{0,\frac{1-t+0.2}{2}}}$;
\item Shifted Sigmoid loss:
$\phi_{\mathrm{sig}}=\frac{1}{1+e^{t-0.2}}$;
\item Shifted Logistic loss: $\phi_{\mathrm{log}}=\log_2(1+e^{t-0.2})$;
\item One convex loss:
$\phi_1(t) = \max\curl*{0,\frac{\gamma}{2}-t}$; and
\item $\rho$-margin loss:
$\phi_2(t)=\min\curl*{1,\max\curl*{0,1-\frac{t}{\hat{\gamma}}}}$ for $\hat{\gamma} > \gamma$. 
\end{itemize}

\subsection{Theoretical Analysis of Surrogates}
\label{app:theoretical analysis of experiments}
$\phi_{\mathrm{hinge}}$, $\phi_{\mathrm{log}}$, and $\phi_1$ are convex surrogates and thus are not calibrated with respect to $\ell_{\gamma}$ by Corollary 9 of \citep{pmlr-v125-bao20a}.
However, $\phi_{\mathrm{ramp}}$, $\phi_{\mathrm{sig}}$ and $\phi_2$ are quasi-concave even losses and calibrated with respect to $\ell_{\gamma}$ since they satisfy the conditions in Theorem 11 of \citep{pmlr-v125-bao20a}.

Note that $\mathbb{E}_{(X,Y)}[\phi_2(Y\bw\cdot X)]=0$ if and only if $w=(1,0)^{\top}$. Therefore, $\phi_2$ is $\sH_{\mathrm{lin}}$-consistent for the distribution \textbf{Segments}. However, for $w=(1,0)^{\top}$ or $w=(\cos(\theta), \sin(\theta))^{\top}$ where $\theta=\frac{\pi}{6}$, we have $\mathbb{E}_{(X,Y)}[\phi_1(Y\bw\cdot X)]=0$. Note when $\bw=(\cos(\theta), \sin(\theta))^{\top}$ where $\theta=\frac{\pi}{6}$, we have $\mathbb{E}_{(X,Y)}[\ell_{\gamma}(Y\bw\cdot X)]\neq0$. Therefore, $\phi_1$ is not $\sH_{\mathrm{lin}}$-consistent for the distribution \textbf{Segments}.

\newpage
\section{Deferred Proofs}
\label{app:deferred-proofs}
\subsection{Proof of 
Theorem~\ref{Thm:calibration_definition_equivalent}}
\label{app:calibration_definition_equivalent}

\calibrationDefinitionEquivalent*
\begin{proof}
1) First, note that for any $\bx \in \sX$ and
$f \in \sH$, we cannot have both $\ell_\gamma(f, \bx, +1) = 0$ and
$\ell_\gamma(f, \bx, -1) = 0$.  In view of that,
$\cC_{\ell_{\gamma},\sH}(f, \bx, \eta) = \eta \ell_\gamma(f, \bx, +1)
+ (1 - \eta)\ell_\gamma(f, \bx, -1)\geq \min \curl{\eta, 1 -
  \eta}$. By assumption, for any $\bx \in \sX$, there exist $f_+ \in
\sH$ such that $\ell_\gamma(f_+, \bx, +1) = 1$ and $\ell_\gamma(f_+, \bx,
-1) = 0$ and $f_- \in \sH$ such that such that $\ell_\gamma(f_-, \bx,
+1) = 0$ and $\ell_\gamma(f_-, \bx, +1) = 1$, that is
$\cC_{\ell_{\gamma},\sH}(f_+, \bx, \eta) = \eta$ and
$\cC_{\ell_{\gamma},\sH}(f_-, \bx, \eta) = 1 - \eta$. Thus, $ \min
\curl{\eta, 1 - \eta}$ is achieved and we have, for all $\bx \in \sX$,
$\cC_{\ell_{\gamma},\sH}^{*}(\bx, \eta) = \min \curl{\eta, 1 - \eta}$.
This implies $\cC_{\ell_{\gamma},\sH}^*(\eta) =
\cC_{\ell_{\gamma},\sH}^{*}(\bx, \eta) = \min \curl{\eta, 1 - \eta}$
for all $\bx \in \sX$.

2) Given the assumption, for any $\bx \in \sX$, we have
$\cC_{\phi,\sH}^{*}(\bx, \eta) = \inf_{f \in \sH}
\bracket{\eta\phi(f(\bx))+(1-\eta)\phi(-f(\bx))} = \inf_{u \in
  \mathbb{R}} \bracket{\eta\phi(u)+(1-\eta)\phi(-u)}$, which is
is independent of $\bx$. 
This implies $\cC_{\phi,\sH}^*(\eta) = \cC_{\phi,\sH}^{*}(\bx, \eta)$,
for $\bx \in \sX$.

3) By definition of the loss function, for any $\bx\in \sX$ and $f \in \sH$, we
have $\cC_{\tilde \phi_{\rho},\sH}^{*}(f, \bx, \eta) = \eta
\phi_\rho(\underline M(f, \bx,\gamma)) + (1 - \eta) \phi_\rho(-\overline M(f, \bx,\gamma))$,
where $\underline M(f, \bx,\gamma) = \inf_{\bx'\colon \|\bx - \bx'\|\leq\gamma} f(\bx')$
and $\overline M(f, \bx,\gamma) = \sup_{\bx'\colon \|\bx - \bx'\|\leq\gamma} f(\bx')$.
Now, we must have either $\phi_\rho(\underline M(f, \bx,\gamma)) = 1$ or
$\phi_\rho(-\overline M(f, \bx,\gamma)) = 1$. Otherwise, we would have $\underline
M(f, \bx,\gamma) > 0$ and $- \overline M(f, \bx,\gamma) > 0$, but since $\underline M(f, \bx,\gamma) \leq
\overline M(f, \bx,\gamma)$, the first inequality would imply $\overline M(f, \bx,\gamma) >
0$, which would contradict the second inequality.
In view of that, the lower bound $\cC_{\tilde \phi_{\rho},\sH}^{*}(f, \bx, \eta) \geq
\min \curl{\eta, 1 - \eta}$ holds.

By assumption, for any $\bx \in \sX$, there exists $f_-$ such that
$\phi_\rho(\underline M(f_-, \bx,\gamma)) = 0$ and
$\phi_\rho(-\overline M(f_-, \bx,\gamma)) = 1$, that is
$\cC_{\tilde \phi_{\rho},\sH}^{*}(f_-, \bx, \eta) = 1 - \eta$, and
$f_+$ such that
$\phi_\rho(\underline M(f_+, \bx,\gamma)) = 1$ and
$\phi_\rho(-\overline M(f_+, \bx,\gamma)) = 0$, that is
$\cC_{\tilde \phi_{\rho},\sH}^{*}(f_+, \bx, \eta) = \eta$. Thus, the lower
bound is reached and, for any $\bx \in \sX$, we have
$\cC_{\tilde \phi_{\rho},\sH}^{*}(\bx, \eta) = \min \curl{\eta, 1 - \eta}$.
This implies 
$\cC_{\tilde \phi_{\rho},\sH}^*(\eta) =
\cC_{\tilde \phi_{\rho},\sH}^{*}(\bx, \eta) = \min
\curl{\eta, 1 - \eta}$, for any $\bx \in \sX$.
\end{proof}

\subsection{Proof of Theorem~\ref{Thm:calibration_convex_general}}
\label{app:calibration_convex_general}

As shown by~\eqref{eq:supinf01} and \citet{awasthi2020adversarial}, for $f\in\sH_g$, the adversarial $0/1$ loss has the equivalent form
\begin{equation}
\label{eq:general loss}
	\ell_{\gamma}(f,\bx,y)=\mathds{1}_{\inf\limits_{\bx'\colon \|\bx-\bx'\|\leq \gamma}\left(y g(\bw \cdot \bx')+by\right)\leq 0}=\mathds{1}_{
	yg(\bw \cdot \bx -\gamma y\|\bw\|)+by\leq 0}=\mathds{1}_{yg(\bw \cdot \bx -\gamma y)+by \leq 0}\,.
\end{equation}
Define $\cF_1=\curl*{\bx\rightarrow \bw \cdot \bx\mid\|\bw\|=1}$ and  $\cF_2=\curl*{\bx\rightarrow b \mid|b|\leq \C}$. Note that for any $f\in\sH_g$ and $\bx\in\sX$, there exist $\alpha_1 \in \cA_{\cF_1}$  and $\alpha_2 \in \cA_{\cF_2}$ such that $f(\bx)=g(\alpha_1)+\alpha_2$, where $\cA_{\cF_1}\overset{\text{def}}=\left\{f_1(\bx)\mid f_1\in\cF_1,\bx\in\sX\right\}$ and $\cA_{\cF_2}\overset{\text{def}}=\left\{f_2(\bx)\mid f_2\in\cF_2,\bx\in\sX\right\}$. Therefore, we can rewrite \eqref{eq:general loss} as
\[
   \ell_{\gamma}(\alpha_1,\alpha_2,y)=\mathds{1}_{y g(\alpha_1 -\gamma y)+\alpha_2y \leq 0}.
\]
Similarly, we can rewrite the inner risk and pseudo-minimal inner risk of $\ell_{\gamma}$ and $\phi$ as 
\[\cC_{\ell_{\gamma}}(\alpha_1,\alpha_2,\eta)=\eta\ell_{\gamma}(\alpha_1,\alpha_2,1)+(1-\eta)\ell_{\gamma}(\alpha_1,\alpha_2,-1),~ \cC_{\ell_{\gamma},\sH_{g}}^*(\eta)=\inf\limits_{\alpha_1 \in \cA_{\cF_1},\alpha_2 \in \cA_{\cF_2}}\cC_{\ell_{\gamma}}(\alpha_1,\alpha_2,\eta),\]
\[\cC_{\phi}(\alpha_1,\alpha_2,\eta)=\eta\phi(g(\alpha_1)+\alpha_2)+(1-\eta)\phi(-g(\alpha_1)-\alpha_2),~
\cC_{\phi,\sH_{g}}^*(\eta)=\inf\limits_{\alpha_1 \in \cA_{\cF_1},\alpha_2 \in \cA_{\cF_2}}\cC_{\phi}(\alpha_1,\alpha_2,\eta),\]
\[\Delta\cC_{\ell_{\gamma},\sH_g}(\alpha_1,\alpha_2,\eta)=\cC_{\ell_{\gamma}}(\alpha_1,\alpha_2,\eta) - \cC^*_{\ell_{\gamma},\sH_g}(\eta),~ \Delta\cC_{\phi,\sH_g}(\alpha_1,\alpha_2,\eta)=\cC_{\phi}(\alpha_1,\alpha_2,\eta) - \cC^*_{\phi,\sH_g}(\eta).\]
Next, we characterize the pseudo-calibration function of losses $(\phi, \ell_{\gamma})$ given hypothesis set $\sH_g$. 

\begin{lemma}
\label{lemma:bar_delta_general}
Given a non-decreasing and continuous function $g$ such that $g(1+\gamma)< \C$ and  $g(-1-\gamma)> -\C$. For a margin-based loss $\phi$ and hypothesis set $\sH_g$, the pseudo-calibration function of losses $(\phi, \ell_{\gamma})$ is \[\hat{\delta}(\epsilon)=\inf_{\eta\in[0,1]}\Bar{\delta}(\epsilon,\eta),\]
where 
\begin{equation*}
\Bar{\delta}(\epsilon,\eta) =
\begin{cases}
+\infty & \text{if} ~\epsilon>\max\curl*{\eta,1-\eta},\\
\inf\limits_{\substack{\alpha_1 \in \cA_{\cF_1},\alpha_2 \in \cA_{\cF_2}\colon\\ -g(\alpha_1+\gamma)\leq\alpha_2\leq -g(\alpha_1-\gamma)}}\Delta\cC_{\phi,\sH_g}(\alpha_1,\alpha_2,\eta) & \text{if} ~ |2\eta-1|<\epsilon\leq\max\curl*{\eta,1-\eta},\\
\inf\limits_{\substack{\alpha_1 \in \cA_{\cF_1},\alpha_2 \in \cA_{\cF_2}\colon\\ -g(\alpha_1+\gamma)\leq\alpha_2\leq -g(\alpha_1-\gamma)\\
\text{or}~(2\eta-1)(\alpha_2+g(\alpha_1+\gamma))\leq0}} \Delta\cC_{\phi,\sH_g}(\alpha_1,\alpha_2,\eta) & \text{if} ~ \epsilon\leq|2\eta-1|.
\end{cases}
\end{equation*}
\end{lemma}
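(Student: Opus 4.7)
The plan is to reparametrize each pair $(\bx,f)\in\sX\times\sH_g$ by $(\alpha_1,\alpha_2)=(\bw\cdot\bx,b)\in\cA_{\cF_1}\times\cA_{\cF_2}$, as in the paragraph preceding the lemma, then describe explicitly the feasible set $\curl*{(\alpha_1,\alpha_2)\colon\Delta\cC_{\ell_{\gamma},\sH_g}(\alpha_1,\alpha_2,\eta)\geq\epsilon}$ for each $\eta$, and finally restate $\hat\delta(\epsilon)$ as $\inf_{\eta}\bar\delta(\epsilon,\eta)$, where $\bar\delta(\epsilon,\eta)$ is the infimum of $\Delta\cC_{\phi,\sH_g}$ over that feasible set. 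The representation $\hat\delta(\epsilon)=\inf_{\eta}\bar\delta(\epsilon,\eta)$ is immediate from swapping the two infima in Definition~\ref{def:def-calibration-function}, so the substance of the lemma is the explicit description of $\bar\delta(\epsilon,\eta)$.

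First, I would compute $\cC_{\ell_{\gamma}}(\alpha_1,\alpha_2,\eta)$ and $\cC_{\ell_{\gamma},\sH_g}^*(\eta)$ explicitly. Using $\ell_\gamma(\alpha_1,\alpha_2,y)=\mathds{1}_{y(g(\alpha_1-\gamma y)+\alpha_2)\leq 0}$ and the monotonicity of $g$, the inner risk $\cC_{\ell_\gamma}(\alpha_1,\alpha_2,\eta)$ takes only three values as $\alpha_2$ crosses the thresholds $-g(\alpha_1+\gamma)\leq -g(\alpha_1-\gamma)$: it equals $\eta$ on $\alpha_2<-g(\alpha_1+\gamma)$, equals $1$ on $-g(\alpha_1+\gamma)\leq\alpha_2\leq -g(\alpha_1-\gamma)$, and equals $1-\eta$ on $\alpha_2>-g(\alpha_1-\gamma)$. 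Using the hypotheses $g(1+\gamma)<\C$, $g(-1-\gamma)>-\C$, continuity of $g$, and $[-1,1]\subseteq\cA_{\cF_1}$, each of these three regions is non-empty inside $\cA_{\cF_1}\times\cA_{\cF_2}$, so $\cC_{\ell_{\gamma},\sH_g}^*(\eta)=\min\curl*{\eta,1-\eta}$, and the excess $\Delta\cC_{\ell_{\gamma},\sH_g}$ equals $\max(0,2\eta-1)$, $\max(\eta,1-\eta)$, and $\max(0,1-2\eta)$ on the three regions respectively.

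Second, I would split on $\epsilon$ according to the two natural cutoffs $|2\eta-1|$ and $\max(\eta,1-\eta)$. If $\epsilon>\max(\eta,1-\eta)$, no region satisfies the lower bound and $\bar\delta(\epsilon,\eta)=+\infty$; if $|2\eta-1|<\epsilon\leq\max(\eta,1-\eta)$ only the middle region $-g(\alpha_1+\gamma)\leq\alpha_2\leq -g(\alpha_1-\gamma)$ survives; if $\epsilon\leq|2\eta-1|$ the middle region together with exactly one outer region survives, the choice being dictated by the sign of $2\eta-1$. To unify the third case, I would note that $(2\eta-1)(\alpha_2+g(\alpha_1+\gamma))\leq 0$ reduces to $\alpha_2\leq -g(\alpha_1+\gamma)$ when $\eta>1/2$ and to $\alpha_2\geq -g(\alpha_1+\gamma)$ when $\eta<1/2$, and is vacuous when $\eta=1/2$. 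Taking the union with the middle region therefore recovers $\alpha_2\leq -g(\alpha_1-\gamma)$ or $\alpha_2\geq -g(\alpha_1+\gamma)$ in the two nontrivial subcases, matching the feasible set identified above.

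Taking the infimum of $\Delta\cC_{\phi,\sH_g}(\alpha_1,\alpha_2,\eta)$ over each of the three feasible sets then yields the three branches of $\bar\delta(\epsilon,\eta)$ stated in the lemma. The main obstacle is bookkeeping rather than conceptual: the case splits in $\eta$ and in $\epsilon$ must merge into a single uniform formula, and the boundary values $\epsilon=|2\eta-1|$, $\epsilon=\max(\eta,1-\eta)$, and $\eta=1/2$ have to be handled so that taking the infimum preserves the stated formula. The role of the assumptions on $g$ and $\C$ is precisely to ensure the three-region decomposition is realized for every $\alpha_1\in[-1,1]$, which is what makes $\cC^*_{\ell_\gamma,\sH_g}(\eta)=\min(\eta,1-\eta)$ and the subsequent case analysis uniform in $\alpha_1$.
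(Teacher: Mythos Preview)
Your proposal is correct and follows essentially the same approach as the paper's proof: both compute the inner $\ell_\gamma$-risk by cases in $\alpha_2$ relative to the thresholds $-g(\alpha_1+\gamma)$ and $-g(\alpha_1-\gamma)$, use the assumptions on $g$ and $\C$ to obtain $\cC_{\ell_\gamma,\sH_g}^*(\eta)=\min\{\eta,1-\eta\}$, and then do the same three-way split on $\epsilon$ relative to $|2\eta-1|$ and $\max\{\eta,1-\eta\}$. Your explicit verification that the sign condition $(2\eta-1)(\alpha_2+g(\alpha_1+\gamma))\leq 0$ unifies the $\eta>1/2$ and $\eta<1/2$ subcases is a bit more detailed than the paper, which simply states the combined constraint, but the substance is identical.
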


\begin{proof}
The inner $\ell_{\gamma}$-risk is 
\begin{equation*}
\begin{aligned}
  \cC_{\ell_{\gamma}}(\alpha_1,\alpha_2,\eta)&=\eta \mathds{1}_{g(\alpha_1-\gamma)+\alpha_2\leq 0}+(1-\eta)\mathds{1}_{g(\alpha_1+\gamma)+\alpha_2\geq 0}\\
  &=\begin{cases}
   1 & \text{if} ~ -g(\alpha_1+\gamma)\leq\alpha_2\leq -g(\alpha_1-\gamma),\\
   \eta & \text{if} ~ \alpha_2< -g(\alpha_1+\gamma),\\
   1-\eta & \text{if} ~ \alpha_2> -g(\alpha_1-\gamma).\\
  \end{cases}
\end{aligned}
\end{equation*}
Since $-\C<-g(1+\gamma)$ and $\C>-g(-1-\gamma)$, the pseudo-minimal inner $\ell_{\gamma}$-risk is
\begin{equation*}
    \cC^*_{\ell_{\gamma},\sH_g}(\eta)=\min\curl*{\eta,1-\eta}.
\end{equation*}
Then, it can be computed that
\begin{align*}
  \Delta\cC_{\ell_{\gamma},
    \sH_g}(\alpha_1,\alpha_2,\eta)&=
    \begin{cases}
    \max\curl*{\eta,1-\eta} & \text{if} ~ -g(\alpha_1+\gamma)\leq\alpha_2\leq -g(\alpha_1-\gamma),\\
   |2\eta-1|\mathds{1}_{(2\eta-1)(\alpha_2+g(\alpha_1+\gamma))\leq 0} & \text{if} ~ \alpha_2> -g(\alpha_1-\gamma) \text{~or~} \alpha_2< -g(\alpha_1+\gamma).\\
    \end{cases}
\end{align*}
By definition, for a fixed $\eta\in[0,1]$, \[\bar{\delta}(\epsilon,\eta)=\inf\limits_{ \alpha_1 \in \cA_{\cF_1},\alpha_2 \in \cA_{\cF_2}}\curl*{\Delta\cC_{\phi,\sH_g}(\alpha_1,\alpha_2,\eta) \mid \Delta\cC_{\ell_{\gamma},\sH_g}(\alpha_1,\alpha_2,\eta)\geq\epsilon }.\]
If $\epsilon>\max\curl*{\eta,1-\eta}$, then for all $\alpha_1 \in \cA_{\cF_1},\alpha_2 \in \cA_{\cF_2}$, $\Delta\cC_{\ell_{\gamma},\sH_g}(\alpha_1,\alpha_2,\eta)<\epsilon$, which implies that $\bar{\delta}(\epsilon,\eta)=\infty$.
If $|2\eta-1|<\epsilon\leq\max\curl*{\eta,1-\eta}$, then $\Delta\cC_{\ell_{\gamma},\sH_g}(\alpha_1,\alpha_2,\eta)\geq\epsilon$ is achieved when $-g(\alpha_1+\gamma)\leq\alpha_2\leq -g(\alpha_1-\gamma)$, which leads to $\bar{\delta}(\epsilon,\eta)=\inf_{\substack{\alpha_1 \in \cA_{\cF_1},\alpha_2 \in \cA_{\cF_2}\colon\\
-g(\alpha_1+\gamma)\leq\alpha_2\leq -g(\alpha_1-\gamma)}}\Delta\cC_{\phi,\sH_g}(\alpha_1,\alpha_2,\eta)$. 
If $\epsilon\leq|2\eta-1|$, then $\Delta\cC_{\ell_{\gamma},\sH_g}(\alpha_1,\alpha_2,\eta)\geq\epsilon$ is achieved when $-g(\alpha_1+\gamma)\leq\alpha_2\leq -g(\alpha_1-\gamma)$ or $(2\eta-1)(\alpha_2+g(\alpha_1+\gamma))\leq0$. Therefore,  $\bar{\delta}(\epsilon,\eta)=\inf_{\substack{\alpha_1 \in \cA_{\cF_1},\alpha_2 \in \cA_{\cF_2}\colon-g(\alpha_1+\gamma)\leq\alpha_2\leq -g(\alpha_1-\gamma)\\
\text{or}~(2\eta-1)(\alpha_2+g(\alpha_1+\gamma))\leq0}} \Delta\cC_{\phi,\sH_g}(\alpha_1,\alpha_2,\eta)$.
\end{proof}
Note that in our setting, $\alpha_1\in [-1,1]$ and $\alpha_2\in [-\C,\C]$. Therefore $g(\alpha_1)+\alpha_2\in [g(-1)-\C,g(1)+\C]$, since $g$ is continuous.
Then,
\begin{equation}
\begin{aligned}
    &\left\{g(\alpha_1)+\alpha_2\colon\alpha_1 \in \cA_{\cF_1},\alpha_2 \in \cA_{\cF_2}, -g(\alpha_1+\gamma)\leq\alpha_2\leq -g(\alpha_1-\gamma)\right\}\\
    &=[\inf\limits_{\alpha_1\in[-1,1]}g(\alpha_1)-g(\alpha_1+\gamma), \sup\limits_{\alpha_1\in[-1,1]}g(\alpha_1)-g(\alpha_1-\gamma)]\,,\\
    &\left\{g(\alpha_1)+\alpha_2\colon \alpha_1 \in \cA_{\cF_1},\alpha_2 \in \cA_{\cF_2}, \alpha_2\leq -g(\alpha_1-\gamma)\right\}
    =[g(-1)-\C, \sup\limits_{\alpha_1\in[-1,1]}g(\alpha_1)-g(\alpha_1-\gamma)]\,,\\
    &\left\{g(\alpha_1)+\alpha_2\colon \alpha_1 \in \cA_{\cF_1},\alpha_2 \in \cA_{\cF_2}, \alpha_2\geq -g(\alpha_1+\gamma)\right\}
    =[\inf\limits_{\alpha_1\in[-1,1]}g(\alpha_1)-g(\alpha_1+\gamma),g(1)+\C]\,.\\
\end{aligned}
\label{eq:domain_general}
\end{equation}
Since $g$ is non-decreasing, we have $g(\alpha_1)-g(\alpha_1+\gamma)\leq 0$ and $g(\alpha_1)-g(\alpha_1-\gamma)\geq 0$ for any $\alpha_1\in[-1,1]$. Also, $-g(\alpha_1)\in[-g(1),-g(-1)]\subset[-\C,\C]$ . Therefore, 
\begin{equation}
\label{eq:contain0}
   0 \in \left\{g(\alpha_1)+\alpha_2\colon\alpha_1 \in \cA_{\cF_1},\alpha_2 \in \cA_{\cF_2}, -g(\alpha_1+\gamma)\leq\alpha_2\leq -g(\alpha_1-\gamma)\right\}\,. 
\end{equation}

\calibrationConvexGeneral*

\begin{proof}
Suppose that $\phi$ is pseudo-$\sH_g$-calibrated with respect to $\ell_{\gamma}$. By Proposition~\ref{prop:calibration_function_positive}, $\phi$ is pseudo-$\sH_g$-calibrated with respect to $\ell_{\gamma}$ if and only if its pseudo-calibration function $\hat{\delta}$ satisfies $\hat{\delta}(\epsilon)>0$ for all $\epsilon>0$, which leads to $\Bar{\delta}(\epsilon,\eta)>0$ for all $\epsilon>0$ and $\eta\in [0,1]$. By lemma \ref{lemma:bar_delta_general}, take $\eta=\frac12$, we obtain
\begin{equation*}
    	\inf\limits_{\alpha_1 \in \cA_{\cF_1},\alpha_2 \in \cA_{\cF_2}\colon -g(\alpha_1+\gamma)\leq\alpha_2\leq -g(\alpha_1-\gamma)}\Delta\cC_{\phi,\sH_g}(\alpha_1,\alpha_2,\frac12)>0
\end{equation*}
which is equivalent to
\begin{equation}
    \inf\limits_{\alpha_1 \in \cA_{\cF_1},\alpha_2 \in \cA_{\cF_2}\colon -g(\alpha_1+\gamma)\leq\alpha_2\leq -g(\alpha_1-\gamma)}\cC_{\phi}(\alpha_1,\alpha_2,\frac12)>
     \inf_{\alpha_1 \in \cA_{\cF_1},\alpha_2 \in \cA_{\cF_2}}\cC_{\phi}(\alpha_1,\alpha_2,\frac12).
     \label{eq:larger_general}
\end{equation}
By the definition of inner risk, 
\begin{equation*}
    \cC_{\phi}(\alpha_1,\alpha_2,\frac12)=\frac12\phi(g(\alpha_1)+\alpha_2)+\frac12\phi(-g(\alpha_1)-\alpha_2).
\end{equation*}
Define $\bar{\phi}(\alpha_1,\alpha_2)=\phi(g(\alpha_1)+\alpha_2)+\phi(-g(\alpha_1)-\alpha_2)$. By Jensen's inequality, $\phi(0)\leq\frac12 \bar{\phi}(\alpha_1,\alpha_2)$ for all $\alpha_1 \in \cA_{\cF_1}$, $\alpha_2 \in \cA_{\cF_2}$.

Since $\cC_{\phi}(\alpha_1,\alpha_2,\frac12)=\frac12\bar{\phi}(\alpha_1,\alpha_2)$ and \eqref{eq:contain0},
we obtain
\begin{equation*}
\inf\limits_{\alpha_1 \in \cA_{\cF_1},\alpha_2 \in \cA_{\cF_2}\colon -g(\alpha_1+\gamma)\leq\alpha_2\leq -g(\alpha_1-\gamma)}\cC_{\phi}(\alpha_1,\alpha_2,\frac12)=
     \inf_{\alpha_1 \in \cA_{\cF_1},\alpha_2 \in \cA_{\cF_2}}\cC_{\phi}(\alpha_1,\alpha_2,\frac12)
     = \phi(0),
\end{equation*}
contradicting \eqref{eq:larger_general}.
Therefore, $\phi$ is not pseudo-$\sH_g$-calibrated with respect to $\ell_{\gamma}$. By Corollary \ref{corollary:calibration_negative}, $\phi$ is also not $\sH_g$-calibrated with respect to $\ell_{\gamma}$.
\end{proof}

\subsection{Proof of Theorem~\ref{Thm:quasiconcave_calibrate_general}}
\label{app:quasiconcave_calibrate_general}

Following the notations in Appendix~\ref{app:calibration_convex_general}, we first give equivalent conditions of pseudo-calibration based on inner risk of $\phi$ and $\sH_g$.
\begin{lemma}
\label{lemma:equivalent1_general}
Given a non-decreasing and continuous function $g$ such that $g(1+\gamma)< \C$ and  $g(-1-\gamma)> -\C$. Let $\phi$ be a margin-based loss. Then $\phi$ is pseudo-$\sH_g$-calibrated with respect to $\ell_{\gamma}$ if and only if 
\begin{align*}
    \inf\limits_{\substack{\alpha_1 \in \cA_{\cF_1},\alpha_2 \in \cA_{\cF_2}\colon\\ -g(\alpha_1+\gamma)\leq\alpha_2\leq -g(\alpha_1-\gamma)}}\cC_{\phi}(\alpha_1,\alpha_2,\frac12)> &\inf_{\alpha_1 \in \cA_{\cF_1},\alpha_2 \in \cA_{\cF_2}}\cC_{\phi}(\alpha_1,\alpha_2,\frac12)\,,\text{and}\\
    \inf\limits_{\alpha_1 \in \cA_{\cF_1},\alpha_2 \in \cA_{\cF_2}\colon \alpha_2\leq -g(\alpha_1-\gamma)}\cC_{\phi}(\alpha_1,\alpha_2,\eta) > &\inf\limits_{\alpha_1 \in \cA_{\cF_1},\alpha_2 \in \cA_{\cF_2}}\cC_{\phi}(\alpha_1,\alpha_2,\eta) \text{ for all } \eta\in (\frac12,1]\,,\text{and}\\
    \inf\limits_{\alpha_1 \in \cA_{\cF_1},\alpha_2 \in \cA_{\cF_2}\colon \alpha_2\geq -g(\alpha_1+\gamma)}\cC_{\phi}(\alpha_1,\alpha_2,\eta) > &\inf\limits_{\alpha_1 \in \cA_{\cF_1},\alpha_2 \in \cA_{\cF_2}}\cC_{\phi}(\alpha_1,\alpha_2,\eta) \text{ for all } \eta\in [0,\frac12)\,.\\
\end{align*}
\end{lemma}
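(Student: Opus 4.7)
The plan is to reduce pseudo-$\sH_g$-calibration to a quantified positivity condition on $\bar\delta(\epsilon,\eta)$ via Proposition~\ref{prop:calibration_function_positive} and Lemma~\ref{lemma:bar_delta_general}, then translate the three cases of Lemma~\ref{lemma:bar_delta_general} into the three inequalities stated in the lemma. Specifically, by Proposition~\ref{prop:calibration_function_positive}, $\phi$ is pseudo-$\sH_g$-calibrated with respect to $\ell_\gamma$ iff $\hat\delta(\epsilon)>0$ for every $\epsilon>0$, and by Lemma~\ref{lemma:bar_delta_general} this is equivalent to $\bar\delta(\epsilon,\eta)>0$ for all $\epsilon>0$ and $\eta\in[0,1]$. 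The first case of $\bar\delta$ ($\epsilon>\max\{\eta,1-\eta\}$) is $+\infty$ and contributes nothing, so only the two nontrivial cases need to be analyzed.

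For the $\eta=\frac{1}{2}$ direction, I would note that $|2\eta-1|=0$, so for every $\epsilon\in(0,\frac{1}{2}]$ we land in the second case of $\bar\delta$; hence positivity there amounts to
\[
\inf_{\substack{\alpha_1\in\cA_{\cF_1},\,\alpha_2\in\cA_{\cF_2}\colon\\ -g(\alpha_1+\gamma)\leq\alpha_2\leq -g(\alpha_1-\gamma)}}\cC_\phi(\alpha_1,\alpha_2,\tfrac12)\;>\;\inf_{\alpha_1\in\cA_{\cF_1},\,\alpha_2\in\cA_{\cF_2}}\cC_\phi(\alpha_1,\alpha_2,\tfrac12),
\]
which is exactly condition 1. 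For $\eta\in(\frac{1}{2},1]$, I would exploit $2\eta-1>0$ so that the inequality $(2\eta-1)(\alpha_2+g(\alpha_1+\gamma))\leq 0$ collapses to $\alpha_2\leq -g(\alpha_1+\gamma)$; taking the union with $\{-g(\alpha_1+\gamma)\leq\alpha_2\leq -g(\alpha_1-\gamma)\}$ yields the set $\{\alpha_2\leq -g(\alpha_1-\gamma)\}$ appearing in condition 2. Thus picking any $\epsilon\in(0,2\eta-1]$ puts us in the third case of $\bar\delta$ and yields condition 2. An entirely symmetric argument for $\eta\in[0,\frac{1}{2})$ (where $2\eta-1<0$ reverses the inequality) gives condition 3.

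For the converse, I would check that the three listed conditions together force $\bar\delta(\epsilon,\eta)>0$ in every case: condition 1 handles $\eta=\frac{1}{2}$ in the second case; for $\eta\ne\frac{1}{2}$ in the third case, conditions 2 and 3 give positivity by inspection; and for $\eta\ne\frac{1}{2}$ in the second case, the infimum is taken over a subset of the set appearing in condition 2 (for $\eta>\frac{1}{2}$) or condition 3 (for $\eta<\frac{1}{2}$), so the infimum is at least as large, hence positive. Here the monotonicity step ``infimum over a subset is at least the infimum over the superset'' is the key structural fact tying the cases together.

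The main obstacle I anticipate is purely bookkeeping: correctly matching the sign of $2\eta-1$ with the direction of the inequality on $\alpha_2+g(\alpha_1+\gamma)$, and verifying that the domain in the second case of Lemma~\ref{lemma:bar_delta_general} is indeed contained in the domain of the appropriate condition (2 or 3) for every $\eta\ne\frac{1}{2}$. Once the case split is written out cleanly and the union/containment of the $\alpha_2$-sets is verified, the equivalence follows by replacing $\Delta\cC_{\phi,\sH_g}$ with $\cC_\phi-\cC_{\phi,\sH_g}^*$ and noting the pseudo-minimal inner risk is a constant in $(\alpha_1,\alpha_2)$, so that strict positivity of the gap is exactly the strict inequality stated.
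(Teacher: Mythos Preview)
Your proposal is correct and follows essentially the same approach as the paper's proof: both reduce pseudo-$\sH_g$-calibration to $\bar\delta(\epsilon,\eta)>0$ for all $\epsilon>0,\eta\in[0,1]$ via Proposition~\ref{prop:calibration_function_positive} and Lemma~\ref{lemma:bar_delta_general}, then split on $\eta=\tfrac12$, $\eta>\tfrac12$, $\eta<\tfrac12$, using the sign of $2\eta-1$ to collapse the third-case constraint and the subset monotonicity of infima to absorb the second-case condition into the third. The paper writes out the $(\epsilon,\eta)$ ranges covered by each case a bit more explicitly, but the logical content is identical.
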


\begin{proof}
Let $\hat{\delta}$ be the pseudo-calibration function of $(\phi,\ell{\gamma})$ for hypothesis sets $\sH_g$. By Lemma \ref{lemma:bar_delta_general}, $\hat{\delta}(\epsilon)=\inf_{\eta\in[0,1]}\Bar{\delta}(\epsilon,\eta)$, where 
\begin{equation*}
\Bar{\delta}(\epsilon,\eta) =
\begin{cases}
+\infty & \text{if} ~\epsilon>\max\curl*{\eta,1-\eta},\\
\inf\limits_{\substack{\alpha_1 \in \cA_{\cF_1},\alpha_2 \in \cA_{\cF_2}\colon\\ -g(\alpha_1+\gamma)\leq\alpha_2\leq -g(\alpha_1-\gamma)}}\Delta\cC_{\phi,\sH_g}(\alpha_1,\alpha_2,\eta) & \text{if} ~ |2\eta-1|<\epsilon\leq\max\curl*{\eta,1-\eta},\\
\inf\limits_{\substack{\alpha_1 \in \cA_{\cF_1},\alpha_2 \in \cA_{\cF_2}\colon\\ -g(\alpha_1+\gamma)\leq\alpha_2\leq -g(\alpha_1-\gamma)\\
\text{or}~(2\eta-1)(\alpha_2+g(\alpha_1+\gamma))\leq0}} \Delta\cC_{\phi,\sH_g}(\alpha_1,\alpha_2,\eta) & \text{if} ~ \epsilon\leq|2\eta-1|.
\end{cases}
\end{equation*}
By Proposition~\ref{prop:calibration_function_positive}, $\phi$ is pseudo-$\sH_g$-calibrated with respect to $\ell_{\gamma}$ if and only if its pseudo-calibration function $\hat{\delta}$ satisfies $\hat{\delta}(\epsilon)>0$ for all $\epsilon>0$. This is equivalent to $\Bar{\delta}(\epsilon,\eta)>0$ for all $\epsilon>0$ and $\eta\in [0,1]$.\\
For $\eta=\frac12$, we have
\begin{equation}
\begin{aligned}
&\Bar{\delta}(\epsilon,\frac12)>0 \text{ for all } \epsilon>0\\
&\Leftrightarrow \inf\limits_{\alpha_1 \in \cA_{\cF_1},\alpha_2 \in \cA_{\cF_2}\colon -g(\alpha_1+\gamma)\leq\alpha_2\leq -g(\alpha_1-\gamma)}\cC_{\phi}(\alpha_1,\alpha_2,\frac12)
> \inf_{\alpha_1 \in \cA_{\cF_1},\alpha_2 \in \cA_{\cF_2}}\cC_{\phi}(\alpha_1,\alpha_2,\frac12)\,.
\label{eq:keycondition1_general}
\end{aligned}
\end{equation}
For $1\geq\eta>\frac12$, we have $|2\eta-1|=2\eta-1$, $\max\curl*{\eta,1-\eta}=\eta$, and
\begin{equation*}
\begin{aligned}
  &\inf\limits_{\alpha_1 \in \cA_{\cF_1},\alpha_2 \in \cA_{\cF_2}\colon -g(\alpha_1+\gamma)\leq\alpha_2\leq -g(\alpha_1-\gamma)~\text{or}~(2\eta-1)(\alpha_2+g(\alpha_1+\gamma))\leq0} \Delta\cC_{\phi,\sH_g}(\alpha_1,\alpha_2,\eta)\\
  &=
  \inf_{\alpha_1 \in \cA_{\cF_1},\alpha_2 \in \cA_{\cF_2}\colon \alpha_2\leq -g(\alpha_1-\gamma)} \Delta\cC_{\phi,\sH_g}(\alpha_1,\alpha_2,\eta)\,. 
\end{aligned}
\end{equation*}
Therefore, $\Bar{\delta}(\epsilon,\eta)>0 \text{ for all } \epsilon>0 \text{ and } \eta\in(\frac12,1]$ if and only if 
\begin{equation*}
\begin{cases}
\inf\limits_{\substack{\alpha_1 \in \cA_{\cF_1},\alpha_2 \in \cA_{\cF_2}\colon\\ -g(\alpha_1+\gamma)\leq\alpha_2\leq -g(\alpha_1-\gamma)}}\cC_{\phi}(\alpha_1,\alpha_2,\eta) > \inf\limits_{\substack{\alpha_1 \in \cA_{\cF_1},\\
\alpha_2 \in \cA_{\cF_2}}}\cC_{\phi}(\alpha_1,\alpha_2,\eta) &\text{ for } \eta\in(\frac12,1] \text{ s.t. } 2\eta-1<\epsilon\leq \eta,\\
\inf\limits_{\substack{\alpha_1 \in \cA_{\cF_1},\alpha_2 \in \cA_{\cF_2}\colon\\
\alpha_2\leq -g(\alpha_1-\gamma)}}\cC_{\phi}(\alpha_1,\alpha_2,\eta) > \inf\limits_{\alpha_1 \in \cA_{\cF_1},\alpha_2 \in \cA_{\cF_2}}\cC_{\phi}(\alpha_1,\alpha_2,\eta) &\text{ for } \eta\in(\frac12,1] \text{ s.t. } \epsilon\leq 2\eta-1,
\end{cases}
\end{equation*}
for all $\epsilon>0$, which is equivalent to
\begin{equation}
\begin{cases}
\inf\limits_{\substack{\alpha_1 \in \cA_{\cF_1},\alpha_2 \in \cA_{\cF_2}\colon\\ -g(\alpha_1+\gamma)\leq\alpha_2\leq -g(\alpha_1-\gamma)}}\cC_{\phi}(\alpha_1,\alpha_2,\eta) > \inf\limits_{\substack{\alpha_1 \in \cA_{\cF_1},\\
\alpha_2 \in \cA_{\cF_2}}}\cC_{\phi}(\alpha_1,\alpha_2,\eta) &\text{ for } \eta\in(\frac12,1] \text{ s.t. } \epsilon\leq \eta < \frac{\epsilon+1}{2},\\
\inf\limits_{\substack{\alpha_1 \in \cA_{\cF_1},\alpha_2 \in \cA_{\cF_2}\colon \\
\alpha_2\leq -g(\alpha_1-\gamma)}}\cC_{\phi}(\alpha_1,\alpha_2,\eta) > \inf\limits_{\alpha_1 \in \cA_{\cF_1},\alpha_2 \in \cA_{\cF_2}}\cC_{\phi}(\alpha_1,\alpha_2,\eta) &\text{ for  } \eta\in(\frac12,1] \text{ s.t. } \frac{\epsilon+1}{2}\leq \eta,
\end{cases}
\label{eq:condition1 in proof_general}
\end{equation}
for all $\epsilon>0$.
We observe that
\begin{equation*}
\begin{aligned}
    &\left\{\eta\in(\frac12,1]\Bigg|\epsilon\leq \eta < \frac{\epsilon+1}{2},\epsilon>0\right\}=\left\{\frac12<\eta\leq1\right\}\,, \text{ and}\\
    &\left\{\eta\in(\frac12,1]\Bigg|\frac{\epsilon+1}{2}\leq \eta, \epsilon>0\right\}=\left\{\frac12<\eta\leq1\right\}\,, \text{ and}\\
    &\inf\limits_{\substack{\alpha_1 \in \cA_{\cF_1},\alpha_2 \in \cA_{\cF_2}\colon\\ -g(\alpha_1+\gamma)\leq\alpha_2\leq -g(\alpha_1-\gamma)}}\cC_{\phi}(\alpha_1,\alpha_2,\eta) \geq \inf\limits_{\alpha_1 \in \cA_{\cF_1},\alpha_2 \in \cA_{\cF_2}\colon \alpha_2\leq -g(\alpha_1-\gamma)}\cC_{\phi}(\alpha_1,\alpha_2,\eta) \text{ for all } \eta\,.
\end{aligned}
\end{equation*}
Therefore we reduce the above condition
\eqref{eq:condition1 in proof_general} as
\begin{equation}
    \inf\limits_{\alpha_1 \in \cA_{\cF_1},\alpha_2 \in \cA_{\cF_2}\colon \alpha_2\leq -g(\alpha_1-\gamma)}\cC_{\phi}(\alpha_1,\alpha_2,\eta) > \inf\limits_{\alpha_1 \in \cA_{\cF_1},\alpha_2 \in \cA_{\cF_2}}\cC_{\phi}(\alpha_1,\alpha_2,\eta) \text{ for all } \eta\in (\frac12,1]\,.
    \label{eq:keycondition2_general}
\end{equation}
For $\frac12>\eta\geq0$, we have $|2\eta-1|=1-2\eta$, $\max\curl*{\eta,1-\eta}=1-\eta$, and
\begin{equation*}
\begin{aligned}
  &\inf\limits_{\alpha_1 \in \cA_{\cF_1},\alpha_2 \in \cA_{\cF_2}\colon -g(\alpha_1+\gamma)\leq\alpha_2\leq -g(\alpha_1-\gamma)~\text{or}~(2\eta-1)(\alpha_2+g(\alpha_1+\gamma))\leq0} \Delta\cC_{\phi,\sH_g}(\alpha_1,\alpha_2,\eta)\\
  &=
  \inf_{\alpha_1 \in \cA_{\cF_1},\alpha_2 \in \cA_{\cF_2}\colon \alpha_2\geq -g(\alpha_1+\gamma)} \Delta\cC_{\phi,\sH_g}(\alpha_1,\alpha_2,\eta)\,. 
\end{aligned}
\end{equation*}
Therefore, $\Bar{\delta}(\epsilon,\eta)>0 \text{ for all } \epsilon>0 \text{ and } \eta\in[0,\frac12)$ if and only if 
\begin{equation*}
\begin{cases}
\inf\limits_{\substack{\alpha_1 \in \cA_{\cF_1},\alpha_2 \in \cA_{\cF_2}\colon\\ -g(\alpha_1+\gamma)\leq\alpha_2\leq -g(\alpha_1-\gamma)}}\cC_{\phi}(\alpha_1,\alpha_2,\eta) > \inf\limits_{\substack{\alpha_1 \in \cA_{\cF_1},\\
\alpha_2 \in \cA_{\cF_2}}}\cC_{\phi}(\alpha_1,\alpha_2,\eta) &\text{for } \eta\in[0,\frac12) \text{ s.t. } 1-2\eta<\epsilon\leq 1-\eta\\
\inf\limits_{\substack{\alpha_1 \in \cA_{\cF_1},\alpha_2 \in \cA_{\cF_2}\colon\\
\alpha_2\geq -g(\alpha_1+\gamma)}}\cC_{\phi}(\alpha_1,\alpha_2,\eta) > \inf\limits_{\alpha_1 \in \cA_{\cF_1},\alpha_2 \in \cA_{\cF_2}}\cC_{\phi}(\alpha_1,\alpha_2,\eta) &\text{for  } \eta\in[0,\frac12) \text{ s.t. } \epsilon\leq 1-2\eta,
\end{cases}
\end{equation*}
for all $\epsilon>0$, which is equivalent to
\begin{equation}
\begin{cases}
\inf\limits_{\substack{\alpha_1 \in \cA_{\cF_1},\alpha_2 \in \cA_{\cF_2}\colon\\ -g(\alpha_1+\gamma)\leq\alpha_2\leq -g(\alpha_1-\gamma)}}\cC_{\phi}(\alpha_1,\alpha_2,\eta) > \inf\limits_{\substack{\alpha_1 \in \cA_{\cF_1},\\
\alpha_2 \in \cA_{\cF_2}}}\cC_{\phi}(\alpha_1,\alpha_2,\eta) &\text{for  } \eta\in[0,\frac12) \text{ s.t. } \frac{1-\epsilon}{2}< \eta \leq 1-\epsilon,\\
\inf\limits_{\substack{\alpha_1 \in \cA_{\cF_1},\alpha_2 \in \cA_{\cF_2}\colon\\
\alpha_2\geq -g(\alpha_1+\gamma)}}\cC_{\phi}(\alpha_1,\alpha_2,\eta) > \inf\limits_{\alpha_1 \in \cA_{\cF_1},\alpha_2 \in \cA_{\cF_2}}\cC_{\phi}(\alpha_1,\alpha_2,\eta) &\text{for  } \eta\in[0,\frac12) \text{ s.t. } \eta\leq \frac{1-\epsilon}{2},
\end{cases}
\label{eq:condition2 in proof_general}
\end{equation}
for all $\epsilon>0$.
We observe that
\begin{equation*}
\begin{aligned}
    &\left\{\eta\in[0,\frac12)\Bigg|\frac{1-\epsilon}{2}< \eta \leq 1-\epsilon,\epsilon>0\right\}=\left\{0\leq\eta<\frac12\right\}\,, \text{ and}\\
    &\left\{\eta\in[0,\frac12)\Bigg|\eta\leq \frac{1-\epsilon}{2}, \epsilon>0\right\}=\left\{0\leq\eta<\frac12\right\}\,, \text{ and}\\
    &\inf\limits_{\substack{\alpha_1 \in \cA_{\cF_1},\alpha_2 \in \cA_{\cF_2}\colon\\ -g(\alpha_1+\gamma)\leq\alpha_2\leq -g(\alpha_1-\gamma)}}\cC_{\phi}(\alpha_1,\alpha_2,\eta) \geq \inf\limits_{\alpha_1 \in \cA_{\cF_1},\alpha_2 \in \cA_{\cF_2}\colon \alpha_2\geq -g(\alpha_1+\gamma)}\cC_{\phi}(\alpha_1,\alpha_2,\eta) \text{ for all } \eta\,.
\end{aligned}
\end{equation*}
Therefore we reduce the above condition
\eqref{eq:condition2 in proof_general} as
\begin{equation}
    \inf\limits_{\alpha_1 \in \cA_{\cF_1},\alpha_2 \in \cA_{\cF_2}\colon \alpha_2\geq -g(\alpha_1+\gamma)}\cC_{\phi}(\alpha_1,\alpha_2,\eta) > \inf\limits_{\alpha_1 \in \cA_{\cF_1},\alpha_2 \in \cA_{\cF_2}}\cC_{\phi}(\alpha_1,\alpha_2,\eta) \text{ for all } \eta\in [0,\frac12)\,.
    \label{eq:keycondition3_general}
\end{equation}
To sum up, by \eqref{eq:keycondition1_general}, \eqref{eq:keycondition2_general} and \eqref{eq:keycondition3_general}, we conclude the proof.
\end{proof}
Define $\overline{A}=\sup_{\alpha_1\in[-1,1]}g(\alpha_1)-g(\alpha_1-\gamma)$ and $\underline{A}=\inf_{\alpha_1\in[-1,1]}g(\alpha_1)-g(\alpha_1+\gamma)$. Since $g$ is non-decreasing, we have $\overline{A}\geq 0$ and $\underline{A}\leq 0$. 
Note the inner risk $\cC_{\phi}(\alpha_1,\alpha_2,\eta)$ only depends on $t\colon=g(\alpha_1)+\alpha_2$ and $\eta$. Therefore, we can rewrite the inner risk and pseudo-minimal inner risk of $\phi$ as 
\[\cC_{\phi}(t,\eta)=\eta\phi(t)+(1-\eta)\phi(-t),\quad\cC_{\phi,\sH_{g}}^*(\eta)=\inf\limits_{g(-1)-\C\leq t\leq g(1)+\C}\cC_{\phi}(t,\eta),\]
By \eqref{eq:domain_general}, Lemma \ref{lemma:equivalent1_general} is equivalent to Lemma \ref{lemma:equivalent2_general}.
\begin{lemma}
\label{lemma:equivalent2_general}
Given a non-decreasing and continuous function $g$ such that $g(1+\gamma)< \C$ and  $g(-1-\gamma)> -\C$. Let $\phi$ be a margin-based loss. Then $\phi$ is pseudo-$\sH_g$-calibrated with respect to $\ell_{\gamma}$ if and only if 
\begin{equation*}
\begin{aligned}
    \inf\limits_{\underline{A}\leq t\leq \overline{A}}\cC_{\phi}(t,\frac12)> &\inf_{g(-1)-\C\leq t\leq g(1)+\C}\cC_{\phi}(t,\frac12)\,,\text{and}\\
    \inf\limits_{g(-1)-\C\leq t\leq \overline{A}} \cC_{\phi}(t,\eta)  > &\inf\limits_{g(-1)-\C\leq t\leq g(1)+\C}\cC_{\phi}(t,\eta) \text{ for all } \eta\in (\frac12,1]\,,\text{and}\\
    \inf\limits_{\underline{A}\leq t\leq g(1)+\C}\cC_{\phi}(t,\eta) > &\inf\limits_{g(-1)-\C\leq t\leq g(1)+\C}\cC_{\phi}(t,\eta) \text{ for all } \eta\in [0,\frac12)\,.\\
\end{aligned}
\end{equation*}
\end{lemma}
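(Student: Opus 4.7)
The plan is to obtain Lemma~\ref{lemma:equivalent2_general} as a direct parameter-reduction of Lemma~\ref{lemma:equivalent1_general}. The key observation is that the inner $\phi$-risk
\[
\cC_\phi(\alpha_1,\alpha_2,\eta)=\eta\,\phi(g(\alpha_1)+\alpha_2)+(1-\eta)\,\phi(-g(\alpha_1)-\alpha_2)
\]
depends on the pair $(\alpha_1,\alpha_2)$ only through the single scalar $t := g(\alpha_1)+\alpha_2$. Consequently, if one writes $\cC_\phi(t,\eta)=\eta\phi(t)+(1-\eta)\phi(-t)$, then for any set $S\subseteq\cA_{\cF_1}\times\cA_{\cF_2}$,
\[
\inf_{(\alpha_1,\alpha_2)\in S}\cC_\phi(\alpha_1,\alpha_2,\eta)=\inf_{t\in T(S)}\cC_\phi(t,\eta),
\]
where $T(S):=\{g(\alpha_1)+\alpha_2:(\alpha_1,\alpha_2)\in S\}$ denotes the image of $S$ under the reduction map.

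Next, I would invoke the three set identifications already recorded in \eqref{eq:domain_general}: the constraint set $\{-g(\alpha_1+\gamma)\le\alpha_2\le-g(\alpha_1-\gamma)\}$ has image $[\underline{A},\overline{A}]$; the constraint $\{\alpha_2\le-g(\alpha_1-\gamma)\}$ has image $[g(-1)-\C,\overline{A}]$; the constraint $\{\alpha_2\ge-g(\alpha_1+\gamma)\}$ has image $[\underline{A},g(1)+\C]$; and the full domain $\cA_{\cF_1}\times\cA_{\cF_2}$ has image $[g(-1)-\C,g(1)+\C]$. Substituting these intervals into the three strict inequalities comprising the conclusion of Lemma~\ref{lemma:equivalent1_general} yields verbatim the three inequalities appearing in Lemma~\ref{lemma:equivalent2_general}. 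Since Lemma~\ref{lemma:equivalent1_general} already characterizes pseudo-$\sH_g$-calibration as an if-and-only-if, this transfers to the reformulated conditions.

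The main point requiring care is justifying the image identifications in \eqref{eq:domain_general}. The containments $\subseteq$ are immediate from the ranges of $\alpha_1\in[-1,1]$, $\alpha_2\in[-\C,\C]$, and the monotonicity of $g$. For the reverse containments, one uses continuity of $g$: the map $\alpha_1\mapsto g(\alpha_1)-g(\alpha_1\pm\gamma)$ is continuous on $[-1,1]$, so by the intermediate value theorem its range is the full closed interval $[\underline{A},\overline{A}]$ on the middle constraint. For the one-sided constraints and the unconstrained domain, the assumptions $g(1+\gamma)<\C$ and $g(-1-\gamma)>-\C$ ensure that $-g(\alpha_1\mp\gamma)$ lies strictly inside $[-\C,\C]$, so $\alpha_2$ can be slid continuously from $-g(\alpha_1\mp\gamma)$ all the way to $\mp\C$, sweeping $t$ continuously from the boundary values $\overline{A}$ or $\underline{A}$ out to $g(-1)-\C$ or $g(1)+\C$ respectively. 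A second application of the intermediate value theorem then fills in each of the extended intervals.
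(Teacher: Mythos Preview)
Your proposal is correct and follows exactly the paper's approach: the paper proves Lemma~\ref{lemma:equivalent2_general} by the one-line observation that, since $\cC_\phi(\alpha_1,\alpha_2,\eta)$ depends on $(\alpha_1,\alpha_2)$ only through $t=g(\alpha_1)+\alpha_2$, the conditions of Lemma~\ref{lemma:equivalent1_general} become those of Lemma~\ref{lemma:equivalent2_general} once the image identifications~\eqref{eq:domain_general} are plugged in. Your write-up is in fact more detailed than the paper's, since you also sketch why the image sets in~\eqref{eq:domain_general} are the stated intervals; the paper simply asserts~\eqref{eq:domain_general} with only the brief remark that continuity of $g$ gives $g(\alpha_1)+\alpha_2\in[g(-1)-\C,g(1)+\C]$.
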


Lemma \ref{lemma:quasiconcave_even_general} concludes some results that will be useful in the proof of Theorem \ref{Thm:quasiconcave_calibrate_general}.

\begin{lemma}
\label{lemma:quasiconcave_even_general}
Let $\phi$ be a margin-based loss. If $\phi$ is bounded, continuous, non-increasing, quasi-concave even, and assume $\phi(g(-1)-\C)>\phi(\C-g(-1))$, $g(-1)+g(1)\geq0$, then
\begin{enumerate}
    \item the inner $\phi$-risk $\cC_{\phi}(t,\eta)$ is quasi-concave in $t\in\mathbb{R}$ for all $\eta\in[0,1]$.
    \label{part1_lemma:quasiconcave_even_general}
    \item $\phi(t)+\phi(-t)$ is non-increasing in $t$ when $t\geq 0$.
    \label{part2_lemma:quasiconcave_even_general}
    \item for $l,u\in \mathbb{R}(l\leq u),~\inf_{t\in [l,u]}\cC_{\phi}(t,\eta)=\min\curl*{\cC_{\phi}(l,\eta),\cC_{\phi}(u,\eta)}$ for all $\eta\in[0,1]$.
    \label{part3_lemma:quasiconcave_even_general}
    \item for all $\eta\in (\frac12,1]$, $\Cpae$ is non-increasing in $t$ when $t\geq 0$.
    \label{part4_lemma:quasiconcave_even_general}
    \item for all $\eta\in (\frac12,1]$, $\cC_{\phi}(g(-1)-\C,\eta)>\cC_{\phi}(g(1)+\C,\eta)$. 
    \label{part5_lemma:quasiconcave_even_general}
    \item for all $\eta\in [0,\frac12)$, $\Cpae$ is non-decreasing in $t$ when $t\leq 0$.
    \label{part6_lemma:quasiconcave_even_general}
    \item for all $\eta\in [0,\frac12)$, $\cC_{\phi}(g(-1)-\C,\eta)<\cC_{\phi}(g(1)+\C,\eta) \text{~if and only if~} \phi(\C-g(-1))+\phi(g(-1)-\C)=\phi(g(1)+\C)+\phi(-g(1)-\C)$. 
    \label{part7_lemma:quasiconcave_even_general}
\end{enumerate}
\end{lemma}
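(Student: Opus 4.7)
The plan is to establish the seven claims in the order stated; later parts reduce, by short calculations, to the earlier structural facts. A useful device throughout is the decomposition into symmetric and antisymmetric parts,
\begin{equation*}
\cC_{\phi}(t,\eta) \;=\; \tfrac{1}{2}\bigl(\phi(t)+\phi(-t)\bigr) + \bigl(\eta - \tfrac{1}{2}\bigr)\bigl(\phi(t) - \phi(-t)\bigr),
\end{equation*}
together with the symmetry $\cC_{\phi}(-t,\eta)=\cC_{\phi}(t,1-\eta)$, which immediately reduces the cases $\eta\in[0,\tfrac12)$ to $\eta\in(\tfrac12,1]$.

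First I would prove part (\ref{part2_lemma:quasiconcave_even_general}): writing $h(t)\colon=\phi(t)+\phi(-t)$, note $h$ is even and quasi-concave by hypothesis, hence any even quasi-concave function attains its maximum at $0$ and is non-increasing on $[0,\infty)$. Part (\ref{part1_lemma:quasiconcave_even_general}) is the key structural step; I would invoke the argument of \citet{pmlr-v125-bao20a}, which only uses that $\phi$ is non-increasing and $h$ quasi-concave even (not the linearity of $\sH$), and carries over verbatim to the present setting. Explicitly, one checks that every upper level set $\curl{t \colon \cC_{\phi}(t,\eta)\geq c}$ is an interval: were it the disjoint union of two intervals separated by a gap, the evenness of $h$ and monotonicity of $\phi$ would force $\cC_{\phi}$ to exceed $c$ on the gap as well. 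Part (\ref{part3_lemma:quasiconcave_even_general}) is then immediate, since the infimum of a quasi-concave function on a closed interval is attained at an endpoint.

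For part (\ref{part4_lemma:quasiconcave_even_general}), restrict the decomposition to $t\geq 0$: by part (\ref{part2_lemma:quasiconcave_even_general}) and the monotonicity of $\phi$, both $h(t)$ and $\phi(t)-\phi(-t)$ are non-increasing on $[0,\infty)$; since the coefficient $(\eta-\tfrac12)>0$ for $\eta\in(\tfrac12,1]$, the sum is non-increasing. Part (\ref{part6_lemma:quasiconcave_even_general}) follows from part (\ref{part4_lemma:quasiconcave_even_general}) via the symmetry $\cC_{\phi}(t,\eta)=\cC_{\phi}(-t,1-\eta)$. For part (\ref{part5_lemma:quasiconcave_even_general}), the hypotheses $g(-1)+g(1)\geq 0$ and $\C>g(1+\gamma)\geq g(1)$ give $g(1)+\C\geq \C-g(-1)\geq 0$, so part (\ref{part4_lemma:quasiconcave_even_general}) yields $\cC_{\phi}(g(1)+\C,\eta)\leq \cC_{\phi}(\C-g(-1),\eta)$. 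A direct expansion gives
\begin{equation*}
\cC_{\phi}(g(-1)-\C,\eta) - \cC_{\phi}(\C-g(-1),\eta) = (2\eta-1)\bigl(\phi(g(-1)-\C) - \phi(\C-g(-1))\bigr) > 0,
\end{equation*}
by the assumption $\phi(g(-1)-\C)>\phi(\C-g(-1))$, and chaining the two inequalities proves part (\ref{part5_lemma:quasiconcave_even_general}). Finally, for part (\ref{part7_lemma:quasiconcave_even_general}), the stated equality $\phi(\C-g(-1))+\phi(g(-1)-\C)=\phi(g(1)+\C)+\phi(-g(1)-\C)$ rearranges to
\begin{equation*}
\cC_{\phi}(g(1)+\C,\eta) - \cC_{\phi}(g(-1)-\C,\eta) = (2\eta-1)\bigl(\phi(g(1)+\C) - \phi(g(-1)-\C)\bigr),
\end{equation*}
and the desired strict inequality for $\eta<\tfrac12$ reduces to $\phi(g(1)+\C)<\phi(g(-1)-\C)$, which in turn follows from non-increasingness of $\phi$ on $g(-1)-\C<g(1)+\C$ combined with the strict hypothesis $\phi(g(-1)-\C)>\phi(\C-g(-1))$ ruling out equality.

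The main obstacle is part (\ref{part1_lemma:quasiconcave_even_general}): the naive slogan "sum of quasi-concave and monotone is quasi-concave" is false in general, so one genuinely has to exploit that the symmetric part $h$ is \emph{even} (hence its level-set structure is symmetric about $0$) and that the antisymmetric correction $\phi(t)-\phi(-t)$ is odd and monotone, so the two pieces cannot create a spurious second mode. Once part (\ref{part1_lemma:quasiconcave_even_general}) is secured the rest of the lemma unwinds cleanly from the decomposition and the stated sign hypotheses.
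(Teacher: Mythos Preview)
Your approach for parts (\ref{part1_lemma:quasiconcave_even_general})--(\ref{part6_lemma:quasiconcave_even_general}) matches the paper's: parts (\ref{part1_lemma:quasiconcave_even_general}), (\ref{part2_lemma:quasiconcave_even_general}), (\ref{part4_lemma:quasiconcave_even_general}) are cited from \citet{pmlr-v125-bao20a}, part (\ref{part3_lemma:quasiconcave_even_general}) follows from (\ref{part1_lemma:quasiconcave_even_general}), and part (\ref{part5_lemma:quasiconcave_even_general}) goes through the intermediate point $\C-g(-1)$ exactly as you do. Your treatment of part (\ref{part6_lemma:quasiconcave_even_general}) via the symmetry $\cC_{\phi}(-t,\eta)=\cC_{\phi}(t,1-\eta)$ is slightly cleaner than the paper's direct computation, but equivalent. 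One small remark on part (\ref{part5_lemma:quasiconcave_even_general}): the inequality $\C-g(-1)>0$ does not need the ambient assumption $\C>g(1+\gamma)$; it follows already from the lemma's own hypothesis $\phi(g(-1)-\C)>\phi(\C-g(-1))$ together with $\phi$ non-increasing.

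There is, however, a genuine gap in part (\ref{part7_lemma:quasiconcave_even_general}): you only argue the $\Longleftarrow$ direction. Assuming the equality, you correctly derive the closed form for the difference and conclude strict inequality for $\eta<\tfrac12$. But the statement is an ``if and only if'', and the $\Longrightarrow$ direction needs a separate idea. The paper handles it by writing $\cC_{\phi}(g(-1)-\C,\eta)-\cC_{\phi}(g(1)+\C,\eta)$ as an affine function of $\eta$, letting $\eta\to\tfrac12^-$ to obtain
\[
\phi(\C-g(-1))+\phi(g(-1)-\C)\;\leq\;\phi(g(1)+\C)+\phi(-g(1)-\C),
\]
and then invoking part (\ref{part2_lemma:quasiconcave_even_general}) on $0\leq \C-g(-1)\leq g(1)+\C$ to get the reverse inequality, forcing equality. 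Your decomposition makes this step just as short---the limit $\eta\to\tfrac12$ kills the antisymmetric term and leaves exactly $\tfrac12\bigl(h(\C-g(-1))-h(g(1)+\C)\bigr)\leq 0$, which part (\ref{part2_lemma:quasiconcave_even_general}) turns into an equality---but it must be said explicitly.
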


\begin{proof}
 Part \ref{part1_lemma:quasiconcave_even_general},\ref{part2_lemma:quasiconcave_even_general},\ref{part4_lemma:quasiconcave_even_general} of Lemma \ref{lemma:quasiconcave_even_general} are stated in Lemma 13 of \citep{pmlr-v125-bao20a}.
 Part \ref{part3_lemma:quasiconcave_even_general} is a corollary of Part
\ref{part1_lemma:quasiconcave_even_general} by the characterization of continuous and quasi-convex functions in \citep{boyd2004convex}.

Consider Part \ref{part5_lemma:quasiconcave_even_general}. For $\eta\in (\frac12,1]$,
\begin{align*}
    &\cC_{\phi}(g(-1)-\C,\eta)-\cC_{\phi}(g(1)+\C,\eta)\\ 
    \geq&\cC_{\phi}(g(-1)-\C,\eta)-\cC_{\phi}(\C-g(-1),\eta) & (\text{Part \ref{part4_lemma:quasiconcave_even_general} of Lemma \ref{lemma:quasiconcave_even_general}})\\
    =& (2\eta-1)(\phi(g(-1)-\C)-\phi(\C-g(-1)))\\
    >&0. 
\end{align*}

Consider Part \ref{part6_lemma:quasiconcave_even_general}. For $\eta\in [0,\frac12)$, and $\alpha_1,\alpha_2\leq0$. Suppose that $\alpha_1<\alpha_2$, then 
\begin{align*}
    &\phi(\alpha_1)-\phi(-\alpha_1)-\phi(\alpha_2)+\phi(-\alpha_2)\\
    \geq& \phi(\alpha_2)-\phi(-\alpha_2)-\phi(\alpha_2)+\phi(-\alpha_2)\\
    =&0,
\end{align*}
since $\phi$ is non-increasing.

By Part \ref{part2_lemma:quasiconcave_even_general} of Lemma \ref{lemma:quasiconcave_even_general}, $\phi(t)+\phi(-t)$ is non-decreasing in $t$ when $t\leq 0$. 

Therefore, for $\eta\in [0,\frac12)$,
\begin{align*}
    &\cC_{\phi}(\alpha_1,\eta) -  \cC_{\phi}(\alpha_2,\eta) \\
    =& (\phi(\alpha_1)-\phi(-\alpha_1)-\phi(\alpha_2)+\phi(-\alpha_2))\eta+\phi(-\alpha_1)-\phi(-\alpha_2)\\
    \leq& (\phi(\alpha_1)-\phi(-\alpha_1)-\phi(\alpha_2)+\phi(-\alpha_2))\frac12+\phi(-\alpha_1)-\phi(-\alpha_2)\\
    =& \frac12 (\phi(\alpha_1)+\phi(-\alpha_1)-\phi(\alpha_2)-\phi(-\alpha_2))\\
    \leq& 0.
\end{align*}

Consider Part \ref{part7_lemma:quasiconcave_even_general}.
Since $\phi$ is non-increasing, we have
\begin{align*}
    &\phi(g(-1)-\C)-\phi(\C-g(-1))+\phi(-g(1)-\C)-\phi(g(1)+\C)\\
    \geq& \phi(g(-1)-\C)-\phi(\C-g(-1))+\phi(g(1)+\C)-\phi(g(1)+\C)\\
    =&\phi(g(-1)-\C)-\phi(\C-g(-1))\\
    >&0.
\end{align*}
$\Longleftarrow\colon$
Suppose $ \phi(\C-g(-1))+\phi(g(-1)-\C)=\phi(g(1)+\C)+\phi(-g(1)-\C)$, then for $\eta\in [0,\frac12)$,
\begin{align*}
    &\cC_{\phi}(g(-1)-\C,\eta)-\cC_{\phi}(g(1)+\C,\eta)\\ 
     =& (\phi(g(-1)-\C)-\phi(\C-g(-1))+\phi(-g(1)-\C)-\phi(g(1)+\C))\eta\\
     &\qquad+\phi(\C-g(-1))-\phi(-g(1)-\C)\\
     <& (\phi(g(-1)-\C)-\phi(\C-g(-1))+\phi(-g(1)-\C)-\phi(g(1)+\C))\frac12\\
     &\qquad+\phi(\C-g(-1))-\phi(-g(1)-\C)\\
    =& \frac12 (\phi(\C-g(-1))+\phi(g(-1)-\C)-\phi(g(1)+\C)-\phi(-g(1)-\C))\\
    =&0.
\end{align*}
$\Longrightarrow\colon$
Suppose $\cC_{\phi}(g(-1)-\C,\eta)<\cC_{\phi}(g(1)+\C,\eta)$ for $\eta\in [0,\frac12)$, then
\begin{align*}
    &\cC_{\phi}(g(-1)-\C,\eta)-\cC_{\phi}(g(1)+\C,\eta)\\ 
     =& (\phi(g(-1)-\C)-\phi(\C-g(-1))+\phi(-g(1)-\C)-\phi(g(1)+\C))\eta\\
     &\qquad+\phi(\C-g(-1))-\phi(-g(1)-\C)\\
     <&0 
\end{align*}
for $\eta\in [0,\frac12)$. By taking $\eta\rightarrow \frac12$, we have
\begin{align*}
     & \frac12 (\phi(\C-g(-1))+\phi(g(-1)-\C)-\phi(g(1)+\C)-\phi(-g(1)-\C))\\
     =& (\phi(g(-1)-\C)-\phi(\C-g(-1))+\phi(-g(1)-\C)-\phi(g(1)+\C))\frac12\\
     &\qquad+\phi(\C-g(-1))-\phi(-g(1)-\C)\\
     \leq&0. 
\end{align*}
By Part \ref{part2_lemma:quasiconcave_even_general} of Lemma \ref{lemma:quasiconcave_even_general}, we have
\begin{align*}
    &\phi(\C-g(-1))+\phi(g(-1)-\C)-\phi(g(1)+\C)-\phi(-g(1)-\C)\\
    \geq&  \phi(g(1)+\C)+\phi(-g(1)-\C)-\phi(g(1)+\C)-\phi(-g(1)-\C)\\
    =&0.
\end{align*}
Therefore, we obtain $\phi(\C-g(-1))+\phi(g(-1)-\C)-\phi(g(1)+\C)-\phi(-g(1)-\C)=0$, namely $\phi(\C-g(-1))+\phi(g(-1)-\C)=\phi(g(1)+\C)+\phi(-g(1)-\C)$. 
\end{proof}

\QuasiconcaveCalibrateGeneral*

\begin{proof}
 By Lemma \ref{lemma:equivalent2_general}, $\phi$ is pseudo-$\sH_g$-calibrated with respect to $\ell_{\gamma}$ if and only if 
\begin{equation*}
\begin{aligned}
    \inf\limits_{\underline{A}\leq t\leq \overline{A}}\cC_{\phi}(t,\frac12)> &\inf_{g(-1)-\C\leq t\leq g(1)+\C}\cC_{\phi}(t,\frac12)\,,\text{and}\\
    \inf\limits_{g(-1)-\C\leq t\leq \overline{A}} \cC_{\phi}(t,\eta)  > &\inf\limits_{g(-1)-\C\leq t\leq g(1)+\C}\cC_{\phi}(t,\eta) \text{ for all } \eta\in (\frac12,1]\,,\text{and}\\
    \inf\limits_{\underline{A}\leq t\leq g(1)+\C}\cC_{\phi}(t,\eta) > &\inf\limits_{g(-1)-\C\leq t\leq g(1)+\C}\cC_{\phi}(t,\eta) \text{ for all } \eta\in [0,\frac12)\,.\\
\end{aligned}
\end{equation*}
Suppose that $\phi$ is pseudo-$\sH_g$-calibrated with respect to $\ell_{\gamma}$. Since for $\eta\in[0,\frac12)$,
\begin{align*}
    &\inf\limits_{\underline{A}\leq t\leq g(1)+\C} \Cpae
    = \min\curl*{\cC_{\phi}(\underline{A},\eta),\cC_{\phi}(g(1)+\C,\eta)} & \text{(Part \ref{part3_lemma:quasiconcave_even_general} of Lemma \ref{lemma:quasiconcave_even_general})}\\
    &\inf\limits_{g(-1)-\C\leq t\leq g(1)+\C} \Cpae
    = \min\curl*{\cC_{\phi}(g(-1)-\C,\eta),\cC_{\phi}(g(1)+\C,\eta)}, & \text{(Part \ref{part3_lemma:quasiconcave_even_general} of Lemma \ref{lemma:quasiconcave_even_general})}
\end{align*}
we have $\cC_{\phi}(g(-1)-\C,\eta)<\cC_{\phi}(g(1)+\C,\eta)$, otherwise $\inf_{\underline{A}\leq t\leq g(1)+\C} \Cpae\leq \cC_{\phi}(g(1)+\C,\eta)=\inf_{g(-1)-\C\leq t\leq g(1)+\C} \Cpae$. By \text{Part \ref{part7_lemma:quasiconcave_even_general} of Lemma \ref{lemma:quasiconcave_even_general}}, $\phi(\C-g(-1))+\phi(g(-1)-\C)=\phi(g(1)+\C)+\phi(-g(1)-\C)$.

Also,
\begin{align*}
    &\frac12\min\curl*{\phi(\overline{A})+\phi(-\overline{A}),\phi(\underline{A})+\phi(-\underline{A}) }\\
    =& \inf\limits_{\underline{A}\leq t\leq \overline{A}}\cC_{\phi}(t,\frac12) & \text{(Part \ref{part3_lemma:quasiconcave_even_general} of Lemma \ref{lemma:quasiconcave_even_general})}\\
    >& \inf\limits_{g(-1)-\C\leq t\leq g(1)+\C}\cC_{\phi}(t,\frac12) & \text{(Lemma \ref{lemma:equivalent2_general})}\\
    =& \frac12\min\curl*{\phi(\C-g(-1))+\phi(g(-1)-\C),\phi(g(1)+\C)+\phi(-g(1)-\C) } & \text{(Part \ref{part3_lemma:quasiconcave_even_general} of Lemma \ref{lemma:quasiconcave_even_general})}\\
    =& \frac12(\phi(\C-g(-1))+\phi(g(-1)-\C))
\end{align*}

Now for the other direction, assume that $\phi(\C-g(-1))+\phi(g(-1)-\C)=\phi(g(1)+\C)+\phi(-g(1)-\C)$ and $\min\curl*{\phi(\overline{A})+\phi(-\overline{A}),\phi(\underline{A})+\phi(-\underline{A}) }>\phi(\C-g(-1))+\phi(g(-1)-\C)$. Similarly,
\begin{align*}
    &\inf\limits_{\underline{A}\leq t\leq \overline{A}}\cC_{\phi}(t,\frac12)\\
    =& \frac12\min\curl*{\phi(\overline{A})+\phi(-\overline{A}),\phi(\underline{A})+\phi(-\underline{A}) }\\
    >&\frac12(\phi(\C-g(-1))+\phi(g(-1)-\C))\\
    =& \frac12\min\curl*{\phi(\C-g(-1))+\phi(g(-1)-\C),\phi(g(1)+\C)+\phi(-g(1)-\C) }\\
    =& \inf\limits_{g(-1)-\C\leq t\leq g(1)+\C}\cC_{\phi}(t,\frac12).
\end{align*}

For $\eta\in(\frac12,1]$,
\begin{align*}
    &\inf\limits_{g(-1)-\C\leq t\leq \overline{A}} \Cpae
    = \min\curl*{\cC_{\phi}(g(-1)-\C,\eta),\cC_{\phi}(\overline{A},\eta)} & \text{(Part \ref{part3_lemma:quasiconcave_even_general} of Lemma \ref{lemma:quasiconcave_even_general})}\\
    &\inf\limits_{g(-1)-\C\leq t\leq g(1)+\C} \Cpae
    = \min\curl*{\cC_{\phi}(g(-1)-\C,\eta),\cC_{\phi}(g(1)+\C,\eta)} & \text{(Part \ref{part3_lemma:quasiconcave_even_general} of Lemma \ref{lemma:quasiconcave_even_general})}\\
    &= \cC_{\phi}(g(1)+\C,\eta) & \text{(Part \ref{part5_lemma:quasiconcave_even_general} of Lemma \ref{lemma:quasiconcave_even_general})}
\end{align*}
Since $\phi$ is non-increasing, we have
\begin{align*}
    &\phi(-g(1)-\C)-\phi(g(1)+\C)+\phi(\overline{A})-\phi(-\overline{A})\\
    \geq&  \phi(-g(1)-\C)-\phi(g(1)+\C)+\phi(g(1)+\C)-\phi(-g(1)-\C)\\
    =&0.
\end{align*}
Then for $\eta\in(\frac12,1]$,
\begin{align*}
    &\cC_{\phi}(\overline{A},\eta)-\cC_{\phi}(g(1)+\C,\eta)\\
    =& (\phi(\overline{A})-\phi(-\overline{A})+\phi(-g(1)-\C)-\phi(g(1)+\C))\eta+\phi(-\overline{A})-\phi(-g(1)-\C)\\
    \geq& (\phi(\overline{A})-\phi(-\overline{A})+\phi(-g(1)-\C)-\phi(g(1)+\C))\frac12+\phi(-\overline{A})-\phi(-g(1)-\C)\\
    =& \frac12(\phi(\overline{A})+\phi(-\overline{A})-\phi(-g(1)-\C)-\phi(g(1)+\C))\\
    >&0.
\end{align*}
Again, by Part \ref{part5_lemma:quasiconcave_even_general} of Lemma \ref{lemma:quasiconcave_even_general}, for all $\eta\in (\frac12,1]$, $\cC_{\phi}(g(-1)-\C,\eta)-\cC_{\phi}(g(1)+\C,\eta)>0$.

As a result,  for $\eta\in (\frac12,1]$
\begin{align*}
    &\inf\limits_{g(-1)-\C\leq t\leq \overline{A}} \cC_{\phi}(t,\eta)  -\inf\limits_{g(-1)-\C\leq t\leq g(1)+\C}\cC_{\phi}(t,\eta)\\
    =& \min\curl*{\cC_{\phi}(g(-1)-\C,\eta)-\cC_{\phi}(g(1)+\C,\eta) , \cC_{\phi}(\overline{A},\eta)-\cC_{\phi}(g(1)+\C,\eta)}\\
    >&0.
\end{align*}

Finally, for $\eta\in[0,\frac12)$, by \text{Part \ref{part7_lemma:quasiconcave_even_general} of Lemma \ref{lemma:quasiconcave_even_general}}, we have $\cC_{\phi}(g(-1)-\C,\eta)<\cC_{\phi}(g(1)+\C,\eta)$ and 
\begin{align*}
    &\inf\limits_{\underline A\leq t\leq g(1)+\C} \Cpae
    = \min\curl*{\cC_{\phi}(\underline A,\eta),\cC_{\phi}(g(1)+\C,\eta)} & \text{(Part \ref{part3_lemma:quasiconcave_even_general} of Lemma \ref{lemma:quasiconcave_even_general})}\\
    &\inf\limits_{g(-1)-\C\leq t\leq g(1)+\C} \Cpae
    = \min\curl*{\cC_{\phi}(g(-1)-\C,\eta),\cC_{\phi}(g(1)+\C,\eta)} & \text{(Part \ref{part3_lemma:quasiconcave_even_general} of Lemma \ref{lemma:quasiconcave_even_general})}\\
    &=\cC_{\phi}(g(-1)-\C,\eta) .
\end{align*}
Since $\phi(\underline A)+\phi(-\underline A)>\phi(\C-g(-1))+\phi(g(-1)-\C)$ and $\phi$ is non-increasing, we have
\begin{align*}
    &\phi(\C-g(-1))-\phi(g(-1)-\C)+\phi(\underline A)-\phi(-\underline A)\\
    =&  \phi(\C-g(-1))-\phi(-\underline A)+\phi(\underline A)-\phi(g(-1)-\C)\\
    <&  \phi(\underline A)-\phi(g(-1)-\C)+\phi(\underline A)-\phi(g(-1)-\C)\\
    =&  2(\phi(\underline A)-\phi(g(-1)-\C))\\
    \leq&0.
\end{align*}
Then for $\eta\in[0,\frac12)$,
\begin{align*}
    &\cC_{\phi}(\underline A,\eta)-\cC_{\phi}(g(-1)-\C,\eta)\\
    =& (\phi(\underline A)-\phi(-\underline A)+\phi(\C-g(-1))-\phi(g(-1)-\C))\eta+\phi(-\underline A)-\phi(\C-g(-1))\\
    \geq& (\phi(\underline A)-\phi(-\underline A)+\phi(\C-g(-1))-\phi(g(-1)-\C))\frac12+\phi(-\underline A)-\phi(\C-g(-1))\\
   =& \frac12(\phi(\underline A)+\phi(-\underline A)-\phi(g(-1)-\C)-\phi(\C-g(-1)))\\
    >&0.
\end{align*}
Therefore,
\begin{equation*}
    \inf\limits_{\underline{A}\leq t\leq g(1)+\C}\cC_{\phi}(t,\eta) > \inf\limits_{g(-1)-\C\leq t\leq g(1)+\C}\cC_{\phi}(t,\eta) \text{ for all } \eta\in [0,\frac12).
\end{equation*}
\end{proof}

\subsection{Proof of 
Theorem~\ref{Thm:calibration_sup_convex}}
\label{app:calibration_sup_convex}
We first characterize the pseudo-calibration function of losses $(\ell, \ell_{\gamma})$ given a hypothesis set $\sH$. 
\begin{lemma}
\label{lemma:bar_delta_GN}
Given a hypothesis set $\sH$. Assume for any $\bx\in \sX$, there exists $f\in \sH$ such that $\inf_{\| \bx' - \bx \| \leq \gamma } f(\bx')>0$, and $f\in \sH$ such that $\sup_{\| \bx' - \bx \| \leq \gamma }f(\bx')<0$. For a surrogate loss $\ell$, the pseudo-calibration function of losses $(\ell, \ell_{\gamma})$ is $\hat{\delta}(\epsilon)=\inf_{\eta\in[0,1]}\Bar{\delta}(\epsilon,\eta)$, where 
\begin{equation*}
\Bar{\delta}(\epsilon,\eta) =
\begin{cases}
+\infty & \text{if} ~\epsilon>\max\curl*{\eta,1-\eta},\\
\inf\limits_{f\in\sH,\bx\in\sX\colon ~\underline{M}(f,\bx,\gamma)\leq 0 \leq \overline{M}(f,\bx,\gamma)}\Delta\cC_{\ell,\sH}(f,\bx,\eta) & \text{if} ~ |2\eta-1|<\epsilon\leq\max\curl*{\eta,1-\eta},\\
\inf\limits_{f\in\sH,\bx\in\sX\colon ~\underline{M}(f,\bx,\gamma)\leq 0 \leq \overline{M}(f,\bx,\gamma) \text{ or } (2\eta-1)(\underline{M}(f,\bx,\gamma))\leq 0} \Delta\cC_{\ell,\sH}(f,\bx,\eta) & \text{if} ~ \epsilon\leq|2\eta-1|,
\end{cases}
\end{equation*}
and $\underline{M}(f,\bx,\gamma)=\inf_{\bx'\colon \|\bx - \bx'\|\leq\gamma} f(\bx')$, $\overline{M}(f,\bx,\gamma)=\sup_{\bx'\colon \|\bx - \bx'\|\leq\gamma} f(\bx')$.
\end{lemma}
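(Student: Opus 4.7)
The plan is to follow the same template that was used for Lemma~\ref{lemma:bar_delta_general}, but now expressed directly in terms of the quantities $\underline{M}(f,\bx,\gamma)$ and $\overline{M}(f,\bx,\gamma)$ rather than via a decomposition of $f$ into parts $\alpha_1,\alpha_2$. First I would rewrite the adversarial $0/1$ loss using \eqref{eq:supinf01}: for any $f\in\sH$ and $\bx\in\sX$ we have $\ell_\gamma(f,\bx,+1)=\mathds{1}_{\underline{M}(f,\bx,\gamma)\leq 0}$ and $\ell_\gamma(f,\bx,-1)=\mathds{1}_{\overline{M}(f,\bx,\gamma)\geq 0}$ (using $\inf(-f)=-\sup f$). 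Expanding the inner risk gives
\[
\cC_{\ell_\gamma}(f,\bx,\eta)=
\begin{cases}
1 & \text{if } \underline{M}(f,\bx,\gamma)\leq 0\leq \overline{M}(f,\bx,\gamma),\\
1-\eta & \text{if } \underline{M}(f,\bx,\gamma)>0,\\
\eta & \text{if } \overline{M}(f,\bx,\gamma)<0.
\end{cases}
\]
Using the hypothesis on $\sH$ (which supplies, for every $\bx$, an $f$ with $\underline{M}>0$ and another with $\overline{M}<0$), the pseudo-minimal inner risk is achieved: $\cC^*_{\ell_\gamma,\sH}(\eta)=\min\{\eta,1-\eta\}$.

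Subtracting yields $\Delta\cC_{\ell_\gamma,\sH}(f,\bx,\eta)=\max\{\eta,1-\eta\}$ when $\underline{M}\leq 0\leq\overline{M}$, equals $|2\eta-1|$ when we are on the ``correct'' side (i.e.\ $\underline{M}>0$ and $\eta\leq \tfrac12$, or $\overline{M}<0$ and $\eta\geq\tfrac12$), and equals $0$ otherwise. Next I would plug this into
\[
\bar{\delta}(\epsilon,\eta)=\inf_{f\in\sH,\bx\in\sX}\{\Delta\cC_{\ell,\sH}(f,\bx,\eta)\mid \Delta\cC_{\ell_\gamma,\sH}(f,\bx,\eta)\geq\epsilon\},
\]
and split on $\epsilon$ relative to $\max\{\eta,1-\eta\}$ and $|2\eta-1|$. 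When $\epsilon>\max\{\eta,1-\eta\}$ the constraint set is empty so $\bar\delta=+\infty$. When $|2\eta-1|<\epsilon\leq\max\{\eta,1-\eta\}$ only the first regime $\{\underline{M}\leq 0\leq\overline{M}\}$ survives, giving the stated formula. When $\epsilon\leq|2\eta-1|$ both the first regime and the appropriate ``correct-side'' regime contribute.

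To match the compact description in the lemma, I would then rewrite the correct-side condition uniformly as $(2\eta-1)\underline{M}(f,\bx,\gamma)\leq 0$: when $\eta>\tfrac12$ this forces $\underline{M}\leq 0$, which combined with the first regime yields $\{\overline{M}<0\}\cup\{\underline{M}\leq 0\leq\overline{M}\}=\{\underline{M}\leq 0\}$, matching the $\overline{M}<0$ contribution; when $\eta<\tfrac12$ it forces $\underline{M}\geq 0$, which combined with the first regime yields $\{\underline{M}>0\}\cup\{\underline{M}\leq 0\leq\overline{M}\}=\{\underline{M}\geq 0\}\cup\{\underline{M}\leq 0\leq\overline{M}\}$, and these produce the same contributing set (the overlap at $\underline{M}=0$ is harmless since $\underline{M}=0$ implies $0\leq\overline{M}$). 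Taking the infimum over $\eta\in[0,1]$ then gives $\hat\delta(\epsilon)=\inf_{\eta\in[0,1]}\bar\delta(\epsilon,\eta)$.

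The computation is essentially routine; the only delicate point is verifying that the compact condition $(2\eta-1)\underline{M}\leq 0$ captures exactly the correct-side regime, including boundary behaviour at $\underline{M}=0$ and $\eta=\tfrac12$, and that the union with $\{\underline{M}\leq 0\leq\overline{M}\}$ recovers the full constraint set without over- or under-counting. This is the main obstacle and is resolved by the brief case check above, which uses only that $\overline{M}\geq\underline{M}$.
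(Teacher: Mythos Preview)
Your proposal is correct and follows essentially the same approach as the paper: compute $\cC_{\ell_\gamma}(f,\bx,\eta)$ via the three regions determined by the signs of $\underline{M}$ and $\overline{M}$, use the hypothesis on $\sH$ to get $\cC^*_{\ell_\gamma,\sH}(\eta)=\min\{\eta,1-\eta\}$, write $\Delta\cC_{\ell_\gamma,\sH}$ compactly as $|2\eta-1|\mathds{1}_{(2\eta-1)\underline{M}\leq 0}$ off the middle region, and then read off the three $\epsilon$-cases. Your added verification that the compact condition $(2\eta-1)\underline{M}\leq 0$ (together with $\underline{M}\leq 0\leq\overline{M}$) recovers exactly the set $\{\Delta\cC_{\ell_\gamma,\sH}\geq\epsilon\}$, including the boundary cases, is a detail the paper simply asserts.
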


\begin{proof}
Let 
\begin{align*}
    \underline{M}(f,\bx,\gamma)\colon=\inf_{\bx'\colon \|\bx - \bx'\|\leq\gamma} f(\bx')\,,
\end{align*} 
and 
\begin{align*}
\overline{M}(f,\bx,\gamma)\colon=& -\inf_{\bx'\colon \|\bx - \bx'\|\leq\gamma} -f(\bx')\\
=&\sup_{\bx'\colon \|\bx - \bx'\|\leq\gamma} f(\bx')\,.
\end{align*} 
The inner $\ell_{\gamma}$-risk is 
\begin{equation*}
\begin{aligned}
  \cC_{\ell_{\gamma}}(f,\bx,\eta)
  &=\eta \mathds{1}_{\left\{\underline{M}(f,\bx,\gamma)\leq 0\right\}}+(1-\eta) \mathds{1}_{\left\{\overline{M}(f,\bx,\gamma)\geq 0\right\}}\\
  &=\begin{cases}
   1 & \text{if} ~ \underline{M}(f,\bx,\gamma)\leq 0 \leq \overline{M}(f,\bx,\gamma),\\
   \eta & \text{if} ~ \overline{M}(f,\bx,\gamma)<0,\\
   1-\eta & \text{if} ~ \underline{M}(f,\bx,\gamma)> 0.\\
  \end{cases}
\end{aligned}
\end{equation*}
Since $\sH$ satisfies the condition that for any $\bx\in \sX$, there exists $f\in \sH$ such that $\underline{M}(f,\bx,\gamma)>0$, and $f\in \sH$ such that $\overline{M}(f,\bx,\gamma)<0$, the pseudo-minimal inner $\ell_{\gamma}$-risk is
\begin{equation*}
     \cC^*_{\ell_{\gamma},\sH}(\eta)=\min\curl*{\eta,1-\eta}.
\end{equation*}
Then, it can be computed that
\begin{equation*}
\begin{aligned}
  \Delta\cC_{\ell_{\gamma},
    \sH}(f,\bx,\eta)=
    \begin{cases}
    \max\curl*{\eta,1-\eta} & \text{if} ~ \underline{M}(f,\bx,\gamma)\leq 0 \leq \overline{M}(f,\bx,\gamma),\\
   |2\eta-1|\mathds{1}_{(2\eta-1)(\underline{M}(f,\bx,\gamma))\leq 0} & \text{if} ~ \underline{M}(f,\bx,\gamma)>0 \text{~or~} \overline{M}(f,\bx,\gamma)<0.\\
    \end{cases}
\end{aligned}
\end{equation*}
By definition, for a fixed $\eta\in[0,1]$,
\begin{equation*}
    \bar{\delta}(\epsilon,\eta)=\inf\limits_{f\in\sH,\bx\in\sX}\curl*{\Delta\cC_{\ell,\sH}(f,\bx,\eta) \mid \Delta\cC_{\ell_{\gamma},
    \sH}(f,\bx,\eta)\geq\epsilon }
\end{equation*}
If $\epsilon>\max\curl*{\eta,1-\eta}$, then for all $f\in\sH,\bx\in\sX$, $\Delta\cC_{\ell_{\gamma},\sH}(f,\bx,\eta)<\epsilon$, which implies that $\bar{\delta}(\epsilon,\eta)=\infty$.
If $|2\eta-1|<\epsilon\leq\max\curl*{\eta,1-\eta}$, then $\Delta\cC_{\ell_{\gamma},\sH}(f,\bx,\eta)\geq\epsilon$ is achieved when $\underline{M}(f,\bx,\gamma)\leq 0 \leq \overline{M}(f,\bx,\gamma)$, which leads to $\bar{\delta}(\epsilon,\eta)=\inf_{f\in\sH,\bx\in\sX\colon ~\underline{M}(f,\bx,\gamma)\leq 0 \leq \overline{M}(f,\bx,\gamma)}\Delta\cC_{\ell,\sH}(f,\bx,\eta)$. 
If $\epsilon\leq|2\eta-1|$, then $\Delta\cC_{\ell_{\gamma},\sH}(f,\bx,\eta)\geq\epsilon$ is achieved when $\underline{M}(f,\bx,\gamma)\leq 0 \leq \overline{M}(f,\bx,\gamma)$ or $(2\eta-1)(\underline{M}(f,\bx,\gamma))\leq 0$. Therefore, $\bar{\delta}(\epsilon,\eta)=\inf_{f\in\sH,\bx\in\sX\colon ~\underline{M}(f,\bx,\gamma)\leq 0 \leq \overline{M}(f,\bx,\gamma) \text{ or } (2\eta-1)(\underline{M}(f,\bx,\gamma))\leq 0} \Delta\cC_{\ell,\sH}(f,\bx,\eta)$.
\end{proof}

\CalibrationSupConvex*

\begin{proof}
Suppose that $\tilde{\phi}$ is pseudo-$\sH$-calibrated with respect to $\ell_{\gamma}$. By Proposition~\ref{prop:calibration_function_positive}, $\tilde{\phi}$ is pseudo-$\sH$-calibrated with respect to $\ell_{\gamma}$ if and only if its pseudo-calibration function $\hat{\delta}$ satisfies $\hat{\delta}(\epsilon)>0$ for all $\epsilon>0$, which leads to $\Bar{\delta}(\epsilon,\eta)>0$ for all $\epsilon>0$ and $\eta\in [0,1]$. By Lemma~\ref{lemma:bar_delta_GN}, take $\eta=\frac12$, we obtain
\begin{equation*}
    	\inf\limits_{f\in\sH,\bx\in\sX\colon~\underline{M}(f,\bx,\gamma)\leq 0 \leq \overline{M}(f,\bx,\gamma)}\Delta\cC_{\tilde\phi,\sH}(f,\bx,\frac12)>0,
\end{equation*}
which is equivalent to
\begin{equation}
\inf\limits_{f\in\sH,\bx\in\sX\colon~\underline{M}(f,\bx,\gamma)\leq 0 \leq \overline{M}(f,\bx,\gamma)}\cC_{\tilde{\phi}}(f,\bx,\frac12)>
     \inf\limits_{f\in\sH,\bx\in\sX}\cC_{\tilde{\phi}}(f,\bx,\frac12)\,,
     \label{eq:larger_adv_GN}
\end{equation}
where $\underline{M}(f,\bx,\gamma)=\inf_{\bx'\colon \|\bx - \bx'\|\leq\gamma} f(\bx')$, $\overline{M}(f,\bx,\gamma)=\sup_{\bx'\colon \|\bx - \bx'\|\leq\gamma}f(\bx')$.
As shown by \citet{awasthi2020adversarial}, $\tilde{\phi}$ has the equivalent form
\begin{equation*}
	\tilde{\phi}(f,\bx,y)=\phi\left(\inf\limits_{\| \bx' - \bx \| \leq \gamma }\left(yf(\bx')\right)\right)\,.
\end{equation*}
By the definition of inner risk,
\begin{align}
   \cC_{\tilde{\phi}}(f,\bx,\frac12)=\frac12 (\phi(\underline{M}(f,\bx,\gamma))+\phi(-\overline{M}(f,\bx,\gamma)))\,.
\label{eq:phi_CCR_adv_GN} 
\end{align}
Since $\phi$ is convex, by Jensen's inequality,
\begin{equation*}
   \cC_{\tilde{\phi}}(f,\bx,\frac12)\geq \phi\left(\frac12 \underline{M}(f,\bx,\gamma) -\frac12 \overline{M}(f,\bx,\gamma) \right)=\phi\left(\frac12 (\underline{M}(f,\bx,\gamma)-\overline{M}(f,\bx,\gamma))\right)\geq \phi(0),
\end{equation*}
where the last inequality used the fact that
\begin{equation*}
    \frac12 (\underline{M}(f,\bx,\gamma)-\overline{M}(f,\bx,\gamma))\leq 0
\end{equation*}
and $\phi$ is non-increasing.
For $f=0$, we have $\underline{M}(f,\bx,\gamma)=\overline{M}(f,\bx,\gamma)=0$ and by \eqref{eq:phi_CCR_adv_GN},
\begin{equation*}
\cC_{\tilde{\phi}}(f,\bx,\frac12)=\frac12(\phi(0)+\phi(0))=\phi(0)\,.
\end{equation*}
Furthermore, when $\underline{M}(f,\bx,\gamma)=\overline{M}(f,\bx,\gamma)=0$, $\underline{M}(f,\bx,\gamma)\leq0\leq\overline{M}(f,\bx,\gamma)$ is satisfied.
Therefore, we obtain
\begin{equation*}
    \inf\limits_{f\in\sH,\bx\in\sX\colon ~\underline{M}(f,\bx,\gamma)\leq 0 \leq \overline{M}(f,\bx,\gamma)}\cC_{\tilde{\phi}}(f,\bx,\frac12)=
     \inf\limits_{f\in\sH,\bx\in\sX}\cC_{\tilde{\phi}}(f,\bx,\frac12)=\phi(0)\,,
\end{equation*}
where the minimum can be achieved by $f=0$, contradicting \eqref{eq:larger_adv_GN}. Therefore, $\tilde{\phi}$ is not pseudo-$\sH$-calibrated with respect to $\ell_{\gamma}$. By Corollary \ref{corollary:calibration_negative}, $\tilde{\phi}$ is not $\sH$-calibrated with respect to $\ell_{\gamma}$.
\end{proof}

\subsection{Proof of Theorem~\ref{Thm:pos1_GN}}
\label{app:positive_NN}
As with the Proof of Theorem~\ref{Thm:quasiconcave_calibrate_general}, we first give the equivalent conditions of pseudo-calibration based on inner risk of $\phi$ and $\sH_{\mathrm{NN}}$.
\begin{lemma}
\label{lemma:equivalent1_GN}
Given a hypothesis set $\sH$. Assume for any $\bx\in \sX$, there exists $f\in \sH$ such that $\inf_{\| \bx' - \bx \| \leq \gamma } f(\bx')>0$, and $f\in \sH$ such that $\sup_{\bx'\colon \|\bx - \bx'\|\leq\gamma}f(\bx')<0$. Let $\ell$ be a surrogate loss function. Then $\ell$ is pseudo-$\sH$-calibrated with respect to $\ell_{\gamma}$ if and only if 
\begin{align*}
    \inf\limits_{f\in\sH,\bx\in\sX\colon ~\underline{M}(f,\bx,\gamma)\leq 0 \leq \overline{M}(f,\bx,\gamma)}\cC_{\ell}(f,\bx,\frac12)> &\inf\limits_{f\in\sH,\bx\in\sX}\cC_{\ell}(f,\bx,\frac12)\,,\text{and}\\
    \inf\limits_{f\in\sH,\bx\in\sX\colon ~\underline{M}(f,\bx,\gamma)\leq0}\cC_{\ell}(f,\bx,\eta) > &\inf\limits_{f\in\sH,\bx\in\sX}\cC_{\ell}(f,\bx,\eta) \text{ for all } \eta\in (\frac12,1]\,,\text{and}\\
    \inf\limits_{f\in\sH,\bx\in\sX\colon ~\overline{M}(f,\bx,\gamma)\geq0}\cC_{\ell}(f,\bx,\eta) > &\inf\limits_{f\in\sH,\bx\in\sX}\cC_{\ell}(f,\bx,\eta) \text{ for all } \eta\in [0,\frac12)\,.\\
\end{align*}
where $ \underline{M}(f,\bx,\gamma)=\inf_{\bx'\colon \|\bx - \bx'\|\leq\gamma} f(\bx')$,
$\overline{M}(f,\bx,\gamma)=\sup_{\bx'\colon \|\bx - \bx'\|\leq\gamma}f(\bx')$.
\end{lemma}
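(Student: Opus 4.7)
The argument will parallel the proof of Lemma~\ref{lemma:equivalent1_general}, with the $(\alpha_1,\alpha_2)$-parametrization replaced by the abstract pair $(f,\bx)\in\sH\times\sX$ and the interval condition $-g(\alpha_1+\gamma)\leq \alpha_2\leq -g(\alpha_1-\gamma)$ replaced by the crossing condition $\underline{M}(f,\bx,\gamma)\leq 0\leq \overline{M}(f,\bx,\gamma)$. First I would invoke Proposition~\ref{prop:calibration_function_positive} to re-express pseudo-$\sH$-calibration as $\bar{\delta}(\epsilon,\eta)>0$ for every $\epsilon>0$ and every $\eta\in[0,1]$, substitute the piecewise formula for $\bar{\delta}$ from Lemma~\ref{lemma:bar_delta_GN}, and then split the analysis into the three regimes $\eta=\tfrac12$, $\eta\in(\tfrac12,1]$, and $\eta\in[0,\tfrac12)$.

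For $\eta=\tfrac12$ we have $|2\eta-1|=0$ and $\max\{\eta,1-\eta\}=\tfrac12$, so only the middle branch of Lemma~\ref{lemma:bar_delta_GN} is active (on $0<\epsilon\leq \tfrac12$), and requiring the resulting infimum to be strictly positive yields exactly the first stated condition. For $\eta\in(\tfrac12,1]$ both the middle and third branches apply; the third branch's feasibility set simplifies via $(2\eta-1)\underline{M}\leq 0\Leftrightarrow \underline{M}\leq 0$ to $\{\underline{M}\leq 0\leq \overline{M}\}\cup\{\underline{M}\leq 0\}=\{\underline{M}\leq 0\}$, which contains the middle-branch set, so the two requirements collapse to the single condition $\inf_{\underline{M}(f,\bx,\gamma)\leq 0}\cC_{\ell}(f,\bx,\eta)>\inf_{f\in\sH,\bx\in\sX}\cC_{\ell}(f,\bx,\eta)$, to hold for every such $\eta$; this is the second conclusion.

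The regime $\eta\in[0,\tfrac12)$ is symmetric: now $(2\eta-1)<0$, so $(2\eta-1)\underline{M}\leq 0\Leftrightarrow \underline{M}\geq 0$; using $\underline{M}\leq \overline{M}$ one verifies that the union $\{\underline{M}\leq 0\leq \overline{M}\}\cup\{\underline{M}\geq 0\}$ equals $\{\overline{M}\geq 0\}$, since both sub-sets are contained in $\{\overline{M}\geq 0\}$ and any $(f,\bx)$ with $\overline{M}\geq 0$ satisfies either $\underline{M}\geq 0$ or $\underline{M}<0\leq \overline{M}$. The same collapsing argument then produces the third stated condition, and both implications of the lemma follow by combining the three cases.

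The main obstacle is the set-theoretic simplification in the third branch for $\eta\neq \tfrac12$ (identifying the unions with $\{\underline{M}\leq 0\}$ or $\{\overline{M}\geq 0\}$); once that is in hand, the remainder is routine bookkeeping on top of Lemma~\ref{lemma:bar_delta_GN}. A minor secondary point is that the non-triviality hypothesis on $\sH$ is used only inside Lemma~\ref{lemma:bar_delta_GN} (to pin down $\cC^{*}_{\ell_\gamma,\sH}(\eta)=\min\{\eta,1-\eta\}$), so no additional structural assumption on $\sH$ needs to be imposed in the present proof.
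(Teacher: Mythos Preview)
Your proposal is correct and follows essentially the same approach as the paper: both proofs invoke Proposition~\ref{prop:calibration_function_positive} together with the piecewise formula of Lemma~\ref{lemma:bar_delta_GN}, split into the three regimes $\eta=\tfrac12$, $\eta\in(\tfrac12,1]$, $\eta\in[0,\tfrac12)$, and perform the same set-theoretic collapse of the third-branch feasibility set to $\{\underline{M}\leq 0\}$ (resp.\ $\{\overline{M}\geq 0\}$) via $\underline{M}\leq\overline{M}$. The paper spells out the $\epsilon$-range bookkeeping a bit more explicitly (noting, e.g., that as $\epsilon$ varies both branch conditions are imposed on every $\eta\in(\tfrac12,1]$), but your concise version captures the same logic.
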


\begin{proof}
Let $\hat{\delta}$ be the pseudo-calibration function of $(\ell,\ell_{\gamma})$ for the hypothesis set $\sH$. By Lemma \ref{lemma:bar_delta_GN},  $\hat{\delta}(\epsilon)=\inf_{\eta\in[0,1]}\Bar{\delta}(\epsilon,\eta)$, where 
\begin{equation*}
\Bar{\delta}(\epsilon,\eta) =
\begin{cases}
+\infty & \text{if} ~\epsilon>\max\curl*{\eta,1-\eta},\\
\inf\limits_{f\in\sH,\bx\in\sX\colon~\underline{M}(f,\bx,\gamma)\leq 0 \leq \overline{M}(f,\bx,\gamma)}\Delta\cC_{\ell,\sH}(f,\bx,\eta) & \text{if} ~ |2\eta-1|<\epsilon\leq\max\curl*{\eta,1-\eta},\\
\inf\limits_{f\in\sH,\bx\in\sX\colon~\underline{M}(f,\bx,\gamma)\leq 0 \leq \overline{M}(f,\bx,\gamma) \text{ or } (2\eta-1)(\underline{M}(f,\bx,\gamma))\leq 0} \Delta\cC_{\ell,\sH}(f,\bx,\eta) & \text{if} ~ \epsilon\leq|2\eta-1|.
\end{cases}
\end{equation*}
By Proposition \ref{prop:calibration_function_positive}, $\ell$ is pseudo-$\sH$-calibrated with respect to $\ell_{\gamma}$ if and only if its pseudo-calibration function $\hat{\delta}$ satisfies $\hat{\delta}(\epsilon)>0$ for all $\epsilon>0$. This is equivalent to $\Bar{\delta}(\epsilon,\eta)>0$ for all $\epsilon>0$ and $\eta\in [0,1]$.\\
For $\eta=\frac12$, we have
\begin{equation}
\Bar{\delta}(\epsilon,\frac12)>0 \text{ for all } \epsilon>0 \Leftrightarrow \inf\limits_{f\in\sH,\bx\in\sX\colon ~\underline{M}(f,\bx,\gamma)\leq 0 \leq \overline{M}(f,\bx,\gamma)}\cC_{\ell}(f,\bx,\frac12)> \inf\limits_{f\in\sH,\bx\in\sX}\cC_{\ell}(f,\bx,\frac12)\,.
\label{eq:keycondition1_GN}
\end{equation}
For $1\geq\eta>\frac12$, we have $|2\eta-1|=2\eta-1$, $\max\curl*{\eta,1-\eta}=\eta$, and
\begin{equation*}
\begin{aligned}
   &\inf\limits_{f\in\sH,\bx\in\sX\colon ~\underline{M}(f,\bx,\gamma)\leq 0 \leq \overline{M}(f,\bx,\gamma) \text{ or } (2\eta-1)(\underline{M}(f,\bx,\gamma))\leq 0} \Delta\cC_{\ell,\sH}(f,\bx,\eta)\\
   &=
   \inf\limits_{f\in\sH,\bx\in\sX\colon ~\underline{M}(f,\bx,\gamma)\leq 0} \Delta\cC_{\ell,\sH}(f,\bx,\eta)\,. 
\end{aligned}
\end{equation*}
Therefore, $\Bar{\delta}(\epsilon,\eta)>0 \text{ for all } \epsilon>0 \text{ and } \eta\in(\frac12,1]$ if and only if 
\small
\begin{equation*}
\begin{cases}
\inf\limits_{f\in\sH,\bx\in\sX\colon ~\underline{M}(f,\bx,\gamma)\leq 0 \leq \overline{M}(f,\bx,\gamma)}\cC_{\ell}(f,\bx,\eta)> \inf\limits_{f\in\sH,\bx\in\sX}\cC_{\ell}(f,\bx,\eta) &\text{ for all } \eta\in(\frac12,1] \text{ such that } 2\eta-1<\epsilon\leq \eta,\\
\inf\limits_{f\in\sH,\bx\in\sX\colon ~\underline{M}(f,\bx,\gamma)\leq0}\cC_{\ell}(f,\bx,\eta) > \inf\limits_{f\in\sH,\bx\in\sX}\cC_{\ell}(f,\bx,\eta) &\text{ for all } \eta\in(\frac12,1] \text{ such that } \epsilon\leq 2\eta-1,
\end{cases}
\end{equation*}
\normalsize
for all $\epsilon>0$, which is equivalent to
\small
\begin{equation}
\begin{cases}
\inf\limits_{f\in\sH,\bx\in\sX\colon ~\underline{M}(f,\bx,\gamma)\leq 0 \leq \overline{M}(f,\bx,\gamma)}\cC_{\ell}(f,\bx,\eta)> \inf\limits_{f\in\sH,\bx\in\sX}\cC_{\ell}(f,\bx,\eta) &\text{ for all } \eta\in(\frac12,1] \text{ such that } \epsilon\leq \eta < \frac{\epsilon+1}{2},\\
\inf\limits_{f\in\sH,\bx\in\sX\colon ~\underline{M}(f,\bx,\gamma)\leq0}\cC_{\ell}(f,\bx,\eta) > \inf\limits_{f\in\sH,\bx\in\sX}\cC_{\ell}(f,\bx,\eta) &\text{ for all } \eta\in(\frac12,1] \text{ such that } \frac{\epsilon+1}{2}\leq \eta,
\end{cases}
\label{eq:condition1 in proof_GN}
\end{equation}
\normalsize
for all $\epsilon>0$.
We observe that
\begin{equation*}
\begin{aligned}
    &\left\{\eta\in(\frac12,1]\Bigg|\epsilon\leq \eta < \frac{\epsilon+1}{2},\epsilon>0\right\}=\left\{\frac12<\eta\leq1\right\}\,, \text{ and}\\
    &\left\{\eta\in(\frac12,1]\Bigg|\frac{\epsilon+1}{2}\leq \eta, \epsilon>0\right\}=\left\{\frac12<\eta\leq1\right\}\,, \text{ and}\\
    &\inf\limits_{f\in\sH,\bx\in\sX\colon ~\underline{M}(f,\bx,\gamma)\leq 0 \leq \overline{M}(f,\bx,\gamma)}\cC_{\ell}(f,\bx,\eta) \geq \inf\limits_{f\in\sH,\bx\in\sX\colon~\underline{M}(f,\bx,\gamma)\leq0}\cC_{\ell}(f,\bx,\eta) \text{ for all } \eta\,.
\end{aligned}
\end{equation*}
Therefore we reduce the above condition \eqref{eq:condition1 in proof_GN} as
\begin{equation}
    \inf\limits_{f\in\sH,\bx\in\sX\colon ~\underline{M}(f,\bx,\gamma)\leq0}\cC_{\ell}(f,\bx,\eta) > \inf\limits_{f\in\sH,\bx\in\sX}\cC_{\ell}(f,\bx,\eta) \text{ for all } \eta\in (\frac12,1]\,.
    \label{eq:keycondition2_GN}
\end{equation}
For $\frac12>\eta\geq0$, we have $|2\eta-1|=1-2\eta$, $\max\curl*{\eta,1-\eta}=1-\eta$, and
\begin{equation*}
\begin{aligned}
   &\inf\limits_{f\in\sH,\bx\in\sX\colon ~\underline{M}(f,\bx,\gamma)\leq 0 \leq \overline{M}(f,\bx,\gamma) \text{ or } (2\eta-1)(\underline{M}(f,\bx,\gamma))\leq 0} \Delta\cC_{\ell,\sH}(f,\bx,\eta)\\
   &=
   \inf\limits_{f\in\sH,\bx\in\sX\colon ~\overline{M}(f,\bx,\gamma)\geq 0} \Delta\cC_{\ell,\sH}(f,\bx,\eta)\,. 
\end{aligned}
\end{equation*}
Therefore, $\Bar{\delta}(\epsilon,\eta)>0 \text{ for all } \epsilon>0 \text{ and } \eta\in[0,\frac12)$ if and only if 
\small
\begin{equation*}
\begin{cases}
\inf\limits_{f\in\sH,\bx\in\sX\colon ~\underline{M}(f,\bx,\gamma)\leq 0 \leq \overline{M}(f,\bx,\gamma)}\cC_{\ell}(f,\bx,\eta)> \inf\limits_{f\in\sH,\bx\in\sX}\cC_{\ell}(f,\bx,\eta) &\text{ for all } \eta\in[0,\frac12) \text{ such that } 1-2\eta<\epsilon\leq 1-\eta,\\
\inf\limits_{f\in\sH,\bx\in\sX\colon ~\overline{M}(f,\bx,\gamma)\geq0}\cC_{\ell}(f,\bx,\eta) > \inf\limits_{f\in\sH,\bx\in\sX}\cC_{\ell}(f,\bx,\eta) &\text{ for all } \eta\in[0,\frac12) \text{ such that } \epsilon\leq 1-2\eta,
\end{cases}
\end{equation*}
\normalsize
for all $\epsilon>0$, which is equivalent to
\small
\begin{equation}
\begin{cases}
\inf\limits_{f\in\sH,\bx\in\sX\colon ~\underline{M}(f,\bx,\gamma)\leq 0 \leq \overline{M}(f,\bx,\gamma)}\cC_{\ell}(f,\bx,\eta)> \inf\limits_{f\in\sH,\bx\in\sX}\cC_{\ell}(f,\bx,\eta) &\text{ for all } \eta\in[0,\frac12) \text{ such that } \frac{1-\epsilon}{2}< \eta \leq 1-\epsilon,\\
\inf\limits_{f\in\sH,\bx\in\sX\colon ~\overline{M}(f,\bx,\gamma)\geq0}\cC_{\ell}(f,\bx,\eta) > \inf\limits_{f\in\sH,\bx\in\sX}\cC_{\ell}(f,\bx,\eta) &\text{ for all } \eta\in[0,\frac12) \text{ such that } \eta\leq \frac{1-\epsilon}{2},
\end{cases}
\label{eq:condition2 in proof_GN}
\end{equation}
\normalsize
for all $\epsilon>0$.
We observe that
\begin{equation*}
\begin{aligned}
    &\left\{\eta\in[0,\frac12)\Bigg|\frac{1-\epsilon}{2}< \eta \leq 1-\epsilon,\epsilon>0\right\}=\left\{0\leq\eta<\frac12\right\}\,, \text{ and}\\
    &\left\{\eta\in[0,\frac12)\Bigg|\eta\leq \frac{1-\epsilon}{2}, \epsilon>0\right\}=\left\{0\leq\eta<\frac12\right\}\,, \text{ and}\\
    &\inf\limits_{f\in\sH,\bx\in\sX\colon ~\underline{M}(f,\bx,\gamma)\leq 0 \leq \overline{M}(f,\bx,\gamma)}\cC_{\ell}(f,\bx,\eta) \geq \inf\limits_{f\in\sH,\bx\in\sX\colon ~\overline{M}(f,\bx,\gamma)\geq0}\cC_{\ell}(f,\bx,\eta)  \text{ for all } \eta\,.
\end{aligned}
\end{equation*}
Therefore we reduce the above condition \eqref{eq:condition2 in proof_GN} as
\begin{equation}
   \inf\limits_{f\in\sH,\bx\in\sX\colon ~\overline{M}(f,\bx,\gamma)\geq0}\cC_{\ell}(f,\bx,\eta) > \inf\limits_{f\in\sH,\bx\in\sX}\cC_{\ell}(f,\bx,\eta) \text{ for all } \eta\in [0,\frac12)\,.
    \label{eq:keycondition3_GN}
\end{equation}
To sum up, by \eqref{eq:keycondition1_GN}, \eqref{eq:keycondition2_GN} and \eqref{eq:keycondition3_GN}, we conclude the proof.
\end{proof}

\PosGN*

\begin{proof}
By Lemma \ref{lemma:equivalent1_GN}, $\tilde{\phi}_{\rho}$ is pseudo-$\sH_{\mathrm{NN}}$-calibrated with respect to $\ell_{\gamma}$ if and only if 
\begin{equation*}
\begin{aligned}
    \inf\limits_{f\in\sH,\bx\in\sX\colon ~\underline{M}(f,\bx,\gamma)\leq 0 \leq \overline{M}(f,\bx,\gamma)}\cC_{\tilde{\phi}_{\rho}}(f,\bx,\frac12)> &\inf\limits_{f\in\sH,\bx\in\sX}\cC_{\tilde{\phi}_{\rho}}(f,\bx,\frac12)\,,\text{and}\\
    \inf\limits_{f\in\sH,\bx\in\sX\colon ~\underline{M}(f,\bx,\gamma)\leq0}\cC_{\tilde{\phi}_{\rho}}(f,\bx,\eta) > &\inf\limits_{f\in\sH,\bx\in\sX}\cC_{\tilde{\phi}_{\rho}}(f,\bx,\eta) \text{ for all } \eta\in (\frac12,1]\,,\text{and}\\
    \inf\limits_{f\in\sH,\bx\in\sX\colon ~\overline{M}(f,\bx,\gamma)\geq0}\cC_{\tilde{\phi}_{\rho}}(f,\bx,\eta) > &\inf\limits_{f\in\sH,\bx\in\sX}\cC_{\tilde{\phi}_{\rho}}(f,\bx,\eta) \text{ for all } \eta\in [0,\frac12)\,.\\
\end{aligned}
\end{equation*}
where 
$ \underline{M}(f,\bx,\gamma)=\inf_{\bx'\colon \|\bx - \bx'\|\leq\gamma} f(\bx')$,
$\overline{M}(f,\bx,\gamma)=\sup_{\bx'\colon \|\bx - \bx'\|\leq\gamma} f(\bx')$.
As shown by \citet{awasthi2020adversarial}, $\tilde{\phi}_{\rho}$ has the equivalent form
\begin{equation*}
	\tilde{\phi}_{\rho}(f,\bx,y)=\phi_{\rho}\left(\inf\limits_{\bx'\colon \|\bx - \bx'\|\leq\gamma}\left(yf(\bx')\right)\right)\,.
\end{equation*}
The inner $\tilde{\phi}_{\rho}$-risk is
\begin{align*}
   \cC_{\tilde{\phi}_{\rho}}(f,\bx,\eta)=\eta \phi_{\rho}(\underline{M}(f,\bx,\gamma))+(1-\eta)\phi_{\rho}(-\overline{M}(f,\bx,\gamma))\,.
\end{align*}
Next we analyze three cases:
\begin{itemize}
    \item 
    When $\eta=\frac12$, since $\phi_{\rho}$ is non-increasing, 
    \begin{align*}
       &\inf\limits_{f\in\sH,\bx\in\sX\colon ~\underline{M}(f,\bx,\gamma)\leq 0 \leq \overline{M}(f,\bx,\gamma)}\cC_{\tilde{\phi}_{\rho}}(f,\bx,\frac12)\\
       &= \inf\limits_{f\in\sH,\bx\in\sX\colon ~\underline{M}(f,\bx,\gamma)\leq 0 \leq \overline{M}(f,\bx,\gamma)}\frac12 \phi_{\rho}(\underline{M}(f,\bx,\gamma))+\frac12\phi_{\rho}(-\overline{M}(f,\bx,\gamma))\\
       &\geq \frac12 \phi_{\rho}(0)+\frac12 \phi_{\rho}(0)=\phi_{\rho}(0)=1\,.
    \end{align*}
    Take $f=0\in \sH_{\mathrm{NN}}$, then $\underline{M}(f,\bx,\gamma)=\overline{M}(f,\bx,\gamma)=0$, $\cC_{\tilde{\phi}_{\rho}}(f,\bx,\frac12)=\frac12 \phi_{\rho}(0)+\frac12 \phi_{\rho}(0)=\phi_{\rho}(0)=1$. Therefore
    \begin{align*}
     \inf\limits_{f\in\sH,\bx\in\sX\colon ~\underline{M}(f,\bx,\gamma)\leq 0 \leq \overline{M}(f,\bx,\gamma)}\cC_{\tilde{\phi}_{\rho}}(f,\bx,\frac12)=1\,.   
    \end{align*}
    Let $\bx \in \sX$ such that $\|\bx\|=1$, $\bw_j=W\bx$, $u_j=\frac{\Lambda}{n}$, $j=1,\dots,n$. Then for any $\bs\in \curl*{\bs\colon\|\bs\|\leq 1}$, $\bw_j\cdot(\bx+\gamma \bs)=W(\bx\cdot \bx+\gamma(\bx\cdot \bs))\geq W(\|\bx\|^2-\gamma\|\bx\|\|\bs\|)\geq W(1-\gamma)>0.$ Therefore, we obtain $\underline{M}(f,\bx,\gamma)=\inf_{\|\bs\|\leq 1}\sum_{j=1}^n u_j\left(\bw_j \cdot (\bx+\gamma \bs)\right)_{+}\geq \Lambda W(1-\gamma)>0$ and $-\overline{M}(f,\bx,\gamma)\leq-\underline{M}(f,\bx,\gamma)<0$. Then $\cC_{\tilde{\phi}_{\rho}}(f,\bx,\frac12)=\frac12\phi_{\rho}(\underline{M}(f,\bx,\gamma))+\frac12\phi_{\rho}(-\overline{M}(f,\bx,\gamma))\leq\frac12\times \phi_{\rho}(\Lambda W(1-\gamma)) +\frac12 \times 1<1$. Therefore
    \begin{equation}
        \inf\limits_{f\in\sH,\bx\in\sX}\cC_{\tilde{\phi}_{\rho}}(f,\bx,\frac12)
        \leq \frac12 \phi_{\rho}(\Lambda W(1-\gamma)) +\frac12<
        1=\inf\limits_{f\in\sH,\bx\in\sX\colon ~\underline{M}(f,\bx,\gamma)\leq 0 \leq \overline{M}(f,\bx,\gamma)}\cC_{\tilde{\phi}_{\rho}}(f,\bx,\frac12)\,.
        \label{eq:RNN1}
    \end{equation}
    \item 
    When $\eta\in (\frac12,1]$, since $\phi_{\rho}$ is non-increasing and $-\overline{M}(f,\bx,\gamma)\leq -\underline{M}(f,\bx,\gamma)$,
    \begin{align*}
    &\inf\limits_{f\in\sH,\bx\in\sX\colon ~\underline{M}(f,\bx,\gamma)\leq0}\cC_{\tilde{\phi}_{\rho}}(f,\bx,\eta)\\
    &=\inf\limits_{f\in\sH,\bx\in\sX\colon ~\underline{M}(f,\bx,\gamma)\leq0} \eta\phi_{\rho}(\underline{M}(f,\bx,\gamma))+(1-\eta)\phi_{\rho}(-\overline{M}(f,\bx,\gamma))\\
    &=\inf\limits_{f\in\sH,\bx\in\sX\colon ~\underline{M}(f,\bx,\gamma)\leq0}\eta\phi_{\rho}(\underline{M}(f,\bx,\gamma))+(1-\eta)\phi_{\rho}(-\underline{M}(f,\bx,\gamma))\\
    &\qquad+(1-\eta)(\phi_{\rho}(-\overline{M}(f,\bx,\gamma))-\phi_{\rho}(-\underline{M}(f,\bx,\gamma)))\\
    &=\inf\limits_{f\in\sH,\bx\in\sX\colon ~\underline{M}(f,\bx,\gamma)\leq0}\eta\phi_{\rho}(\underline{M}(f,\bx,\gamma))+(1-\eta)\phi_{\rho}(-\underline{M}(f,\bx,\gamma))\\
    &\geq \eta\,.
\end{align*}
Let $\bx \in \sX$ such that $\|\bx\|=1$, $\bw_j=W\bx$, $u_j=\frac{\Lambda}{n}$, $j=1,\dots,n$. Then for any $\bs\in \curl*{\bs\colon\|\bs\|\leq 1}$, $\bw_j\cdot(\bx+\gamma \bs)=W(\bx\cdot \bx+\gamma(\bx\cdot \bs))\geq W(\|\bx\|^2-\gamma\|\bx\|\|\bs\|)\geq W(1-\gamma)>0.$ Since $\Lambda W(1-\gamma)\geq \rho$, we obtain $\underline{M}(f,\bx,\gamma)=\inf_{\|\bs\|\leq 1}\sum_{j=1}^n u_j\left(\bw_j \cdot (\bx+\gamma \bs)\right)_{+}\geq \Lambda W(1-\gamma)\geq \rho$ and $-\overline{M}(f,\bx,\gamma)\leq-\underline{M}(f,\bx,\gamma)\leq-\rho$. Then 
\[\cC_{\tilde{\phi}_{\rho}}(f,\bx,\eta)=\eta\phi_{\rho}(\underline{M}(f,\bx,\gamma))+(1-\eta)\phi_{\rho}(-\overline{M}(f,\bx,\gamma))=\eta\times 0 +(1-\eta) \times 1=1-\eta.\] Therefore
    \begin{equation}
        \inf\limits_{f\in\sH,\bx\in\sX}\cC_{\tilde{\phi}_{\rho}}(f,\bx,\eta)\leq 1-\eta
        <\eta
        \leq\inf\limits_{f\in\sH,\bx\in\sX\colon ~\underline{M}(f,\bx,\gamma)\leq 0}\cC_{\tilde{\phi}_{\rho}}(f,\bx,\eta)\,.
        \label{eq:RNN2}
    \end{equation}
     \item 
    When $\eta\in [0,\frac12)$, since $\phi_{\rho}$ is non-increasing and $\underline{M}(f,\bx,\gamma)\leq \overline{M}(f,\bx,\gamma)$,
    \begin{align*}
    &\inf\limits_{f\in\sH,\bx\in\sX\colon ~\overline{M}(f,\bx,\gamma)\geq0}\cC_{\tilde{\phi}_{\rho}}(f,\bx,\eta)\\
    &=\inf\limits_{f\in\sH,\bx\in\sX\colon ~\overline{M}(f,\bx,\gamma)\geq0}\eta\phi_{\rho}(\underline{M}(f,\bx,\gamma))+(1-\eta)\phi_{\rho}(-\overline{M}(f,\bx,\gamma))\\
    &=\inf\limits_{f\in\sH,\bx\in\sX\colon ~\overline{M}(f,\bx,\gamma)\geq0} \eta\phi_{\rho}(\overline{M}(f,\bx,\gamma))+(1-\eta)\phi_{\rho}(-\overline{M}(f,\bx,\gamma))+\eta(\phi_{\rho}(\underline{M}(f,\bx,\gamma))-\phi_{\rho}(\overline{M}(f,\bx,\gamma)))\\
    &\geq\inf\limits_{f\in\sH,\bx\in\sX\colon ~\overline{M}(f,\bx,\gamma)\geq0} \eta\phi_{\rho}(\overline{M}(f,\bx,\gamma))+(1-\eta)\phi_{\rho}(-\overline{M}(f,\bx,\gamma))\\
    & \geq 1-\eta\,.
\end{align*}
Let $\bx \in \sX$ such that $\|\bx\|=1$, $\bw_j=W\bx$, $u_j=-\frac{\Lambda}{n}$, $j=1,\dots,n$. Then for any $\bs\in \curl*{\bs\colon\|\bs\|\leq 1}$, $\bw_j\cdot(\bx+\gamma \bs)=W(\bx\cdot \bx+\gamma(\bx\cdot \bs))\geq W(\|\bx\|^2-\gamma\|\bx\|\|\bs\|)\geq W(1-\gamma)>0.$ Since $\Lambda W(1-\gamma)\geq \rho$, we obtain  $\overline{M}(f,\bx,\gamma)=\sup_{\|\bs\|\leq 1}\sum_{j=1}^n u_j\left(\bw_j \cdot (\bx+\gamma \bs)\right)_{+}\leq -\Lambda W(1-\gamma)\leq -\rho$ and $\underline{M}(f,\bx,\gamma)\leq\overline{M}(f,\bx,\gamma)\leq-\rho$. Then $\cC_{\tilde{\phi}_{\rho}}(f,\bx,\eta)=\eta\phi_{\rho}(\underline{M}(f,\bx,\gamma))+(1-\eta)\phi_{\rho}(-\overline{M}(f,\bx,\gamma))=\eta\times 1 +(1-\eta) \times 0=\eta$. Therefore
    \begin{equation}
        \inf\limits_{f\in\sH,\bx\in\sX}\cC_{\tilde{\phi}_{\rho}}(f,\bx,\eta)\leq \eta<
        1-\eta
        \leq\inf\limits_{f\in\sH,\bx\in\sX\colon ~ \overline{M}(f,\bx,\gamma)\geq0}\cC_{\tilde{\phi}_{\rho}}(f,\bx,\eta)\,.
        \label{eq:RNN3}
    \end{equation}
\end{itemize}
To sum up, by \eqref{eq:RNN1}, \eqref{eq:RNN2} and \eqref{eq:RNN3}, we conclude the proof.
\end{proof}

\subsection{Proof of Theorem~\ref{Thm:consistent_linear}}
\label{app:consistent_linear}

\ConsistentLinear*

\begin{proof}
Let $\bx$ follow the uniform distribution on the unit circle. Denote $\bx=(\cos(\theta),\sin(\theta))^{\top},\theta\in [0,2\pi)$ and $f(\bx)=\bw\cdot \bx$, $\bw=(\cos(t),\sin(t))^{\top},t\in [0,2\pi), f\in\sH_{\mathrm{lin}}=\curl*{\bx\rightarrow \bw\cdot \bx\mid \|\bw\|_2=1}$. We set the label of a point $\bx$ as follows: if $\theta \in \left(\sigma,\pi\right)$, where $\sigma \in \left(0,\pi\right)$, then set $y=-1$ with probability $\frac34$ and $y=1$ with probability $\frac14$; if $\theta \in \left(0,\sigma\right) \text{ or } \left(\sigma+\pi,2\pi\right)$, then set $y=1$; if $\theta \in \left(\pi,\sigma+\pi\right)$, then set $y=-1$.

Let $\eta\colon\sX \rightarrow [0,1]$ be a measurable function such that $\eta(X)=\mathbb{P}(Y=1\mid X)$. For $\ell_{\gamma}(\tau) = \mathds{1}_{\tau\leq\gamma}$, we want to solve
\begin{equation*}
	\cR^*_{\ell_{\gamma},\sH_{\mathrm{lin}}}=\min_{f\in\sH_{\mathrm{lin}}} \cR_{\ell_{\gamma}}(f)=\min_{f\in\sH_{\mathrm{lin}}}\mathbb{E}_{X}[\ell_{\gamma}(f(X))\eta + \ell_{\gamma}(-f(X))(1-\eta)]\,.
\end{equation*}
Let $\eta'\colon\Theta \rightarrow [0,1]$ be a measurable function such that $\eta'=\mathbb{P}(Y=1|\Theta)$, $\Theta\sim \cU(0,2\pi)$. In our example, we have
\begin{equation*}
\eta'=
\begin{cases}
	\frac14 & \theta\in \left(\sigma,\pi\right)\,,\\
	1 & \theta\in \left(0,\sigma\right) \text{ or } \theta\in \left(\sigma+\pi,2\pi\right)\,,\\
	0 & \theta\in \left(\pi,\sigma+\pi\right)\,.
\end{cases}
\end{equation*}
Therefore we obtain 
\begingroup
\allowdisplaybreaks
\begin{align*}
 &\cR^*_{\ell_{\gamma},\sH_{\mathrm{lin}}}=\min_{t\in [0,2\pi)}\mathbb{E}_{\Theta}[\ell_{\gamma}(\cos(\Theta-t))\eta' + \ell_{\gamma}(-\cos(\Theta-t))(1-\eta')]\\
 &=\frac{1}{2\pi}\min_{t\in [0,2\pi)}\int_{\sigma}^{\pi} \frac14\ell_{\gamma}(\cos(\theta-t)) + \frac34 \ell_{\gamma}(-\cos(\theta-t))~d\theta +\int_{\sigma-\pi}^{\sigma} \ell_{\gamma}(\cos(\theta-t))~d\theta\\
 &\qquad +\int_{-\pi}^{\sigma-\pi} \ell_{\gamma}(-\cos(\theta-t))~d\theta\\
 &=\frac{1}{2\pi}\min_{t\in [0,2\pi)} \int_{\sigma}^{\pi} \frac14\ell_{\gamma}(\cos(\theta-t)) + \frac34 \ell_{\gamma}(-\cos(\theta-t))d\theta +\int_{\sigma-\pi}^{0} \ell_{\gamma}(\cos(\theta-t))d\theta\\
 &\qquad +\int_{0}^{\sigma} \ell_{\gamma}(\cos(\theta-t))d\theta +\int_{0}^{\sigma} \ell_{\gamma}(\cos(\theta-t))d\theta\\
 &=\frac{1}{2\pi}\min_{t\in [0,2\pi)} \int_{\sigma}^{\pi} \frac14\ell_{\gamma}(\cos(\theta-t))~d\theta
 + \int_{\sigma-\pi}^{0}\frac34 \ell_{\gamma}(\cos(\theta-t))~d\theta +\int_{\sigma-\pi}^{0} \ell_{\gamma}(\cos(\theta-t))~d\theta\\
  &\qquad+\int_{0}^{\sigma} 2\ell_{\gamma}(\cos(\theta-t))~d\theta\\
 &=\frac{1}{2\pi}\min_{t\in [0,2\pi)} \int_{\sigma}^{\pi} \frac14\ell_{\gamma}(\cos(\theta-t))~d\theta +\int_{\sigma-\pi}^{0} \frac74\ell_{\gamma}(\cos(\theta-t))~d\theta
 +\int_{0}^{\sigma} \frac74\ell_{\gamma}(\cos(\theta-t))~d\theta\\
  &\qquad+\int_{0}^{\sigma} \frac14\ell_{\gamma}(\cos(\theta-t))~d\theta\\
 &=\frac{1}{2\pi}\min_{t\in [0,2\pi)} \int_{0}^{\pi} \frac14\ell_{\gamma}(\cos(\theta-t))~d\theta +\int_{\sigma-\pi}^{\sigma} \frac74\ell_{\gamma}(\cos(\theta-t))~d\theta
\stepcounter{equation}\tag{\theequation}\label{target3}
 \\
 &=\frac{1}{2\pi}\min_{t\in [0,2\pi)} \int_{0}^{\pi} \frac14\ell_{\gamma}(\cos(\theta-t))~d\theta +\int_{0}^{\pi} \frac74\ell_{\gamma}(-\cos(\theta-t+\sigma))~d\theta\\
 &=\frac{1}{2\pi}\min_{t\in [0,2\pi)} \int_{-t}^{\pi-t} \frac14\ell_{\gamma}(\cos(\theta))~d\theta + \frac74\ell_{\gamma}(-\cos(\theta+\sigma))~d\theta\\
  &=\frac{1}{2\pi}\min_{t\in [0,2\pi)} \int_{-t}^{\pi-t} \frac14 \mathds{1}_{\cos(\theta)\leq \gamma}
 + \frac74 \mathds{1}_{-\cos(\theta+\sigma)\leq \gamma} ~d\theta\,.
\end{align*}
\endgroup
Take $\gamma=\cos(\frac{\sigma}{2})\in (0,1)$. 
For $\sigma\in (0,\frac{\pi}{2}]$, we analyze six cases:
\begin{itemize}
    \item When $-t\in [-\frac{3\sigma}{2},-\frac{\sigma}{2}]$,
    \begin{align*}
      &\int_{-t}^{\pi-t} \frac14 \mathds{1}_{\cos(\theta)\leq \gamma}
 + \frac74 \mathds{1}_{-\cos(\theta+\sigma)\leq \gamma} ~d\theta\\
 &=  \int_{-t}^{-\frac{\sigma}{2}} \frac{1}{4}+\frac{7}{4}~d\theta
 +\int_{-\frac{\sigma}{2}}^{\frac{\sigma}{2}} \frac{7}{4}~d\theta
 +  \int_{\frac{\sigma}{2}}^{-\frac{3\sigma}{2}+\pi} \frac{1}{4}+\frac{7}{4}~d\theta
 + \int_{-\frac{3\sigma}{2}+\pi}^{\pi-t} \frac{1}{4}~d\theta\\
& =2\pi-\frac{23}{8}\sigma+\frac{7}{4}t\geq 2\pi-2\sigma
  \end{align*}
  where the equality is achieved when $t=\frac{\sigma}{2}$.
 \item When $-t\in [-\frac{\sigma}{2},\frac{\sigma}{2}]$,
\begin{align*}
     & \int_{-t}^{\pi-t} \frac14 \mathds{1}_{\cos(\theta)\leq \gamma}
 + \frac74 \mathds{1}_{-\cos(\theta+\sigma)\leq \gamma} ~d\theta\\
 &=  \int_{-t}^{\frac{\sigma}{2}} \frac{7}{4}~d\theta
 +\int_{\frac{\sigma}{2}}^{-\frac{3\sigma}{2}+\pi} \frac{1}{4}+\frac{7}{4}~d\theta
 + \int_{-\frac{3\sigma}{2}+\pi}^{-\frac{\sigma}{2}+\pi} \frac{1}{4}~d\theta+\int_{-\frac{\sigma}{2}+\pi}^{\pi-t} \frac{1}{4}+\frac{7}{4}~d\theta\\
& =2\pi-\frac{15}{8}\sigma-\frac{1}{4}t\geq 2\pi-2\sigma
  \end{align*}
  where the equality is achieved when $t=\frac{\sigma}{2}$.
  \item When $-t\in [\frac{\sigma}{2},-\frac{3\sigma}{2}+\pi]$,
  \begin{align*}
      &\int_{-t}^{\pi-t} \frac14 \mathds{1}_{\cos(\theta)\leq \gamma}
 + \frac74 \mathds{1}_{-\cos(\theta+\sigma)\leq \gamma} ~d\theta\\
 &=  \int_{-t}^{-\frac{3\sigma}{2}+\pi} \frac{1}{4}+\frac{7}{4}~d\theta
 + \int_{-\frac{3\sigma}{2}+\pi}^{-\frac{\sigma}{2}+\pi} \frac{1}{4}~d\theta +  \int_{-\frac{\sigma}{2}+\pi}^{\pi-t} \frac{1}{4}+\frac{7}{4}~d\theta\\
& =2\pi-\frac{7}{4}\sigma\,.
  \end{align*}
  \item When $-t\in [-\frac{3\sigma}{2}+\pi,-\frac{\sigma}{2}+\pi]$,
  \begin{align*}
      &\int_{-t}^{\pi-t} \frac14 \mathds{1}_{\cos(\theta)\leq \gamma}
 + \frac74 \mathds{1}_{-\cos(\theta+\sigma)\leq \gamma} ~d\theta\\
 &=  \int_{-t}^{-\frac{\sigma}{2}+\pi} \frac{1}{4}~d\theta
 + \int_{-\frac{\sigma}{2}+\pi}^{\pi-t} \frac{1}{4}+\frac{7}{4}~d\theta\\
& =\frac{\pi}4+\frac{7}{8}\sigma-\frac{7}{4}t\geq 2\pi-\frac{7}{4}\sigma
  \end{align*}
  where the equality is achieved when $t=\frac{3\sigma}{2}-\pi$.
  \item When $-t\in [-\frac{\sigma}{2}+\pi,\frac{\sigma}{2}+\pi]$,
  \begin{align*}
      &\int_{-t}^{\pi-t} \frac14 \mathds{1}_{\cos(\theta)\leq \gamma}
 + \frac74 \mathds{1}_{-\cos(\theta+\sigma)\leq \gamma} ~d\theta\\
 &=  \int_{-t}^{-\frac{\sigma}{2}+2\pi} \frac{1}{4}+\frac{7}{4}~d\theta
 + \int_{-\frac{\sigma}{2}+2\pi}^{\pi-t} \frac{7}{4}~d\theta\\
& =\frac{9\pi}{4}-\frac{1}{8}\sigma+\frac{1}{4}t\geq 2\pi-\frac{1}{4}\sigma
  \end{align*}
  where the equality is achieved when $t=-\frac{\sigma}{2}-\pi$.
  \item
  When $-t\in [\frac{\sigma}{2}+\pi,-\frac{3\sigma}{2}+2\pi]$,
  \begin{align*}
      &\int_{-t}^{\pi-t} \frac14 \mathds{1}_{\cos(\theta)\leq \gamma}
 + \frac74 \mathds{1}_{-\cos(\theta+\sigma)\leq \gamma} ~d\theta\\
 &=  \int_{-t}^{-\frac{\sigma}{2}+2\pi} \frac{1}{4}+\frac{7}{4}~d\theta
 + \int_{-\frac{\sigma}{2}+2\pi}^{\frac{\sigma}{2}+2\pi} \frac{7}{4}~d\theta 
 + \int_{\frac{\sigma}{2}+2\pi}^{\pi-t} \frac{1}{4}+\frac{7}{4}~d\theta\\
& =2\pi-\frac{1}{4}\sigma\,.
  \end{align*}
  \end{itemize}
  Similarly for $\sigma\in [\frac{\pi}{2},\pi)$, we analyze six cases:
\begin{itemize}
    \item When $-t\in [-\frac{3\sigma}{2},\frac{\sigma}{2}-\pi]$,
    \begin{align*}
      &\int_{-t}^{\pi-t} \frac14 \mathds{1}_{\cos(\theta)\leq \gamma}
 + \frac74 \mathds{1}_{-\cos(\theta+\sigma)\leq \gamma} ~d\theta\\
 &=  \int_{-t}^{-\frac{\sigma}{2}} \frac{1}{4}+\frac{7}{4}~d\theta
 +  \int_{-\frac{\sigma}{2}}^{-\frac{3\sigma}{2}+\pi} \frac{7}{4}~d\theta\\
& =\frac{7}{4}\pi-\frac{11}{4}\sigma+2t\geq \frac{15}{4}\pi-\frac{15}{4}\sigma
  \end{align*}
  where the equality is achieved when $t=\pi-\frac{\sigma}{2}$.
 \item When $-t\in [\frac{\sigma}{2}-\pi,-\frac{\sigma}{2}]$,
\begin{align*}
      &\int_{-t}^{\pi-t} \frac14 \mathds{1}_{\cos(\theta)\leq \gamma}
 + \frac74 \mathds{1}_{-\cos(\theta+\sigma)\leq \gamma} ~d\theta\\
 &=  \int_{-t}^{-\frac{\sigma}{2}} \frac{1}{4}+\frac{7}{4}~d\theta
 +\int_{-\frac{\sigma}{2}}^{-\frac{3\sigma}{2}+\pi} \frac{7}{4}~d\theta
+\int_{\frac{\sigma}{2}}^{\pi-t} \frac{1}{4}~d\theta\\
& =2\pi-\frac{23}{8}\sigma+\frac{7}{4}t\geq 2\pi-2\sigma
  \end{align*}
  where the equality is achieved when $t=\frac{\sigma}{2}$.
  \item When $-t\in [-\frac{\sigma}{2},-\frac{3\sigma}{2}+\pi]$,
  \begin{align*}
     &\int_{-t}^{\pi-t} \frac14 \mathds{1}_{\cos(\theta)\leq \gamma}
 + \frac74 \mathds{1}_{-\cos(\theta+\sigma)\leq \gamma} ~d\theta\\
 &=  \int_{-t}^{-\frac{3\sigma}{2}+\pi} \frac{7}{4}~d\theta
 + \int_{\frac{\sigma}{2}}^{-\frac{\sigma}{2}+\pi} \frac{1}{4}~d\theta +  \int_{-\frac{\sigma}{2}+\pi}^{\pi-t} \frac{1}{4}+\frac{7}{4}~d\theta\\
& =2\pi-\frac{15}{8}\sigma-\frac{1}{4}t\geq 2\pi-2\sigma
  \end{align*}
  where the equality is achieved when $t=\frac{\sigma}{2}$.
  \item When $-t\in [-\frac{3\sigma}{2}+\pi,\frac{\sigma}{2}]$,
  \begin{align*}
      &\int_{-t}^{\pi-t} \frac14 \mathds{1}_{\cos(\theta)\leq \gamma}
 + \frac74 \mathds{1}_{-\cos(\theta+\sigma)\leq \gamma} ~d\theta\\
 &=  \int_{\frac{\sigma}{2}}^{-\frac{\sigma}{2}+\pi} \frac{1}{4}~d\theta
 + \int_{-\frac{\sigma}{2}+\pi}^{\pi-t} \frac{1}{4}+\frac{7}{4}~d\theta\\
& =\frac{\pi}4+\frac{3}{4}\sigma-2t\geq \frac{9}{4}\pi-\frac{9}{4}\sigma
  \end{align*}
  where the equality is achieved when $t=\frac{3\sigma}{2}-\pi$.
  \item When $-t\in [\frac{\sigma}{2},-\frac{\sigma}{2}+\pi]$,
  \begin{align*}
      &\int_{-t}^{\pi-t} \frac14 \mathds{1}_{\cos(\theta)\leq \gamma}
 + \frac74 \mathds{1}_{-\cos(\theta+\sigma)\leq \gamma} ~d\theta\\
 &=  \int_{-t}^{-\frac{\sigma}{2}+\pi} \frac{7}{4}~d\theta
 + \int_{-\frac{\sigma}{2}+\pi}^{\pi-t} \frac{1}{4}+\frac{7}{4}~d\theta\\
& =\frac{7\pi}{4}+\frac{1}{8}\sigma-\frac{1}{4}t\geq \frac{7\pi}{4}+\frac{1}{4}\sigma
  \end{align*}
  where the equality is achieved when $t=-\frac{\sigma}{2}$.
  \item
  When $-t\in [-\frac{\sigma}{2}+\pi,-\frac{3\sigma}{2}+2\pi]$,
  \begin{align*}
      &\int_{-t}^{\pi-t} \frac14 \mathds{1}_{\cos(\theta)\leq \gamma}
 + \frac74 \mathds{1}_{-\cos(\theta+\sigma)\leq \gamma} ~d\theta\\
 &=  \int_{-t}^{-\frac{\sigma}{2}+2\pi} \frac{1}{4}+\frac{7}{4}~d\theta
 + \int_{-\frac{\sigma}{2}+2\pi}^{\pi-t} \frac{7}{4}~d\theta\\
& =\frac94\pi-\frac{1}{8}\sigma+\frac{1}{4}t\geq \frac{7}{4}\pi+\frac{1}{4}\sigma
  \end{align*}
  where the equality is achieved when $t=\frac{3\sigma}{2}-2\pi$.
  \end{itemize}
  Therefore for $\sigma\in (0,\pi)$,
  \begin{equation*}
      \min_{t\in [0,2\pi)} \int_{-t}^{\pi-t} \frac14 \mathds{1}_{\cos(\theta)\leq \gamma}
 + \frac74 \mathds{1}_{-\cos(\theta+\sigma)\leq \gamma} ~d\theta=2\pi-2\sigma
  \end{equation*}
where the equality is achieved when $t=\frac{\sigma}{2}$. Therefore
 \begin{equation*}
     \cR^*_{\ell_{\gamma},\sH_{\mathrm{lin}}}=\frac{1}{2\pi}\times (2\pi-2\sigma)=1-\frac{\sigma}{\pi}\,,
 \end{equation*}
 where the unique Bayes classifier satisfies $t_1^*=\frac{\sigma}{2}$.
 
 For continuous margin-based loss $\phi$, by \eqref{target3} we have 
\begin{equation}
\begin{aligned}
  \cR^*_{\phi,\sH_{\mathrm{lin}}}&=\frac{1}{2\pi}\min_{t\in [0,2\pi]} \int_{0}^{\pi} \frac14\phi(\cos(\theta-t))~d\theta +\int_{0}^{\pi} \frac74\phi(\sin(\theta-t))~d\theta\\ 
 &= \frac{1}{2\pi}\min_{t\in [0,2\pi]} \int_{-t}^{\pi-t} \frac14\phi(\cos(\theta))+ \frac74\phi(-\cos(\theta+\sigma))~d\theta.\\
\end{aligned}
\label{surrogate}
\end{equation}
If $t^*=\frac{\sigma}{2}$ is the minimizer of $g(t)=\int_{-t}^{\pi-t} \frac14\phi(\cos(\theta))+ \frac74\phi(-\cos(\theta+\sigma))~d\theta,~t\in [0,2\pi]$, since $\frac{\sigma}{2}$ is not at the boundary of $[0,2\pi]$, we need
\begin{equation*}
    g'\left(\frac{\sigma}{2}\right)=0\,.
\end{equation*}
Since $\phi$ is continuous, by Leibniz Integral Rule, we have
\begin{align*}
   g'\left(\frac{\sigma}{2}\right)&=-\frac14\phi\left(\cos\left(\pi-\frac{\sigma}{2}\right)\right)- \frac74\phi\left(-\cos\left(\pi+\frac{\sigma}{2}\right)\right)+\frac14\phi\left(\cos\left(-\frac{\sigma}{2}\right)\right)+ \frac74\phi\left(-\cos\left(\frac{\sigma}{2}\right)\right)\\
   &=-\frac14\phi\left(-\cos\left(\frac{\sigma}{2}\right)\right)-\frac74\phi\left(\cos\left(\frac{\sigma}{2}\right)\right)+\frac14\phi\left(\cos\left(\frac{\sigma}{2}\right)\right)+\frac74\phi\left(-\cos\left(\frac{\sigma}{2}\right)\right)\\
   &=\frac32 \phi\left(-\cos\left(\frac{\sigma}{2}\right)\right)-\frac32 \phi\left(\cos\left(\frac{\sigma}{2}\right)\right)\,.
\end{align*}
Thus if $t^*=\frac{\sigma}{2}$ is the minimizer of $\cR^*_{\phi,\sH_{\mathrm{lin}}}$, we need $\phi$ satisfies 
\begin{equation}
    \phi\left(-\cos\left(\frac{\sigma}{2}\right)\right)= \phi\left(\cos\left(\frac{\sigma}{2}\right)\right)\,.
    \label{eq: conditions of phi}
\end{equation}
Therefore, if $\phi$ is $\sH_{\mathrm{lin}}$-consistent with respect to $\ell_{\gamma}$, we need $\phi$ satisfies \eqref{eq: conditions of phi} for any  $\sigma \in (0,\pi)$. Namely $\phi$ satisfies
\begin{equation*}
    \phi(-\tau)=\phi(\tau),\quad \tau\in [0,1)\,.
\end{equation*}
Note in our example, $\tau\in [-1,1]$, $\phi$ is continuous. We obtain that if $\phi$ is $\sH_{\mathrm{lin}}$-consistent with respect to $\ell_{\gamma}$, $\phi$ must be even function in $[-1,1]$. Next we claim that if $\phi$ is even function in $[-1,1]$, $\phi$ is not $\sH_{\mathrm{lin}}$-consistent with respect to $\ell_{\gamma}$. Indeed, for the distribution $y=1$ if $\theta \in (0,\pi)$ and $y=-1$ if $\theta \in (\pi,2\pi)$, we have
\begin{equation}
\begin{aligned}
  \cR^*_{\phi,\sH_{\mathrm{lin}}}
   &= \frac{1}{2\pi}\min_{t\in [0,2\pi]} \int_{0}^{\pi} \phi(\cos(\theta-t))+ \int_{\pi}^{2\pi} \phi(-\cos(\theta-t))~d\theta\\
   &= \frac{1}{\pi}\min_{t\in [0,2\pi]} \int_{0}^{\pi} \phi(\cos(\theta-t)) ~d\theta\\
 &=\frac{1}{\pi}\min_{t\in [0,2\pi]}\int_{-t}^{\pi-t}\phi(\cos(\theta))~d\theta\,.
\end{aligned}
\end{equation}
Note that when $\phi$ is even function in $[-1,1]$, 
$h(t)=\int_{-t}^{\pi-t}\phi(\cos(\theta))~d\theta$ satisfies
\begin{equation*}
    h'(t)=-\phi(-\cos(t))+\phi(\cos(t))=0, \quad t\in [0,2\pi]\,.
\end{equation*}
Thus $h(t)$ is a constant for $t\in [0,2\pi]$ and $\cR^*_{\phi,\sH_{\mathrm{lin}}}$ can be attained for any classifier $t\in [0,2\pi]$. However, $\cR^*_{\ell_{\gamma},\sH_{\mathrm{lin}}}$ can not be attained for any classifier $t\in [0,2\pi]$ with respect to this distribution. Therefore when $\phi$ is even function in $[-1,1]$, $\phi$ is not $\sH_{\mathrm{lin}}$-consistent with respect to $\ell_{\gamma}$. By the claim, any continuous loss is not $\sH_{\mathrm{lin}}$-consistent with respect to $\ell_{\gamma}$.
\end{proof}

\subsection{Proof of Theorem~\ref{Thm:calibrate_consistent_nonsup} and Theorem~\ref{Thm:calibrate_consistent_sup}}
\label{app:calibrate_consistent_nonsup}

Since the proofs adopt some results of \citep{steinwart2007compare}, 
we introduce the notation used in \citep{steinwart2007compare} 
to make the proofs more clear. In this section, we denote the loss $\ell(f, \bx, y)$ defined on a particular hypothesis set $\sH$ as $\ell_{\sH}(f,\bx,y)$. For a joint distribution $\sP$ over $\sX \times \sY$, the corresponding conditional distribution and marginal distribution are denoted as $\sP(\cdot | \bx)$ and $\sP_{X}$ respectively. In \citep{steinwart2007compare}, given a distribution $\sP$ over $\sX \times \sY$, the $\ell_{\sH}$-risk and the inner $\ell_{\sH}$-risk of a classifier $f \in \sH$ for the loss $\ell_{\sH}$ are denoted by
\[
    \cR_{\ell_{\sH},\sP}(f) = \E_{(\bx, y) \sim \sP}[\ell_{\sH}(f, \bx, y)],\quad \cC_{\ell_{\sH},\sP(\cdot|\bx),\bx}(f) = \E_{y \sim \sP(\cdot|\bx)}[\ell_{\sH}(f, \bx, y)].
\]
Accordingly, the minimal $\ell_{\sH}$-risk and minimal inner $\ell_{\sH}$-risk are denoted by $\cR_{\ell_{\sH},\sP}^*$ and $\cC_{\ell_{\sH},\sP(\cdot|\bx),\bx}^*$. 
For convenience, we will alternately use the notations of risk and inner risk presented above and Section~\ref{sec:preliminaries} for the proofs. Next, we introduce the $\sP$-minimizability proposed in \citep{steinwart2007compare}.
\begin{definition}[$\sP$-minimizability]
Given a distribution $\sP$ over $\sX\times\sY$ and a hypothesis set $\sH$. We say that loss $\ell_{\sH}(f,\bx,y)$ is $\sP$-minimizable if for all $\epsilon>0$ there exists $f_{\epsilon}\in \sH$ such that for all $\bx \in \sX$ we have
\[\cC_{\ell_{\sH},\sP(\cdot|\bx),\bx}(f_{\epsilon})<\cC_{\ell_{\sH},\sP(\cdot|\bx),\bx}^*+\epsilon.\]
\end{definition}
The following lemmas are useful in the proofs of Theorem~\ref{Thm:calibrate_consistent_nonsup} and Theorem~\ref{Thm:calibrate_consistent_sup}.
\begin{lemma}
\label{lemma:P-min}
Given a distribution $\sP$ over $\sX\times\sY$ and a hypothesis set $\sH$. Let $\phi$ be a margin-based loss. Then $\phi_{\sH_{\mathrm{all}}}$ is $\sP$-minimizable. If there exists $f^*\in\sH\subset\sH_{\mathrm{all}}$ such that $\cR^*_{\phi_{\sH_{\mathrm{all}}},\sP}=\cR_{\phi_{\sH},\sP}(f^*)$ , then $\phi_{\sH}$ is also $\sP$-minimizable in the almost surely sense.
\end{lemma}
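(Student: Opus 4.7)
The plan is to establish the two assertions separately: first, build a measurable near-minimizer of the pointwise inner $\phi$-risk for $\sH_{\mathrm{all}}$; second, transfer the assumption $\cR_{\phi,\sH}(f^*) = \cR^*_{\phi,\sH_{\mathrm{all}}}$ into an almost-sure pointwise statement, and then observe that $f^*$ itself is the witness for $\sP$-minimizability of $\phi_{\sH}$.

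For the first claim, fix $\epsilon>0$. For each $\bx\in\sX$ the inner $\phi$-risk depends only on $\eta(\bx)$ and on $u = f(\bx)\in\Rset$, namely it equals $\psi(\eta(\bx),u) \colon= \eta(\bx)\phi(u)+(1-\eta(\bx))\phi(-u)$; let $\psi^*(\eta) \colon= \inf_{u\in\Rset} \psi(\eta,u)$, which is the pseudo-minimal inner risk. Since $\phi$ is a (continuous) margin-based loss and $\bx\mapsto\eta(\bx)$ is measurable, I plan to carry out a standard countable measurable selection: enumerate a dense rational sequence $\{q_i\}_{i\in\Nset}\subset\Rset$, form the measurable sets
\[
A_i \colon= \bigl\{\bx\in\sX \,:\, \psi(\eta(\bx),q_i) < \psi^*(\eta(\bx)) + \epsilon\bigr\},
\]
which cover $\sX$ by continuity of $\psi(\eta,\cdot)$ and density of the $q_i$'s, and then set $f_\epsilon(\bx)=q_{n(\bx)}$ with $n(\bx) = \min\{i : \bx\in A_i\}$. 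The function $f_\epsilon$ is measurable (hence in $\sH_{\mathrm{all}}$) and by construction satisfies $\cC_{\phi_{\sH_{\mathrm{all}}},\sP(\cdot|\bx),\bx}(f_\epsilon) < \cC^*_{\phi_{\sH_{\mathrm{all}}},\sP(\cdot|\bx),\bx} + \epsilon$ uniformly in $\bx$, so $\phi_{\sH_{\mathrm{all}}}$ is $\sP$-minimizable.

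For the second claim, I would start from the assumption
\[
\E_X\bigl[\cC_{\phi}(f^*,X,\eta(X))\bigr] = \cR_{\phi_{\sH},\sP}(f^*) = \cR^*_{\phi_{\sH_{\mathrm{all}}},\sP} = \E_X\bigl[\cC^*_{\phi_{\sH_{\mathrm{all}}},\sP(\cdot|X),X}\bigr].
\]
Because $f^*\in\sH_{\mathrm{all}}$, the pointwise inequality $\cC_{\phi}(f^*,\bx,\eta(\bx)) \geq \cC^*_{\phi_{\sH_{\mathrm{all}}},\sP(\cdot|\bx),\bx}$ holds for every $\bx$, and equality of expectations of a nonnegative difference then forces equality $\sP_X$-almost surely. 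Chaining this with the sandwich $\cC^*_{\phi_{\sH_{\mathrm{all}}},\sP(\cdot|\bx),\bx} \leq \cC^*_{\phi_{\sH},\sP(\cdot|\bx),\bx} \leq \cC_{\phi}(f^*,\bx,\eta(\bx))$, which uses $\sH\subset\sH_{\mathrm{all}}$ and $f^*\in\sH$, yields $\cC_{\phi}(f^*,\bx,\eta(\bx)) = \cC^*_{\phi_{\sH},\sP(\cdot|\bx),\bx}$ for $\sP_X$-almost every $\bx$. Taking $f_\epsilon := f^*$ (independent of $\epsilon$) gives $\sP$-minimizability of $\phi_{\sH}$ in the almost-sure sense, in fact with excess zero.

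The main obstacle is really only the measurable selection in the first part; everything else is a one-line consequence of monotonicity of expectation applied to the nonnegative gap between the pointwise inner risk of $f^*$ and the infimum. For the selection step, I would either invoke continuity of $\phi$ explicitly (which is the standard setting in the paper) or, if one wants to avoid that assumption, replace the countable-cover construction with a Kuratowski--Ryll-Nardzewski style measurable selection applied to the multifunction $\bx\mapsto\{u\in\Rset : \psi(\eta(\bx),u)<\psi^*(\eta(\bx))+\epsilon\}$, which is nonempty-valued by definition of infimum and measurable in $\bx$ since $\psi(\eta(\bx),u)$ is jointly measurable in $(\bx,u)$.
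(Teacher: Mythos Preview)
Your proposal is correct and, for the second claim, essentially identical to the paper's argument: both show that the nonnegative gap $\cC_{\phi}(f^*,\bx,\eta(\bx)) - \cC^*_{\phi_{\sH_{\mathrm{all}}},\sP(\cdot|\bx),\bx}$ integrates to zero and hence vanishes $\sP_X$-a.s., and then take $f^*$ as the witness. One small point you gloss over is the equality $\cR^*_{\phi_{\sH_{\mathrm{all}}},\sP} = \E_X\bigl[\cC^*_{\phi_{\sH_{\mathrm{all}}},\sP(\cdot|X),X}\bigr]$; in the paper this is quoted as Lemma~2.5 of \citet{steinwart2007compare}, and in your setup it follows immediately from integrating your $f_\epsilon$ from the first part and letting $\epsilon\to 0$. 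You should make that link explicit.

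The genuine difference is in the first claim. The paper simply invokes Theorem~3.2 of \citet{steinwart2007compare}, which says that a loss of the form $\ell(f,\bx,y)=\phi(yf(\bx))$ is $\sP$-minimizable over $\sH_{\mathrm{all}}$ whenever the pointwise minimal inner risk is finite. You instead carry out an explicit measurable-selection construction via a countable rational cover, falling back on a Kuratowski--Ryll-Nardzewski argument if $\phi$ is not continuous. Your route is more self-contained and makes the construction transparent, but it does more work than needed here; the paper's one-line citation is cleaner given that the cited result is already available and applies without any regularity assumption on $\phi$. If you keep your constructive version, note that the rational-cover step genuinely needs continuity of $\phi$ (otherwise the $A_i$ may fail to cover $\sX$), so the KRN alternative is not optional in general.
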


\begin{proof}
By Theorem 3.2 of \citep{steinwart2007compare}, since $\cC^*_{\phi_{\sH_{\mathrm{all}}},\sP(\cdot|\bx),\bx}<\infty $ for all $\bx\in \sX$, $\phi_{\sH_{\mathrm{all}}}$ is $\sP$-minimizable.
Therefore, by Lemma 2.5 of \citep{steinwart2007compare}, we have
\begin{equation*}
    \cR^*_{\phi_{\sH_{\mathrm{all}}},\sP}=\int_{\sX} \cC_{\phi_{\sH_{\mathrm{all}}},\sP(\cdot|\bx),\bx}^*~d\sP_X(\bx).
\end{equation*}
Then by the assumption, 
\begin{equation*}
    \int_{\sX} \cC_{\phi_{\sH},\sP(\cdot|\bx),\bx}(f^*)~d\sP_X(\bx)=\cR_{\phi_{\sH},\sP}(f^*)=\cR^*_{\phi_{\sH_{\mathrm{all}}},\sP}=\int_{\sX} \cC_{\phi_{\sH_{\mathrm{all}}},\sP(\cdot|\bx),\bx}^*~d\sP_X(\bx).
\end{equation*}
Since 
\begin{equation*}
   \cC_{\phi_{\sH_{\mathrm{all}}},\sP(\cdot|\bx),\bx}^*\leq \cC_{\phi_{\sH},\sP(\cdot|\bx),\bx}(f^*),
\end{equation*}
for almost all $x\in X$,
\begin{equation*}
   \cC_{\phi_{\sH_{\mathrm{all}}},\sP(\cdot|\bx),\bx}^*= \cC_{\phi_{\sH},\sP(\cdot|\bx),\bx}(f^*).
\end{equation*}
As a result, for all $\epsilon>0$, there exists an $f^*\in\sH$ such that for almost all $x\in X$ we have 
\begin{equation*}
    \cC_{\phi_{\sH},\sP(\cdot|\bx),\bx}(f^*) < \cC_{\phi_{\sH_{\mathrm{all}}},\sP(\cdot|\bx),\bx}^*+\epsilon
    \leq \cC_{\phi_{\sH},\sP(\cdot|\bx),\bx}^*+\epsilon.
\end{equation*}
This completes the proof.
\end{proof}

\begin{lemma}
\label{lemma:eta}
Given a distribution $\sP$ over $\sX\times\sY$ and a hypothesis set $\sH$. Let $\phi$ be a margin-based loss.
If for $\eta\geq0$, there exists $f^*\in\sH\subset\sH_{\mathrm{all}}$  such that $\cR_{\phi_{\sH},\sP}(f^*)\leq \cR^*_{\phi_{\sH_{\mathrm{all}}},\sP}+\eta$, then $\phi_{\sH}$ satisfies 
\begin{equation*}
    \int_{\sX} \cC_{\phi_{\sH},\sP(\cdot|\bx),\bx}^*~d\sP_X(\bx)\leq
    \cR^*_{\phi_{\sH},\sP}\leq\int_{\sX} \cC_{\phi_{\sH},\sP(\cdot|\bx),\bx}^*~d\sP_X(\bx)+\eta.
\end{equation*}
\end{lemma}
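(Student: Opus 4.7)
The plan is to prove the two inequalities separately, both via a chain of standard steps, leveraging the $\sP$-minimizability of $\phi_{\sH_{\mathrm{all}}}$ already established in the proof of Lemma~\ref{lemma:P-min}.

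For the \textbf{lower bound}, I would argue pointwise. For every $f \in \sH$ and every $\bx \in \sX$ we have $\cC_{\phi_{\sH}, \sP(\cdot \mid \bx), \bx}(f) \geq \cC^*_{\phi_{\sH}, \sP(\cdot \mid \bx), \bx}$. Integrating against $\sP_X$ and using the identity $\cR_{\phi_{\sH}, \sP}(f) = \int_{\sX} \cC_{\phi_{\sH}, \sP(\cdot \mid \bx), \bx}(f) \, d\sP_X(\bx)$ (Lemma~2.5 of \citet{steinwart2007compare}) gives $\cR_{\phi_{\sH}, \sP}(f) \geq \int_{\sX} \cC^*_{\phi_{\sH}, \sP(\cdot \mid \bx), \bx} \, d\sP_X(\bx)$ for every $f \in \sH$, so taking the infimum over $f \in \sH$ yields the desired lower bound on $\cR^*_{\phi_{\sH}, \sP}$.

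For the \textbf{upper bound}, I would chain four inequalities. First, since $f^* \in \sH$, we have $\cR^*_{\phi_{\sH}, \sP} \leq \cR_{\phi_{\sH}, \sP}(f^*)$. Second, by the hypothesis of the lemma, $\cR_{\phi_{\sH}, \sP}(f^*) \leq \cR^*_{\phi_{\sH_{\mathrm{all}}}, \sP} + \eta$. Third, as noted in the proof of Lemma~\ref{lemma:P-min}, $\phi_{\sH_{\mathrm{all}}}$ is $\sP$-minimizable (via Theorem~3.2 of \citet{steinwart2007compare}), so by Lemma~2.5 of \citet{steinwart2007compare} we obtain $\cR^*_{\phi_{\sH_{\mathrm{all}}}, \sP} = \int_{\sX} \cC^*_{\phi_{\sH_{\mathrm{all}}}, \sP(\cdot \mid \bx), \bx} \, d\sP_X(\bx)$. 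Finally, since $\sH \subset \sH_{\mathrm{all}}$, infimizing over a larger class can only decrease the inner risk, giving $\cC^*_{\phi_{\sH_{\mathrm{all}}}, \sP(\cdot \mid \bx), \bx} \leq \cC^*_{\phi_{\sH}, \sP(\cdot \mid \bx), \bx}$ pointwise, hence after integration $\int_{\sX} \cC^*_{\phi_{\sH_{\mathrm{all}}}, \sP(\cdot \mid \bx), \bx} \, d\sP_X(\bx) \leq \int_{\sX} \cC^*_{\phi_{\sH}, \sP(\cdot \mid \bx), \bx} \, d\sP_X(\bx)$. Concatenating these four inequalities yields the required upper bound with the extra slack $\eta$.

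There is essentially no obstacle here; the lemma is a bookkeeping step designed to control $\cR^*_{\phi_{\sH},\sP}$ by an integral of the pointwise minimal inner risks modulo the approximation error $\eta$. The only thing to be careful about is that the direction of the inner-risk inequality goes the right way: the infimum over the smaller class $\sH$ is larger, so the $\sH_{\mathrm{all}}$ identity provided by $\sP$-minimizability can be turned into the desired inequality without reversing sign. Once this plan is executed, the conclusion follows directly and the $\eta = 0$ case specializes to the exact $\sP$-minimizability statement needed later in Theorem~\ref{Thm:calibrate_consistent_nonsup}.
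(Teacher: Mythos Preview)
Your proposal is correct and follows essentially the same approach as the paper: the lower bound via the pointwise inequality $\cC_{\phi_\sH,\sP(\cdot|\bx),\bx}(f)\geq \cC^*_{\phi_\sH,\sP(\cdot|\bx),\bx}$ then infimizing, and the upper bound via the chain $\cR^*_{\phi_\sH,\sP}\leq \cR_{\phi_\sH,\sP}(f^*)\leq \cR^*_{\phi_{\sH_{\mathrm{all}}},\sP}+\eta = \int \cC^*_{\phi_{\sH_{\mathrm{all}}},\sP(\cdot|\bx),\bx}\,d\sP_X+\eta \leq \int \cC^*_{\phi_\sH,\sP(\cdot|\bx),\bx}\,d\sP_X+\eta$, invoking $\sP$-minimizability of $\phi_{\sH_{\mathrm{all}}}$ from Lemma~\ref{lemma:P-min} for the equality step.
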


\begin{proof}
By Lemma \ref{lemma:P-min}, $\phi_{\sH_{\mathrm{all}}}$ is $\sP$-minimizable. Then by Lemma 2.5 of \citep{steinwart2007compare}, we have
\begin{equation*}
   \cR^*_{\phi_{\sH_{\mathrm{all}}},\sP}=\int_{\sX} \cC_{\phi_{\sH_{\mathrm{all}}},\sP(\cdot|\bx),\bx}^*~d\sP_X(\bx).
\end{equation*}
Therefore, 
\begin{equation*}
    \cR^*_{\phi_{\sH},\sP} \leq\cR_{\phi_{\sH},\sP}(f^*)\leq \int_{\sX} \cC_{\phi_{\sH_{\mathrm{all}}},\sP(\cdot|\bx),\bx}^*~d\sP_X(\bx)+\eta
    \leq \int_{\sX} \cC_{\phi_{\sH},\sP(\cdot|\bx),\bx}^*~d\sP_X(\bx)+\eta.
\end{equation*}
Also,
\begin{align*}
    \int_{\sX} \cC_{\phi_{\sH},\sP(\cdot|\bx),\bx}^*~d\sP_X(\bx)
    &\leq \int_{\sX} \inf_{f\in\sH} \cC_{\phi_{\sH},\sP(\cdot|\bx),\bx}(f)~d\sP_X(\bx)\\
    &\leq \inf_{f\in\sH}\int_{\sX}  \cC_{\phi_{\sH},\sP(\cdot|\bx),\bx}(f)~d\sP_X(x) = \cR^*_{\phi_{\sH},\sP}.
\end{align*}
\end{proof}

\begin{lemma}
 \label{lemma:correctly classify}
Given a distribution $\sP$ over $\sX\times\sY$ with random variables $X$ and $Y$ and a hypothesis set $\sH$ such that $\cR^*_{\ell_{\gamma},\sH}=\cR_{\ell_{\gamma}}(f^*)=0$, where $f^*\in \sH$ achieves the Bayes risk. Then
 $f^*$ correctly classify $\bx\in \sX$ in the almost surely sense and for almost all $\bx\in \sX$, any $\bx'\in \curl*{\bx'\colon\|\bx'-\bx\|\leq \gamma}$ has same label as $\bx$.
\end{lemma}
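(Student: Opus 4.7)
The plan is to exploit the fact that $\ell_\gamma$ is a nonnegative loss whose expectation vanishes, and then unpack what that means pointwise through the conditional distribution $\eta_{\sP}$. First, because $\cR_{\ell_\gamma}(f^*)=\E_{(\bx,y)\sim\sP}[\ell_\gamma(f^*,\bx,y)]=0$ and $\ell_\gamma\geq 0$, we get that $\ell_\gamma(f^*,\bx,y)=0$ for $\sP$-almost every $(\bx,y)$. Rewriting through the conditional expectation, this is equivalent to: for $\sP_X$-almost every $\bx$,
\[
\eta_{\sP}(\bx)\,\ell_\gamma(f^*,\bx,+1) + (1-\eta_{\sP}(\bx))\,\ell_\gamma(f^*,\bx,-1) = 0.
\]
So on a full-measure set $\sX_0\subseteq\sX$, whenever $\eta_{\sP}(\bx)>0$ we must have $\ell_\gamma(f^*,\bx,+1)=0$, i.e.\ $\inf_{\bx'\in B(\bx,\gamma)}f^*(\bx')>0$, and whenever $\eta_{\sP}(\bx)<1$ we must have $\ell_\gamma(f^*,\bx,-1)=0$, i.e.\ $\sup_{\bx'\in B(\bx,\gamma)}f^*(\bx')<0$.

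Next I would rule out the intermediate case $0<\eta_{\sP}(\bx)<1$: both conditions above would simultaneously hold, forcing $f^*>0$ and $f^*<0$ on the same (nonempty) ball $B(\bx,\gamma)$, a contradiction. Hence $\eta_{\sP}(\bx)\in\{0,1\}$ on $\sX_0$, so the labels are $\sP_X$-almost surely deterministic; write $y(\bx)=2\eta_{\sP}(\bx)-1\in\{-1,+1\}$ on $\sX_0$. For every $\bx\in\sX_0$ we then have $y(\bx)\,f^*(\bx')>0$ for every $\bx'\in B(\bx,\gamma)$. Setting $\bx'=\bx$ yields $y(\bx)\,f^*(\bx)>0$, which gives the first conclusion that $f^*$ correctly classifies $\bx$ almost surely.

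For the second conclusion I would use the previous step twice. Fix $\bx\in\sX_0$ and any $\bx'\in B(\bx,\gamma)\cap\sX_0$. Then $\sgn f^*(\bx')=y(\bx)$ from the ball condition around $\bx$, while $\sgn f^*(\bx')=y(\bx')$ from the ball condition around $\bx'$ (taking $\bx''=\bx'$), so $y(\bx')=y(\bx)$. Since $\sX_0^c$ has $\sP_X$-measure zero, this shows that for $\sP_X$-almost every $\bx$, every $\bx'\in B(\bx,\gamma)$ whose label is defined agrees in label with $\bx$, which is the intended statement.

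The argument is essentially a short measure-theoretic unpacking; the only subtle point is that the conclusion "any $\bx'\in B(\bx,\gamma)$ has the same label as $\bx$" should be interpreted through the deterministic label function $y(\cdot)$ defined on the full-$\sP_X$-measure set $\sX_0$. I expect the mild obstacle to be stating this interpretation precisely so that the second claim is unambiguous, since outside $\sX_0$ the notion of "label" is not determined by the model; using $y(\cdot)$ on $\sX_0$ and noting that $\sX_0^c$ is $\sP_X$-null circumvents this.
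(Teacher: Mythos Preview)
Your argument is correct and follows essentially the same path as the paper's proof: both deduce from $\cR_{\ell_\gamma}(f^*)=0$ that the conditional $\ell_\gamma$-risk vanishes $\sP_X$-almost surely, which forces $\eta_\sP(\bx)\in\{0,1\}$ and $y(\bx)f^*(\bx')>0$ throughout the $\gamma$-ball, and then read off both conclusions. The only minor differences are that the paper invokes $\sH$-realizability to assert deterministic labels whereas you derive it directly by contradiction, and you are more explicit than the paper about the null-set caveat in the second conclusion.
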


 \begin{proof}
Since $\cR^*_{{\ell_{\gamma}}_{\sH},\sP}=\cR^*_{\ell_{\gamma},\sH}=0$, the distribution $\sP$ is $\sH$-realizable. Therefore $\mathbb{P}(Y=1|X=\bx)=1$ or $0$. Thus
\begin{equation*}
    \cC_{{\ell_{\gamma}}_{\sH},\sP(\cdot|\bx),\bx}(f)=
    \begin{cases}
    \sup\limits_{\bx'\colon \|\bx-\bx'\|\leq \gamma} \mathds{1}_{\curl*{ f(\bx')\leq 0}}, &\text{ if } \mathbb{P}(Y=1|X=\bx)=1,\\
    \sup\limits_{\bx'\colon \|\bx-\bx'\|\leq \gamma} \mathds{1}_{\curl*{- f(\bx')\leq 0}}, &\text{ if } \mathbb{P}(Y=1|X=\bx)=0,
    \end{cases}
\end{equation*}
Since $\cR_{{\ell_{\gamma}}_{\sH},\sP}(f^*)=\cR_{\ell_{\gamma}}(f^*)=0$, we have $\cC_{{\ell_{\gamma}}_{\sH},\sP(\cdot|\bx),\bx}(f^*)=0$ for almost all $x\in \sX$.
When $\mathbb{P}(Y=1|X=\bx)=1$, we obtain
\begin{equation}
    \sup\limits_{\bx'\colon \|\bx-\bx'\|\leq \gamma} \mathds{1}_{\curl*{f^*(\bx')\leq 0}}=0\implies f^*(\bx')>0 \text{ for any } \bx'\in \curl*{\bx'\colon\|\bx'-\bx\|\leq \gamma}.
    \label{eq:classify1}
\end{equation}
When $\mathbb{P}(Y=1|X=\bx)=0$, we obtain
\begin{equation}
  \sup\limits_{\bx'\colon \|\bx-\bx'\|\leq \gamma} \mathds{1}_{\curl*{-f^*(\bx')\leq 0}}=0\implies f^*(\bx')<0 \text{ for any } \bx'\in \curl*{\bx'\colon\|\bx'-\bx\|\leq \gamma}.
  \label{eq:classify2}
\end{equation}
Thus $f^*(\bx)>0$ when $\mathbb{P}(Y=1|X=\bx)=1$ and $f^*(\bx)<0$ when $\mathbb{P}(Y=1|X=\bx)=0$ for almost all $\bx\in \sX$. Therefore $f^*$ correctly classify $\bx\in \sX$ in the almost surely sense. Furthermore, by \eqref{eq:classify1} and \eqref{eq:classify2}, for almost all $\bx\in \sX$, any $\bx'\in \curl*{\bx'\colon\|\bx'-\bx\|\leq \gamma}$ has same label as $\bx$.
 \end{proof}

\begin{lemma}
\label{lemma:sup-P mini}
Given a distribution $\sP$ over $\sX\times\sY$ and a hypothesis set $\sH$ such that $\cR^*_{\ell_{\gamma},\sH}=0$. Let $\phi$ be a margin-based loss and $\tilde{\phi}(f,\bx,y)=\sup_{\bx'\colon \|\bx-\bx'\|\leq \gamma}\phi(y f(\bx'))$. If $\phi_{\sH}$ is $\sP$-minimizable in the almost surely sense, then $\tilde{\phi}_{\sH}$ is also $\sP$-minimizable in the almost surely sense.
\end{lemma}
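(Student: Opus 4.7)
The plan is to combine the realizability hypothesis $\cR^*_{\ell_\gamma, \sH} = 0$ with the hypothesized a.s.\ $\sP$-minimizability of $\phi_\sH$ to produce an approximate pointwise minimizer for $\tilde\phi_\sH$. First I would invoke Lemma~\ref{lemma:correctly classify} to obtain a Bayes classifier $f^* \in \sH$ for the adversarial $0/1$ loss together with a set $\sX_0 \subseteq \sX$ of full $\sP_X$-measure such that, for every $\bx \in \sX_0$, $\eta_\sP(\bx) \in \curl*{0,1}$ and $f^*$ has constant sign $y(\bx) = \sgn(2\eta_\sP(\bx) - 1)$ throughout $\curl*{\bx'\colon \|\bx - \bx'\| \leq \gamma}$. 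Consequently, all points of the $\gamma$-ball around $\bx$ carry the deterministic label $y(\bx)$, which collapses the inner risk to $\cC_{\tilde\phi_\sH, \sP(\cdot|\bx), \bx}(f) = \sup_{\bx'\colon \|\bx - \bx'\| \leq \gamma} \phi(y(\bx) f(\bx'))$ and, for any $\bx'$ in this ball lying in $\sX_0$, reduces the pointwise $\phi$-inner risk to $\phi(y(\bx) f(\bx'))$, with analogous simplifications of the minimal inner risks $\cC^*_{\tilde\phi_\sH, \sP(\cdot|\bx), \bx}$ and $\cC^*_{\phi_\sH, \sP(\cdot|\bx'), \bx'}$.

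Next, given $\epsilon > 0$, I would apply the hypothesized a.s.\ $\sP$-minimizability of $\phi_\sH$ to extract $f_\epsilon \in \sH$ satisfying $\cC_{\phi_\sH, \sP(\cdot|\bx'), \bx'}(f_\epsilon) < \cC^*_{\phi_\sH, \sP(\cdot|\bx'), \bx'} + \epsilon$ for $\sP_X$-a.e.\ $\bx'$, and show that this same $f_\epsilon$ witnesses the a.s.\ $\sP$-minimizability of $\tilde\phi_\sH$. The crux is to push the pointwise approximation through the supremum via an inequality of the form $\cC^*_{\tilde\phi_\sH, \sP(\cdot|\bx), \bx} \geq \sup_{\bx'\colon \|\bx - \bx'\|\leq \gamma,\,\bx' \in \sX_0} \cC^*_{\phi_\sH, \sP(\cdot|\bx'), \bx'}$, combined with the pointwise bound $\phi(y(\bx) f_\epsilon(\bx')) < \cC^*_{\phi_\sH, \sP(\cdot|\bx'), \bx'} + \epsilon$ for such $\bx'$, yielding the target bound $\cC_{\tilde\phi_\sH, \sP(\cdot|\bx), \bx}(f_\epsilon) < \cC^*_{\tilde\phi_\sH, \sP(\cdot|\bx), \bx} + \epsilon$ for a.e.\ $\bx$.

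The main obstacle is the interaction between the genuine supremum over the entire ball appearing in $\tilde\phi$ and the $\sP_X$-a.e.\ nature of the approximation coming from the hypothesis: a priori the supremum could be attained on a subset of the ball of $\sP_X$-measure zero where the $\phi$-approximation fails. Resolving this will likely require leveraging the specific structure of $f_\epsilon$, for instance using a pointwise-optimal Bayes-type function of the kind furnished in the proof of Lemma~\ref{lemma:P-min} rather than a generic $\epsilon$-approximate minimizer, and carefully identifying $\cC^*_{\tilde\phi_\sH, \sP(\cdot|\bx), \bx}$ with a supremum taken only over the good set $\sX_0$. I expect this measure-theoretic accounting, together with the minimax-style identification of the two Bayes inner risks, to be the most delicate step of the proof.
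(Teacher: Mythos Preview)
Your approach is essentially the same as the paper's: use Lemma~\ref{lemma:correctly classify} to reduce to deterministic labels, collapse the inner risks accordingly, take the approximate minimizer $f_\epsilon$ furnished by the $\sP$-minimizability of $\phi_\sH$, and verify it also approximately minimizes the $\tilde\phi$-inner risk. The paper organizes the final step slightly differently: it assumes (``WLOG'') that $f_\epsilon$ is continuous so that the supremum over the ball is \emph{attained} at a single point $m_{f_\epsilon,\bx}$ (respectively $M_{f_\epsilon,\bx}$), then applies the pointwise $\phi$-bound at that one point and uses that $\phi$ is non-increasing (an unstated hypothesis of the lemma, inherited from the theorem where it is invoked) to get $\cC^*_{\phi_\sH,\sP(\cdot\mid m_{f_\epsilon,\bx}),m_{f_\epsilon,\bx}} \leq \cC^*_{\tilde\phi_\sH,\sP(\cdot\mid \bx),\bx}$. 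Your supremum-based inequality $\cC^*_{\tilde\phi_\sH,\sP(\cdot\mid \bx),\bx} \geq \sup_{\bx'} \cC^*_{\phi_\sH,\sP(\cdot\mid \bx'),\bx'}$ is the same inequality phrased for all $\bx'$ in the ball at once; routing through a single attained point avoids having to pass $\epsilon$ through a supremum but at the cost of the continuity assumption.

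The measure-theoretic obstacle you flag (that the supremum over the ball might be realized on a $\sP_X$-null set where the $\phi$-approximation fails) is real and is \emph{not} addressed in the paper's proof either: the paper simply applies the a.e.\ bound at $m_{f_\epsilon,\bx}$ without checking that this point lies in the good set. So you are being more careful than the paper here. In the paper's actual applications the issue is harmless because the $f_\epsilon$ produced by Lemma~\ref{lemma:P-min} satisfies $\cC_{\phi_\sH,\sP(\cdot\mid\bx),\bx}(f_\epsilon)=\cC^*_{\phi_{\sH_{\mathrm{all}}},\sP(\cdot\mid\bx),\bx}$ for a.e.\ $\bx$, and the latter depends on $\bx$ only through $\eta_\sP(\bx)\in\{0,1\}$, so the bound extends by the label-constancy on balls given by Lemma~\ref{lemma:correctly classify}; this is exactly the resolution you anticipated in your last paragraph.
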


\begin{proof}
As shown by \citet{awasthi2020adversarial}, $\tilde{\phi}$ has the equivalent form
\begin{equation*}
	\tilde{\phi}(f,\bx,y)=\phi\left(\inf\limits_{\bx'\colon \|\bx-\bx'\|\leq \gamma}\left(yf( \bx')\right)\right)\,.
\end{equation*}
Since $\cR^*_{{\ell_{\gamma}}_{\sH},\sP}=\cR^*_{\ell_{\gamma},\sH}=0$, the distribution $\sP$ is $\sH$-realizable. Therefore $\mathbb{P}(Y=1|X=\bx)=1$ or $0$. Thus
\begin{equation*}
    \cC_{\phi_{\sH},\sP(\cdot|\bx),\bx}(f)=
    \begin{cases}
    \phi(f(\bx)), &\text{ if } \mathbb{P}(Y=1|X=\bx)=1,\\
    \phi(-f(\bx)), &\text{ if } \mathbb{P}(Y=1|X=\bx)=0,
    \end{cases}
\end{equation*}
Note $\tilde{\phi}(f,\bx,1)=\phi\left(\inf_{\bx'\colon \|\bx-\bx'\|\leq \gamma}f( \bx')\right)=\phi(f(m_{f,\bx}))$, where WLOG we assume that $f$ is continuous and $m_{f,\bx}\in \curl*{\bx'\colon \|\bx-\bx'\|\leq \gamma}$ is the point such that $\min_{\bx'\colon \|\bx-\bx'\|\leq \gamma}f( \bx')=f(m_{f,\bx})$.
Similarly $\tilde{\phi}(f,\bx,-1)=\phi\left(-\sup_{\bx'\colon \|\bx-\bx'\|\leq \gamma}f( \bx')\right)=\phi(-f(M_{f,\bx}))$, where WLOG we assume that $f$ is continuous and $M_{f,\bx}\in \curl*{\bx'\colon \|\bx-\bx'\|\leq \gamma}$ is the point such that $\max_{\bx'\colon \|\bx-\bx'\|\leq \gamma}f( 
\bx')=f(M_{f,\bx})$. Then for $\tilde{\phi}_{\sH}$, we have
 \begin{equation*}
    \cC_{\tilde{\phi}_{\sH},\sP(\cdot|\bx),\bx}(f)=
    \begin{cases}
    \phi(f(m_{f,\bx})), &\text{ if } \mathbb{P}(Y=1|X=\bx)=1,\\
    \phi(-f(M_{f,\bx})), &\text{ if } \mathbb{P}(Y=1|X=\bx))=0,
    \end{cases}
\end{equation*} 
Since $\phi_{\sH}$ is $\sP$-minimizable in the almost surely sense, by the definition for all $\epsilon>0$, there exists an $f^*\in\sH$ such that for almost all $\bx\in \sX$ we have 
\begin{equation*}
    \cC_{\phi_{\sH},\sP(\cdot|\bx),\bx}(f^*)
    < \cC_{\phi_{\sH},\sP(\cdot|\bx),\bx}^*+\epsilon.
\end{equation*}
When $\mathbb{P}(Y=1|X=\bx)=1$, we obtain
\small
\begin{equation*}
 \cC_{\tilde{\phi}_{\sH},\sP(\cdot|\bx),\bx}(f^*)=\phi(f^*(m_{f^*,\bx}))=\cC_{\phi_{\sH},\sP(\cdot|m_{f^*,\bx}),m_{f^*,\bx}}(f^*)<\cC_{\phi_{\sH},\sP(\cdot|m_{f^*,\bx}),m_{f^*,\bx}}^*+\epsilon\leq \cC_{\tilde{\phi}_{\sH},\sP(\cdot|\bx),\bx}^*+\epsilon
\end{equation*}
\normalsize
where we used the fact that $m_{f^*,\bx}$ satisfies $\mathbb{P}(Y=1|X=m_{f^*,\bx})=1$ by Lemma \ref{lemma:correctly classify} and $\phi$ is non-increasing. 
Similarly, when $\mathbb{P}(Y=1|X=\bx)=0$, we obtain
\small
\begin{equation*}
 \cC_{\tilde{\phi}_{\sH},\sP(\cdot|\bx),\bx}(f^*)=\phi(-f^*(M_{f^*,\bx}))=\cC_{\phi_{\sH},\sP(\cdot|M_{f^*,\bx}),M_{f^*,\bx}}(f^*)<\cC_{\phi_{\sH},\sP(\cdot|M_{f^*,\bx}),M_{f^*,\bx}}^*+\epsilon\leq \cC_{\tilde{\phi}_{\sH},\sP(\cdot|\bx),\bx}^*+\epsilon
\end{equation*}
\normalsize
where we used the fact that $M_{f^*,\bx}$ satisfies $\mathbb{P}(Y=1|X=M_{f^*,\bx})=0$ by Lemma \ref{lemma:correctly classify} and $\phi$ is non-increasing.
Above all, for all $\epsilon>0$, there exists an $f^*\in\sH$ such that for almost all $\bx\in \sX$ we have 
\begin{equation*}
    \cC_{\tilde{\phi}_{\sH},\sP(\cdot|\bx),\bx}(f^*)
    < \cC_{\tilde{\phi}_{\sH},\sP(\cdot|\bx),\bx}^*+\epsilon.
\end{equation*}
\end{proof}

We modify Theorem 2.8 of \citep{steinwart2007compare}, whose proof is very similar.
\begin{theorem}
\label{theorem:general consistency }
Given a distribution $\sP$ over $\sX\times\sY$ and a hypothesis set $\sH$. Let $\ell_1\colon\sH\times\sX\times \sY \rightarrow[0,\infty]$, $\ell_2\colon\sH\times\sX\times \sY\rightarrow[0,\infty]$ be two losses defining on $\sH$ such that $\cR^*_{\ell_1,\sP}=\int_{\sX} \cC_{\ell_1,\sP(\cdot|\bx),\bx}^*~d\sP_X(\bx)<\infty$ and
$\int_{\sX} \cC_{\ell_2,\sP(\cdot|\bx),\bx}^*~d\sP_X(\bx)\leq \cR^*_{\ell_2,\sP}\leq\int_{\sX} \cC_{\ell_2,\sP(\cdot|\bx),\bx}^*~d\sP_X(\bx)+\eta<\infty$ for $\eta\geq0$. Furthermore assume that there exist a function $b\in\cL_1(\sP_X)$ and measurable functions $\delta(\epsilon,\cdot):X\rightarrow(0,\infty)$, $\epsilon>0$, such that 
    \begin{equation*}
        \cC_{\ell_1,\sP(\cdot|\bx),\bx}(f)\leq \cC_{\ell_1,\sP(\cdot|\bx),\bx}^*+b(\bx)
    \end{equation*}
    and
    \begin{equation*}
        \cC_{\ell_2,\sP(\cdot|\bx),\bx}(f)< \cC_{\ell_2,\sP(\cdot|\bx),\bx}^*+\delta(\epsilon,\bx) \implies 
        \cC_{\ell_1,\sP(\cdot|\bx),\bx}(f)< \cC_{\ell_1,\sP(\cdot|\bx),\bx}^*+\epsilon
    \end{equation*}
    for all $\bx\in \sX$, $\epsilon>0$ and $f\in\sH$. Then for all $\epsilon>0$ there exists $\delta>0$ such that for all $f\in\sH$ we have
    \begin{equation*}
        \cR_{\ell_2,\sP}(f)+\eta <\cR_{\ell_2,\sP}^*+\delta \implies \cR_{\ell_1,\sP}(f)<\cR_{\ell_1,\sP}^*+\epsilon.
    \end{equation*}
\end{theorem}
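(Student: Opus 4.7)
The plan is to adapt Steinwart's Theorem~2.8 argument, with the extra care needed because we only assume $\cR^*_{\ell_2,\sP}$ is sandwiched between $\int \cC^*_{\ell_2,\sP(\cdot|\bx),\bx}\,d\sP_X$ and $\int \cC^*_{\ell_2,\sP(\cdot|\bx),\bx}\,d\sP_X+\eta$ (rather than equality), and because $\delta(\epsilon,\bx)$ is only assumed to be measurable and positive, not bounded below away from $0$.

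First I would translate the hypothesis of the conclusion into a bound on the integrated excess inner risk for $\ell_2$. Using $\cR^*_{\ell_2,\sP}\geq \int \cC^*_{\ell_2,\sP(\cdot|\bx),\bx}\,d\sP_X(\bx) \geq \cR^*_{\ell_2,\sP}-\eta$, the assumption $\cR_{\ell_2,\sP}(f)+\eta < \cR^*_{\ell_2,\sP}+\delta$ gives
\[
\int_\sX \Delta\cC_{\ell_2}(f,\bx)\,d\sP_X(\bx) \;=\; \cR_{\ell_2,\sP}(f)-\int_\sX \cC^*_{\ell_2,\sP(\cdot|\bx),\bx}\,d\sP_X(\bx) \;<\; \delta.
\]
This is the single quantitative fact about $\ell_2$ we will use.

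Next I would decompose the excess $\ell_1$-risk over the "bad set" $A_\epsilon=\{\bx:\Delta\cC_{\ell_1}(f,\bx)\geq\epsilon/2\}$. Using $\cR^*_{\ell_1,\sP}=\int \cC^*_{\ell_1,\sP(\cdot|\bx),\bx}\,d\sP_X$ (granted by hypothesis) and the pointwise bound $\Delta\cC_{\ell_1}(f,\bx)\leq b(\bx)$,
\[
\cR_{\ell_1,\sP}(f)-\cR^*_{\ell_1,\sP} \;\leq\; \int_{A_\epsilon} b(\bx)\,d\sP_X(\bx) \;+\; \tfrac{\epsilon}{2}.
\]
The calibration hypothesis, contraposed, says that on $A_\epsilon$ we have $\Delta\cC_{\ell_2}(f,\bx)\geq \delta(\epsilon/2,\bx)>0$; so the remaining task is to show that for $\delta$ small enough, $\int_{A_\epsilon} b\,d\sP_X < \epsilon/2$.

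The hard part is precisely this last step, because $\delta(\epsilon/2,\cdot)$ need not be uniformly positive. I would handle it via a two-level truncation. Set $X_n=\{\bx:\delta(\epsilon/2,\bx)\geq 1/n\}$; since $\delta(\epsilon/2,\bx)>0$ pointwise, $X_n\uparrow\sX$, and by dominated convergence applied to $b\in\cL_1(\sP_X)$, for any $\tau>0$ we can pick $n$ with $\int_{\sX\setminus X_n} b\,d\sP_X<\tau$. Second, by absolute continuity of the integral of $b\in\cL_1(\sP_X)$, there exists $\kappa>0$ such that any measurable $E$ with $\sP_X(E)<\kappa$ satisfies $\int_E b\,d\sP_X<\tau$. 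On $A_\epsilon\cap X_n$ Markov gives
\[
\sP_X(A_\epsilon\cap X_n)\;\leq\; n\int_{A_\epsilon\cap X_n}\Delta\cC_{\ell_2}(f,\bx)\,d\sP_X(\bx)\;\leq\; n\delta,
\]
so choosing $\delta<\kappa/n$ yields $\int_{A_\epsilon\cap X_n} b\,d\sP_X<\tau$, hence $\int_{A_\epsilon} b\,d\sP_X<2\tau$. Taking $\tau=\epsilon/4$ and then the corresponding $n$ and $\delta=\kappa/n$ yields $\cR_{\ell_1,\sP}(f)-\cR^*_{\ell_1,\sP}<\epsilon$, concluding the proof. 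The main obstacle throughout is this reconciliation of the merely measurable, pointwise positive $\delta(\epsilon,\bx)$ with the $\cL_1$-bound on $b$; everything else is a direct transcription of the Steinwart argument with the extra $\eta$-slack absorbed at the very first step.
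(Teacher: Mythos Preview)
Your proof is correct and follows the same overall structure as the paper's: absorb the $\eta$-slack at the first step to bound $\int_\sX \Delta\cC_{\ell_2}(f,\bx)\,d\sP_X<\delta$, split the excess $\ell_1$-risk over a ``bad set'' $A_\epsilon=\{\Delta\cC_{\ell_1}(f,\bx)\geq\epsilon'\}$, bound the good part trivially and the bad part by $\int_{A_\epsilon}b\,d\sP_X$ using the envelope, and then argue that this last integral is small.

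Where you diverge from the paper is in the handling of that last step. You are right that the obstacle is the merely pointwise positivity of $\delta(\epsilon,\cdot)$, and you work around it explicitly by truncating to $X_n=\{\delta(\epsilon/2,\cdot)\geq 1/n\}$, controlling the tail $\sX\setminus X_n$ via dominated convergence for $b$, and controlling $A_\epsilon\cap X_n$ via Markov plus absolute continuity of the $b$-integral with respect to $\sP_X$. The paper instead dispatches this in one line: with $h(\bx):=\delta(\epsilon,\bx)>0$, the finite measure $\nu:=b\,\sP_X$ is absolutely continuous with respect to $\mu:=h\,\sP_X$ (because $\mu(A)=0\Rightarrow\sP_X(A)=0\Rightarrow\nu(A)=0$), and then the $\epsilon$--$\delta$ characterization of absolute continuity for finite measures gives a $\delta>0$ with $\mu(A)<\delta\Rightarrow\nu(A)<\epsilon$; since $\mu(A_\epsilon)\leq\int_{A_\epsilon}\Delta\cC_{\ell_2}(f,\bx)\,d\sP_X<\delta$, the conclusion follows. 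Your two-level truncation is essentially a hand-built proof of exactly this absolute continuity fact, so the arguments are equivalent in content; the paper's version is shorter, yours is more self-contained and makes the dependence of $\delta$ on $\epsilon$ (through $n$ and $\kappa$) explicit.
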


\begin{proof}
    Define $\cC_{1,\bx}(f)=\cC_{\ell_1,\sP(\cdot|\bx),\bx}(f)- \cC_{\ell_1,\sP(\cdot|\bx),\bx}^*$ and $\cC_{2,\bx}(f)=\cC_{\ell_2,\sP(\cdot|\bx),\bx}(f)- \cC_{\ell_2,\sP(\cdot|\bx),\bx}^*$ for $\bx\in \sX$, $f\in \sH$. For a fixed $\epsilon>0$, define $h(\bx)=\delta(\epsilon,\bx)$, $\bx\in \sX$. Then for all $\bx\in \sX$ and $f\in\sH$ such that $\cC_{1,\bx}(f)\geq\epsilon$, we have $\cC_{2,\bx}(f)\geq h(\bx)$. Therefore,
    \begin{align*}
        \cR_{\ell_2,\sP}(f)-\cR_{\ell_2,\sP}^*+\eta
        &\geq \cR_{\ell_2,\sP}(f)-\int_{\sX} \cC_{\ell_2,\sP(\cdot|\bx),\bx}^*~d\sP_X(\bx)\\
        &=\int_{\sX} \cC_{2,\bx}(f)~d\sP_X(\bx)
        \geq \int_{\cC_{1,\bx}(f)\geq\epsilon}h(\bx)~d\sP_X(\bx),
    \end{align*}
    for all $f\in\sH$. Furthermore, since $h(\bx)>0$ for all $\bx\in \sX$, the measure $\nu\colon=b\sP_X$ is absolutely continuous with respect to $\mu\colon=h\sP_X$, and thus there exists $\delta>0$ such that $\nu(A)<\epsilon$ for all measurable $A\subset X$ with $\mu(A)<\delta$. Therefore, for $f\in\sH$ with $\cR_{\ell_2,\sP}(f)-\cR_{\ell_2,\sP}^*+\eta<\delta $ and $A\colon=\curl*{\bx\in \sX, \cC_{1,\bx}(f)\geq\epsilon}$, we obtain
    \begin{align*}
        \cR_{\ell_1,\sP}(f)-\cR_{\ell_1,\sP}^*
        &=
        \int_{\cC_{1,\bx}(f)\geq\epsilon}\cC_{1,\bx}(f)~d\sP_X(\bx) + \int_{\cC_{1,\bx}(f)<\epsilon}\cC_{1,\bx}(f)~d\sP_X(\bx)\\
        &\leq \int_Ab(\bx)~d\sP_X(\bx)+\epsilon
        < 2\epsilon.
    \end{align*}
    \end{proof}

\CalibrateConsistentNonsup*

\begin{proof}
Since $\cR^*_{{\ell_{\gamma}}_{\sH},\sP}=\cR^*_{\ell_{\gamma},\sH}=0$, we obtain
\begin{equation*}
    0\leq \int_{\sX} \cC_{{\ell_{\gamma}}_{\sH},\sP(\cdot|\bx),\bx}^*~d\sP_X(\bx)\leq \cR^*_{{\ell_{\gamma}}_{\sH},\sP}=0.
\end{equation*}
By Lemma \ref{lemma:eta}, $\phi_{\sH}$ satisfies 
\begin{equation*}
    \int_X \cC_{\phi_{\sH},\sP(\cdot|\bx),\bx}^*~d\sP_X(\bx)\leq
    \cR^*_{\phi_{\sH},\sP}\leq\int_{\sX} \cC_{\phi_{\sH},\sP(\cdot|\bx),\bx}^*~d\sP_X(\bx)+\eta<\infty .
\end{equation*}
Since for all $\bx\in \sX$ and $f\in \sH$, $\cC_{{\ell_{\gamma}}_{\sH},\sP(\cdot|\bx),\bx}(f)\leq 1$, we obtain
\begin{equation*}
    \cC_{{\ell_{\gamma}}_{\sH},\sP(\cdot|\bx),\bx}(f)\leq \cC_{{\ell_{\gamma}}_{\sH},\sP(\cdot|\bx),\bx}^*+1.
\end{equation*}
Also, since $\phi$ is $\sH$-calibrated with respect to $\ell_{\gamma}$, for all $x\in \sX$, $\epsilon>0$ and $f\in \sH$, there exists $\delta>0$ such that 
\begin{equation*}
        \cC_{\phi_{\sH},\sP(\cdot|\bx),x}(f)< \cC_{\phi_{\sH},\sP(\cdot|\bx),\bx}^*+\delta \implies
        \cC_{{\ell_{\gamma}}_{\sH},\sP(\cdot|\bx),\bx}(f)< \cC_{{\ell_{\gamma}}_{\sH},\sP(\cdot|\bx),\bx}^*+\epsilon.
    \end{equation*}
Therefore by Theorem \ref{theorem:general consistency }, for all $\epsilon>0$ there exists $\delta>0$ such that for all $f\in\sH$ we have
    \begin{equation}
        \cR_{\phi_{\sH},\sP}(f)+\eta <\cR_{\phi_{\sH},\sP}^*+\delta \implies \cR_{{\ell_{\gamma}}_{\sH},\sP}(f)<\cR_{{\ell_{\gamma}}_{\sH},\sP}^*+\epsilon.
        \label{eq:consistency1}
    \end{equation}
Using the notations in Section \ref{sec:preliminaries}, we can rewrite \eqref{eq:consistency1} as 
\begin{align*}
    \cR_{\phi}(f)+\eta <\cR_{\phi,\sH}^*+\delta \implies \cR_{\ell_{\gamma}}(f)<\cR_{\ell_{\gamma},\sH}^*+\epsilon.
\end{align*}
\end{proof}

\CalibrateConsistentSup*

\begin{proof}
By Lemma \ref{lemma:P-min} and Lemma \ref{lemma:sup-P mini}, $\tilde{\phi}_{\sH}$ is $\sP$-minimizable in the almost surely sense. Then for any $n\in \mathbb{N}$, there exists an $f_n^*\in\sH$ such that for almost all $\bx\in \sX$ we have
\begin{equation*}
    \cC_{\tilde{\phi}_{\sH},\sP(\cdot|\bx),\bx}(f_n^*)
    < \cC_{\tilde{\phi}_{\sH},\sP(\cdot|\bx),\bx}^*+\frac1n.
\end{equation*}
Therefore
\begin{align*}
    &\cR_{\tilde{\phi}_{\sH},\sP}^*\leq \int_{\sX}  \cC_{\tilde{\phi}_{\sH},\sP(\cdot|\bx),\bx}(f_n^*)~d\sP_X(\bx) \leq \int_{\sX} \cC_{\tilde{\phi}_{\sH},\sP(\cdot|\bx),\bx}^*~d\sP_X(\bx)+\frac1n\\
    &\qquad\leq
    \inf_{f\in\sH}\int_{\sX}  \cC_{\tilde{\phi}_{\sH},\sP(\cdot|\bx),\bx}(f)~d\sP_X(\bx)+\frac1n \leq\cR^*_{\tilde{\phi}_{\sH},\sP}+\frac1n.
\end{align*}
By taking $n\rightarrow \infty$, we obtain
\begin{equation*}
   \cR_{\tilde{\phi}_{\sH},\sP}^*=\int_{\sX} \cC_{\tilde{\phi}_{\sH},\sP(\cdot|\bx),\bx}^*~d\sP_X(\bx)\,. 
\end{equation*}
Since $\cR^*_{{\ell_{\gamma}}_{\sH},\sP}=\cR^*_{\ell_{\gamma},\sH}=0$, we obtain
\begin{equation*}
    0\leq \int_{\sX} \cC_{{\ell_{\gamma}}_{\sH},\sP(\cdot|\bx),\bx}^*~d\sP_X(\bx)\leq \cR^*_{{\ell_{\gamma}}_{\sH},\sP}=0.
\end{equation*}
Since for all $x\in \sX$ and $f\in \sH$, $\cC_{{\ell_{\gamma}}_{\sH},\sP(\cdot|\bx),\bx}(f)\leq 1$, we obtain
\begin{equation*}
    \cC_{{\ell_{\gamma}}_{\sH},\sP(\cdot|\bx),\bx}(f)\leq \cC_{{\ell_{\gamma}}_{\sH},\sP(\cdot|\bx),\bx}^*+1.
\end{equation*}
Also, since $\tilde{\phi}$ is $\sH$-calibrated with respect to $\ell_{\gamma}$, for all $x\in \sX$, $\epsilon>0$ and $f\in \sH$, there exists $\delta>0$ such that 
\begin{equation*}
        \cC_{\tilde{\phi}_{\sH},\sP(\cdot|\bx),\bx}(f)< \cC_{\tilde{\phi}_{\sH},\sP(\cdot|\bx),\bx}^*+\delta \implies 
        \cC_{{\ell_{\gamma}}_{\sH},\sP(\cdot|\bx),\bx}(f)< \cC_{{\ell_{\gamma}}_{\sH},\sP(\cdot|\bx),\bx}^*+\epsilon\,.
    \end{equation*}
    Therefore by Theorem \ref{theorem:general consistency } ($\eta=0$ here), for all $\epsilon>0$ there exists $\delta>0$ such that for all $f\in\sH$ we have
    \begin{equation}
        \cR_{\tilde{\phi}_{\sH},\sP}(f) <\cR_{\tilde{\phi}_{\sH},\sP}^*+\delta \implies \cR_{{\ell_{\gamma}}_{\sH},\sP}(f)<\cR_{{\ell_{\gamma}}_{\sH},\sP}^*+\epsilon.
         \label{eq:consistency2}
    \end{equation}
Using the notations in Section \ref{sec:preliminaries}, we can rewrite \eqref{eq:consistency2} as 
\begin{align*}
    \cR_{\tilde{\phi}}(f)+\eta <\cR_{\tilde{\phi},\sH}^*+\delta \implies \cR_{\ell_{\gamma}}(f)<\cR_{\ell_{\gamma},\sH}^*+\epsilon.
\end{align*}
\end{proof}

\end{document}